\DeclarePairedDelimiter{\ceil}{\lceil}{\rceil}
\DeclareMathOperator*{\argmin}{arg\,min}
\def \OPTname {\texttt{FLOW\textsuperscript{2}}\xspace}
\def \HPOname {\texttt{CFO}\xspace}
\def \HPOnameO {\texttt{CFO-0}\xspace}
\newcommand{\mt}{\mathsf{T}}
\def \bc {\mathbf{h}}
\def \mu {\delta}
\def \bu {\mathbf{u}}
\def \bs {\mathbf{s}}
\def \tS {\tilde S}
\def \by {\mathbf{y}}
\def \bz {\mathbf{z}}
\def \bbE {\mathbb{E}}
\def \bx {\mathbf{x}}
\def \bu {\mathbf{u}}
\def \bbS {\mathbb{S}}
\def \cC {\mathcal{H}}
\def \Proj {\text{Proj}}
\def \cX {\mathcal{X}}
\def \cU {\mathcal{U}}
\def \deltalb {\delta_{\text{lower}}}
\def \sign {\text{sign}}
\def \Vol {\text{Vol}}
\newtheorem{fact}{Fact}
\newtheorem{condition}{Condition}
\newtheorem{remark}{Remark}
\newtheorem{lemma}{Lemma}
\newtheorem{theorem}{Theorem}
\newtheorem{definition}{Definition}
\newtheorem{proposition}{Proposition}
\newcommand{\compilehidecomments}{true}
\newcommand{\Outline}[1]{{\color{blue}{\text{} #1}}}
\newcommand{\note}[1]{{\color{red}{\text{} #1}}}
	\newcommand{\Outline}[1]{}
	\newcommand{\note}[1]{}
\newcommand{\aaai}[1]{\noindent{\textcolor{blue}{#1}}}
\newcommand{\revision}[1]{#1}
\newcommand{\techreport}[1]{#1}
\newcommand{\papertext}[1] 
\title{Frugal Optimization for Cost-related Hyperparameters}
\author{%
   Qingyun Wu\textsuperscript{\rm 1}, Chi Wang\textsuperscript{\rm 2}, Silu Huang\textsuperscript{\rm 2} 
 \\
}
\begin{document}

\maketitle

\begin{abstract}
The increasing demand for democratizing machine learning algorithms calls for hyperparameter optimization (HPO) solutions at low cost. Many machine learning algorithms have hyperparameters which can cause a large variation in the training cost. But this effect is largely ignored in existing HPO methods, which are incapable to properly control cost during the optimization process. To address this problem, we develop a new cost-frugal HPO solution. The core of our solution is a simple but new randomized direct-search method, for which we prove a convergence rate of $O(\frac{\sqrt{d}}{\sqrt{K}})$ and an \(O(d\epsilon^{-2})\)-approximation guarantee on the total cost. We provide strong empirical results in comparison with state-of-the-art HPO methods on large AutoML benchmarks. 
\end{abstract}

\section{Introduction}
Machine learning algorithms usually involve a number of hyperparameters that have a large impact on model accuracy and need to be set appropriately for each task~\cite{melis2018on}. For practitioners to easily and confidently apply generic ML algorithms, methods that can automatically tune these hyperparameters at low cost are needed. It motivates research in efficient hyperparameter optimization (HPO)~\cite{falkner2018}. 
HPO is generally considered as a black-box function optimization problem where evaluating the black-box function is expensive as training and validating a model can be time-consuming.
Further, this evaluation cost can be directly affected by a subset of hyperparameters.
For example, in gradient boosted trees, the variation of the number of trees and the depth per tree can result in a large variation on training and validation time. In such scenarios, sample efficiency cannot be directly translated to cost frugality.
Unfortunately, there has not been a generic HPO formulation that considers the existence of such cost-related hyperparameters, as previous work is mostly focused on the case where the evaluation cost is constant or some special case of variable cost (detailed in Related Work Section)). In this paper, we provide such a formulation to fill this gap and propose a cost-frugal solution. 


\subsection{Problem formulation} Let $f(\bx)$ denote the validation loss of the concerned machine learning algorithm under the given training dataset\footnote{We assume the concerned ML algorithm, training datasets and validation methods are all fixed.} and hyperparameter configuration $\bx$. Let $g(\bx)$ denote the evaluation cost\footnote{Throughout this paper, evaluation cost refers to the computation/time needed for training and validating an ML algorithm with the given training and validation dataset.}  incurred when obtaining $f(\bx)$. In general, $g(\bx)$ is not constant in the hyperparameter configuration space$\cX$. Instead, 
there may exist $\bx_1, \bx_2 \in \cX$ such that $g(\bx_1) \gg g(\bx_2)$. 
The goal of an HPO algorithm $\pi$ is to find $\bx^* = \argmin_{\bx \in \cX} f(\bx)$.
The total cost incurred during this loss optimization process is $G(\pi) = \sum_{i=1}^{K_{\pi}^*} g(\bx_i)$, where $K_{\pi}^{*}$ is the number of function evaluations involved when $\bx^*$ is found by algorithm $\pi$.  In this paper, we formulate the cost-frugal HPO problem as follows: the HPO algorithm $\pi$ needs to find $\bx^*$ while keeping its total cost $G(\pi)$ small.
When cost is indeed constant with respect to $\bx$, $G(\pi)$ naturally degenerates to the number of iterations for convergence times a constant, and thus $\pi$ is naturally cost-frugal as long as it has a good convergence property. In the case where cost-related hyperparameters exist, this formulation enables a characterization of the HPO algorithm $\pi$’s cost behavior.


\subsection{Contribution} In this paper, we take a fresh and unique path of addressing this problem based on \textit{randomized direct search}, and develop a frugal optimization method \HPOname\ for generic cost-related hyperparameters. Our solution is designed toward both small iteration number before convergence and bounded cost per iteration, which lead to low total evaluation cost under mild conditions. 
Specifically, \HPOname\ is built upon our newly proposed randomized direct search method \OPTname, which can have an approximately optimal total cost when minimizing loss.  
Our solution is backed by both theoretical and empirical studies.

On the theoretical side, we first prove that \OPTname\ enjoys a convergence rate of $O(\frac{\sqrt{d}}{\sqrt{K}})$ even in the non-convex case under a common smoothness condition. This convergence result is of independent interest in the theory of derivative-free optimization. Then we prove that due to \OPTname's unique update rule, when it is combined with a low-cost initialization, the cost in any iteration of \OPTname can be upper bounded under reasonable conditions, and the total cost for obtaining an $\epsilon$-approximation of the loss is bounded by $O(d\epsilon^{-2})$ times the optimal configuration's cost by expectation. 
To the best of our knowledge, such theoretical bound on cost does not exist in any HPO literature. 

On the empirical side, we perform extensive evaluations using a latest AutoML benchmark~\cite{Gijsbers2019benchmark} which contains large scale classification tasks. We also enrich it with datasets from a regression benchmark~\cite{Olson2017PMLB} to test regression tasks. Compared to existing random search algorithm and four variations of Bayesian optimization, \HPOname\ shows better anytime performance and better final performance in tuning a popular library XGBoost~\cite{kdd:chen2016xgboost} and deep neural networks on most of the tasks with a significant margin. 

\subsection{Related work} \label{sec:related_work}
The most dominating search strategy for HPO is Bayesian optimization (BO), which uses probabilistic models to approximate the blackbox function to optimize. 
Notably effective BO-based HPO methods include Gaussian process (GP)~\cite{snoek2012practical}, tree Parzen estimator (TPE)~\cite{bergstra2011algorithms} and sequential model-based algorithm configuration method (SMAC)~\cite{hutter2011}.  
However, classical BO-based methods are mainly designed for minimizing the total number of function evaluations, which does not necessarily lead to low evaluation cost.  Some recent work studies ways to control cost in HPO using multi-fidelity optimization.
FABOLAS~\cite{klein2017fast} introduces dataset size as an additional degree of freedom in Bayesian optimization. Hyperband~\cite{ICLR:li2017hyperband} and BOHB~\cite{falkner2018} try to reduce cost by allocating gradually increasing `budgets' in the search process. 
The notion of budget can correspond to either sample size or the number of iterations for iterative training algorithms. These solutions assume the evaluation cost to be equal or similar for each fixed `budget', which is not necessarily true when there exist cost-related hyperparameters. These solutions also require a predefined `maximal budget' and assume the optimal configuration is found at the maximal budget. So the notion of budget is not suitable for modeling even a single cost-related hyperparameter whose optimal value is not necessarily at maximum, e.g., the number K in K-nearest-neighbor algorithm. The same is true for two other multi-fidelity methods BOCA~\cite{kandasamy2017multi} and BHPT~\cite{lu2019BHPT}.  

The only existing method for generic cost-related hyperparameters is Gaussian process with expected improvement per second (GPEIPS)~\cite{snoek2012practical}. It models the evaluation cost using another Gaussian process, and heuristically adds the estimated cost into the acquisition function. Although this method is applicable to generic cost-related hyperparameters, there is no theoretical guarantee on either the optimization of loss or cost. 



\section{Method}

In this section we first present our proposed randomized direct search method \OPTname; then we present theoretical guarantees about both convergence and the cost of \OPTname; at last we present our cost-frugal HPO method \HPOname, which is built upon \OPTname\ and several practical adjustments. 

\subsection{\OPTname} \label{sec_flow2}

\begin{algorithm}
\caption{\OPTname}\label{alg:flow2}
\begin{algorithmic}[1]
\State \textbf{Inputs:} Stepsize $\mu>0$, initial point $\bx_0$ which has a low cost, i.e., small $g(\bx_0)$, and number of iterations $K$.
\State \textbf{Initialization:} Obtain $f(\bx_0)$  
\For{$k = 0, 1, 2, \ldots, K-1$} 
    \State Sample $\bu_k$ uniformly at random from unit sphere $\bbS$
    \If{$f(\bx_{k} + \mu \bu_k) < f(\bx_{k})$} 
    $\bx_{k+1} = \bx_{k} + \mu \bu_k$ 
    \ElsIf{$f(\bx_{k} - \mu \bu_k) < f(\bx_{k})$} 
     \State $\bx_{k+1} = \bx_{k} - \mu \bu_k$
    \Else
        \space $\bx_{k+1} = \bx_{k}$
    \EndIf
\EndFor
\end{algorithmic}
\end{algorithm}

Our algorithm \OPTname\ is presented in Algorithm~\ref{alg:flow2}, taking the step size $\mu$, the number of iterations $K$, and the initial value $\bx_0$ as the input. Let $\bx_k$ denote the incumbent configuration at iteration $k$. \OPTname\ proceeds as follows: at each iteration $k$, we sample a vector $\bu_k$ uniformly at random from a $(d-1)$-unit sphere $\bbS$. Then we compare loss $f(\bx_{k} + \mu \bu_k)$ with $f(\bx_{k})$, if $f(\bx_{k} + \mu \bu_k) <  f(\bx_{k})$, then $\bx_{k+1}$ is updated as $\bx_{k} + \mu \bu_k$; otherwise we compare $f(\bx_{k} - \mu \bu_k)$ with $f(\bx_k)$. If the loss is decreased, then $\bx_{k+1}$ is updated as $\bx_{k} - \mu \bu_k$, otherwise $\bx_{k+1}$ stays at $\bx_k$. 

\OPTname is a simple but carefully designed randomized direct search method. At a high-level, \OPTname starts from a low-cost region and gradually moves towards low-loss region, attempting to avoid evaluations on high-cost and high-loss hyperparameters. \revision{Although the algorithm does not explicitly use the cost information except a requirement of a low-cost initial point, it does implicitly leverage relations between the configurations and their corresponding cost, and the loss and cost of each configuration. 
} \OPTname has some very nice properties including bounded number of iterations and bounded cost per iteration, which together make \OPTname cost-frugal. We remark that existing randomized search methods fail to bound the total cost as we will demonstrate analytically in this section and empirically in the Section~\ref{sec:exp}. In the following, we will elaborate more on \OPTname's cost frugality, along with the comparison with existing methods. 

\textbf{Insights about the frugality of \OPTname.} We first recognize that the exact analytic form of $g(\bx)$ is usually not completely present, mainly because hyperparameters can have complex interactions with the training and inference of an ML algorithm. $g(\bx)$ can be observed after evaluating $\bx$ (i.e., after the training and validation of the ML algorithm using $\bx$), and in some cases estimated after enough observations. 
It is highly non-trivial to incorporate of the observed or estimated cost information into the search process without affecting the convergence of loss or increasing the total cost needed before finding the optimal point. 
For example, simply downweighting the priority of high-cost configurations (as used in GPEIPS) may make an HPO algorithm spend a long time evaluating many cheap but high-loss configurations. 
Therefore, \OPTname\ is designed not to require the exact analytic form of cost function or estimation from observations. Instead, \OPTname\ specializes its new configuration proposal and incumbent configuration update rules while doing a randomized direct search. Perhaps surprisingly, our algorithm without explicitly using any observed or estimated cost outperforms previous work using that information. \revision{Our algorithm does implicitly leverage some properties of the cost function such as Lipschitz continuity, which are detailed in the cost analysis of \OPTname subsection.} 
When \OPTname's specialized update rules are combined with those properties, it can restrict its search in a subspace of the original search space where the cost of each point is not unnecessarily high, without violation of its convergence property. 

Here let us briefly explain how our update rules are specialized for the sake of cost frugality. Let $\bx_k^{\text{best}}$ denote the currently best configuration (i.e., having the lowest loss) found in \OPTname\ at iteration $k$. Since at every iteration \OPTname\ first checks whether the proposed new points $\bx_{k} \pm \mu \bu_k$ can decrease loss before updating $\bx_{k+1}$ to $\bx_{k} \pm \mu \bu_k$ or to $\bx_{k}$, it guarantees at any iteration $k$, ({\em Property 1}) $\bx_{k} = \bx^{\text{best}}_{k}$ is always true; ({\em Property 2}) the next incumbent configuration $\bx_{k+1}$ is always in a neighboring area of $\bx_{k}$; ({\em Property 3}) except the initialization step, evaluations of new configurations are only invoked in line 5 or line 6 of Algorithm~\ref{alg:flow2}, which correspond to a cost of at most $g(\bx_{k} + \mu \bu_k) + g(\bx_{k} - \mu \bu_k)$. 
Property 1 and Property 2 can help us establish a bound on the difference between the cost of the new incumbent configuration $g(\bx_{k+1})$ and the cost of the currently best point $g(\bx^{\text{best}}_{k})$. If the starting point $\bx_0$ is initialized at the low-cost region\footnote{We call a region low-cost region if $g(\bx) < g(\bx^*)$ for all $\bx$ in that region, and high-cost region otherwise.}, we can further prove that $g(\bx_{k+1})$ 
will not be too much larger than $g(\bx^*)$. 
Combining the above conclusions with Property 3, we bound the cost incurred in each iteration of \OPTname. Then with the convergence guarantee, we can finally bound the total cost incurred in \OPTname (detailed in Proposition~\ref{prop:cost_diff_add} and Theorem~\ref{thm:FLOWcost_twophase_add}).

\textbf{Comparison with existing algorithms.}  
Compared to our method, commonly used Bayesian optimization methods in HPO cannot guarantee Property 2 and 3 introduced in the above paragraph and thus can hardly have a theoretical guarantee on the total cost. Our method is closely related to directional direct search methods~\cite{Kolda2003} and zeroth-order optimization~\cite{nesterov2017random}. Directional direct search methods can guarantee Property 1 and 2 
but they usually cannot guarantee Property 3 and do not have a good convergence result. Our method inherits advantages of zeroth-order optimization~\cite{nesterov2017random} in terms of being able to use randomized function evaluations to approximate gradient information (directional derivative in our case) for general black-box functions, which is the key for our good convergence rate guarantee. However, unlike our method, zeroth-order optimization methods usually cannot guarantee Property 1. Because of the reasons above, it is difficult for both types of the aforementioned methods to establish a bound on the cost similar to \OPTname. 
Our method also shares a similar spirit with hypergradient descent techniques~\cite{pmlr-v37-maclaurin15,10.5555/3045390.3045469} in the sense that we are both trying to use approximated gradient information (with respect to hyperparameters) to guide the search. However, the application of hypergradient descent techniques depends on how the training algorithm works while our method is applicable to general black-box functions. 

\subsection{Convergence of \OPTname} \label{sec:flow2_convergence}

\textbf{Insights about convergence.} From the convergence perspective, \OPTname\ is built upon the insight that if $f(\bx)$ is differentiable\footnote{For non-differentiable functions we can use smoothing techniques, such as Gaussian smoothing~\cite{nesterov2017random} to make a close differentiable approximation of the original objective function. 
} and $\mu$ is small, $\frac{f(\bx+ \mu \bu) - f(\bx)}{\mu}$ can be considered as an approximation to the directional derivative of loss function on direction $\bu$, i.e., $f'_{\bu}(\bx)$. By moving toward the directions where the approximated directional derivative
$\frac{f(\bx+ \mu \bu) - f(\bx)}{\mu}\approx f'_{\bu}(\bx)$ 
is negative, it is likely that we can move toward regions that can decrease the loss. Following this intuition, we establish rigorous theoretical guarantees for the convergence of \OPTname in both non-convex and convex cases under a L-smoothness condition. \techreport{All proof details are provided in Appendix~\ref{sec:appendix_proof_convergence}.}

\begin{condition}[L-smoothness] \label{assumption_smooth} 
Differentiable function $f$ is L-smooth if for some non-negative constant $L$, $\forall \bx, \by \in \mathcal{X}$, $ |f(\by) - f(\bx) - \nabla f(\bx)^\mt (\by - \bx) )| \leq \frac{L}{2} \lVert \by - \bx \rVert^2$, where $\nabla f(\bx)$ denotes the gradient of $f$ at $\bx$.
\end{condition}

\begin{proposition} \label{proposition:flow_gradient_prop}
Under Condition~\ref{assumption_smooth}, \OPTname\ guarantees $f(\bx_k) - \bbE
[f(\bx_{k+1})|\bx_k]   \geq \mu  \cdot c_d \lVert \nabla f(\bx_k) \rVert_2 - \frac{L\mu^2}{2} $, in which $c_d= \frac{2\Gamma(\frac d 2)}{(d-1)\Gamma(\frac{d-1}{2})\sqrt\pi}$ and $\Gamma(\cdot)$ denotes the Gamma function. 

\end{proposition}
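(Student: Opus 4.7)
The plan is to establish a \emph{pointwise} lower bound on the per-iteration decrease $f(\bx_k) - f(\bx_{k+1})$ and then take conditional expectation over $\bu_k$. Set $g := \nabla f(\bx_k)$, $A := f(\bx_k) - f(\bx_k + \mu\bu_k)$, and $B := f(\bx_k) - f(\bx_k - \mu\bu_k)$. By the update rule of \OPTname, $f(\bx_k) - f(\bx_{k+1})$ equals $A$ when $A > 0$, equals $B$ when $A \leq 0 < B$, and equals $0$ otherwise, so it is always nonnegative. I will prove the pointwise inequality
\begin{equation*}
f(\bx_k) - f(\bx_{k+1}) \,\geq\, \mu\,|g^{\mt}\bu_k| \,-\, \frac{L\mu^2}{2},
\end{equation*}
from which the proposition follows after taking $\bbE[\cdot \mid \bx_k]$ and invoking the identity $\bbE[|g^{\mt}\bu_k|] = c_d\|g\|_2$ for $\bu_k$ uniform on $\bbS$.

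\textbf{Pointwise bound via L-smoothness.} Condition~1 gives the two-sided estimates $-\mu g^{\mt}\bu_k - L\mu^2/2 \leq A \leq -\mu g^{\mt}\bu_k + L\mu^2/2$ and $\mu g^{\mt}\bu_k - L\mu^2/2 \leq B \leq \mu g^{\mt}\bu_k + L\mu^2/2$. I case-split on the sign of $g^{\mt}\bu_k$. If $g^{\mt}\bu_k \leq 0$, the lower bound on $A$ already reads $A \geq \mu|g^{\mt}\bu_k| - L\mu^2/2$; if this right-hand side is positive then $A > 0$, the algorithm accepts $+\bu_k$, and the decrease $A$ meets the bound, while if the right-hand side is nonpositive the claim holds trivially since the decrease is nonnegative. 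The case $g^{\mt}\bu_k > 0$ is symmetric through $B$, but has one subtle sub-case: when $A > 0$ the algorithm accepts $+\bu_k$ rather than the (potentially better) $-\bu_k$ step. In that event, however, the L-smooth \emph{upper} bound $A \leq -\mu g^{\mt}\bu_k + L\mu^2/2$ combined with $A > 0$ forces $\mu|g^{\mt}\bu_k| < L\mu^2/2$, which makes the claimed right-hand side negative and hence trivially satisfied. The remaining sub-cases ($A \leq 0 < B$ and $A,B \leq 0$) reduce to the same two patterns.

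\textbf{Spherical expectation.} By rotational invariance of the uniform measure on $\bbS$, $g^{\mt}\bu_k$ is distributed as $\|g\|_2\cdot u_1$, where $u_1$ is the first coordinate of a uniform draw from $\bbS$. Its marginal density on $[-1,1]$ is $p(u) = \frac{\Gamma(d/2)}{\sqrt\pi\,\Gamma((d-1)/2)}(1-u^2)^{(d-3)/2}$, and a direct computation using the substitution $v = 1 - u^2$ together with $\int_0^1 v^{(d-3)/2}\,dv = \tfrac{2}{d-1}$ yields $\bbE[|u_1|] = c_d$. Hence $\bbE[|g^{\mt}\bu_k|] = c_d\|g\|_2$, and combining this with the pointwise bound completes the proof.

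\textbf{Main obstacle.} The principal difficulty is the asymmetric acceptance rule: a naive bound $f(\bx_k) - f(\bx_{k+1}) \geq \max(A,0)$ only yields $\tfrac12\mu c_d\|g\|_2 - \tfrac14 L\mu^2$, off by a factor of two from the statement. The pointwise case analysis above sidesteps this gap by noticing that the only realizations where the update order costs the algorithm the stronger of the two candidate steps are precisely those where $|g^{\mt}\bu_k|$ is so small that L-smoothness already renders the claimed lower bound trivially negative.
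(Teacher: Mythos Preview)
Your proof is correct and reaches the same conclusion, but organizes the case analysis differently from the paper. The paper partitions the unit sphere into three regions $\tilde\bbS_{\bx_k}^{-}$, $\tilde\bbS_{\bx_k}^{+}$, $\tilde\bbS_{\bx_k}^{\#}$ according to whether $|f'_{\bu}(\bx_k)|$ exceeds $L\mu/2$, proves via two auxiliary lemmas that on $\tilde\bbS_{\bx_k}^{-}$ the algorithm necessarily accepts $+\bu$ and on $\tilde\bbS_{\bx_k}^{+}$ it necessarily accepts $-\bu$, computes region-wise expectations of the smoothness upper bound $\bz_{k+1}$, and then reassembles the pieces into $\bbE_{\bu\in\bbS}[|f'_{\bu}(\bx_k)|]$. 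You instead establish the single pointwise inequality $f(\bx_k)-f(\bx_{k+1})\ge \mu|g^{\mt}\bu_k|-L\mu^2/2$ valid for \emph{every} realization of $\bu_k$, and take one expectation at the end. Your handling of the delicate sub-case---accepting $+\bu_k$ when $g^{\mt}\bu_k>0$---captures exactly what the paper's ``small-derivative'' region $\tilde\bbS_{\bx_k}^{\#}$ encodes, but your phrasing makes the underlying reason (the claimed right-hand side is then negative) more transparent. The pointwise route is shorter, avoids the auxiliary lemmas, and in fact proves a slightly stronger almost-sure statement; the paper's region decomposition, on the other hand, makes explicit which portions of the sphere actually drive the expected decrease.
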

This proposition provides a lower bound on the expected decrease of loss for every iteration in \OPTname, i.e., $f(\bx_k) - \bbE[f(\bx_{k+1})|\bx_k]$, where expectation is taken over the randomly sampled directions $\bu_k$. 


\paragraph{Proof idea} The main challenge in our proof lies in the fact that 
while $\bu_k$ is sampled uniformly from the unit hypersphere, the update condition (Line 5 and 6, which are also designed with the purpose of cost control) filters certain directions, and complicates the computation of the expectation.
The main idea of our proof is to partition the unit sphere $\bbS$ into different regions
according to the value of the directional derivative.
For the regions where the directional derivative along the sampled direction $\bu_k$ has large absolute value,
it can be shown that 
our moving direction is close to the gradient descent direction using the L-smoothness condition, which leads to large decrease in loss. 
We prove that even if the loss decrease for $\bu$ in other regions is 0, the overall expectation of loss decrease is close to the expectation of absolute directional derivative over the unit sphere, which equals to $c_d \lVert f(\bx_k) \rVert_{2}$. 

In the following two theorems we characterize the rate of convergence to the global optimal point $\bx^*$ in the convex case and to first-order stationary points in the non-convex case.
\begin{theorem}[Convergence of \OPTname\ in the convex case] \label{theorem:flow_convergence_convex}
If $f$ is convex and satisfies Condition~\ref{assumption_smooth}, $\bbE[f(\bx_K)] - f(\bx^*) \leq  e^{-\frac{\mu c_d K}{R}} r_0 +  \frac{L\mu R}{2c_d}$, in which $c_d$ is defined as in Proposition~\ref{proposition:flow_gradient_prop}, $r_0 = f(\bx_0) - f(\bx^*)$ and $R = \max_{k \in [ K]} \lVert \bx_k - \bx^* \rVert_2$. If $ \mu \propto \frac{1}{\sqrt{K}}$, $  \bbE[f(\bx_K)] - f(\bx^*) \leq  e^{-\frac{c_d \sqrt{K}}{R}} r_0 +  \frac{L R}{2c_d \sqrt{K}}$.

\end{theorem}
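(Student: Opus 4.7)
The plan is to use Proposition~\ref{proposition:flow_gradient_prop} together with convexity to set up a one-step contraction on the expected suboptimality, then unroll the resulting recursion.

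First, I would combine the per-iteration descent bound with convexity. By convexity and Cauchy--Schwarz, for every $\bx_k$,
\[
f(\bx_k) - f(\bx^*) \leq \nabla f(\bx_k)^{\mt}(\bx_k - \bx^*) \leq \lVert \nabla f(\bx_k)\rVert_2 \cdot \lVert \bx_k - \bx^*\rVert_2 \leq R \cdot \lVert \nabla f(\bx_k)\rVert_2,
\]
since $\lVert \bx_k - \bx^*\rVert_2 \leq R$ by definition of $R$. Hence $\lVert \nabla f(\bx_k)\rVert_2 \geq (f(\bx_k) - f(\bx^*))/R$ pointwise. Substituting into Proposition~\ref{proposition:flow_gradient_prop} and taking total expectation over the history of sampled directions gives, with $\Delta_k := \bbE[f(\bx_k)] - f(\bx^*)$,
\[
\Delta_k - \Delta_{k+1} \;\geq\; \frac{\mu c_d}{R}\,\Delta_k - \frac{L\mu^2}{2},
\]
equivalently $\Delta_{k+1} \leq \bigl(1 - \tfrac{\mu c_d}{R}\bigr)\Delta_k + \tfrac{L\mu^2}{2}$ (assuming $\mu c_d/R \leq 1$, otherwise the bound is trivial).

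Next I would unroll this linear recursion. Iterating from $k=0$ to $k=K-1$ yields
\[
\Delta_K \;\leq\; \Bigl(1 - \tfrac{\mu c_d}{R}\Bigr)^{K} \Delta_0 \;+\; \frac{L\mu^2}{2}\sum_{j=0}^{K-1}\Bigl(1 - \tfrac{\mu c_d}{R}\Bigr)^{j}.
\]
Using the geometric-sum bound $\sum_{j=0}^{K-1}(1-x)^j \leq 1/x$ for $x \in (0,1]$ with $x = \mu c_d/R$, the second term is at most $\frac{L\mu^2}{2}\cdot \frac{R}{\mu c_d} = \frac{L\mu R}{2 c_d}$. Using $(1-x)^K \leq e^{-xK}$ on the first term gives the claimed bound
\[
\bbE[f(\bx_K)] - f(\bx^*) \;\leq\; e^{-\mu c_d K / R}\, r_0 \;+\; \frac{L\mu R}{2 c_d}.
\]
Finally, substituting $\mu \propto 1/\sqrt{K}$ into the exponent and the additive term yields the second statement of the theorem.

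The only subtle point is the handling of the random quantity $R$: I would treat $R$ as a deterministic upper bound on $\lVert \bx_k - \bx^*\rVert_2$ that holds along the trajectory (e.g.\ the diameter of a level set containing the iterates), so that the convexity-based inequality $\lVert \nabla f(\bx_k)\rVert_2 \geq (f(\bx_k) - f(\bx^*))/R$ holds pointwise and commutes with expectation. Apart from that, the argument is a standard contraction-plus-noise unrolling, so no step should pose a serious technical obstacle once Proposition~\ref{proposition:flow_gradient_prop} is in hand.
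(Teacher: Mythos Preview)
Your proposal is correct and follows essentially the same route as the paper: combine Proposition~\ref{proposition:flow_gradient_prop} with the convexity inequality $f(\bx_k)-f(\bx^*)\le R\,\lVert\nabla f(\bx_k)\rVert_2$, take expectations to obtain the linear recursion $\Delta_{k+1}\le(1-\mu c_d/R)\Delta_k+\tfrac{L\mu^2}{2}$, and unroll using the geometric-sum bound and $(1-x)^K\le e^{-xK}$. Your remark that $R$ as defined is trajectory-dependent and should really be interpreted as a deterministic upper bound (e.g.\ a level-set diameter) is a valid caveat that the paper's proof glosses over.
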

\begin{theorem}[Convergence of \OPTname\ in the non-convex case]
\label{theorem:flow_convergence}
Under Condition~\ref{assumption_smooth},  $\min_{k \in [K]} \bbE[\lVert \nabla f(\bx_k) \rVert_2 ] \leq  \frac{r_0  + \frac{1}{2}L K \mu^2}{ c_d (K-1) \mu }$, in which 
$\frac{1}{c_d} = O(\sqrt{d})$, and $r_0 = f(\bx_0) -  f(\bx^*)$.  By letting $\mu \propto \frac{1}{\sqrt{K}}$, $\min_{k \in [K]  } \bbE[ \lVert \nabla f(\bx_k) \rVert_2 ]= O(\frac{\sqrt{d}}{\sqrt{K}})$.
\end{theorem}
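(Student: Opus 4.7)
The plan is to derive Theorem~\ref{theorem:flow_convergence} as a standard telescoping consequence of Proposition~\ref{proposition:flow_gradient_prop}, in the same spirit as non-convex convergence analyses for stochastic and zeroth-order gradient methods. The heavy lifting---handling the asymmetric accept/reject update rule and the uniform-on-sphere sampling---is already packaged into Proposition~\ref{proposition:flow_gradient_prop}; what remains is bookkeeping plus an asymptotic estimate of $c_d$.

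First, I would take total expectation of the bound in Proposition~\ref{proposition:flow_gradient_prop}. By the tower property, for every $k$,
\begin{equation*}
\mu\, c_d \, \bbE[\lVert \nabla f(\bx_k) \rVert_2] \;\leq\; \bbE[f(\bx_k)] - \bbE[f(\bx_{k+1})] + \frac{L \mu^2}{2}.
\end{equation*}
Summing this inequality from $k = 1$ to $K-1$ telescopes the right-hand side to $\bbE[f(\bx_1)] - \bbE[f(\bx_K)] + \tfrac{(K-1) L \mu^2}{2}$. Because Algorithm~\ref{alg:flow2} is monotone non-increasing in loss we have $\bbE[f(\bx_1)] \leq f(\bx_0)$, and clearly $\bbE[f(\bx_K)] \geq f(\bx^*)$, so this sum is at most $r_0 + \tfrac{K L \mu^2}{2}$. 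Dividing by $c_d (K-1)\mu$ and using $\min \leq \text{average}$ gives the first displayed inequality of the theorem.

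For the asymptotic claim, I would estimate $c_d = \frac{2\Gamma(d/2)}{(d-1)\Gamma((d-1)/2)\sqrt{\pi}}$ via the standard ratio estimate $\Gamma(x + 1/2)/\Gamma(x) = \sqrt{x}\,(1 + O(1/x))$ applied with $x = (d-1)/2$. This yields $\Gamma(d/2)/\Gamma((d-1)/2) = \Theta(\sqrt{d})$, hence $c_d = \Theta(1/\sqrt{d})$ and $1/c_d = O(\sqrt{d})$. Substituting $\mu \propto 1/\sqrt{K}$ then leaves the numerator bounded (of order $r_0 + L$) and turns the denominator into $\Theta((K-1)/\sqrt{K}) = \Theta(\sqrt{K})$, so the overall rate is $O(\sqrt{d}/\sqrt{K})$.

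I do not anticipate a real obstacle here: Proposition~\ref{proposition:flow_gradient_prop} already absorbs the only delicate probabilistic argument---controlling the expected decrease despite the accept/reject rule filtering some directions. Everything downstream is descent-lemma-style telescoping combined with min-over-average. The only mild care point is the Gamma-ratio asymptotic for $c_d$, which is textbook Stirling.
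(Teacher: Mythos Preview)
Your proposal is correct and follows essentially the same approach as the paper: take the per-iteration expected-descent bound of Proposition~\ref{proposition:flow_gradient_prop}, apply the tower property, telescope, bound $\bbE[f(\bx_K)] \geq f(\bx^*)$, and finish with $\min \leq$ average. The paper sums from $k=0$ rather than $k=1$ (and does not explicitly spell out the Gamma-ratio asymptotic for $c_d$), but these are cosmetic differences; your use of monotonicity to handle $\bbE[f(\bx_1)] \leq f(\bx_0)$ is a valid variant that matches the $(K-1)$ denominator in the theorem statement more cleanly.
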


The proof of the above two theorems is based on Proposition \ref{proposition:flow_gradient_prop}.
\techreport{We provide a comparison with the convergence properties of zeroth-order optimization and Bayesian optimization methods in Remark~\ref{remark:convergence_compare_zo} and \ref{remark:convergence_compare_bo} of Appendix~\ref{sec:appendix_proof_convergence}.} It is obvious that when the cost is constant, a good convergence of loss directly translates to a good bound on the total cost. However this is not necessarily true when cost-related hyperparameters exist, in which case a naive upper bound for $G(\pi)$ can be as large as $ K^*_{\pi} \max_{\bx \in \cX} g(\bx)$, recalling that $K^*_{\pi}$ denotes the number of iterations used by $\pi$ to find $\bx^*$. 
\subsection{Cost analysis of \OPTname} \label{sec:flow2_cost_ana}
In this section, we provide a rigorous analysis of the cost behavior of \OPTname. 
\techreport{All proof details are provided in Appendix~\ref{sec:appendix_proof_cost}. }

\begin{condition}[Lipschitz continuity of cost function $g(\bx)$] \label{assu:cost_Lipsch}
 $\forall$ $\bx_1, \bx_2 \in \cX$, $| g(\bx_1) - g(\bx_2)| \leq U \times z(\bx_1- \bx_2) $,
 in which $U$ is the Lipschitz constant, and $z(\bx_1 - \bx_2)$ is a particular distance function of $\bx_1 - \bx_2$. For example $z(\cdot)$ can be the Euclidean norm on the cost-related subset of the dimensions. 
\end{condition}
Condition~\ref{assu:cost_Lipsch} recognizes the existence of a certain degree of continuity in the cost function. Using the gradient boosted trees example, let $\bx = (x_1, x_2, x_3)$, where the three dimensions represent the tree number, max tree depth and learning rate respectively. This assumption implies that the difference between $g((50, 10, l_1))$ and $g((51, 11, l_2))$, where $l_1$ and $l_2$ are two settings of the learning rate, should not be too large.

Using the notations defined in Condition~\ref{assu:cost_Lipsch}, we define $D \coloneqq U \times \max_{\bu \in \bbS} z(\mu \bu)$, which is intuitively the largest distance between the points in two consecutive iterations of \OPTname\ considering the fact that the new point is sampled from the $(d-1)$-unit sphere surrounding the incumbent point. Let $\tilde \bx^*$ denotes a locally optimal point of $f$.

\begin{condition} [Local monotonicity between cost and loss] \label{assu:cost_loss_monotonicity}
 $\forall$ $\bx_1,\bx_2 \in \cX$, if $2D+ g(\tilde \bx^*) \geq g(\bx_1) > g(\bx_2) \geq g(\tilde \bx^*)$, then $f(\bx_1) \geq f(\bx_2)$.
\end{condition}

Condition~\ref{assu:cost_loss_monotonicity} states that when the cost surpasses a locally optimal point $\tilde \bx^*$'s cost, i.e., $g(\bx) \geq g(\tilde \bx^*)$, 
with the increase of the evaluation cost in a local range, the loss does not decrease. Intuitively, for most ML models, when the model's complexity\footnote{Model complexity is usually positively correlated with evaluation cost.} is increased beyond a suitable size, the model's performance would not improve with the increase on the model's complexity due to overfitting.  
\techreport{Empirical justification for this assumption is provided in Remark~\ref{remark:monotonicity} of Appendix~\ref{sec:appendix_cost}.}

\begin{proposition}[Bounded cost change in \OPTname] \label{prop:cost_diff_add}
If Condition~\ref{assu:cost_Lipsch} is true, then $g(\bx_{k+1}) \leq g(\bx_k) +  D $, $\forall k$. 
\end{proposition}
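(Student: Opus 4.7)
The plan is to proceed by a direct case analysis on the update rule of Algorithm~\ref{alg:flow2}, since at any iteration $k$ there are only three possibilities for $\bx_{k+1}$: it is either $\bx_k + \mu \bu_k$, $\bx_k - \mu \bu_k$, or $\bx_k$ itself. In the third case the inequality $g(\bx_{k+1}) \le g(\bx_k) + D$ is immediate, so the content of the proposition lies entirely in bounding the cost change in the first two cases, where $\bx_{k+1} - \bx_k = \pm\mu\bu_k$ and $\bu_k$ is drawn from the unit sphere $\bbS$.

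For those two cases I would invoke Condition~\ref{assu:cost_Lipsch} directly with $\bx_1 = \bx_{k+1}$ and $\bx_2 = \bx_k$ to obtain
\[
g(\bx_{k+1}) - g(\bx_k) \;\le\; |g(\bx_{k+1}) - g(\bx_k)| \;\le\; U \cdot z(\bx_{k+1} - \bx_k) \;=\; U \cdot z(\pm \mu \bu_k).
\]
Then I would use the fact that $\bu_k \in \bbS$ (so that $\pm \bu_k \in \bbS$ as well, since the unit sphere is symmetric about the origin) to bound $z(\pm \mu \bu_k) \le \max_{\bu \in \bbS} z(\mu \bu)$. Combining these two inequalities gives $g(\bx_{k+1}) - g(\bx_k) \le U \cdot \max_{\bu \in \bbS} z(\mu \bu) = D$, which is the desired bound. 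Unifying with the stationary case $\bx_{k+1} = \bx_k$ yields the conclusion for every $k$.

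The argument is essentially a one-line consequence of the update rule together with Condition~\ref{assu:cost_Lipsch}, so there is no real obstacle in the proof itself; the only subtlety worth stating explicitly is that $z$ must be evaluated at the step $\pm \mu \bu_k$, not at some larger displacement — this is guaranteed by Property~2 discussed earlier (the next incumbent lies in a $\mu$-neighborhood of the current one), which is exactly what the algorithm's conservative update rule enforces. The real work of turning this per-iteration bound into a cost bound that involves $g(\bx^*)$ rather than $g(\bx_0)$ will come later (using Condition~\ref{assu:cost_loss_monotonicity}), but that is outside the scope of this proposition.
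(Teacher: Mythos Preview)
Your proposal is correct and matches the paper's own argument essentially verbatim: the paper likewise does a three-way case split on $\bx_{k+1}\in\{\bx_k+\mu\bu_k,\ \bx_k-\mu\bu_k,\ \bx_k\}$, applies Condition~\ref{assu:cost_Lipsch} in the two moving cases to get $g(\bx_{k+1})\le g(\bx_k)+U\,z(\pm\mu\bu_k)$, and then bounds by $U\max_{\bu\in\bbS}z(\mu\bu)=D$. There is nothing to add.
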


\begin{proposition}[Bounded cost for any function evaluation of \OPTname] \label{prop:cost_bound_add}
Under Condition~\ref{assu:cost_Lipsch} and \ref{assu:cost_loss_monotonicity}, if $g(\bx_0) < g(\tilde \bx^*)$, then $g(\bx_k) \leq g(\tilde \bx^*) +  D $, $\forall k$.
\end{proposition}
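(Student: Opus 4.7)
The plan is to prove the bound by induction on $k$, combining Proposition~\ref{prop:cost_diff_add} as a one-step cost-change control with Condition~\ref{assu:cost_loss_monotonicity} to rule out any move that would carry the iterate beyond the $g(\tilde \bx^*) + D$ envelope. The base case $k=0$ is immediate from the hypothesis $g(\bx_0) < g(\tilde \bx^*) \leq g(\tilde \bx^*) + D$.

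For the inductive step, assume $g(\bx_k) \leq g(\tilde \bx^*) + D$ and split on whether $\bx_k$ is in the cheap region. If $g(\bx_k) \leq g(\tilde \bx^*)$, then Proposition~\ref{prop:cost_diff_add} directly yields $g(\bx_{k+1}) \leq g(\bx_k) + D \leq g(\tilde \bx^*) + D$, and the claim holds for $k+1$.

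The nontrivial case is $g(\tilde \bx^*) < g(\bx_k) \leq g(\tilde \bx^*) + D$: here a naive application of Proposition~\ref{prop:cost_diff_add} only gives $g(\bx_{k+1}) \leq g(\tilde \bx^*) + 2D$, which is too weak, so I would argue by contradiction. Suppose $g(\bx_{k+1}) > g(\tilde \bx^*) + D$. Then $g(\bx_{k+1}) > g(\bx_k)$, and in particular $\bx_{k+1} \neq \bx_k$, meaning \OPTname\ executed the update on line~5 or line~6 of Algorithm~\ref{alg:flow2}, which forces $f(\bx_{k+1}) < f(\bx_k)$. On the other hand, Proposition~\ref{prop:cost_diff_add} together with the inductive hypothesis gives $g(\bx_{k+1}) \leq g(\bx_k) + D \leq g(\tilde \bx^*) + 2D$, so both $g(\bx_k)$ and $g(\bx_{k+1})$ lie in the window $(g(\tilde \bx^*),\, g(\tilde \bx^*) + 2D]$. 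This is exactly the regime in which Condition~\ref{assu:cost_loss_monotonicity} applies with $\bx_1 = \bx_{k+1}$ and $\bx_2 = \bx_k$, yielding $f(\bx_{k+1}) \geq f(\bx_k)$ and contradicting the loss-decrease condition that triggered the update.

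The main obstacle will be keeping the case analysis tight: the monotonicity condition must be invoked with the correct orientation (placing the more expensive iterate as $\bx_1$), and the window $[g(\tilde \bx^*),\, g(\tilde \bx^*) + 2D]$ has to match exactly the per-step displacement bound produced by Proposition~\ref{prop:cost_diff_add}, so that the invariant $g(\bx_k) \leq g(\tilde \bx^*) + D$ can be propagated in a single inductive step without any slack. Everything else is a clean combination of the update rule of \OPTname\ with the two conditions, with no delicate probabilistic reasoning required since the bound is pathwise rather than in expectation.
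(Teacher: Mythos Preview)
Your proposal is correct and follows essentially the same route as the paper: induction on $k$, a case split on whether $g(\bx_k)\le g(\tilde\bx^*)$, the one-step bound from Proposition~\ref{prop:cost_diff_add} in the cheap case, and a contradiction via Condition~\ref{assu:cost_loss_monotonicity} together with the strict loss-decrease rule of \OPTname\ in the expensive case. The only cosmetic difference is that the paper phrases the contradiction as ``assume $g(\bx_{k+1})>g(\bx_k)$'' (after disposing of $g(\bx_{k+1})\le g(\tilde\bx^*)$) whereas you assume $g(\bx_{k+1})>g(\tilde\bx^*)+D$ directly; both place $\bx_k,\bx_{k+1}$ in the window $(g(\tilde\bx^*),\,g(\tilde\bx^*)+2D]$ and reach the same contradiction.
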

Proposition~\ref{prop:cost_bound_add} asserts that the cost of each evaluation is always within a constant away from the evaluation cost of the locally optimal point. The high-level idea is that \OPTname\ will only move when there is a decrease in the validation loss and thus the search procedure would not use much more than the locally optimal point's evaluation cost once it enters the locally monotonic area defined in Condition~\ref{assu:cost_loss_monotonicity}. 

Let $\tilde G(\OPTname)$ denotes the expected total evaluation cost for \OPTname\ to approach a first-order stationary point $f(\tilde \bx^*)$ within distance $\epsilon$, and $K^*$ as the expected number of iterations taken by \OPTname\ until convergence. 
According to Theorem~\ref{theorem:flow_convergence}, $K^* = O(\frac{d}{\epsilon^2})$.

\begin{theorem} [Expected total evaluation cost of \OPTname] \label{thm:FLOWcost_twophase_add}
Under Condition~\ref{assu:cost_Lipsch} and Condition~\ref{assu:cost_loss_monotonicity}, if $K^* \leq \ceil{\frac{\gamma}{D}}$, $\tilde G(\OPTname) \leq K^*(g(\tilde \bx^*) + g(\bx_0)) + 2K^* D$; else,
$\tilde G(\OPTname) \leq 2K^* g(\tilde \bx^*) + 4K^*D  -(\frac{\gamma}{D} -1)\gamma $, 
in which $\gamma=g(\tilde \bx^*) - g(\bx_0) > 0$.
\end{theorem}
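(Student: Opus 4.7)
The plan is to bound the total evaluation cost $\tilde G(\OPTname)$ by analyzing the per-iteration cost and summing over the $K^*$ iterations, separating a ``buildup'' regime in which the incumbent's cost climbs from $g(\bx_0)$ toward $g(\tilde\bx^*)$ from a ``stable'' regime in which the incumbent's cost is pinned near $g(\tilde\bx^*)$ by Proposition~\ref{prop:cost_bound_add}. The two main tools already established are Proposition~\ref{prop:cost_diff_add} (per-step change $|g(\bx_{k+1})-g(\bx_k)|\le D$) and Proposition~\ref{prop:cost_bound_add} ($g(\bx_k)\le g(\tilde\bx^*)+D$).

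First I would bound the cost of a single iteration. Iteration $k$ performs at most two evaluations, at $\bx_k\pm\mu\bu_k$, and by Condition~\ref{assu:cost_Lipsch} each neighbor has cost at most $g(\bx_k)+D$, so the iteration contributes at most $2g(\bx_k)+2D$. Including the initialization cost,
\[
\tilde G(\OPTname) \;\leq\; g(\bx_0) + 2\sum_{k=0}^{K^*-1} g(\bx_k) + 2K^*D.
\]
To bound each $g(\bx_k)$, I would pick whichever of the two propositions is tighter at index $k$: iterating Proposition~\ref{prop:cost_diff_add} from $k=0$ gives $g(\bx_k)\le g(\bx_0)+kD$ (tight during buildup), while Proposition~\ref{prop:cost_bound_add} gives $g(\bx_k)\le g(\tilde\bx^*)+D$ (tight in the stable phase). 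Writing $m:=\lceil\gamma/D\rceil$, the crossover between the two bounds is at $k=m$.

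For Case 1 ($K^*\le m$) I would apply $g(\bx_k)\le g(\bx_0)+kD$ throughout, obtaining $\sum g(\bx_k)\le K^*g(\bx_0)+DK^*(K^*-1)/2$. Since $K^*\le\lceil\gamma/D\rceil$ implies $D(K^*-1)\le\gamma$, this collapses to $\frac{1}{2}K^*(g(\bx_0)+g(\tilde\bx^*))$ via $\gamma=g(\tilde\bx^*)-g(\bx_0)$, and substitution into the total-cost inequality yields the Case-1 bound. For Case 2 ($K^*>m$) I would split the sum at $k=m$,
\[
\sum_{k=0}^{K^*-1} g(\bx_k) \;\leq\; \sum_{k=0}^{m-1}\bigl(g(\bx_0)+kD\bigr) + (K^*-m)\bigl(g(\tilde\bx^*)+D\bigr),
\]
expand the buildup portion as $mg(\bx_0)+Dm(m-1)/2$, and use $mD\approx\gamma$ to identify the ``savings'' of $-(\gamma/D-1)\gamma$ relative to the uniform bound $K^*(g(\tilde\bx^*)+D)$ one would get by applying Proposition~\ref{prop:cost_bound_add} alone. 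Multiplying by $2$ and adding $2K^*D$ then produces the Case-2 bound.

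The main obstacle is the arithmetic bookkeeping in Case 2: the quadratic term $Dm(m-1)/2$ must combine cleanly against the $-m(\gamma+D)$ discount arising from replacing $m$ stable-phase terms by buildup-phase terms, so as to isolate exactly the $-(\gamma/D-1)\gamma$ correction with the stated constants, while handling the ceiling in $m=\lceil\gamma/D\rceil$ when $\gamma/D$ is not integer. A secondary subtlety is that $K^*$ is itself an expectation (over the random directions $\bu_k$) bounded via Theorem~\ref{theorem:flow_convergence}; since the per-iteration bounds on $g(\bx_k)$ hold deterministically along every trajectory, the expected-cost inequality follows by linearity of expectation once the worst-case buildup profile has been pinned down.
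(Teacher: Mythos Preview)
Your proposal is correct and mirrors the paper's proof essentially step for step: bound the per-iteration cost by $2(g(\bx_k)+D)$, use the arithmetic ``buildup'' estimate $g(\bx_k)\le g(\bx_0)+kD$ from Proposition~\ref{prop:cost_diff_add} up to the crossover index $\lceil\gamma/D\rceil$, then switch to the cap $g(\bx_k)\le g(\tilde\bx^*)+D$ from Proposition~\ref{prop:cost_bound_add}, and sum. The only cosmetic differences are that the paper indexes the sum from $k=1$ to $K^*$ and silently drops the one-time initialization cost $g(\bx_0)$, and it does not spell out the linearity-of-expectation remark you (rightly) flag regarding $K^*$; your ceiling-handling concern is real but the paper's own arithmetic treats it loosely in the same spirit you describe.
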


Theorem~\ref{thm:FLOWcost_twophase_add}
shows that the total evaluation cost of \OPTname\ depends on the number of iterations $K^*$, the maximal change of cost between two iterations $D$, and the evaluation cost gap $\gamma$ between the initial point $\bx_0$ and $\tilde \bx^*$. 
From this result we can see that as long as the initial point is a low-cost point, i.e., $\gamma >0$, and the step size is not too large such that $D \leq g(\bx^*)$, the evaluation cost is always bounded by $\tilde G(\OPTname) \leq 4K^* \cdot (g(\tilde \bx^*) + g(\bx_0) + D) = O(d\epsilon^{-2}) g(\tilde \bx^*) $. Notice that $g(\tilde\bx^*)$ is the minimal cost to spend on evaluating the locally optimal point $\tilde\bx^*$. Our result suggests that the total cost for obtaining an $\epsilon$-approximation of the loss is bounded by $O(d\epsilon^{-2})$ times that minimal cost by expectation. When $g$ is a constant, our result degenerates to the bound on the number of iterations. 
We have not seen 
cost bounds of similar generality in existing work. 


\begin{remark}
To the best of our knowledge, the only theoretical analysis for HPO problems that considers cost appears in Hyperband~\cite{ICLR:li2017hyperband}. They derived a theoretical bound on the loss with respect to the input budget. 
However, as introduced in Section~\ref{sec:related_work}, their notion of `budget' is not suitable for modeling generic cost-related hyperparameters.  
\end{remark}

\begin{remark}
Theorem~\ref{thm:FLOWcost_twophase_add} holds as long as Lipschitz continuity (Condition~\ref{assu:cost_Lipsch}) and local monotonicity (Condition~\ref{assu:cost_loss_monotonicity}) are satisfied. It does not rely on the smoothness condition. So the cost analysis has its value independent of the convergence analysis. This general bound can be further reduced when the computational complexity of the ML algorithm with respect to the hyperparameters is partially known (in $\Theta(\cdot)$ notation instead of the exact analytic form). 
\techreport{In Remark~\ref{remark:factorized_cost} of Appendix~\ref{sec:appendix_cost_factorized}, we consider a particular form (i.e., factorized form) of cost function and provide an improved bound for such cost function. 
We also provide guidance on how to use simple invertible transformation to realize such cost function form given the computational complexity of the ML algorithm with respect to the hyperparameters. }
\end{remark}

\papertext{Due to the space limit, all proof details and how the bound on the total cost can be further reduced are deferred to the technical report of this paper~\cite{wu2020cost}.}

\begin{figure*} [h]
\centering 
\begin{subfigure}{0.26\paperwidth}
\includegraphics[width=1.0\columnwidth]{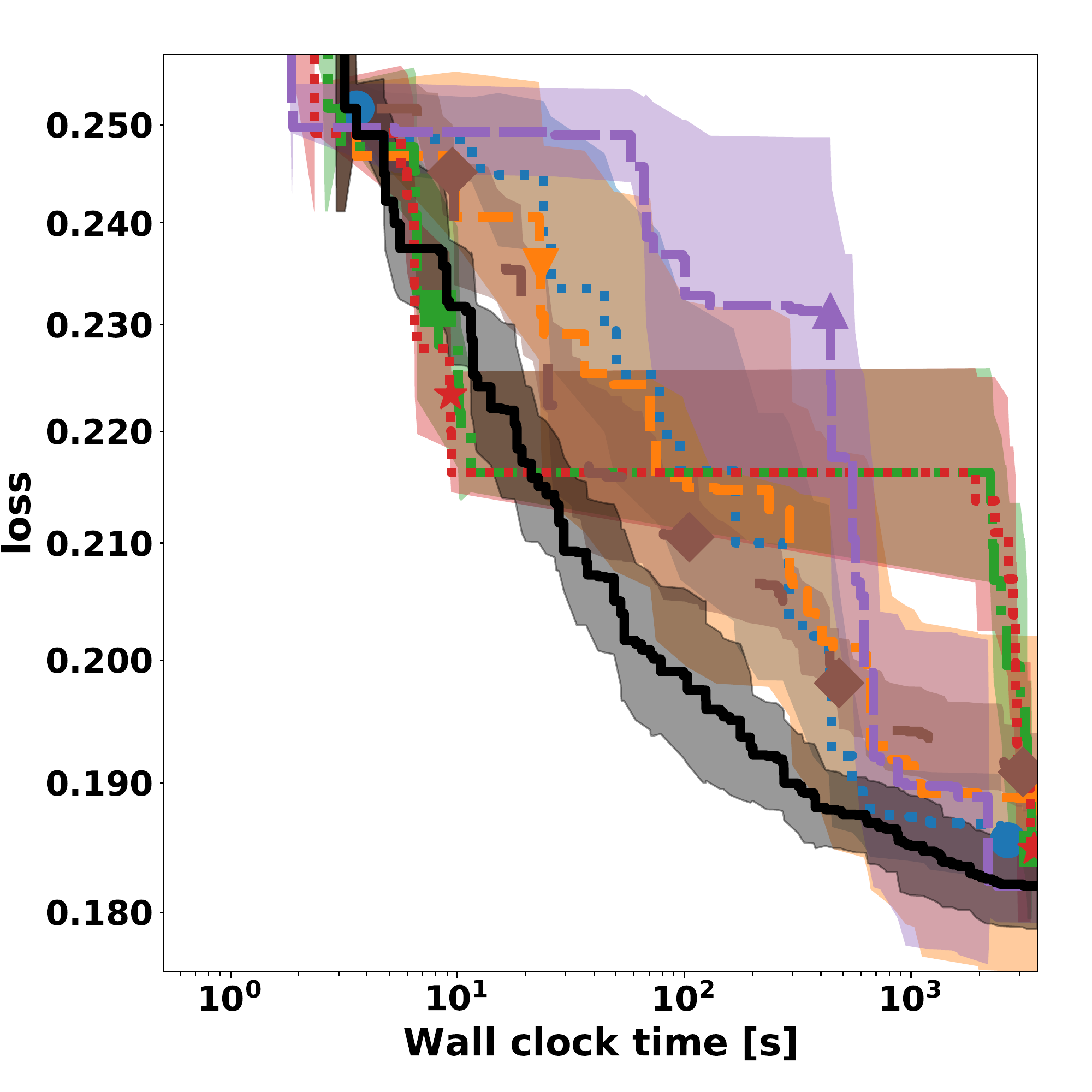}%
\caption{christine, loss = 1 - auc}%
\end{subfigure}\hfill
\begin{subfigure}{0.26\paperwidth}
\includegraphics[width=1.0\columnwidth]{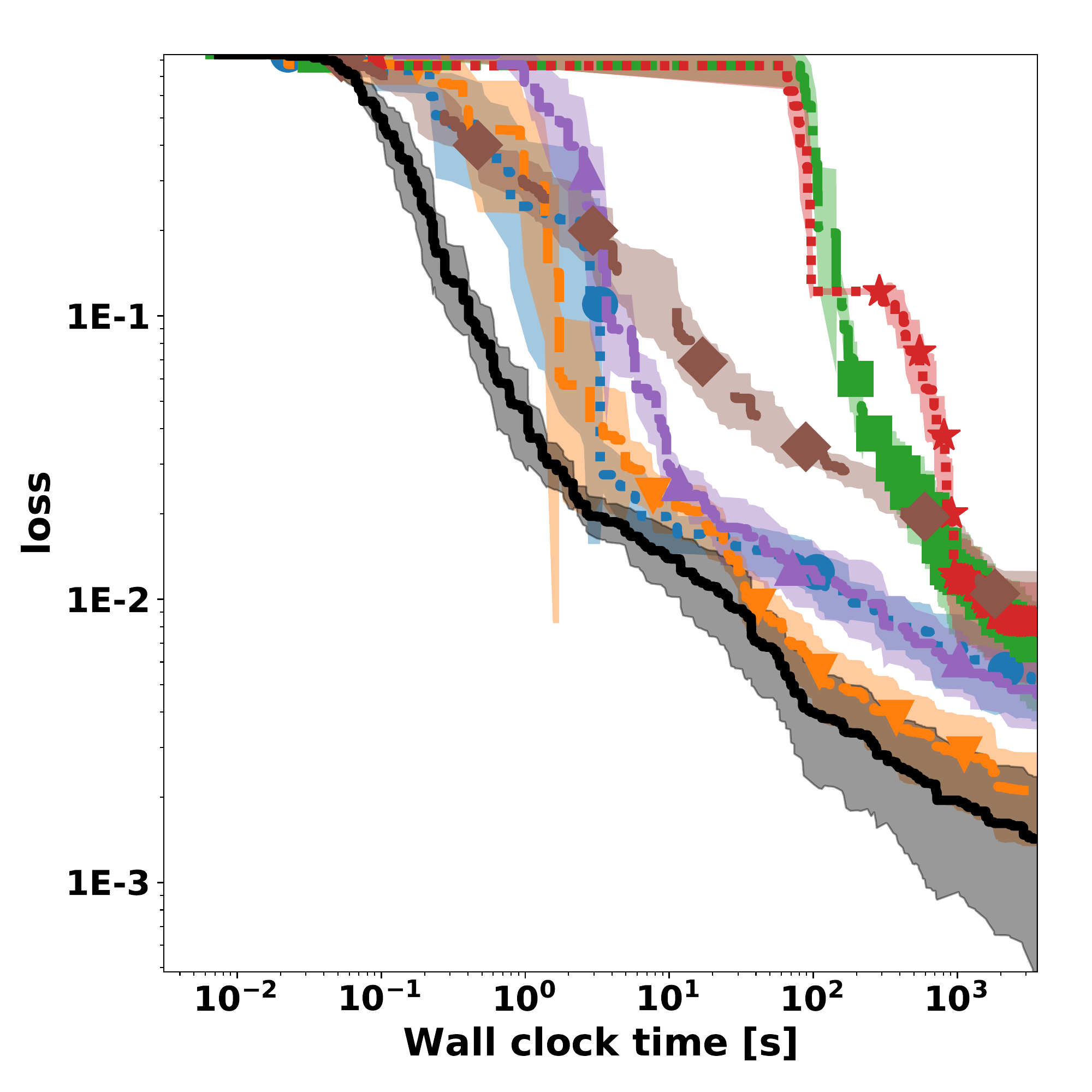}%
\caption{car, loss = log-loss}%
\end{subfigure}\hfill%
\begin{subfigure}{0.26\paperwidth}
\includegraphics[width=1.0\columnwidth]{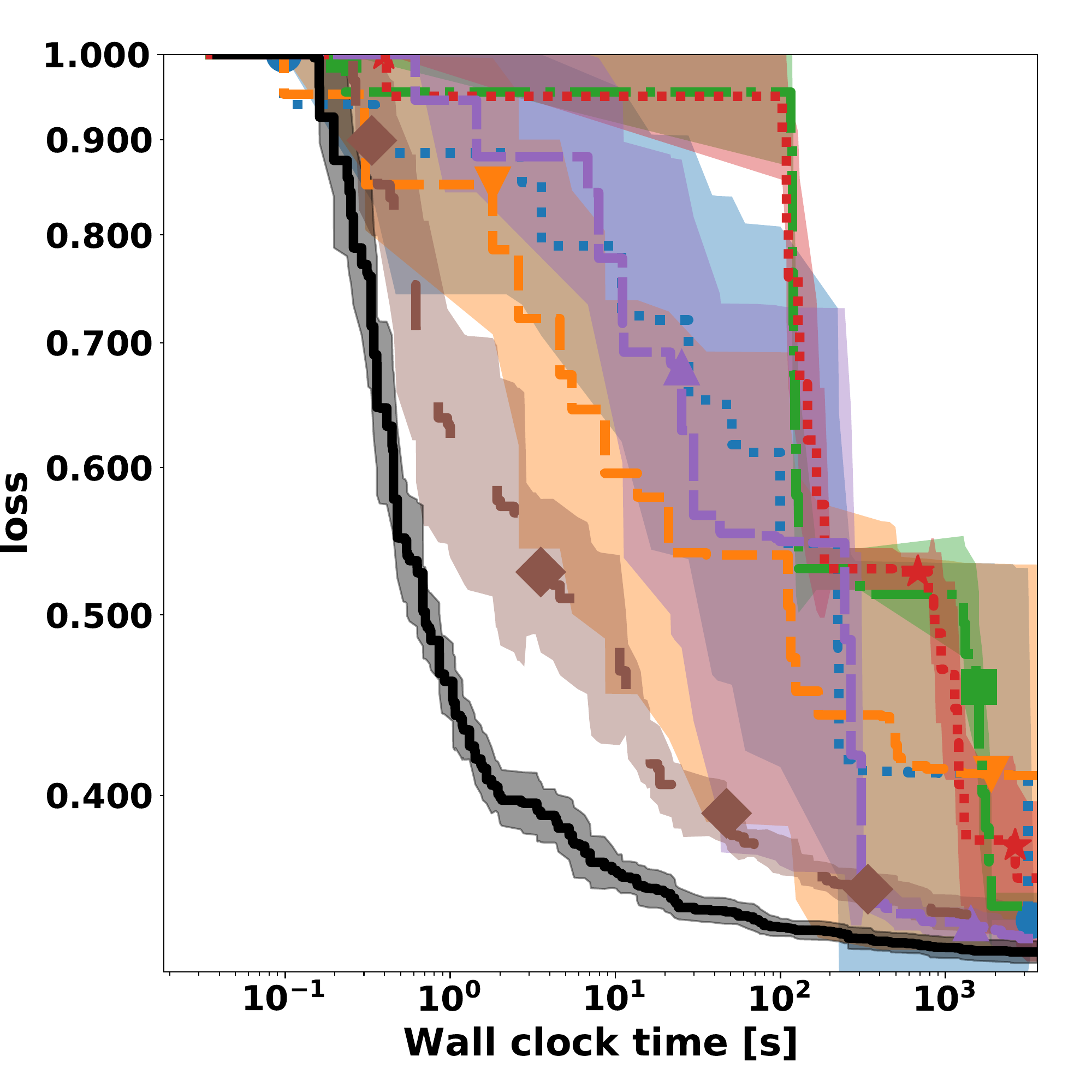}%
\caption{house\_16H, loss = 1 - r2}%
\end{subfigure}\hfill%
\begin{subfigure}{0.26\paperwidth}
\includegraphics[width=\columnwidth]{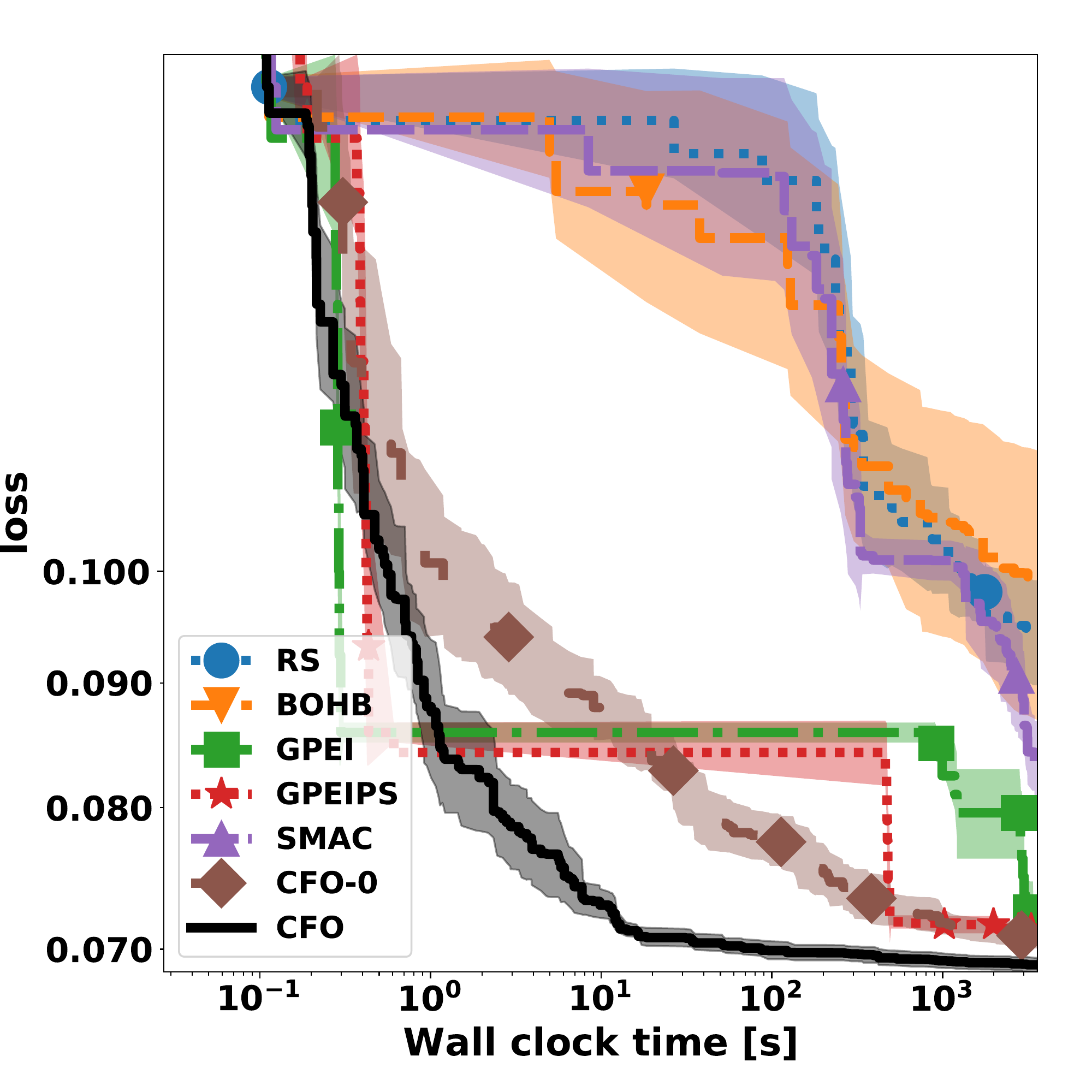}%
\caption{adult, loss = 1 - auc}%
\end{subfigure}\hfill%
\begin{subfigure}{0.26\paperwidth}
\includegraphics[width=\columnwidth]{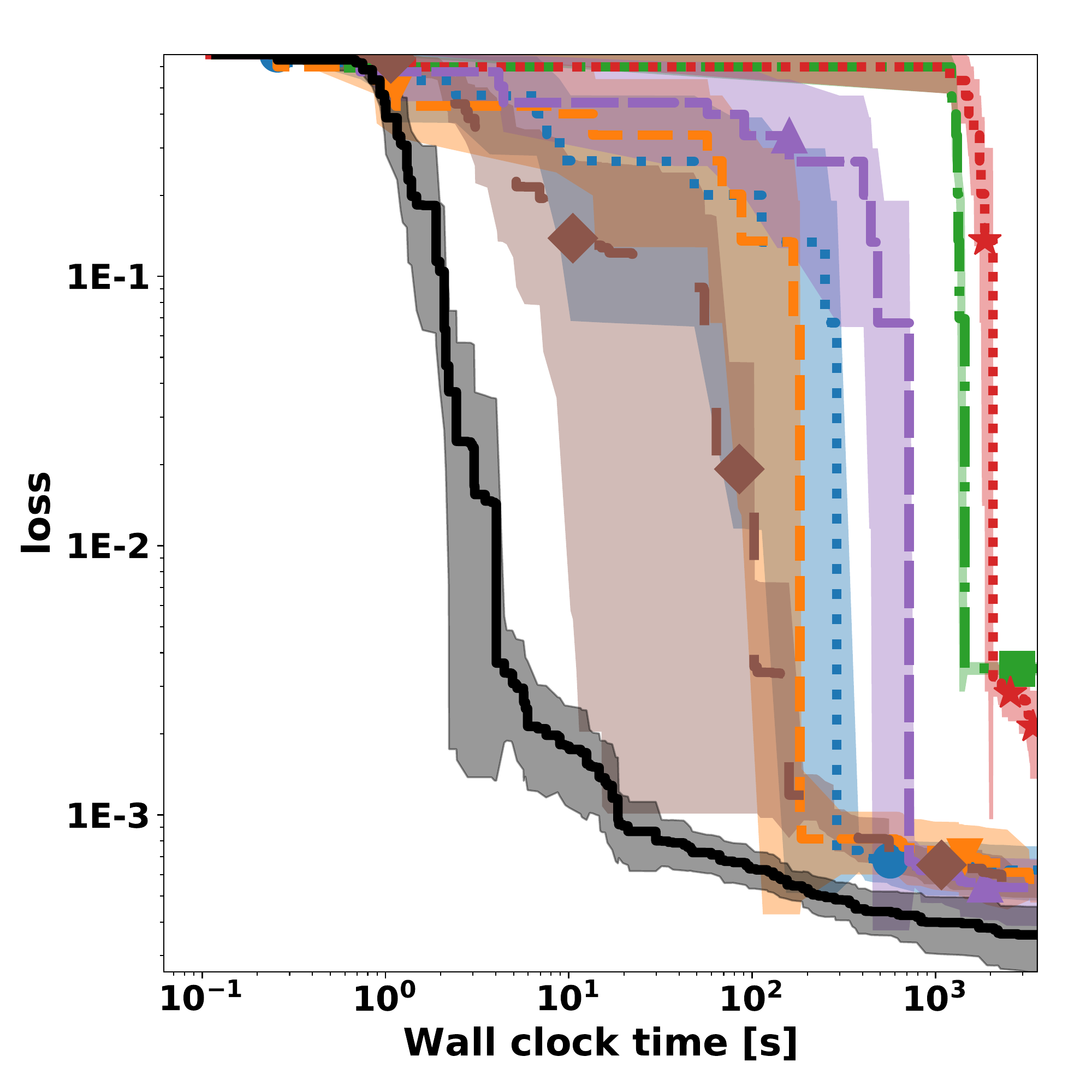}%
\caption{shuttle, loss = log-loss}%
\end{subfigure}\hfill%
\begin{subfigure}{0.26\paperwidth}
\includegraphics[width=\columnwidth]{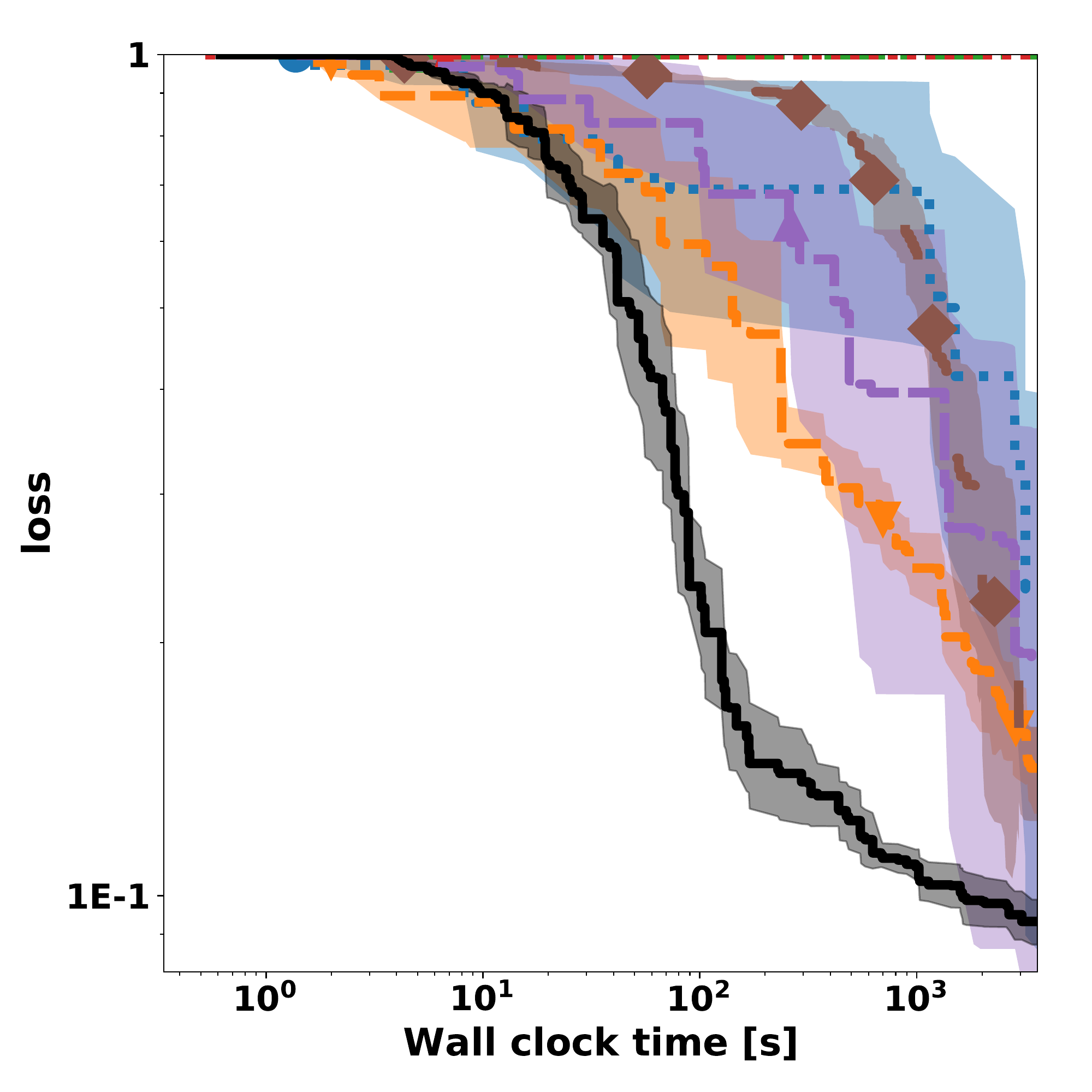}%
\caption{poker, loss = 1 - r2}%
\end{subfigure}\hfill%
\caption{Performance over time. Lines correspond to mean loss over 10 folds, and shades correspond to 95\% confidence intervals}
\label{fig:lc_xgboost}
\end{figure*} 

\subsection{Practical adjustments} \label{sec:CEO}

\begin{algorithm}[h]
\caption{\HPOname}\label{alg:hpo}
\begin{algorithmic}[1]
\State \textbf{Inputs:} 1. Feasible search space $\cX$, the dimensionality of which is $d$. 2. Initial low-cost configuration $\bx_0$.
\State \textbf{Initialization:} Set initial value $\bx = \bx_0$
(and get $f(\bx)$), $\delta = \delta_{init}$, $k = k'= n = r =0$, $l^{\text{r-best}} = +\text{inf}$
\While{Budget allows} 
    \State Sample $\bu$ uniformly at random from $\bbS$
    \State $\bx^{+} \gets \Proj_{\cX}(\bx + \delta \bu)$, $\bx^{-} \gets \Proj_{\cX}(\bx - \delta \bu)$
    \If{$f(\bx^{+}) < f(\bx)$}
        $\bx \gets \bx^{+}$ 
    \ElsIf{$f(\bx^{-}) < f(\bx)$}
            $\bx \gets \bx^{-}$ 
        \Else
            \space $n \gets n +1$
    \EndIf
   
     \If{$f(\bx) < l^{\text{r-best}}$}             $l^{\text{r-best}} \gets f(\bx) $ and $ k' \gets k$  
    \EndIf
    \State $k \gets k+1$
    \If{$n=2^{d-1}$}
        \State $n\gets 0$, $\delta \gets \delta \frac{1}{\sqrt{\eta}}, $ in which $\eta = \frac{k}{k'}$
        \If{$\delta \leq \delta_{\text{lower}}$ }
            \State $k \gets 0$, $l^{\text{r-best}} \gets +\text{inf}$
            \State Reset $\bx \gets \mathbf{g}$, where $\mathbf{g} \sim N(\bx_0, \mathbf{I})$  
            \State $ r\gets r+1$ and $\delta \gets r + \delta_{init}$
        \EndIf
  \EndIf
\EndWhile
\end{algorithmic}
\end{algorithm}

Despite the theoretical guarantee on its good convergence rate and expected total cost, vanilla \OPTname presented in Algorithm~\ref{alg:flow2} is not readily applicable for HPO problems because (1) the possibility of getting stuck into local optima; (2) stepsize is needed as a hyperparameter of \OPTname; and (3) the existence of discrete hyperparameters. Fortunately, those limitations can be effectively addressed using commonly used practical techniques in optimization.  
By adopting the practical adjustments listed below, we turn \OPTname\ into an off-the-shelf HPO solution, which is named as \HPOname(short for Cost-Frugal Optimization) and presented in Algorithm~\ref{alg:hpo}. It is also worth mentioning that in our empirical evaluations, we used the same practical adjustments on an existing zeroth-order optimization method in order to verify the unique advantages of \OPTname in terms of frugality.




\emph{Randomized restart of \OPTname.} 
Similar to most of the other local search algorithms, \OPTname\ may suffer from getting trapped in a local optimum. One common solution to relieve this pain is to restart the algorithm when no progress is observed~\cite{marti2010advanced,zabinsky2010stopping,gyorgy2011efficient}.
Following the same spirit, we restart \HPOname\ from a randomized starting point when no progress is made in it. Specifically, in our work, the `no progress' signal is determined by the number of consecutive no improvement interactions and the value of stepsize. The randomized starting point is generated by adding a Gaussian noise $\mathbf{g}$ to the original initial point $\bx_0$.

\emph{Dynamic adjustments of stepsize $\delta$}. 
To achieve the convergence rate proved in Section~\ref{sec:flow2_convergence}, the stepsize of \OPTname, i.e., $\delta$ needs to be set as a constant that is proportionally to  $1/\sqrt{{K^*}}$, which is difficult to be specified beforehand. 
Adaptive stepsize is prevalent in iterative optimization and search algorithms~\cite{boyd2004convex}. In our work, we propose a self-adjustable stepsize rule, which shares the same spirit with the adaptive rule in \cite{konnov2018conditional}. The stepsize is initially $\delta_{init}$, which we set to be $\sqrt{d}$. It will be decreased when the number of consecutively no improvement iterations is larger than $2^{d-1}$. Specifically, $\mu$ is discounted by a factor of $\frac{1}{\sqrt{\eta}}$, in which the reduction ratio $\eta>1$ is intuitively the ratio between the total number of iterations taken since the last restart and the total number of iterations taken to find the best configuration since the last restart. 
By doing so, the stepsize reduction ratio $\eta$ does not need to be pre-specified but is self-adjustable to the progress made by the search.   
 In order to prevent $\delta$ from becoming too small, we also impose a lower bound on it and stop decreasing $\delta$ once it reaches the lower bound $\deltalb$, which is also designed in a self-adjustable manner\techreport{ (detailed in Algorithm~\ref{alg:delta_lower} in Appendix~\ref{subsec:appendix_ceo}))}.

 \emph{Projection of the proposed configuration.} In practice, the newly proposed configuration $\bx \pm \bu$ is not necessarily in the feasible hyperparameter space $\cX$, especially when discrete hyperparameters exist. In such scenarios, we use a projection function $\text{Proj}_{\cX}(\cdot)$ to map it to the feasible space $\cX$. 

\papertext{We provide the detailed justifications on the design rationale of \HPOname and a variant of \HPOname where normalization of the search space is performed in~\cite{wu2020cost}.}
\techreport{We provide the detailed justifications on the design rationale of the practical adjustments in Appendix~\ref{subsec:appendix_ceo}.}

\subsection{Discussions}
\textbf{(1) No tuning needed.} We realized the practical adjustments in \HPOname in a way that is self-adjustable and does not require on any tuning. \textbf{(2) Low-cost initialization.} The low-cost initialization is fairly easy to perform in practice because we do not have any requirement on the performance (in terms of loss) of it. For example, the user can specify the initial point to be the point which has the lowest possible cost in the search space. \textbf{(3) Parallelization.} Our method is easy to be parallelized. When extra resource is available, instead of doing purely sequential random restarts, we can start multiple \OPTname search threads with different initial points and run them in parallel. \textbf{(4) Categorical hyperparameters.} Our current method \OPTname\ is primarily designed for the optimization of numerical hyperparameters with solid theoretical guarantees. In practice, it is possible to extend it to handle categorical hyperparameters. For example, we can encode categorical choices as integers. But instead of using a fixed mapping between the integer encoding and the categorical choices, we randomly reassign the categorical choices when the projected integer for a categorical dimension changes.
Study of extensions on parallelization and the handling of categorical hyperparameters is left for future work.

\section{Experiment} \label{sec:exp}

We perform an extensive experimental study using a latest open source AutoML benchmark~\cite{Gijsbers2019benchmark}, which includes 39 classification tasks. We enriched it with 14 regression tasks\footnote{Among the 120 regression datasets in PMLB, we selected the ones whose \# of instances are larger than 10K.} from  PMLB~\cite{Olson2017PMLB}. 
All the datasets are available on OpenML. 
Each task consists of a dataset in 10 folds, and a metric to optimize:  Roc-auc for binary tasks, log-loss for multi-class tasks, and r2 score for regression tasks.  

We include 5 representative HPO methods as baselines, including random search (RS)~\cite{Bergstra2012rs}, 
 Bayesian optimization with Gaussian Process and expected improvement (GPEI) and expected improvement per second (GPEIPS) as acquisition functions respectively~\cite{snoek2012practical}, SMAC~\cite{hutter2011}, and BOHB~\cite{falkner2018}. The latter four are all based on Bayesian optimization. Among them, BOHB was shown to be the state of the art multi-fidelity method. We consider the training sample size to be the resource dimension required in BOHB.  GPEIPS considers cost by building a probabilistic model about the configuration cost and using expected improvement per second as the acquisition function.
In addition to these existing HPO methods, we also include an additional method \HPOnameO, which uses the same framework as \HPOname\ but replaces \OPTname\ with the zeroth-order optimization method ZOGD~\cite{nesterov2017random}. Notice that like \OPTname, ZOGD is not readily applicable to the HPO problem, while the \HPOname\ framework permits ZOGD to be used as an alternative local search method. 
The comparison between \HPOnameO\ and \HPOname\ would allow us to evaluate the contribution of \OPTname\ in controlling the cost in \HPOname.
All the methods start from the same initial point as the one used in \HPOname to ensure that the performance boost in \HPOname is not only caused by the low-cost initialization.

We compare their performance in tuning 9 hyperparameters for XGBoost, which is one of the most commonly used libraries in many machine learning competitions and applications.  
In addition to XGBoost, we also evaluated all the methods on deep neural networks. 
Since the experiment setup and comparison conclusion are similar to those in the XGBoost experiment, we include the detailed experiment setup and most of the results on deep neural networks in Appendix~\ref{subsec:appendix_exp}. 


\newdimen\figrasterwd
\figrasterwd\textwidth
\begin{figure*} [h]
  \centering
  \parbox{.29\figrasterwd}{
  \begin{subfigure}{0.24\paperwidth} 
\includegraphics[width=\columnwidth]{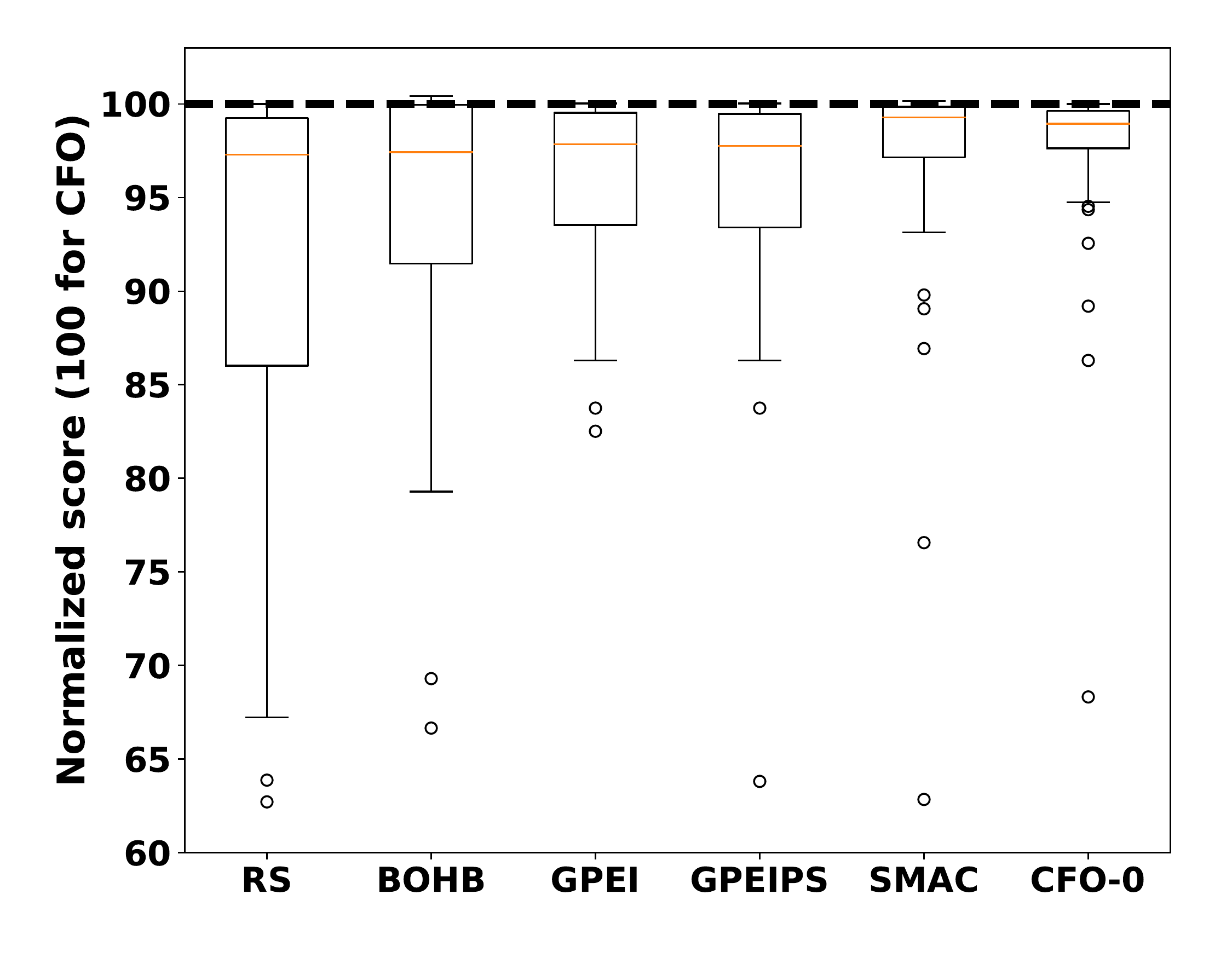}%
\caption{Scores in tuning XGBoost}  \label{fig:xgboost_box} 
\end{subfigure}\vfill%
  \begin{subfigure}{0.24\paperwidth} 
\includegraphics[width=\columnwidth]{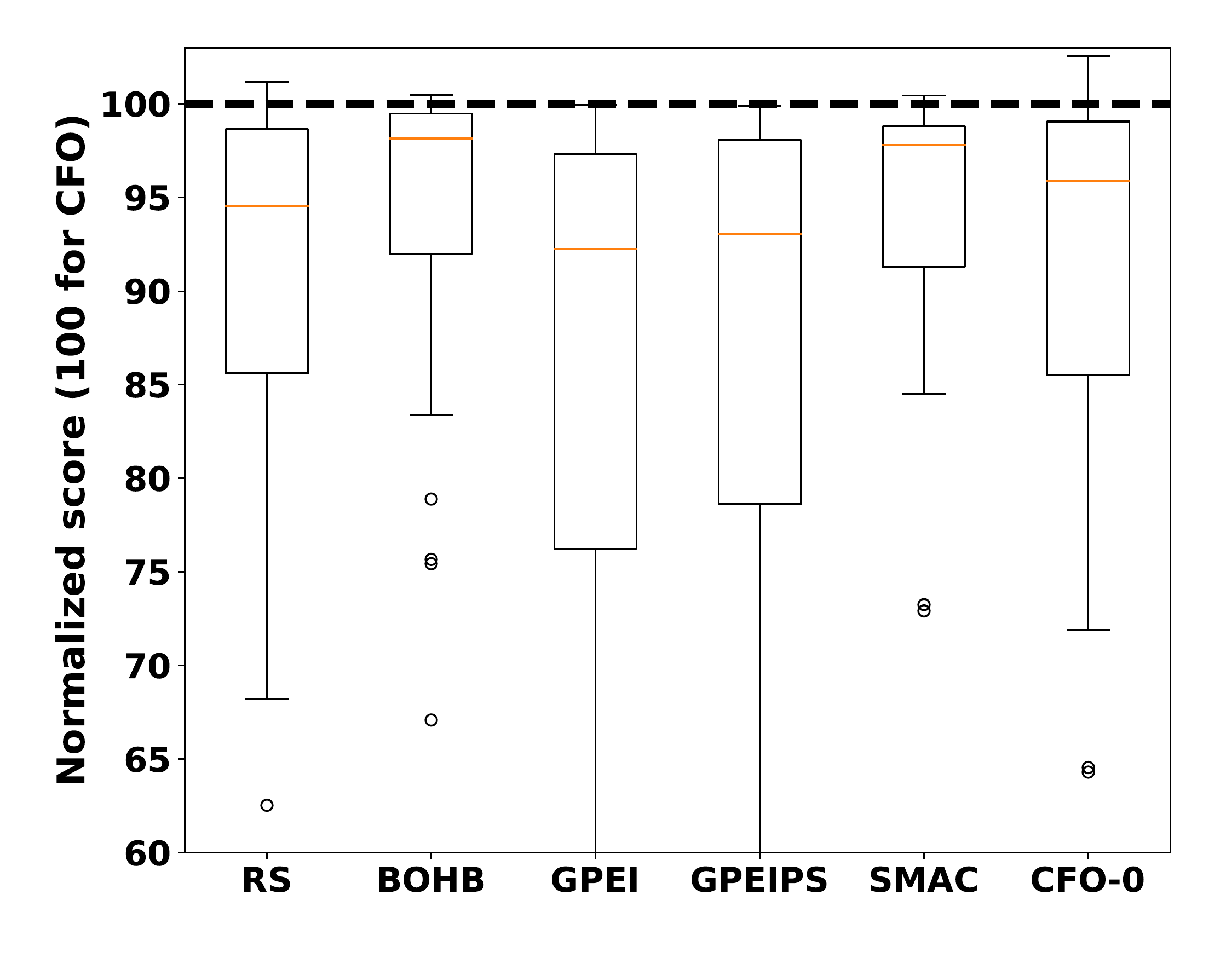}%
\caption{Scores in tuning DNN}  \label{fig:dnn_box}
\end{subfigure} 
  } \hfill
  \parbox{.69\figrasterwd}{
\begin{subfigure}{0.27\paperwidth} 
\includegraphics[width=\columnwidth]{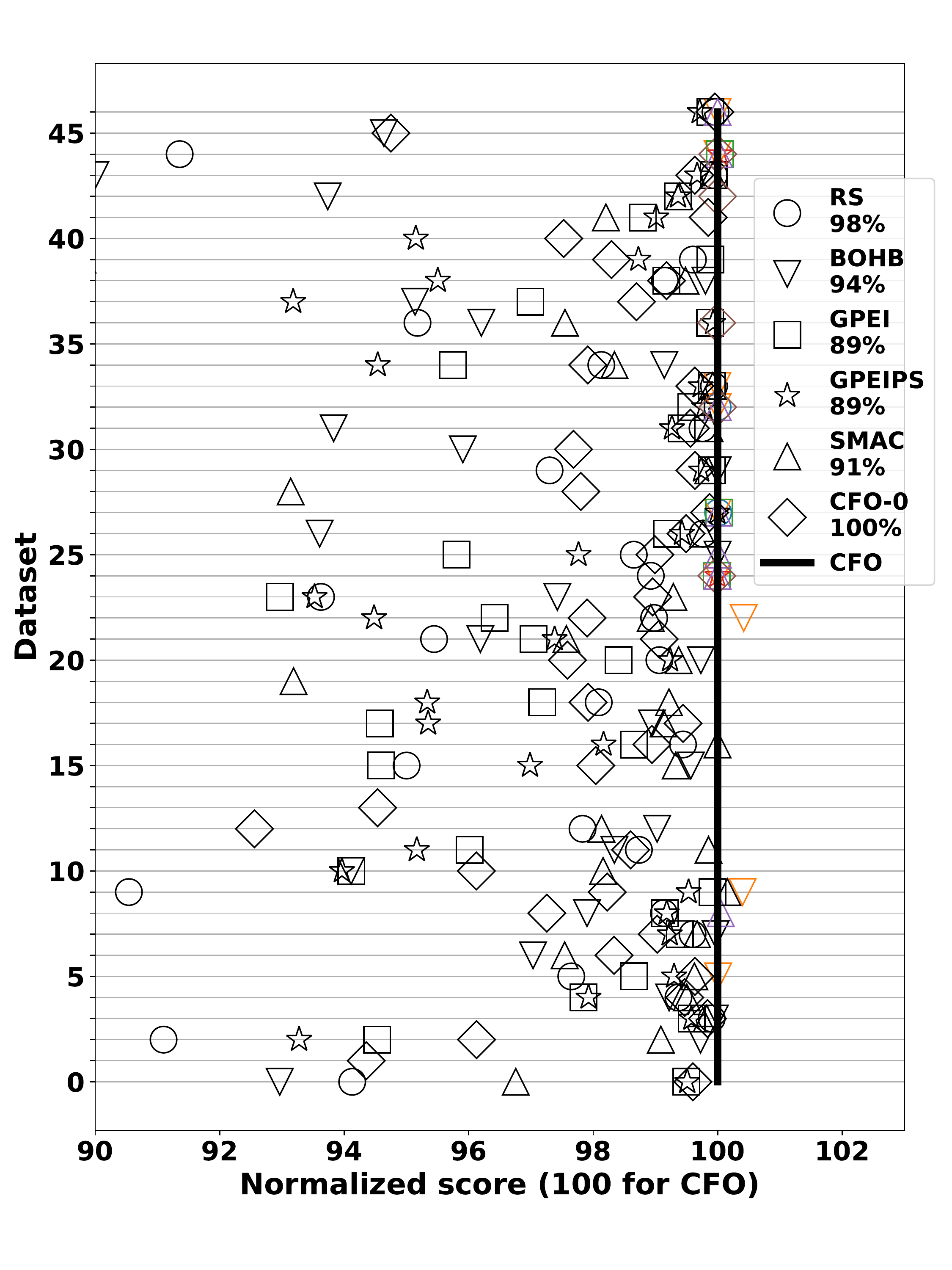}%
\caption{Scores in tuning XGBoost, the higher the better. The legends display the fraction of datasets on which the scores are over 90 for each method} \label{fig:xgboost_score}
\end{subfigure} \hfill
\begin{subfigure}{0.27\paperwidth} 
\includegraphics[width=\columnwidth]{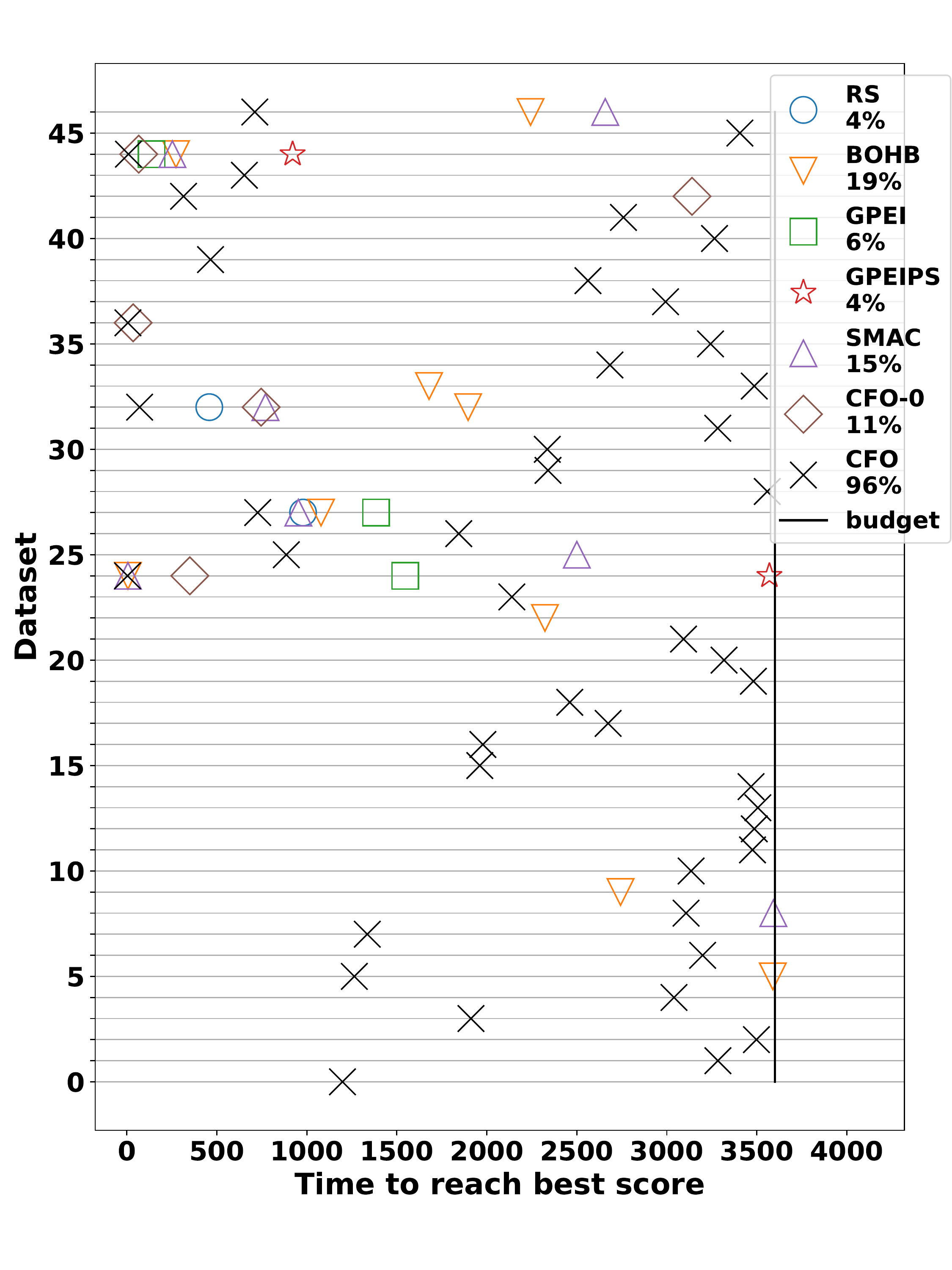}%
\caption{Time used in reaching best score for XGBoost. The legends display the fraction of datasets on which the best score can be reached within 1h} \label{fig:xgboost_time}
\end{subfigure}
}
\caption{
Scores in (a)-(c) are normalized loss (100 for \HPOname, higher score for lower loss).
The colored markers in (c) and (d) correspond to the cases where a baseline reaches score 99.95 within the time budget. The black markers in (c) do not have corresponding points in (d)}
\label{fig:xgboost_score_time}
\end{figure*}

\textbf{Performance curve.}
To investigate the effectiveness of \HPOname's cost control, we visualize the performance curve in terms of validation loss of all the methods over an one-hour wall clock time period. We include the performance curves in tuning XGBoost on 6 datasets in Figure~\ref{fig:lc_xgboost} and put the rest in the appendix. These 6 datasets represent a diverse group: The two rows in Figure~\ref{fig:lc_xgboost} include three small datasets and large datasets respectively. In each row, the three datasets are for binary classification, multi-class classification, and regression task respectively. The curves show that overall it takes RS and classical BO-based methods much longer time to reach low loss, because they are prone to trying unnecessarily expensive configurations. 
In our experiments, GPEIPS outperforms GPEI in some cases (for example, \emph{adult}) but tends to underperform GPEI on small datasets (for example, \emph{car}) probably due to its penalization on good but expensive configurations even when budget is adequate. 
\HPOnameO\ shares a similar spirit with \HPOnameO\ because ZOGD can also be considered as a randomized direct search method. However, due to the unique designs in \OPTname, \HPOname\ still maintains its leading performance.  
\HPOname\ demonstrates strong anytime performance, showing its effectiveness in controlling the evaluation cost incurred during the optimization process. It achieves up to three orders of magnitude speedup comparing to other methods for reaching any loss level.

\textbf{Overall optimality of loss and cost.} 
Figure~\ref{fig:xgboost_box} and \ref{fig:dnn_box} present boxplots of normalized scores (100 for \HPOname, the higher score for lower loss) obtained by all the baselines on all the datasets in tuning XGBoost and DNN within the required time budget. We can observe that \HPOname has dominating performance in terms of loss on the large collection of datasets under the same time budget with the others. To investigate the methods' optimality on each dataset, we show the normalized scores obtained on all the datasets for each method in tuning XGBoost within the required time budget in Figure~\ref{fig:xgboost_score}. We can see that \HPOname\ is able to find the best loss on almost all the datasets with a large margin comparing to the baselines. RS or each BO method has 19\%-32\% datasets (diversely distributed) with more than 10\% gap in score compared to \HPOname.
There are only two cases where \HPOname's score is lower than a baseline, i.e., BOHB, by 0.4\%.  
Figure~\ref{fig:xgboost_time} shows the time for reaching 
the best loss of each dataset, which means reaching up to 0.05\% below 
the highest score by all the compared methods within the required time budget in tuning XGBoost. These results show that (1) overall \HPOname\ has the highest ratio (96\%) of reaching the best loss in contrast to the very low ratios in baselines (less than 19\%); (2) on a very large fraction of the datasets, \HPOname\ can reach the best loss within a small amount of time while the others cannot reach the same loss within time budget. (3) in the cases where there are other methods reaching the best loss, \HPOname\ almost always uses the least amount of time.



\section{Conclusion and Future Work}\label{sec:conclusion}


In this work, we take a novel path of addressing the HPO problem from the cost aspect, which is under-explored in existing literature but especially important. We consider our work one of the initial successful attempts to make HPO cost-frugal, having both dominating empirical performance and provable theoretical guarantee on the total cost.  Our analysis of cost control is the first of its kind. It is a good starting point of better understandings of the cost behaviors of HPO solutions, including what conditions are needed for the HPO methods to be cost-frugal. As future work, it is worth studying the theoretical guarantees of our method under weaker conditions of the cost function.  It is also worth studying how to effectively incorporate cost observations into the HPO algorithms to make it more frugal. We also plan to study scenarios where categorical hyperparameters exist and also test the parallelization of our method. 

\section*{Ethical Impact}
Our work can help reduce the cost of doing hyperparameter optimization. It can be used to build more efficient automated machine learning (AutoML) solutions, which can save the effort and time of data scientists, and allow non-experts to make use of machine learning models and techniques. In a broader sense, we consider our work as an important attempt to make machine learning more economically and environmentally friendly. The current trend of massive consumption on computation resources in training and tuning machine learning models brings a tremendous burden to the environment. In fact, a recent study~\cite{Strubell_Ganesh_McCallum_2020} quantified the approximate financial and environmental costs of training deep neural networks, which calls for methods that can reduce costs and improve equity in machine learning research and practice. Given this consideration, machine learning solutions should be designed to be cost-effective even if the computational budget is not a bottleneck for a specific task or a specific group of people. The cost-effective design of our method well aligns with this principle.

\section*{Acknowledgement}
\papertext{The authors would like to thank the anonymous reviewers for their insightful and constructive comments.}  The authors would like to thank John Langford, Hongning Wang and Markus Weimer for their advice.

\medskip

{
\small

\bibliography{cfo_ref}
}
\newpage
\renewcommand\thesection{\Alph{section}}
\setcounter{section}{0}

\onecolumn
\title{Supplementary Material for: Cost Effective Optimization for Cost-related Hyperparameters}

\vskip 0.3in




\aaai{
\small
\begin{table*}[t]
\caption{Important notions and notations used in the main paper}
\begin{center}
\begin{tabular}{ |c|c||c|c| } 
 \hline
$\cX$ & search space of $d$ hyperparameters & $\bx$ & instantiated $d$-dimensional hyperparameters, $\bx\in \cX$ \\
\hline
$f(\bx)$ & validation loss of $\bx$ & $g(\bx)$ & evaluation cost of $\bx$ \\
\hline
$\bx^*$ & $\bx$ with the smallest $f(\bx)$ & $\pi$ & HPO algorithm  \\
\hline
$\tilde \bx^*$ & locally optimal point of $f$ & $\bx_0$  & initial hyperparameter configuration  \\
\hline
$k^*_{\pi}$ & number of iterations used by $\pi$ to find $\bx^*$ & $G(\pi)$ & total cost to identify $\bx^*$ by Algorithm $\pi$  \\
\hline 
$\bbS$ & $(d-1)$-unit sphere $\bx^*$  & $f'_{\bu}(\bx)$ & directional derivative of $f(\bs)$ on direction $\bu$ \\
\hline
$\mu$ & step size in Algorithm~\ref{alg:flow2}  & $\epsilon$ & $\epsilon$-convergence from first order stationary point $\tilde \bx^*$  \\
\hline
$K^*$ & expected iteration \# by \OPTname until $\epsilon$-convergence & $\tilde G(\OPTname)$ & expected total evaluation cost for $\epsilon$-convergence \\
\hline
\end{tabular}
\end{center}
\end{table*}
}

\section{More Details about Convergence Analysis}
 \label{sec:appendix_proof_convergence}
\subsection{Facts and definitions}
\begin{fact}[Directional derivative]
Here we list several facts about the directional derivative $f'_{\bu}(\bx)$ that will be used in our following proof.

Definition of $f'_{\bu}(\bx)$: $f'_{\bu}(\bx) = \lim_{\mu \to 0} \frac{f(\bx+\mu\bu) - f(\bx)}{\mu}$

Properties: $f'_{\bu}(\bx) = \nabla f(\bx)^{\mt} \bu$,  $f'_{-\bu}(\bx) = -f'_{\bu}(\bx)$
\end{fact}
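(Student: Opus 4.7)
The statement to prove is the pair of properties $f'_{\bu}(\bx) = \nabla f(\bx)^{\mt} \bu$ and $f'_{-\bu}(\bx) = -f'_{\bu}(\bx)$, taking the limit definition $f'_{\bu}(\bx) = \lim_{\mu \to 0} \frac{f(\bx+\mu\bu) - f(\bx)}{\mu}$ as given. Both are classical facts from multivariable calculus, so the proof plan is short and the main job is simply to record the standard derivations cleanly.

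The plan for the first property is to apply the chain rule to the single-variable auxiliary function $h(\mu) \coloneqq f(\bx + \mu \bu)$, which is well defined whenever $f$ is differentiable at $\bx$ (which the paper assumes throughout the convergence analysis). By the definition of the limit, $f'_{\bu}(\bx) = \lim_{\mu \to 0}\frac{h(\mu) - h(0)}{\mu} = h'(0)$, and by the multivariable chain rule $h'(\mu) = \nabla f(\bx + \mu \bu)^{\mt}\bu$, so evaluating at $\mu = 0$ yields the claimed identity. Alternatively, one can avoid the chain rule and directly invoke Fréchet differentiability of $f$ at $\bx$: writing $f(\bx + \mu \bu) - f(\bx) = \nabla f(\bx)^{\mt}(\mu\bu) + o(\lVert \mu\bu \rVert)$, dividing by $\mu$ and taking $\mu \to 0$ isolates $\nabla f(\bx)^{\mt}\bu$.

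The second property is then immediate from the first: substituting $-\bu$ for $\bu$ in $f'_{\bu}(\bx) = \nabla f(\bx)^{\mt}\bu$ gives $f'_{-\bu}(\bx) = \nabla f(\bx)^{\mt}(-\bu) = -\nabla f(\bx)^{\mt}\bu = -f'_{\bu}(\bx)$. If one prefers to argue directly from the definition without invoking the gradient form, one can use the substitution $\mu' = -\mu$ in the defining limit: $f'_{-\bu}(\bx) = \lim_{\mu \to 0}\frac{f(\bx - \mu\bu) - f(\bx)}{\mu} = \lim_{\mu' \to 0}\frac{f(\bx + \mu'\bu) - f(\bx)}{-\mu'} = -f'_{\bu}(\bx)$.

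There is no real obstacle here; the only thing to be careful about is stating the differentiability assumption under which the limit exists and equals $\nabla f(\bx)^{\mt}\bu$ (the paper's Condition~\ref{assumption_smooth} already assumes $f$ is differentiable, so this hypothesis is satisfied throughout the convergence analysis where these facts are used). Since this is a Fact block rather than a Lemma or Proposition, I would keep the justification to a couple of lines and explicitly flag that both identities are classical consequences of the definition, primarily serving as a notational reference for the proof of Proposition~\ref{proposition:flow_gradient_prop}.
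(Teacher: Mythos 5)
Your proof is correct, and it matches the paper's treatment: the paper states this Fact without proof, treating both identities as classical consequences of differentiability, which is exactly the justification you supply via the chain rule (or Fréchet differentiability) and the substitution $\bu \mapsto -\bu$. Nothing further is needed.
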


\begin{definition} \label{def:unit_sphere_subspace}
Given $\bx \in \cX$, we define,
{\small{
\begin{align*}
 \tilde \bbS_{\bx}^{-} := \{ \bu \in \bbS|  f'_{\bu}(\bx) \leq - \frac{L \mu}{2} \},~~~\tilde \bbS_{\bx}^{+} := \{ \bu \in \bbS| f'_{\bu}(\bx) \geq \frac{L \mu}{2} \},~~~\tilde \bbS_{\bx}^{\#} := \{ \bu \in \bbS| |f_{\bu}'(\bx)| \leq \frac{L\mu}{2} \} 
\end{align*}
}
\small{
\begin{align*}
    \bbS_{\bx}^{-} := \{ \bu \in \bbS|  f(\bx + \mu \bu) -f(\bx) < 0 \},~~\bbS_{\bx}^{+-} := \{ \bu \in \bbS|  f(\bx + \mu \bu) -f(\bx) > 0 , f(\bx - \mu \bu) -f(\bx) < 0 \} 
\end{align*}
}
}
\end{definition}

\subsection{Technical lemmas}
\begin{lemma} \label{lemma:symmetric_dir_derivative}
  $\tilde \bbS_{\bx}^+$ and $\tilde \bbS_{\bx}^{-}$ are symmetric, i.e. $\tilde \bbS_{\bx}^+ = \{ -\bu| \bu \in \tilde \bbS_{\bx}^{-} \}$ and $\tilde \bbS_{\bx}^- = \{ -\bu| \bu \in \tilde \bbS_{\bx}^{+} \}$.
\end{lemma}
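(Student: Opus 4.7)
The plan is to prove the two set equalities by a direct symmetry argument based on the sign-flipping property of the directional derivative, namely $f'_{-\bu}(\bx) = -f'_{\bu}(\bx)$, which is listed as one of the recalled facts, together with the fact that the unit sphere $\bbS$ is itself symmetric under $\bu \mapsto -\bu$ (since $\|-\bu\|_2 = \|\bu\|_2 = 1$).

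First I would establish the inclusion $\{-\bu : \bu \in \tilde\bbS_{\bx}^{-}\} \subseteq \tilde\bbS_{\bx}^{+}$. Take any $\bu \in \tilde\bbS_{\bx}^{-}$. By definition, $\bu \in \bbS$ and $f'_{\bu}(\bx) \leq -\tfrac{L\mu}{2}$. Then $-\bu \in \bbS$ and $f'_{-\bu}(\bx) = -f'_{\bu}(\bx) \geq \tfrac{L\mu}{2}$, so $-\bu \in \tilde\bbS_{\bx}^{+}$. For the reverse inclusion $\tilde\bbS_{\bx}^{+} \subseteq \{-\bu : \bu \in \tilde\bbS_{\bx}^{-}\}$, take any $\bv \in \tilde\bbS_{\bx}^{+}$ and set $\bu = -\bv$; then $\bu \in \bbS$ and $f'_{\bu}(\bx) = -f'_{\bv}(\bx) \leq -\tfrac{L\mu}{2}$, so $\bu \in \tilde\bbS_{\bx}^{-}$ and $\bv = -\bu$ exhibits $\bv$ as an element of the right-hand set. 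This gives the first equality; the second equality $\tilde\bbS_{\bx}^{-} = \{-\bu : \bu \in \tilde\bbS_{\bx}^{+}\}$ follows by swapping the roles of $+$ and $-$ in the same argument (or, equivalently, by applying the negation map to both sides of the first equality, since $-(-\bu) = \bu$).

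There is no real obstacle here — the lemma is essentially a bookkeeping statement that unpacks the definition together with the odd-symmetry of the directional derivative, and its role is purely as a building block for the subsequent expectation computations (where the sphere will be partitioned and sums over $\bu$ and $-\bu$ will be paired up). I would keep the proof to just a few lines, explicitly invoking $f'_{-\bu}(\bx) = -f'_{\bu}(\bx)$ and the closure of $\bbS$ under negation.
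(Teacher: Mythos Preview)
Your proposal is correct and follows essentially the same approach as the paper: both invoke the odd-symmetry $f'_{-\bu}(\bx) = -f'_{\bu}(\bx)$ to pass between the defining inequalities of $\tilde\bbS_{\bx}^{+}$ and $\tilde\bbS_{\bx}^{-}$, together with the closure of $\bbS$ under negation. You spell out the double inclusion a touch more explicitly than the paper, but the argument is the same.
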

\begin{proof} [Proof of Lemma~\ref{lemma:symmetric_dir_derivative}]
According to the definition of directional derivative, we have $f'_{-\bu}(\bx) = -f'_{\bu}(\bx)$. Then $\forall \bu \in \tilde \bbS_{\bx}^{-}$, we have $f'_{\bu}(\bx) = a_{\bu} \leq -\frac{L\mu}{2}$, then we have $f'_{-\bu}(\bx) = -f'_{\bu}(\bx) = -a_{\bu} \geq \frac{L \mu}{2}$, i.e. $-\bu \in \tilde \bbS_{\bx}^{+}$. Similarly, we can prove when $\forall \bu \in \tilde \bbS_{\bx}^{+}$, $-\bu \in \tilde \bbS_{\bx}^{-}$. Then the conclusion in this lemma can be obtained. 

Intuitively it means that if we walk (in the domain) in one direction to make $f(\bx)$ go up, then walking in the opposite direction should make it go down. 
\end{proof}

\begin{lemma} \label{lemma:funcvalue_diff_directional_derivative}
Under Condition~\ref{assumption_smooth}, we have,

(1) $|f'_{\bu}(\bx)| > \frac{L \mu}{2}\Rightarrow \sign(f(\bx + \mu \bu) - f(\bx)) = \sign(f'_{\bu}(\bx))$.

(2) $\tilde \bbS_{\bx}^{-} \subseteq \bbS_{\bx}^{-}$. ~~~~~~~(3) $\tilde\bbS_{\bx}^{+} \subseteq   \bbS_{\bx}^{+ -}$.
\end{lemma}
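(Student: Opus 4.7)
\textbf{Proof proposal for Lemma~\ref{lemma:funcvalue_diff_directional_derivative}.}

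The plan is to extract, via a single application of Condition~\ref{assumption_smooth}, a two-sided sandwich of the finite difference $f(\bx+\mu\bu)-f(\bx)$ in terms of the directional derivative $f'_{\bu}(\bx)=\nabla f(\bx)^{\mt}\bu$, and then read off the three claims from that sandwich.

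First, I would instantiate the L-smoothness inequality at $\by=\bx+\mu\bu$ for $\bu\in\bbS$. Since $\|\mu\bu\|^{2}=\mu^{2}$ and $\nabla f(\bx)^{\mt}(\mu\bu)=\mu f'_{\bu}(\bx)$, Condition~\ref{assumption_smooth} becomes
\begin{equation*}
\bigl|\,f(\bx+\mu\bu)-f(\bx)-\mu f'_{\bu}(\bx)\,\bigr|\ \leq\ \tfrac{L\mu^{2}}{2},
\end{equation*}
which rearranges to the key sandwich
\begin{equation*}
\mu f'_{\bu}(\bx)-\tfrac{L\mu^{2}}{2}\ \leq\ f(\bx+\mu\bu)-f(\bx)\ \leq\ \mu f'_{\bu}(\bx)+\tfrac{L\mu^{2}}{2}.
\end{equation*}

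For part (1), I would split on the sign of $f'_{\bu}(\bx)$. If $f'_{\bu}(\bx)>L\mu/2$, the left-hand side of the sandwich is strictly positive, forcing $f(\bx+\mu\bu)-f(\bx)>0$; if $f'_{\bu}(\bx)<-L\mu/2$, the right-hand side is strictly negative, forcing $f(\bx+\mu\bu)-f(\bx)<0$. Either way the sign of $f(\bx+\mu\bu)-f(\bx)$ matches that of $f'_{\bu}(\bx)$.

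For part (2), take any $\bu\in\tilde\bbS_{\bx}^{-}$, so $f'_{\bu}(\bx)\leq -L\mu/2$. Plugging into the right-hand side of the sandwich gives $f(\bx+\mu\bu)-f(\bx)\leq 0$; the case of equality occurs only on the measure-zero boundary where $f'_{\bu}(\bx)=-L\mu/2$ and the smoothness inequality is tight, and can be absorbed into a strict inequality by the same sign argument as in part (1) (this boundary subtlety is the one small snag in the proof, but it does not affect downstream expectations over $\bbS$). Thus $\bu\in\bbS_{\bx}^{-}$.

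For part (3), I would combine part (1) with the symmetry Lemma~\ref{lemma:symmetric_dir_derivative}. For $\bu\in\tilde\bbS_{\bx}^{+}$, part (1) applied to $\bu$ yields $f(\bx+\mu\bu)-f(\bx)>0$; by Lemma~\ref{lemma:symmetric_dir_derivative}, $-\bu\in\tilde\bbS_{\bx}^{-}$, so applying part (1) along $-\bu$ (equivalently applying part (2) to $-\bu$) gives $f(\bx+\mu(-\bu))-f(\bx)=f(\bx-\mu\bu)-f(\bx)<0$. Both conditions defining $\bbS_{\bx}^{+-}$ are then satisfied, so $\tilde\bbS_{\bx}^{+}\subseteq\bbS_{\bx}^{+-}$. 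The entire argument is essentially one line once the sandwich is established; the only delicate point is the treatment of the non-strict boundary in the L-smoothness bound, which I would handle by noting it is a measure-zero event on $\bbS$.
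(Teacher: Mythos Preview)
Your proposal is correct and follows essentially the same route as the paper: instantiate Condition~\ref{assumption_smooth} at $\by=\bx+\mu\bu$ to obtain the sandwich $\mu f'_{\bu}(\bx)-\tfrac{L\mu^2}{2}\le f(\bx+\mu\bu)-f(\bx)\le \mu f'_{\bu}(\bx)+\tfrac{L\mu^2}{2}$, read off (1) from the two sides, and deduce (2) and (3) by applying (1) to $\bu$ and $-\bu$. You are in fact slightly more careful than the paper, which silently treats the defining inequalities of $\tilde\bbS_{\bx}^{\pm}$ as strict in its proofs of (2) and (3); your observation that the boundary $f'_{\bu}(\bx)=\pm L\mu/2$ is a measure-zero subset of $\bbS$ (hence irrelevant for the expectations in Proposition~\ref{proposition:flow_gradient_prop}) is exactly the right way to resolve that gap.
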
 
\begin{proof} [Proof of Lemma~\ref{lemma:funcvalue_diff_directional_derivative}]

\textbf{Proof of (1):}
According to the smoothness assumption specified in Condition~\ref{assumption_smooth}:
\begin{align} 
    |f(\by) - f(\bx) - \nabla f(\bx)^\mt (\by - \bx) )| \leq \frac{L}{2} \lVert \by - \bx \rVert^2 \nonumber
\end{align}
By letting $\by = \bx + \mu \bu$, we have $|f(\bx + \mu \bu) - f(\bx) - \mu  \nabla f(\bx)^\mt \bu )| \leq \frac{L\mu^2}{2}$, which is $ \left |\frac{f(\bx + \mu \bu) - f(\bx)}{\mu}  - f'_{\bu}(\bx)  )\right| \leq \frac{L\mu}{2}$, i.e.,

\begin{align*}
   f'_{\bu}(\bx) - \frac{L\mu}{2} \leq     \frac{f(\bx + \mu \bu) - f(\bx)}{\mu}   \leq  f'_{\bu}(\bx) + \frac{L\mu}{2}
\end{align*}

So $f'_{\bu}(\bx) > \frac{L\mu}{2}\Rightarrow \frac{f(\bx + \mu \bu) - f(\bx)}{\mu}    > 0$, and $f'_{\bu}(\bx) < - \frac{L\mu}{2}\Rightarrow \frac{f(\bx + \mu \bu) - f(\bx)}{\mu}  < 0$. Combinning them, we have when $|f'_{\bu}(\bx)| > \frac{L\mu}{2}$,  $  \sign(f'_{\bu}(\bx)) = \sign(\frac{f(\bx + \mu \bu) - f(\bx)}{\mu} )  = \sign(f(\bx + \mu \bu) - f(\bx) ) $, which proves conclusion (1).

\textbf{Proof of (2):} When $\bu \in \tilde \bbS_{\bx}^{-}$,  $f'_{\bu}(\bx) < -\frac{L\mu}{2}$, according to conclusion (1), $\sign(f(\bx + \mu \bu) - f(\bx)) = \sign(f'_{\bu}(\bx)) < 0$, i.e. $\bu \in \bbS_{\bx}^{-}$. So we have $\tilde \bbS_{\bx}^{-} \subseteq  \bbS_{\bx}^{-}$.

\textbf{Proof of (3):} Similarly, when $\bu \in \tilde \bbS_{\bx}^{+}$, i.e., $f'_{\bu}(\bx) > \frac{L \mu}{2}$, according to conclusion in (1), we have $ \sign(f(\bx + \mu \bu) - f(\bx)) =   \sign(f'_{\bu}(\bx)) >0$ and $ - \sign(f(\bx - \mu \bu) - f(\bx)) =  - \sign(f'_{-\bu}(\bx)) = \sign(f'_{\bu}(\bx)) >0$, which means that $\bu \in \bbS_{\bx}^{+-}$. So we have $\tilde \bbS_{\bx}^{+} \subseteq \bbS_{\bx}^{+-}$. 
\end{proof}

\begin{lemma} \label{lemma:dir_derivative_exp}
For any $\bx \in \mathcal{X}$, 
\begin{align} 
  \bbE_{\bu \in \bbS}[|f_{\bu}'(\bx)|] 
 =\frac{2 \Gamma(\frac d 2)}{(d-1)\Gamma(\frac{d-1}{2})\sqrt\pi} \lVert  \nabla f(\bx) \rVert_2
\end{align}
\end{lemma}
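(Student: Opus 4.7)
The plan is to exploit the rotational invariance of the uniform distribution on $\bbS$ to reduce the claimed identity to a one-dimensional integral, and then evaluate that integral in closed form.

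First, I would use the identity $f'_{\bu}(\bx) = \nabla f(\bx)^\mt \bu$, so the quantity of interest is $\bbE_{\bu \sim \bbS}[\lvert \nabla f(\bx)^\mt \bu \rvert]$. If $\nabla f(\bx) = 0$ the conclusion is trivial, so assume otherwise and let $\bv = \nabla f(\bx)/\lVert \nabla f(\bx)\rVert_2$. Choose an orthogonal matrix $Q$ with first column $\bv$. Since the uniform measure on $\bbS$ is invariant under orthogonal transformations, $\bu$ and $Q\bu$ have the same distribution, so
\begin{align*}
\bbE_{\bu \sim \bbS}\bigl[\lvert \nabla f(\bx)^\mt \bu\rvert\bigr]
= \lVert \nabla f(\bx)\rVert_2 \cdot \bbE_{\bu \sim \bbS}[\lvert \bv^\mt \bu\rvert]
= \lVert \nabla f(\bx)\rVert_2 \cdot \bbE_{\bu \sim \bbS}[\lvert u_1\rvert],
\end{align*}
where $u_1$ is the first coordinate of $\bu$.

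Next I would compute $\bbE[\lvert u_1\rvert]$ by the standard marginal-density argument: if $\bu$ is uniform on the $(d-1)$-sphere in $\mathbb{R}^d$, then $u_1$ has density
$p(t) = \tfrac{\Gamma(d/2)}{\Gamma((d-1)/2)\sqrt{\pi}}\,(1-t^2)^{(d-3)/2}$ on $[-1,1]$, which follows from slicing the sphere into $(d-2)$-dimensional spheres of radius $\sqrt{1-t^2}$ and computing their surface areas. Then
\begin{align*}
\bbE[\lvert u_1\rvert]
&= 2\int_0^1 t\,\frac{\Gamma(d/2)}{\Gamma((d-1)/2)\sqrt{\pi}}(1-t^2)^{(d-3)/2}\,dt
= \frac{\Gamma(d/2)}{\Gamma((d-1)/2)\sqrt{\pi}} \int_0^1 s^{(d-3)/2}\,ds
= \frac{2\Gamma(d/2)}{(d-1)\Gamma((d-1)/2)\sqrt{\pi}},
\end{align*}
where I substituted $s = 1-t^2$. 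Combining this with the reduction above gives the claim.

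No step here is truly difficult; the only minor obstacle is writing down the correct normalization of the marginal density, which could equivalently be derived from the surface-area ratio $\operatorname{Vol}(\bbS^{d-2})/\operatorname{Vol}(\bbS^{d-1})$ or from the Beta-function identity $\int_{-1}^{1}(1-t^2)^{(d-3)/2}dt = \sqrt{\pi}\,\Gamma((d-1)/2)/\Gamma(d/2)$. Either route gives the stated constant $c_d$ cleanly, so the whole argument is essentially a one-dimensional integral after invoking rotational invariance.
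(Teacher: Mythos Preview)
Your proposal is correct and follows essentially the same route as the paper: reduce via $f'_{\bu}(\bx)=\nabla f(\bx)^{\mt}\bu$ and rotational invariance to $\lVert\nabla f(\bx)\rVert_2\,\bbE_{\bu\in\bbS}[|u_1|]$, then evaluate that one-dimensional expectation. The only difference is that the paper simply cites an external source for the value of $\bbE[|u_1|]$, whereas you carry out the integral explicitly via the marginal density, making your argument more self-contained.
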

\begin{proof} [Proof of Lemma~\ref{lemma:dir_derivative_exp}]
According to the definition of directional derivative
\begin{align} \label{eq:exp_abs_dir_deriv}
  \bbE_{\bu \in \bbS}[|f_{\bu}'(\bx_{k})|] = \bbE_{\bu \in \bbS} [|\nabla f(\bx_k)^\mt \bu|] 
 & = \lVert  \nabla f(\bx_k) \rVert_2 \bbE_{\bu \in \bbS} [| \cos(\theta_{\bu}) |]  \\ \nonumber
 & = \lVert  \nabla f(\bx_k) \rVert_2 \bbE_{\bu \in \bbS} [|u_1|]  \\ \nonumber
 & =\frac{2 \Gamma(\frac d 2)}{(d-1)\Gamma(\frac{d-1}{2})\sqrt\pi} \lVert  \nabla f(\bx_k) \rVert_2
\end{align}

where $\theta_{\bu}$ is the angle between $\bu$ and $\nabla f(\bx_k)$, and $\Gamma(.)$ is the gamma function. The last equality can be derived by calculating the expected absolute value of a coordinate $|u_1|$ of a random unit vector\footnote{[stackexchange305888]: S. E. Average absolute value of a coordinate of a random unit vector? Cross Validated. \url{https://stats.stackexchange.com/q/305888} (version:2018-09-06).}.
\end{proof}

\subsection{Full proof of Proposition~\ref{proposition:flow_gradient_prop}, Theorem \ref{theorem:flow_convergence} and Theorem~\ref{theorem:flow_convergence_convex}}

\begin{proof}[Proof of Proposition~\ref{proposition:flow_gradient_prop}]
To simplify notations, we denote $\bz_{k+1} = \nabla f(\bx_{k})^\mt (\bx_{k+1} - \bx_{k}) + \frac{L}{2} \lVert \bx_{k+1} - \bx_{k} \rVert^2$. Denote the volume of a particular surface area as $\text{Vol}(\cdot)$. 

According to Condition~\ref{assumption_smooth}, we have, $f(\bx_{k+1}) - f(\bx_k) \leq  \bz_{k+1}$. By taking expectation on both sides of this inequality and using the fact that $f(\bx_{k+1}) - f(\bx_k)$ is always non-positive according to our update rule, we have,
\begin{align} \label{eq:flow_evol_diff_1}
 & \bbE_{\bu_k \in \bbS} [f(\bx_{k+1}) - f(\bx_k)]    
    \leq \frac{\Vol(\tilde \bbS_{\bx_k}^{-})}{\Vol(\bbS)} \bbE_{\bu \in \tilde \bbS_{\bx_k}^{-}} [\bz_{k+1}] +  \frac{\Vol(\tilde \bbS_{\bx_k}^{+})}{\Vol(\bbS)} \bbE_{\bu \in \tilde \bbS_{\bx_k}^{+}} [\bz_{k+1}]  
\end{align}
The update rules in line 5 and line 8 of Alg~\ref{alg:flow2} can be equivalently written into the following equations respectively,
\begin{align}
   & \bx_{k+1} = \bx_{k} - \delta \sign(f(\bx_k + \delta \bu_k) - f(\bx_k)) \bu_k \\
    & \bx_{k+1} = \bx_{k} - \delta \sign(f(\bx_k - \delta \bu_k) - f(\bx_k)) (-\bu_k)
\end{align}

According to conclusion (2) in Lemma~\ref{lemma:funcvalue_diff_directional_derivative},  $\tilde \bbS_{\bx_k}^{-} \subseteq \bbS_{\bx_k}^{-} $. So when $\bu_k \in \tilde \bbS_{\bx_{k}}^{-}$, line 5 of Alg~\ref{alg:flow2} will be triggered, and we have,
\begin{align} \label{eq:z_exp_neg}
 \bbE_{\bu \in \tilde \bbS_{\bx_k}^{-}} [\bz_{k+1}] 
  & = \bbE_{\bu \in \tilde \bbS_{\bx_k}^{-}} [ -\delta \sign(f(\bx_{k} + \delta \bu) - f(\bx_k)) \nabla f(\bx_{k})^\mt \bu]  + \frac{L\delta^2}{2} \\ \nonumber
  & = \bbE_{\bu \in \tilde \bbS_{\bx_k}^{-}} [ -\delta \sign(f(\bx_{k} + \delta \bu) - f(\bx_k)) f'_{\bu}(\bx_k)  ]  + \frac{L\delta^2}{2} \\ \nonumber
  & = \bbE_{\bu \in \tilde \bbS_{\bx_k}^{-}} [ -\delta \sign(f'_{\bu}(\bx_k)) f'_{\bu}(\bx_k)  ]  + \frac{L\delta^2}{2} \\ \nonumber
  & = -\delta \bbE_{\bu \in \tilde \bbS_{\bx_k}^{-}} [ | f'_{\bu}(\bx_k)|]  + \frac{L\delta^2}{2}
\end{align}

According to conclusion (3) in Lemma~\ref{lemma:funcvalue_diff_directional_derivative}, $\tilde \bbS_{\bx_{k}}^{+} \subseteq \bbS_{\bx_k}^{+-}$, so when $\bu \in \tilde \bbS_{\bx_{k}}^{+}$, line 8 of Alg~\ref{alg:flow2} will be triggered, and we have,
\begin{align} \label{eq:z_exp_pos}
 \bbE_{\bu \in \tilde \bbS_{\bx_k}^{+}} [\bz_{k+1}] 
 & = \bbE_{\bu \in \tilde \bbS_{\bx_k}^{+}} [\delta \sign(f(\bx_{k} - \delta \bu) - f(\bx_k)) \nabla f(\bx_{k})^\mt \bu ] + \frac{L\delta^2}{2} \\ \nonumber
 & = \bbE_{\bu \in \tilde \bbS_{\bx_k}^{+}} [ - \delta \sign(f(\bx_{k} - \delta \bu) - f(\bx_k))  f'_{-\bu}(\bx)  ]  + \frac{L\delta^2}{2} \\ \nonumber
  & = \bbE_{\bu \in \tilde \bbS_{\bx_k}^{+}} [ - \delta \sign( f'_{-\bu}(\bx)) f'_{-\bu}(\bx)  ]  + \frac{L\delta^2}{2} \\ \nonumber
  & = -\delta \bbE_{\bu \in \tilde \bbS_{\bx_k}^{-}} [ | f'_{\bu}(\bx)|]  + \frac{L\delta^2}{2} 
\end{align}
in which the last equality used the fact that $\tilde \bbS_{\bx_k}^{+}$ and $\tilde \bbS_{\bx_k}^{-}$ are symmetric according to Lemma~\ref{lemma:symmetric_dir_derivative}. 

By substituting Eq~\eqref{eq:z_exp_neg} and \eqref{eq:z_exp_pos} 
into Eq~\eqref{eq:flow_evol_diff_1}, we have,
\begin{align} \label{eq:flow_evol_diff_2}
    &  \bbE_{\bu_k \in \bbS} [f(\bx_{k+1}) - f(\bx_k)]  
    \leq  2\frac{\Vol(\tilde \bbS_{\bx_k}^{-})}{\Vol(\bbS)} (  - \delta \bbE_{\bu \in \tilde \bbS_{\bx_k}^{-}} [ | f_{\bu}'(\bx) |]   + \frac{L\delta^2}{2} ) \\ \nonumber
\end{align}

Based on the symmetric property of $\tilde \bbS_{\bx_k}^{+}$ and $\tilde \bbS_{\bx_k}^{-}$, we have,
\begin{align} \label{eq:flow_exp_S}
  \bbE_{\bu_k \in \bbS}[|f_{\bu_k}'(\bx_{k})|] 
  & =  2 \frac{\text{Vol}(\tilde \bbS_{\bx_k}^{-})}{\text{Vol}(\bbS)} \bbE_{\bu \in  \tilde \bbS_{\bx_k}^{-}} [|f_{\bu}'(\bx_{k})| ] +  \frac{\Vol(\tilde \bbS^{\#}_{\bx_k})}{\text{Vol}(\bbS)} \bbE_{\bu \in \tilde \bbS^{\#}_{\bx_k}} [|f_{\bu}'(\bx_{k})]   \\ \nonumber
& \leq 2 \frac{\text{Vol}( \tilde \bbS_{\bx_k}^{-})}{\text{Vol}(\bbS)} \bbE_{\bu \in  \tilde \bbS_{\bx_k}^{-}} [|f_{\bu}'(\bx_{k})| ] +  (1- 2 \frac{\text{Vol}( \tilde \bbS_{\bx_k}^{-})}{\text{Vol}(\bbS)}) \frac{L\mu}{2}  \\ \nonumber 
& =  2 \frac{\text{Vol}( \tilde \bbS_{\bx_k}^{-})}{\text{Vol}(\bbS)} (\bbE_{\bu \in  \tilde \bbS_{\bx_k}^{-}} [|f_{\bu}'(\bx_{k})| ] - \frac{L\mu}{2}  ) +  \frac{L\mu}{2}  \\ \nonumber 
\end{align}

Combining Eq~\eqref{eq:flow_evol_diff_2} and Eq~\eqref{eq:flow_exp_S}, we have,
\begin{align} \label{eq:flow_evol_diff_3}
     \bbE_{\bu_k \in \bbS} [f(\bx_{k+1})|\bx_k] - f(\bx_k) & \leq - \delta \bbE_{\bu_k \in \bbS}[|f_{\bu_k}'(\bx_{k})|] +    \frac{L\delta^2}{2}  = - \delta  c_d \lVert \nabla f(\bx_k) \rVert_2  + \frac{1}{2}L\delta^2
\end{align}

in which the last inequality is based on Lemma~\ref{lemma:dir_derivative_exp}.
\end{proof}

\begin{proof} [Proof of Theorem~\ref{theorem:flow_convergence}]
Denote by $\cU_k = (\bu_0, \bu_i, ..., 
\bu_k)$ a random vector composed by independent and identically distributed (i.i.d.) variables $\{\bu_k\}_{k \geq 0}$
attached to each iteration of the scheme up to iteration $k$. Let $\phi_0 = f(\bx_0)$ and $\phi_k := \bbE_{\cU_{k-1}}[f(\bx_k)], k \geq 1$ (i.e., taking expectation over randomness in the trajectory).

According to Proposition~\ref{proposition:flow_gradient_prop}, we have,
\begin{align} \label{eq:prop_flow_gradient_bound}
    & f(\bx_k) - \bbE_{\bu_k \in \bbS} [f(\bx_{k+1})|\bx_k]   \geq \mu  c_d \lVert \nabla f(\bx_k) \rVert_2 - \frac{L\mu^2}{2}
\end{align}

By taking expectation on $\cU_{k}$ for both sides of Eq~\eqref{eq:prop_flow_gradient_bound}, we have,
\begin{align}
  \delta c_d  \bbE_{\cU_k}[\lVert \nabla f(\bx_k) \rVert_2] \leq  \bbE_{\cU_k} [f(\bx_k)] - \bbE_{\cU_{k}}[ \bbE_{\bu_k \in \bbS} [f(\bx_{k+1})|\bx_k]] + \frac{1}{2} L\delta^2    = \phi_{k} - \phi_{k+1}  + L\delta^2
\end{align}

By taking telescoping sum we have,

\begin{align}
    \sum_{k=0}^{K-1} \delta c_d  \bbE_{\cU_k}[\lVert \nabla f(\bx_k) \rVert_2] & \leq \phi_0 - \phi_{K} +  \frac{1}{2} L \sum_{k=0}^{K-1}  \delta^2   \leq f(\bx_0) -  f(\bx^*) + \frac{1}{2} L \sum_{k=0}^{K-1}  \delta^2  
\end{align}

So,
\begin{align}
     \min_{k\in[K]} \bbE[\lVert \nabla f(\bx_k) \rVert_2 ] \leq  \frac{f(\bx_0) -  f(\bx^*)  + \frac{1}{2}L \sum_{k=0}^{K-1} \delta^2}{ c_d \sum_{k=0}^{K-1} \delta}
\end{align}

in which $c_d= \frac{2\Gamma(\frac d 2)}{(d-1)\Gamma(\frac{d-1}{2})\sqrt\pi}$, so $\frac{1}{c_d} = O(\sqrt{d})$. By letting $\delta = \frac{1}{\sqrt{K}}$, we have $\min_{k \in [K]} \bbE[ \lVert \nabla f(\bx_k) \rVert_2 ]= O(\frac{\sqrt{d}}{\sqrt{K}})$. 

\end{proof}

\begin{proof}[Proof of Theorem~\ref{theorem:flow_convergence_convex}]
According to Proposition~\ref{proposition:flow_gradient_prop}, under Condition~\ref{assumption_smooth},
\begin{align} \label{eq:flow_convex_evol_diff}
    & \delta  c_d \lVert \nabla f(\bx_k) \rVert_2   \leq  f(\bx_k) - \bbE_{\bu_k \in \bbS} [f(\bx_{k+1})|\bx_k]   + \frac{1}{2}L\delta^2
\end{align}
Under the convex condition, we have:
\begin{align} \label{eq:flow_convex}
 & f(\bx_k) -  f(\bx^*)  \leq   \nabla f(\bx_k) ( \bx_k - \bx^* )  \leq \lVert \nabla f(\bx_k) \rVert_2   \lVert \bx_k - \bx^* \rVert_2  \leq R  \lVert \nabla f(\bx_k) \rVert_2    
\end{align}

Combining Eq~\eqref{eq:flow_convex} and Eq~\eqref{eq:flow_convex_evol_diff}, we have:
\begin{align}
   & \frac{\delta c_d}{R} ( f(\bx_k) -  f(\bx^*) ) 
   \leq f(\bx_k) - \bbE_{\bu_k \in \bbS} [f(\bx_{k+1})|\bx_k]   + \frac{1}{2}L\delta^2 
\end{align}
By taking expectation over $\cU_{k}$ on both sides, we have:
\begin{align} \label{eq:loss_diff_two_step}
   \frac{\delta c_d}{R} ( \phi_k -  f(\bx^*) ) \leq  \phi_k -  \phi_{k+1}   + \frac{1}{2}L\delta^2 
\end{align}
By rearranging the above equation, we get:
\begin{align} \label{eq:convex_diff}
  \phi_{k+1} - f(\bx^*) \leq (1 - \frac{\delta c_d}{R}) ( \phi_k -  f(\bx^*) )  +  \frac{1}{2}L\delta^2 
\end{align}
To simplify the notations, we define $\alpha =  1-\frac{\delta c_d}{R} \in (0,1)$,  $r_{k} = \phi_{k} - f(\bx^*) $. Then we have,
\begin{align} \label{eq:convex_final}
   r_{K} \leq \alpha r_{K-1}  +  \frac{1}{2}L\delta^2  
    \leq \alpha^{K} r_{0}  +  \frac{1}{2}L\delta^2 \sum_{j=0}^{K-1} \alpha^i 
    \leq  e^{-\frac{\delta c_d K}{R}} r_0 +  \frac{L\delta^2}{2(1-\alpha)}  =  e^{-\frac{\delta c_d K}{R}} r_0 +  \frac{L\delta R}{2c_d}
\end{align}
in which the last inequality is based on the fact that $\ln(1+x) < x$ when $x \in (-1,1)$ and geometric series formula. 

\end{proof}

\subsection{Remarks about convergence conclusion}

\begin{remark}[Comparison with zeroth-order optimization]\label{remark:convergence_compare_zo}
The best convergence rate so far for zeroth-order optimization methods that only use function evaluations is $O(\frac{d}{K})$~\cite{nesterov2017random}. It has better dependency on $K$ but worse dependency on $d$. And that convergence rate requires step-size to be dependent on extra unknowns besides the total number of iterations, which makes it hard to achieve in practice.
\end{remark}

 \begin{remark}[Comparison with Bayesian Optimization] \label{remark:convergence_compare_bo}
 The convergence results in Theorem~\ref{theorem:flow_convergence} and Theorem~\ref{theorem:flow_convergence_convex} show that \OPTname\ can achieve a convergence rate of $O(\frac{\sqrt{d}}{\sqrt{K}})$ in both non-convex and convex case when $\mu \propto \frac{1}{\sqrt{K}}$.
 The best known convergence result for the commonly used Bayesian optimization methods in HPO problems is $O(K^{-v/d})$ \cite{bull2011convergence}. It requires a Gaussian prior with a smoothness parameter $v$. The comparison between our convergence result and their convergence result with respect to $K$ depends on the smoothness parameter $v$ of the Gaussian prior. When $v < 2d$, our result has a better dependency on $K$. And our convergence result has a better dependency on $d$. 
 \end{remark}

\section{More Details about Cost Analysis} \label{sec:appendix_cost}

\subsection{Cost analysis of \OPTname with factorized form of cost function} \label{sec:appendix_cost_factorized}
In this subsection, we consider a particular form of cost function and provide the cost analysis of \OPTname with such a cost function. Specifically, we consider the type of cost which satisfies Condition~\ref{assu:factorized_cost}.    

\begin{condition}\label{assu:factorized_cost}
 The cost function $g(\cdot)$ in terms of $\bx$ can be written into a factorized form
$g(\bx) = \prod_{i \in D'} e^{\pm x_i}$, where $D'$ is the cost-related subset of the coordinates.  
\end{condition}
We acknowledge that such an assumption on the function is not necessarily always true in all the cases. We will illustrate how to apply proper transformation functions on the original hyperparameter variables to realize Condition~\ref{assu:factorized_cost} later in this subsection.

With the factorized form specified in Condition~\ref{assu:factorized_cost}, the following fact is true. 

\begin{fact}[Invariance of cost ratio]\label{ass:cost_ratio} If Condition~\ref{assu:factorized_cost} is true,
$\frac{g(\bx + \Delta)}{g(\bx)} = c(\Delta)$, in which 
$c(\Delta) = e^{\sum_{i \in D'} \pm \Delta_i}$.
\end{fact}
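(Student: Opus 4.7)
The plan is to derive the fact directly from the factorized form specified in Condition~\ref{assu:factorized_cost} by a pure algebraic manipulation. Because the cost function already has a multiplicative structure where each coordinate $i \in D'$ contributes an independent exponential factor $e^{s_i x_i}$ (where $s_i \in \{+1,-1\}$ is the fixed sign for that coordinate), shifting $\bx$ by $\Delta$ produces factors that split cleanly into $e^{s_i x_i} \cdot e^{s_i \Delta_i}$, and the $x_i$-dependent part matches $g(\bx)$ exactly. So the proof is just ``expand, factor, and cancel.''

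Concretely, I would proceed in three short steps. First, I would fix the notation by letting $s_i \in \{+1,-1\}$ denote the sign of coordinate $i$ in the factorization of $g$, so that Condition~\ref{assu:factorized_cost} reads $g(\bx) = \prod_{i \in D'} e^{s_i x_i}$. Second, I would write out $g(\bx+\Delta) = \prod_{i \in D'} e^{s_i (x_i + \Delta_i)} = \prod_{i \in D'} e^{s_i x_i} \cdot e^{s_i \Delta_i}$ using the additivity of the exponent. Third, I would form the ratio $g(\bx+\Delta)/g(\bx)$, observe that the $e^{s_i x_i}$ factors cancel coordinatewise, and collapse the remaining product of exponentials into a single exponential of a sum, yielding $\prod_{i \in D'} e^{s_i \Delta_i} = e^{\sum_{i \in D'} s_i \Delta_i} = c(\Delta)$.

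There is essentially no hard step here — the statement is an immediate consequence of the form assumed in Condition~\ref{assu:factorized_cost} together with the elementary identities $e^{a+b}=e^a e^b$ and $\prod_i e^{a_i} = e^{\sum_i a_i}$. The only care needed is notational: making explicit that the ``$\pm$'' symbols on the left and right sides of the claim refer to the \emph{same} choice of signs $s_i$ per coordinate, so that cancellation is term-by-term rather than up to sign. Because the fact holds pointwise for every $\bx$ and $\Delta$ in the domain, no conditions on $\bx$, $\Delta$, or on the sizes involved are needed beyond well-definedness of $g$.
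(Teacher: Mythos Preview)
Your proposal is correct. The paper does not give a separate proof of this fact at all---it is stated as an immediate consequence of Condition~\ref{assu:factorized_cost}---and your direct expand-factor-cancel argument is precisely the elementary verification the paper is implicitly invoking.
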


Implication of Fact~\ref{ass:cost_ratio}: according to Fact~\ref{ass:cost_ratio}, $\log (g(\bx_1)) - \log (g(\bx_2)) = \sum_{i \in D'} ( (\pm x_{1,i}) - (\pm x_{2,i}))$. It means that if the cost function can be written into a factorized form, the Lipschitz continuous assumption in Condition~\ref{assu:cost_Lipsch} is true in the log space of the cost function. 

Using the same notation as that in Fact~\ref{ass:cost_ratio}, we define $C := \max_{\bu\in\bbS} c(\mu \bu)$.
\begin{condition} [Local monotonicity between cost and loss] \label{ass:monotonicity}
 $\forall$ $\bx_1,\bx_2 \in \cX$, if $ \max\{2D, (C^2-1) g(\tilde \bx^*)\} +  g(\tilde \bx^*) \geq g(\bx_1) > g(\bx_2) > g(\tilde \bx^*)$, then $f(\bx_1) \geq f(\bx_2)$.
\end{condition}
This assumption is similar in nature to Condition~\ref{assu:cost_loss_monotonicity}, but has a slightly stronger condition on the local monotonicity region.

\begin{proposition}[Bounded cost change in \OPTname] \label{prop:cost_diff}
Under Condition~\ref{assu:factorized_cost}, $g(\bx_{k+1}) \leq g(\bx_k)C $, $\forall k$. 
\end{proposition}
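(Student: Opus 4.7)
The plan is a short case analysis driven by the update rule of Algorithm~\ref{alg:flow2}, which forces $\bx_{k+1}\in\{\bx_k+\mu\bu_k,\,\bx_k-\mu\bu_k,\,\bx_k\}$. First I would substitute each of these three cases into Fact~\ref{ass:cost_ratio} to express the cost change multiplicatively. When $\bx_{k+1}=\bx_k+\mu\bu_k$, Fact~\ref{ass:cost_ratio} gives $g(\bx_{k+1})=g(\bx_k)\,c(\mu\bu_k)$; when $\bx_{k+1}=\bx_k-\mu\bu_k$, it gives $g(\bx_{k+1})=g(\bx_k)\,c(-\mu\bu_k)$; when $\bx_{k+1}=\bx_k$, we trivially have $g(\bx_{k+1})=g(\bx_k)$.

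Second, I would show that both $c(\mu\bu_k)$ and $c(-\mu\bu_k)$ are bounded by $C=\max_{\bu\in\bbS}c(\mu\bu)$. For $c(\mu\bu_k)$ this is immediate from the definition of $C$ since $\bu_k\in\bbS$. For $c(-\mu\bu_k)$ I would use the symmetry of the unit sphere: $\bbS=\{-\bu:\bu\in\bbS\}$, so $\max_{\bu\in\bbS}c(-\mu\bu)=\max_{\bu\in\bbS}c(\mu\bu)=C$. Hence in both moving cases, $g(\bx_{k+1})\le g(\bx_k)\,C$.

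Third, to handle the stay-in-place case, I need $C\ge 1$ so that $g(\bx_k)\le g(\bx_k)\,C$. This follows from the factorized form in Condition~\ref{assu:factorized_cost}: for any $\bu\in\bbS$ we have $c(\mu\bu)\,c(-\mu\bu)=e^{0}=1$, so at least one of $c(\mu\bu),c(-\mu\bu)$ is $\ge 1$, and therefore $C=\max_{\bu\in\bbS}c(\mu\bu)\ge 1$. Combining all three cases yields $g(\bx_{k+1})\le g(\bx_k)\,C$ for every $k$.

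There is no serious obstacle here — the content of the proposition is essentially bookkeeping once the multiplicative structure from Fact~\ref{ass:cost_ratio} is in place. The only subtle point is verifying $C\ge 1$ for the trivial case $\bx_{k+1}=\bx_k$, which I would address via the $c(\mu\bu)\,c(-\mu\bu)=1$ identity rather than any direct estimate. Compared with the additive counterpart (Proposition~\ref{prop:cost_diff_add}) which required Lipschitz continuity, the factorized assumption replaces the Lipschitz bound $D$ by the sharper multiplicative factor $C$, and the proof structure is otherwise analogous.
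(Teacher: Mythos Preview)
Your proposal is correct and matches the paper's approach. The paper does not give a standalone proof of this proposition; instead, the identical three-case argument (apply Fact~\ref{ass:cost_ratio} to $\bx_{k+1}\in\{\bx_k\pm\mu\bu_k,\bx_k\}$ and bound by $\max_{\bu\in\bbS}c(\mu\bu)$) appears inline within the proof of Proposition~\ref{proposition:g(bc_k)}, and the paper simply asserts ``according to the definition of $C$, we have $C>1$'' without the $c(\mu\bu)\,c(-\mu\bu)=1$ justification you supply --- so if anything your write-up is slightly more careful on that point.
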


\begin{proposition}[Bounded evaluation cost for any function evaluation of \OPTname]\label{proposition:g(bc_k)}
Under Condition~\ref{assu:factorized_cost}, Condition~\ref{ass:monotonicity} and $g(\bx_0) < g(\tilde \bx^*)$, we have $g(\bx_k) \leq g(\tilde \bx^*)C $, $\forall k$. 
\end{proposition}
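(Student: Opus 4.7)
The plan is a straightforward induction on $k$ that chains together the per-step cost-ratio bound from Proposition~\ref{prop:cost_diff} with the local monotonicity condition (Condition~\ref{ass:monotonicity}) to enforce the stated invariant.

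\textbf{Base case.} For $k = 0$ I will first observe that $C \ge 1$: by the factorization in Condition~\ref{assu:factorized_cost}, $c(-\mu\bu) = 1/c(\mu\bu)$, and since both $\bu$ and $-\bu$ lie on $\bbS$, the maximum of $c(\mu\bu)$ over $\bbS$ is at least $1$. Combined with the hypothesis $g(\bx_0) < g(\tilde \bx^*)$, this gives $g(\bx_0) < g(\tilde \bx^*) \le g(\tilde \bx^*) C$.

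\textbf{Inductive step.} Assume $g(\bx_k) \le g(\tilde \bx^*) C$ and split on the location of $g(\bx_k)$. If $g(\bx_k) \le g(\tilde \bx^*)$, Proposition~\ref{prop:cost_diff} directly yields $g(\bx_{k+1}) \le C\, g(\bx_k) \le C\, g(\tilde \bx^*)$, closing this case. Otherwise $g(\tilde \bx^*) < g(\bx_k) \le C\, g(\tilde \bx^*)$, and I argue by contradiction that the algorithm cannot move to a strictly costlier point. Suppose $g(\bx_{k+1}) > g(\bx_k)$. The inductive hypothesis together with Proposition~\ref{prop:cost_diff} places both costs inside the monotonicity band:
\begin{equation*}
g(\tilde \bx^*) < g(\bx_k) < g(\bx_{k+1}) \le C\, g(\bx_k) \le C^2 g(\tilde \bx^*) = g(\tilde \bx^*) + (C^2-1) g(\tilde \bx^*).
\end{equation*}
Condition~\ref{ass:monotonicity} then forces $f(\bx_{k+1}) \ge f(\bx_k)$. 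However, by inspection of Algorithm~\ref{alg:flow2}, the incumbent changes only when the candidate loss strictly decreases, contradicting this inequality. Hence $g(\bx_{k+1}) \le g(\bx_k) \le C\, g(\tilde \bx^*)$.

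\textbf{Anticipated obstacle.} The only delicate point is ensuring that, when $g(\bx_{k+1}) > g(\bx_k)$, the pair $(g(\bx_k), g(\bx_{k+1}))$ really falls into the domain of validity of the local monotonicity condition, i.e., both exceeds $g(\tilde \bx^*)$ and stays below $g(\tilde \bx^*) + (C^2-1) g(\tilde \bx^*)$. The width $(C^2-1) g(\tilde \bx^*)$ appearing in Condition~\ref{ass:monotonicity} is calibrated precisely to accommodate the one-step multiplicative growth factor $C$ from Proposition~\ref{prop:cost_diff} under the inductive hypothesis, so the induction closes without slack.
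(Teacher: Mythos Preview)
Your proposal is correct and follows essentially the same route as the paper: induction on $k$, a two-case split on whether $g(\bx_k)\le g(\tilde\bx^*)$, with case~(a) dispatched by Proposition~\ref{prop:cost_diff} and case~(b) handled by a contradiction that places $g(\bx_k),g(\bx_{k+1})$ in the window of Condition~\ref{ass:monotonicity} and then appeals to the strict-decrease update rule of \OPTname. The only cosmetic difference is that the paper carves out the sub-case $g(\bx_{k+1})\le g(\tilde\bx^*)$ explicitly before the contradiction, whereas you conclude $g(\bx_{k+1})\le g(\bx_k)$ directly, which already suffices under the inductive hypothesis.
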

Proposition~\ref{prop:cost_diff} and \ref{proposition:g(bc_k)} are similar to Proposition~\ref{prop:cost_diff_add} and \ref{prop:cost_bound_add} respectively, but have a multiplicative form on the cost bound. 

\begin{theorem} [Expected total evaluation cost of \OPTname] \label{thm:FLOWcost_twophase}
Under Condition~\ref{prop:cost_diff} and  Condition~\ref{ass:monotonicity},
if $K^* \leq \ceil{\frac{\log \gamma}{\log C}}$, $T\leq  g(\tilde \bx^*)\cdot\frac{2(\gamma-1)C}{\gamma( C -1)}$; else,
$T \leq  g(\tilde \bx^*)\cdot 2C \big( K^*C  +   \frac{\gamma-1}{\gamma( C -1)}  - \frac{\log \gamma}{\log C} C + C \big)$, 
in which $\gamma=\frac{g(\tilde \bx^*)}{g(\bx_0)}>1$.
\end{theorem}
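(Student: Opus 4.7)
The plan is a two-phase decomposition of the trajectory $\{\bx_k\}$. First I would bound the per-iteration cost: each iteration of \OPTname\ performs at most two evaluations, at $\bx_k \pm \mu \bu_k$, and by Fact~\ref{ass:cost_ratio} each has cost at most $c(\pm\mu\bu_k)\, g(\bx_k) \leq C\, g(\bx_k)$. Hence the total cost satisfies
\[
T \leq g(\bx_0) + 2C \sum_{k=0}^{K^*-1} g(\bx_k),
\]
reducing the problem to bounding $\sum_k g(\bx_k)$, with the initial $g(\bx_0)$ easily absorbed.

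Next I would introduce the crossover index $k_1 := \lceil \log\gamma/\log C\rceil$, which separates the ``geometric growth'' phase from the ``locally monotone'' phase. For $k < k_1$, iterating Proposition~\ref{prop:cost_diff} gives the bound $g(\bx_k) \leq g(\bx_0)\, C^k$, and this stays at the scale of $g(\tilde\bx^*)$ by the definition of $k_1$. For $k \geq k_1$ the iterate has entered the region where Condition~\ref{ass:monotonicity} governs the behavior, so Proposition~\ref{proposition:g(bc_k)} supplies the stronger bound $g(\bx_k) \leq g(\tilde\bx^*)\, C$. Summing, the geometric head contributes $\sum_{k=0}^{\min(K^*, k_1)-1} g(\bx_0) C^k = g(\bx_0)(C^{\min(K^*,k_1)}-1)/(C-1)$, while the constant tail contributes $(K^* - k_1)\, g(\tilde\bx^*)\, C$ whenever $K^* > k_1$.

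The two cases of the theorem then follow by specialization. When $K^* \leq k_1$ the tail is empty; substituting $g(\bx_0) = g(\tilde\bx^*)/\gamma$ and using $C^{K^*} \leq \gamma$ directly recovers the Case 1 bound $T \leq g(\tilde\bx^*)\cdot 2(\gamma-1)C/(\gamma(C-1))$. When $K^* > k_1$, I would combine the two blocks, use $C^{k_1} \leq \gamma C$ (from $k_1 \leq \log\gamma/\log C + 1$) to bound the geometric head, and regroup terms to produce the Case 2 bound; the $-(\log\gamma/\log C)\,C$ term and the extra $+C$ term both arise from expanding $(K^* - k_1) C$ and absorbing the rounding slack $k_1 - \log\gamma/\log C \in [0,1)$.

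The main obstacle I anticipate is the bookkeeping around the ceiling function: one must verify that (i) the geometric-growth bound from Proposition~\ref{prop:cost_diff} is the tighter of the two available bounds exactly on $k < k_1$ and that the switch to the Proposition~\ref{proposition:g(bc_k)} bound is valid from $k_1$ onward, and (ii) the non-integer slack in $\lceil \log\gamma/\log C\rceil$ is absorbed cleanly into the $+C$ constant rather than contaminating the dominant $K^*C$ term. All remaining steps are routine geometric and arithmetic summations.
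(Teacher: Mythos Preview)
Your proposal is correct and follows essentially the same two-phase decomposition as the paper's proof: bound the per-iteration cost by $2C\,g(\bx_k)$, use Proposition~\ref{prop:cost_diff} to control the geometric head and Proposition~\ref{proposition:g(bc_k)} to cap the tail at $g(\tilde\bx^*)C$, then sum. Your indexing conventions differ slightly (you sum from $k=0$, the paper from $k=1$; your crossover $k_1$ is the paper's $\bar k + 1$), and you are in fact more explicit than the paper about the ceiling bookkeeping, which the paper handles by simply overbounding the head with the full geometric series up to $\bar k$.
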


\paragraph{Proof idea}
The proof of Theorem~\ref{thm:FLOWcost_twophase} is similar to that of Theorem~\ref{thm:FLOWcost_twophase_add} and is provided in Appendix~\ref{sec:appendix_proof_cost}.

\begin{remark}[Intuitions and implications of Theorem~\ref{thm:FLOWcost_twophase}]
In Theorem~\ref{thm:FLOWcost_twophase}, the factorized form of the cost function ensures that the cost bound sequence begins as a geometric series (in contrast to the Arithmetic series in Theorem~\ref{thm:FLOWcost_twophase_add}) until it gets close to the optimal configuration's cost, such that the summation of this subsequence remains a constant factor times the optimal configuration's cost. It suggests that the expected total cost for obtaining a $\epsilon$-approximation of the loss is $O(1)$ times that minimal cost compared to $ O(d\epsilon^{-2})$ in Theorem~\ref{thm:FLOWcost_twophase_add} when $K^* \leq \min \{  \ceil{\frac{\log (g(\tilde \bx^*)/g(\bx_0))}{\log C}},  \ceil{\frac{(g(\tilde \bx^*) - g(\bx_0))}{D}} \}$. 
\end{remark}

\begin{remark}[Comparison between Theorem~\ref{thm:FLOWcost_twophase_add} and \ref{thm:FLOWcost_twophase}]
By comparing Theorem~\ref{thm:FLOWcost_twophase} to Theorem~\ref{thm:FLOWcost_twophase_add}, we can see that the factorized form does not affect the asymptotic form of the overall cost bound. It improves a constant term in the overall cost bound, and should be used if the relation of the cost function with respect to hyperparameter is known and can be written into the factorized form required in Condition~\ref{assu:factorized_cost}. We emphasize that even when such information is unknown and the factorized form is not available, the asymptotic cost bound of \OPTname still applies because of Theorem~\ref{thm:FLOWcost_twophase_add}.
\end{remark}

Our analysis above shows that a factorized form of cost function stated in Condition~\ref{assu:factorized_cost} brings extra benefits in controlling the expected total cost. Although Condition~\ref{assu:factorized_cost} not necessarily always holds for an arbitrary cost function, it can be easily realized as long as the computational complexity of the ML algorithm with respect to the hyperparameters is partially known (in $\Theta$ notation). Here we illustrate how can we use proper transformation functions to map the original problem space to a space where the factorized form in Condition~\ref{assu:factorized_cost} is satisfied in case such assumption is not satisfied in terms of the original hyperparameter variable.
\begin{remark}[Use transformation functions to realize Condition~\ref{assu:factorized_cost}]\label{remark:factorized_cost} \label{remark:trans_func} Here we introduce a new set of notations corresponding to the original hyperparameter variables that need to be transformed. Denote $\bc \in \cC$ as the original hyperparameter variable in the hyperparameter space $\cC$.  Denote $l(\bc)$ as  the validation loss using hyperparameter $\bc$ and denote $t(\bc)$ as the evaluation cost using $\bc$. Mapping back to our previous notations, $f(\bx) = l(\bc) $, and $ t(\bc) = g(\bx)$. Let's assume the dominant part of the cost $t$ can be written into: $t(\bc) = \prod_{i=1}^{d'} S_i(\bc)$, where $d'\le d$ and $S_i(\cdot): \mathbb{R}^d \rightarrow \mathbb{R^+} $, and $S(\bc) = (S_1(\bc), S_2(\bc), ..., S_d(\bc)): \mathbb{R}^d \rightarrow \mathbb{R^+}^d $ has a reverse mapping $\tS$, i.e., $\forall \bc \in \cC$,  $\tS( S(\bc)) = \bc$. 

Such transformation function pairs can be easily constructed if the computational complexity of the ML algorithm with respect to the hyperparameters is known. For example, if $\bc = (h_1, h_2, ..., h_5)$, and the complexity of the training algorithm is 
$t(\bc)=\Theta(h_1^2h_2^{-1/2}2^{h_3}\log h_4)$, 
where $h_1$ and $h_2$ are positive numbers, and $h_4 >1$,
we can let $S_1(\bc)=h_1^2, S_2(\bc)=h_2^{-1/2}, S_3(\bc)=2^{h_3}, S_4(\bc)=\log h_4, S_5(\bc)=h_5, d'=4$.
With the transformation function pair $S$ and $\tilde S$ introduced, we define $ \bx := \log S(\bc)$. Then we have $\bc = \tilde S(e^{\bx})$, $f(\bx) = l(\bc) = l(\tS(e^\bx))$ and $g(\bx)=t(\bc) = t(\tS(e^\bx)) = \prod_{i=1}^d e^{x_i}$, which realize Condition~\ref{assu:factorized_cost}. 

Let us use hyperparameter tuning for XGBoost as a more concrete example in the HPO context. In XGBoost, cost-related hyperparameter include $h_1 =\text{tree num}, h_2 = \text{leaf num}, h_3 = \text{min child weight}$. The correlation between evaluation cost and hyperparameter is $t(\bc) = \Theta(h_1 \times h_2 / h_3)$. Using a commonly used log transformation, $x_i  = \log h_i$ for $i \in [d]$ (i.e., $S_i(\bc) = h_i$ for $i \in [d]$), we have $g(\bx) = \Theta(e^{x_1}\times e^{x_2} \times e^{-x_3})$, which again satisfies Condition~\ref{assu:factorized_cost}. 

\end{remark}

\subsection{Proofs of propositions and theorems in cost analysis} \label{sec:appendix_proof_cost}

\begin{proof}[Proof of Proposition \ref{prop:cost_bound_add}]
We prove by induction. Because of the low-cost initialization, we have $g(\bx_0) \leq g(\tilde \bx^*)$. Since $D \geq 0$, we naturally have $g(\bx_0) \leq g(\tilde \bx^*)  + D$. Next, let's assume $g(\bx_k) \leq g(\tilde \bx^*)  + D$, we would like to prove $g(\bx_{k+1})  \leq g(\tilde \bx^*)  + D$ also holds. We consider the following two cases,

{\em (a)} $ g(\bx_k) \leq g(\tilde \bx^*)$;~~~~~{\em (b)} $g(\tilde \bx^*) < g(\bx_k)  \leq g(\tilde \bx^*) + D$

In case {\em (a)}, according to \OPTname, $\bx_{k+1} = \bx_{k} + \delta \bu_k$ or $\bx_{k+1} = \bx_{k} - \delta \bu_k$, or $\bx_{k+1} = \bx_{k} $. 
Then according to the Lipschitz continuity in Condition~\ref{assu:cost_Lipsch}, we have $g(\bx_{k+1}) \leq g(\bx_k)+  U \times  z(\delta \bu_k)$ or $g(\bx_{k+1}) \leq g(\bx_k)+  U \times  z(-\delta \bu_k)$, or $g(\bx_{k+1}) = g(\bx_k)$. So we have $g(\bx_{k+1}) \leq g(\bx_k)+  U \times \max_{\bu \in \bbS} z(\delta \bu) \leq  g(\tilde \bx^*) + U \times \max_{\bu \in \bbS} z(\delta \bu) =  g(\tilde \bx^*)  + D$.

In case {\em (b)}, where $g(\tilde \bx^*)
      \leq g(\bx_k)  \leq g(\tilde \bx^*) + D$, if $g(\bx_{k+1}) \leq g( \tilde  \bx^*)$, then $g(\bx_{k+1}) <  g(\tilde \bx^*) + D$ naturally holds; if $g(\bx_{k+1}) >   g(\tilde \bx^*)$, we can prove  $g(\bx_{k+1}) < g(\bx_k)$ by contradiction as follows.
      
      Assume $g(\bx_{k+1}) > g(\bx_k)$. Since $g(\bx_{k}) <  g(\tilde \bx^*) + D$, and $\bx_{k+1} = \bx_{k} + \delta \bu_k$ or $\bx_{k+1} = \bx_{k} - \delta \bu_k$, or $\bx_{k+1} = \bx_{k} $, we have $g(\bx_{k+1}) < g(\bx_k) + U \times \max_{\bu \in \bbS}  z(\delta \bu) \leq  g(\tilde \bx^*) + 2U \times \max_{\bu \in \bbS} z(\mu \bu) $. Thus we have $g(\tilde \bx^*) < g(\bx_{k}) < g(\bx_{k+1}) \leq g(\tilde \bx^*) + 2U \times \max_{\bu \in \bbS} z(\mu \bu)$. Then based on Condition~\ref{assu:cost_loss_monotonicity}, $f(\bx_{k+1}) > f(\bx_k)$. However, this contradicts to the execution of \OPTname, in which $f(\bx_{k+1}) \leq f(\bx_{k})$. Hence we prove $g(\bx_{k}) \leq g(\tilde \bx^*) + D$

\end{proof}

\begin{proof} [Proof of Theorem~\ref{thm:FLOWcost_twophase_add}]
According to Proposition~\ref{prop:cost_diff_add}, we have $g(\bx_{k+1}) \leq D + g(\bx_{k})$, with $D = U \times \max_{\bu \in \bbS} z(\mu \bu)$. When $k \leq \Bar{k}=\ceil{\frac{\gamma}{D}}-1$, we have $g(\bx_{k}) \leq  g(\bx_{0}) + Dk\leq g(\bx^*)$.

For  $K^* \leq \Bar{k}$,
\begin{align} \label{eq:costsum_add_1}
   \sum_{k=1}^{K^*} & g(\bx_k) \leq \sum_{k=1}^{K^*} (g(\bx_0) + Dk)  \leq \frac{K^* (g(\bx_0) + g(\tilde \bx^*))}{2}
\end{align}

For $K^*>\Bar{k}$,

\begin{align}\label{eq:costsum_add_2}
    \sum_{k=1}^{K^*}  g(\bx_k) = \sum_{k=1}^{\Bar{k}} g(\bx_k) + \sum_{k=\Bar{k}+1}^{K^*} g(\bx_k) 
    & \leq  \sum_{k=1}^{\Bar{k}} (g(\bx_0) + Dk) + \sum_{k=\Bar{k} + 1}^{K^*} (g(\tilde \bx^*) + D) \\ \nonumber
    & \leq \Bar{k} g(\bx_0) + \frac{1}{2}D \Bar{k} (1+\Bar{k})  + (K^* - \Bar{k})(g(\tilde \bx^*) + D)\\ \nonumber
    & \leq \Bar{k} ( g(\tilde \bx^*) - D  \frac{(\Bar{k} -1) }{2}) + (K^* - \Bar{k})(g(\tilde \bx^*) + D)\\ \nonumber
     & = K^* g(\tilde \bx^*) + D (K^* - \frac{\Bar{k} (\Bar{k}+1)}{2})  \\ \nonumber
     & \leq K^* g(\tilde \bx^*) + D K^* - \frac{ (\gamma/D -1)\gamma}{2}  \\
     \nonumber 
\end{align}

In each iteration $k$, \OPTname\ requires the evaluation of configuration $\bx_k + \mu \bu$ and potentially the evaluation of $\bx_k - \mu \bu$. Thus, $t_k \leq g(\bx_k + \mu \bu) + g(\bx_k - \mu \bu) \leq 2(g(\bx_k) + D)$. Then we have 
\begin{align}\label{eq:cost_all_add}
    \tilde G(\OPTname) & \leq 2 \sum_{k=1}^{K^*} (g({\bx_k}) + D) = 2 \sum_{k=1}^{K^*} g({\bx_k}) + 2K^* D \\ \nonumber
\end{align}
Substituting Eq~\eqref{eq:costsum_add_1}
 and Eq~\eqref{eq:costsum_add_2} into Eq~\eqref{eq:cost_all_add} respectively finishes the proof.
 \end{proof}

\begin{proof}[Proof of Proposition~\ref{proposition:g(bc_k)}]
We prove by induction similar to the proof of Proposition~\ref{prop:cost_bound_add}.  

According to the definition of $C$, we have $C > 1$.  Since the first point $x_0$ in \OPTname\ is initialized in the low-cost region, $g(\bx_0) \leq g(\tilde \bx^*)$. So $g(\bx_0) \leq g(\tilde \bx^*) C$ holds based on Fact~\ref{ass:cost_ratio}. Next, let us assume $g(\bx_k) \leq g(\tilde \bx^*) C$, and we would like to prove $g(\bx_{k+1}) \leq g(\tilde \bx^*) C$. We consider the following two cases:

{\em (a)} $g(\bx_k) \leq g(\tilde \bx^*)$;~~~~~{\em (b)} $g(\tilde \bx^*)< g(\bx_k) \leq g(\tilde \bx^*) C$

In case {\em (a)}, according to \OPTname,
$\bx_{k+1}= \bx_k + \mu \bu_k$ or $\bx_{k+1}= \bx_k - \mu \bu_k$ or $\bx_{k+1}= \bx_k$.  Accordingly, $g(\bx_{k+1}) = g(\bx_k) c(\mu \bu_k)$ or  $g(\bx_{k+1}) = g(\bx_k) c(-\mu \bu_k)$ or $g(\bx_{k+1}) = g(\bx_k)$ based on the fact Fact~\ref{ass:cost_ratio}, so we have $g(\bx_{k+1}) \leq g(\bx_k) \max_{\bu \in \bbS} c(\mu \bu) \leq g(\tilde \bx^*) \max_{\bu \in \bbS} c(\mu \bu)  = g(\tilde \bx^*) C$.


In case {\em (b)}, where $g( \tilde\bx^*) < g(\bx_k) \leq g(\tilde \bx^*) \max_{\bu\in\bbS} c(\mu \bu)$, if $g(\bx_{k+1})\leq g(\tilde \bx^*)$, then $g(\bx_{k+1}) \leq \max_{\bu\in \bbS} g(\tilde \bx^* + \mu \bu)$ follows; if $g(\bx_{k+1}) > g(\tilde \bx^*)$, we can prove $g(\bx_{k+1}) \leq g(\bx_{k})$ by contradiction as follows. 

Assume $g(\bx_{k+1}) > g(\bx_{k})$. Since $g(\bx_{k}) \leq \max_{\bu\in\bbS} g(\tilde \bx^* + \mu \bu)$, and $\bx_{k+1}= \bx_k + \mu \bu_k$ or $\bx_{k+1}= \bx_k - \mu \bu_k$ or $\bx_{k+1}= \bx_k$, we have $g(\bx_{k+1}) \leq g(\bx_k) \max_{\bu\in \bbS} c(\mu \bu) \leq g(\tilde \bx^*) (\max_{\bu\in\bbS} c(\mu \bu) )^2  =  g(\tilde \bx^*) C^2$.
Thus, we have $g(\tilde \bx^*) C^2 > g(\bx_{k + 1}) > g(\bx_{k}) > g(\tilde \bx^*)$, and then based on Condition~\ref{ass:monotonicity}, we have $f(\bx_{k+1}) > f(\bx_k)$. However, this contradicts to the execution of \OPTname, in which $f(\bx_{k+1}) \leq f(\bx_k)$. Hence, we prove $ g(\bx_{k+1}) < g(\bx_k) \leq g(\tilde \bx^*) C $.
\end{proof}



\begin{proof} [Proof of Theorem~\ref{thm:FLOWcost_twophase}]
According to Proposition~\ref{prop:cost_diff}, we have $g(\bx_{k+1}) \leq C g(\bx_{k})$, with $C = \max_{\bu \in \bbS} c(\mu\bu))$.

When $k \leq \Bar{k}=\ceil{\frac{\log \gamma}{\log C}}-1$, we have $g(\bx_{k}) \leq C^{k} g(\bx_{0}) \leq g(\bx^*)$.

For $K^*>\Bar{k}$,

\begin{align}\label{eq:costsum}
    \sum_{k=1}^{K^*} g(\bx_k) = \sum_{k=1}^{\Bar{k}} g(\bx_k) + \sum_{k=\Bar{k}+1}^{K^*} g(\bx_k) 
    & \leq  \sum_{k=1}^{\Bar{k}} g(\bx_0)C^k + \sum_{k=\Bar{k} + 1}^{K^*} g(\tilde \bx^*)C \\ \nonumber
    & \leq \frac{g(\tilde\bx^*) - g(\bx_0)}{C -1} + (K^* - \Bar{k})g(\tilde \bx^*)C \\ \nonumber
    & = g(\tilde \bx^*) \big( \frac{ \gamma -1}{\gamma( C -1)} + (K^* - \Bar{k})  C  \big) \\ \nonumber
     & = g(\tilde \bx^*) \left( K^*C+   \frac{\gamma-1}{\gamma( C -1)}  - \frac{\log \gamma}{\log C} C +C \right)  \\ \nonumber
\end{align}

Similar to the proof of Proposition~\ref{prop:cost_bound_add}, in each iteration $k$, $t_k \leq g(\bx_k + \mu \bu) + g(\bx_k - \mu \bu) \leq 2g(\bx_k)C$. Then we have 
\begin{align}\label{eq:cost_all}
   \tilde G(\OPTname) & \leq 2C \sum_{k=1}^{K^*} g({\bx_k}) \leq  g(\tilde \bx^*)\cdot 2C \left(  K^*C+   \frac{\gamma-1}{\gamma( C -1)}  - \frac{\log \gamma}{\log C} C +C \right)   \\ \nonumber
\end{align}

For $K^*\le \Bar{k}$, \(
    \sum_{k=1}^{K^*} g(\bx_k) \le \sum_{k=1}^{\Bar{k}} g(\bx_k) \),
    and the bound on $T$ is obvious from Eq~\eqref{eq:costsum} and \eqref{eq:cost_all}.

\end{proof}

\begin{remark}[Justifications for Condition~\ref{assu:cost_loss_monotonicity}]\label{remark:monotonicity}
Condition~\ref{assu:cost_loss_monotonicity} states that when the cost surpasses a locally optimal point $\tilde \bx^*$'s cost, i.e., $g(\bx) \geq g(\tilde \bx^*)$, 
with the increase of the evaluation cost in a local range, the loss does not decrease.  To give a concrete example, let us consider the number $K$ in $K$-nearest-neighbor. Condition~\ref{assu:cost_loss_monotonicity} essentially means that the further is $K$ above the optimal point, the larger is the validation loss, which is typically true in the surrounding space of the optimal point. We exhaustively evaluate $K$-nearest-neighbor models with every possible $K$ in a particular range. We visualize the relationship between loss, cost (i.e., training and validating time), and the cost-related hyperparameter $K$ (i.e., the number of neighbors) on two examples datasets in  Figure~\ref{fig:knn_cost_loss}.  From Figure~\ref{fig:knn_cost_loss_poker}, we can see that the optimal configuration $\tilde \bx^*$ is around 20, which correspond to a cost of 200 seconds, i.e. ($g(\tilde \bx^*) = 200$ seconds). Once the cost becomes larger than $g(\tilde \bx^*)$, the increase of cost (the blue curve) corresponds to a decrease of loss (reflected from the curve in red). This phenomenon is even more apparent on the mv dataset showed in Figure~\ref{fig:knn_cost_loss_mv}, where a larger range of $K$ is used. The results empirically verified the statement in Condition~\ref{assu:cost_loss_monotonicity}.

\end{remark}

\begin{remark}
To the best of our knowledge, the only theoretical analysis for HPO problems that considers cost appears in Hyperband~\cite{ICLR:li2017hyperband}. They derived a theoretical bound on the loss with respect to the input budget. 
However, as introduced in Section~\ref{sec:related_work}, their notion of `budget' is not suitable for modeling generic cost-related hyperparameters.  
\end{remark}

\begin{figure}[t]
\centering
\begin{subfigure}{0.45\columnwidth} 
\includegraphics[width=\columnwidth]{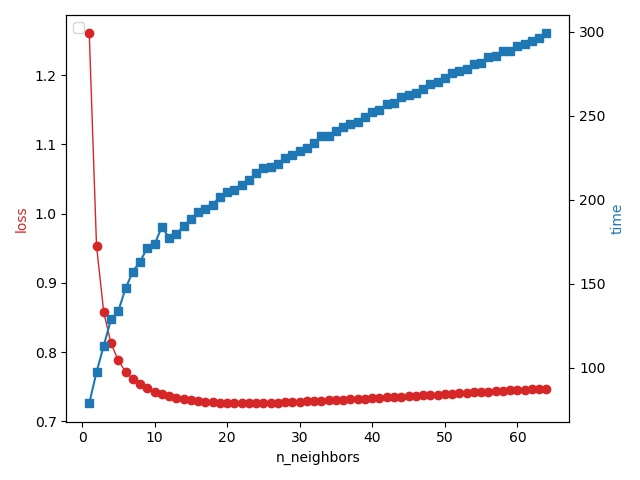}%
\caption{on poker dataset } \label{fig:knn_cost_loss_poker}
\end{subfigure}\hfill%
  \begin{subfigure}{0.45\columnwidth} 
\includegraphics[width=\columnwidth]{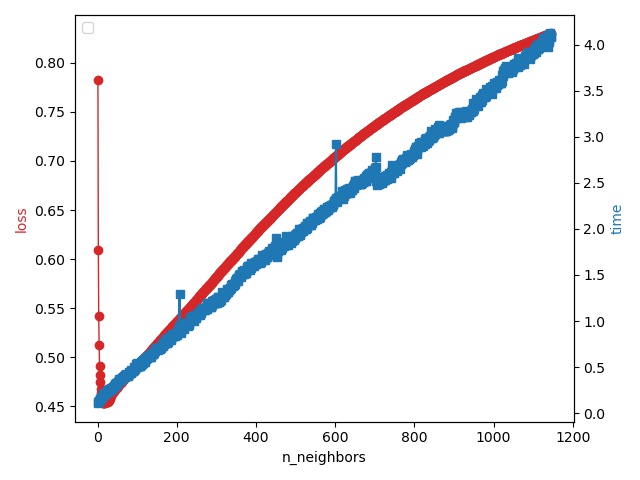}%
\caption{on mv dataset} \label{fig:knn_cost_loss_mv}
\end{subfigure}\hfill%
\caption{Relationship between cost, loss, and the number of neighbors in $K$-nearest-neighbor model. In both sub-figures, the x-axis shows the number of neighbors (i.e., $K$ in the $K$-nearest-neighbor model), the left-hand side of y-axis shows validation loss, which corresponds to the curves in red, and the left-hand side of y-axis shows the training and validating time (i.e., cost), which corresponds to the curves in blue.}
  \label{fig:knn_cost_loss}
\end{figure}

\section{More Details about \HPOname and Experiments} \label{sec:appendix_ceo_and_exp}
\subsection{More details about \HPOname} \label{subsec:appendix_ceo}

\begin{algorithm}[h]
\caption{Setting of $\delta_{\text{lower}}$}\label{alg:delta_lower}
\begin{algorithmic}[1]
\Statex \textbf{Input:} Initial stepsize $\delta_0$, search space $\mathcal{X} = \{ [x^{\min}_i, x^{\max}_i] \}_{i \in [d]}$
\State Define $\texttt{disc}$ as the dimension index set of the discrete hyperparameters in $\mathcal{X}$.
\If{$\texttt{disc} \neq \emptyset$ }
    \State $\delta_{\text{lower}} = \min\{ \frac{\log (1 + \Delta_i/x^{\text{best}}_i)}{\log (x^{\max}_i/x^{\min}_i)} \delta_0 \}_{i \in \texttt{disc}} $, in which $\Delta_i$ is the minimum possible difference on the $i$-dimension of hyperparameter configuration.
\Else
     \space $\delta_{\text{lower}} = 0.01$
\EndIf
\end{algorithmic}
\end{algorithm}

Complete pseudocode for our HPO solution \HPOname\ is presented in Algorithm~\ref{alg:hpo}. Below we explain several design choices made in \HPOname.

\begin{itemize}
    \item The setting of no improvement threshold is set to be $N=2^{d-1}$. This design is backed by the following rational:
    At point $\bx$, define $\mathbf{U} = \{\bu \in \bbS|\sign(\bu) =  \sign(\nabla f(\bx))\}$, if $\nabla f(\bx) \neq \mathbf{0}$, i.e., $\bx_k$ is not a first-order stationary point, $\forall \bu\in \mathbf{U}$, $f'_{\bu}(\bx) = \nabla f(\bx)^\mt \bu >0$, and $f'_{-\bu}(\bx) < 0$.   
    Even when L-smoothness is only satisfied in $\mathbf{U}\cup -\mathbf{U}$, we will observe a decrease in loss when $
    \bu\in \mathbf{U}\cup -\mathbf{U}$. The probability of sampling such $\bu$ is $\frac{2}{2^d}$. It means that we are expected to observe a decrease in loss after $2^{d-1}$ iterations even in this limited smoothness case.
    \item Initialization and lower bound of the stepsize: we consider the optimization iterations between two consecutive resets as one \textit{round} of \OPTname. At the beginning of each round $r$, $\delta$ is initialized to be a constant that is related to the round number: $\delta = r + \delta_0$ with $\delta_0 = \sqrt{d}$. Let $\delta_{\text{lower}}$ denotes the lower bound on the stepsize. Detailed setting of the lower bound is provided in Algorithm~\ref{alg:delta_lower}. It is computed according to the minimal possible difference on the change of each hyperparameter. For example, if a hyperparameter is an integer type, the minimal possible difference is 1. When no such requirement is posed, we set $\delta_{\text{lower}}$ to be a small value $0.01$.
    \item Projection function: as we mentioned in the main content of the paper, the newly proposed configuration $\bx \pm \bu$ is not necessarily in the feasible hyperparameter space. To deal with such scenarios, we use a projection function $\Proj_{\cX}(\cdot)$ to map the proposed new configuration to the feasible space. In \HPOname, the following projection function is used $\Proj_{\cX}(\bx') = \argmin_{\bx \in \cX} \lVert \bx - \bx' \rVert_1 $ (in line 6 of Algorithm~\ref{alg:hpo}), which is a straightforward way to handle discrete hyperparameters and bounded hyperparameters. After the projection operation, if the loss function is still non-differentiable with respect to $\bx$, smoothing techniques such as Gaussian smoothing~\cite{nesterov2017random}, can be used to make it differentiable. And ideally, our algorithm can operate on the smoothed version of the original loss function, denoted as $\tilde f(\Proj_{\cX}(\bx'))$. However, since the smoothing operation adds additional complexity to the algorithm and, in most cases, $\tilde f(\cdot)$ can be considered as a good enough approximation to $f(\cdot)$, the smoothing operation is omitted in our algorithm and implementation. 
\end{itemize}

\subsection{More details about experiments and results} \label{subsec:appendix_exp}



\begin{table}
\caption{Task ids on OpenML} \label{tab:openml}
\centering
\begin{tabular} {r| l|r |l} 
dataset  &  id & dataset & id \\ 
 \hline
 adult & 7592 & Airlines & 189354 \\
 Amazon\_employee & 34539 & APSFailure & 168868 \\
 Australian & 146818 & bank\_marketing & 14965 \\
 blood-transfusion & 10101 & car & 146821 \\
 christine & 168908 & cnae-9 & 9981 \\
 connect-4 & 146195 & credit-g & 31 \\
 fabert & 168852 & Fashion-MNIST & 146825 \\
 Helena & 168329 & higgs & 146606 \\
 Jannis & 168330 & jasmine & 168911 \\
 jungle\_chess\_2pcs & 167119 & kc1 & 3917 \\
 KDDCup09\_appe & 3945 & kr-vs-kp & 3 \\
 mfeat-factors & 12 & MiniBooNE & 168335 \\
 nomao & 9977 & numerai28.6 & 167120 \\
 phoneme & 9952 & riccardo & 178333 \\
 segment & 146822 & shuttle & 146212 \\
 sylvine & 168853 & vehicle & 53 \\
 volkert & 168810 & & \\
 bng\_echomonths* & 7323 & bng\_lowbwt* & 7320 \\
 bng\_breastTumor* & 7324 & bng\_pbc* & 7318 \\
 bng\_pharynx* & 7322 & bng\_pwLinear* & 7325 \\
 fried* & 4885 & houses* & 5165 \\
 house\_8L* & 2309 & house\_16H* & 4893 \\
 mv* & 4774 & pol* & 2292 \\
 poker* & 10102 & 2dplane* & 2306 
\end{tabular}
\end{table}

\subsubsection{Experiments on XGBoost}

The 9 hyperparameters tuned in XGBoost in our experiment are listed in Table~\ref{tab:xgboost_hyperparameters}. Logarithm transformation is performed for all hyperparameters in \HPOname, \HPOnameO, GPEI and GPEIPS, for all hyperparameters except \emph{colsample by level} and \emph{colsample by tree} in SMAC and BOHB. For BOHB, we consider the training sample size to be the resource, which is a common way to use BOHB. We set the minimum budget parameter to be 10,000 and max budget to be the full size of the training set. 

\textbf{Performance curve.} The performance curves for all datasets are displayed in Figure~\ref{fig:curve}. \HPOname show superior performance over all the compared baselines on the almost all the datasets. 

\textbf{Optimality of loss and cost.}  In addition to the overall optimality summary in Figure~\ref{fig:xgboost_score_time}, we report the final loss and the cost used to reach that loss per method per dataset in Table~\ref{tab:optimality_xgboost}. Here optimality is considered as the best performance that can be reached within the one-hour time budget by all the compared methods. Meanings of the numbers in Table~\ref{tab:optimality_xgboost} are explained in the table caption. On the one hand, we can compare all the methods' capability of finding best loss within the given time budget.  On the other hand, for those methods which can find the best loss within the time budget, we can compare the time cost taken to find it.  Specifically, we can see that \HPOname\ is able to find the best loss on almost all the datasets.
 For example, on \emph{Airlines}, \HPOname\ can find a configuration with a scaled score (high score corresponds to low loss) of 1.3996 in 3285 seconds, while existing RS and BO methods report more than 16\% lower score in the one-hour time budget and \HPOnameO\ is 5.6\% lower.
 For the cases where the optimality can be reached by both \HPOname\ and at least one baseline, \HPOname\ almost always takes the least amount of cost.
 For example, on \emph{shuttle}, 
 all the methods except GPEI and GPEIPS can reach a scaled score of 0.9999. It only takes \HPOname\ 72 seconds. Other methods show 6$\times$ to 26$\times$ slowdowns for reaching it.  Overall Table~\ref{tab:optimality_xgboost} shows that \HPOname\ has a dominating advantage over all the others in reaching the same or better loss within the least amount of time. 

\begin{table}
\caption{Hyperparameters tuned in XGBoost} \label{tab:xgboost_hyperparameters}
\centering
\begin{tabular} {r| l |l} 
hyperparameter  & type & range \\ 
 \hline
 tree num & int & [4,  min(32768, \# instance)]  \\ 
 leaf num & int & [4,  min(32768, \# instance)]  \\ 
 min child weight & float & [0.01, 20] \\
learning rate & float & [0.01, 0.1] \\
subsample & float & [0.6, 1.0]  \\
reg alpha & float & [1e-10, 1.0]   \\
reg lambda & float & [1e-10, 1.0]  \\
colsample by level & float & [0.6, 1.0]  \\
colsample by tree & float & [0.7, 1.0] \\
\end{tabular}
\end{table}

\begin{figure}[t]
\centering
\begin{subfigure}{0.45\columnwidth}
\includegraphics[width=\columnwidth]{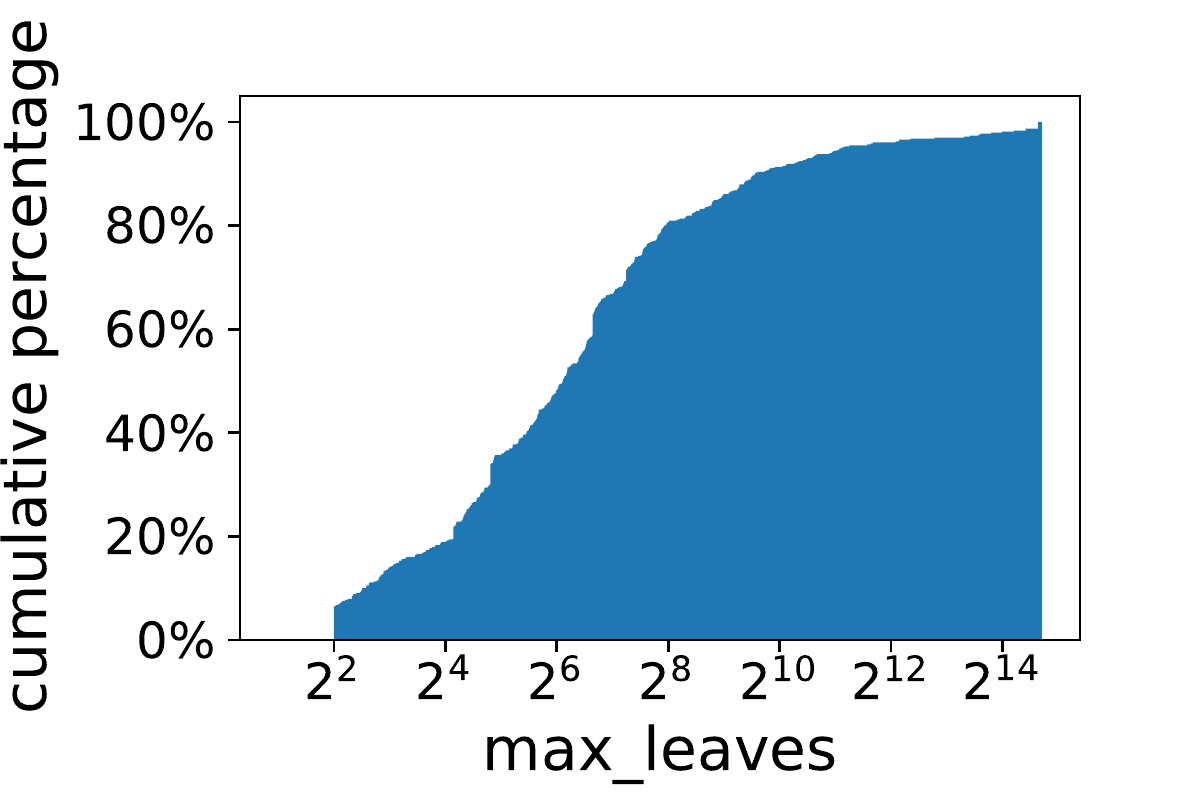}%
\caption{cumulative histogram of max\_leaves }%
\end{subfigure}\hfill%
  \begin{subfigure}{0.45\columnwidth}
\includegraphics[width=\columnwidth]{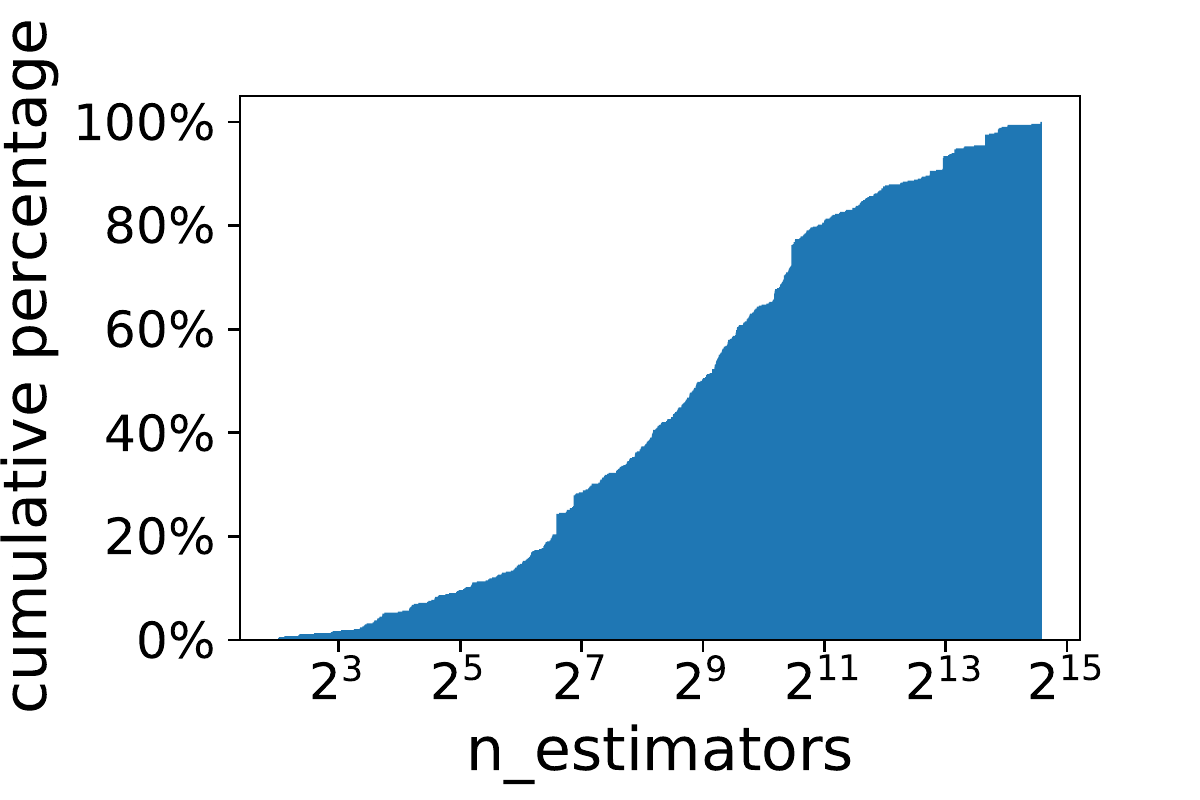}%
\caption{cumulative histogram of n\_estimators}%
\end{subfigure}\hfill%
\caption{Distribution of two cost-related hyperparameters in best configurations over all the datasets}
  \label{fig:dist_param}
\end{figure}

\begin{figure}
  \centering
  \begin{minipage}[t]{0.49\textwidth}
    \includegraphics[width=0.7\textwidth]{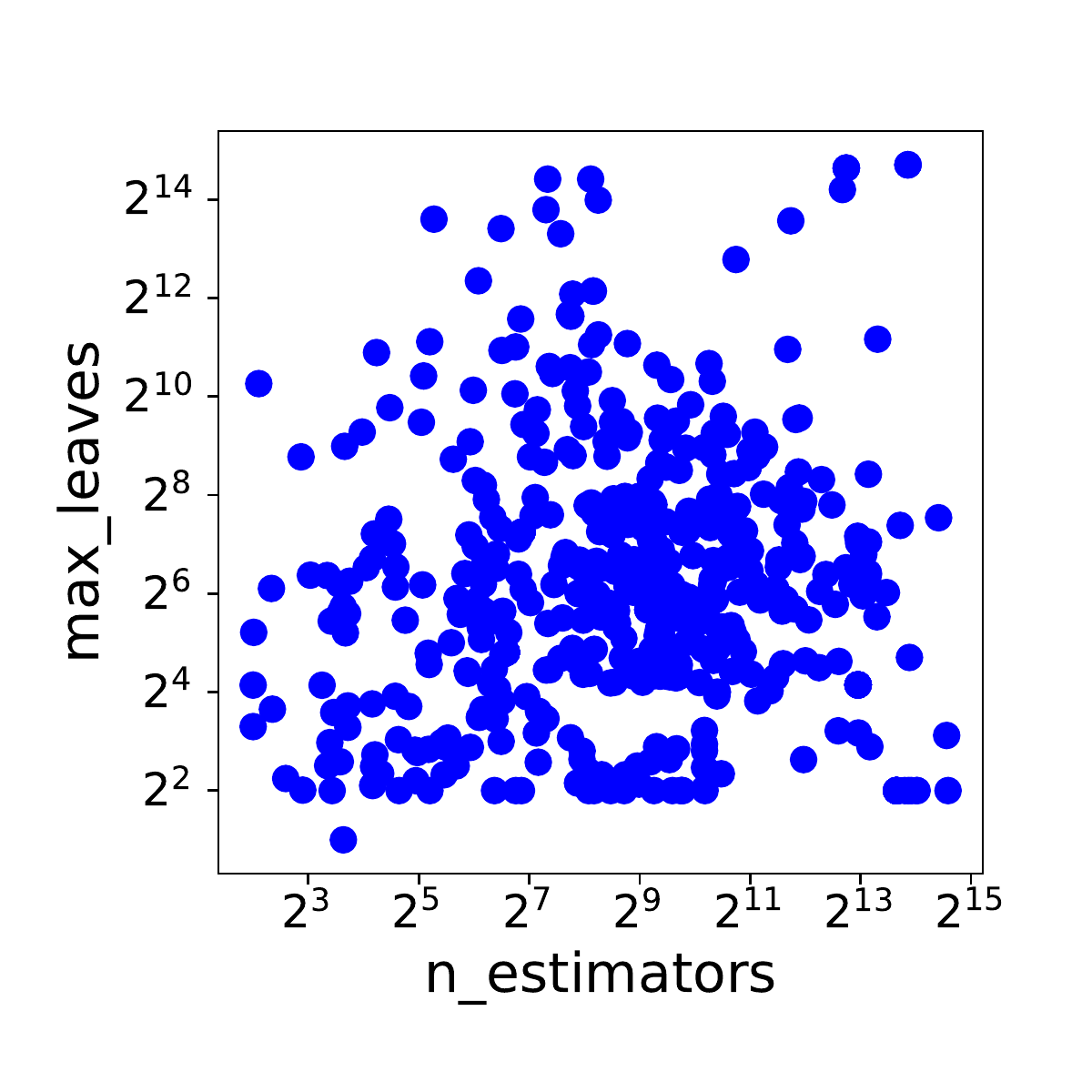}
    \caption{Scatter plot of two cost-related hyperparameters in best configurations over all the datasets}
    \label{fig:scatter}
  \end{minipage}
  \hfill
  \begin{minipage}[t]{0.49\textwidth}
    \includegraphics[width=\textwidth]{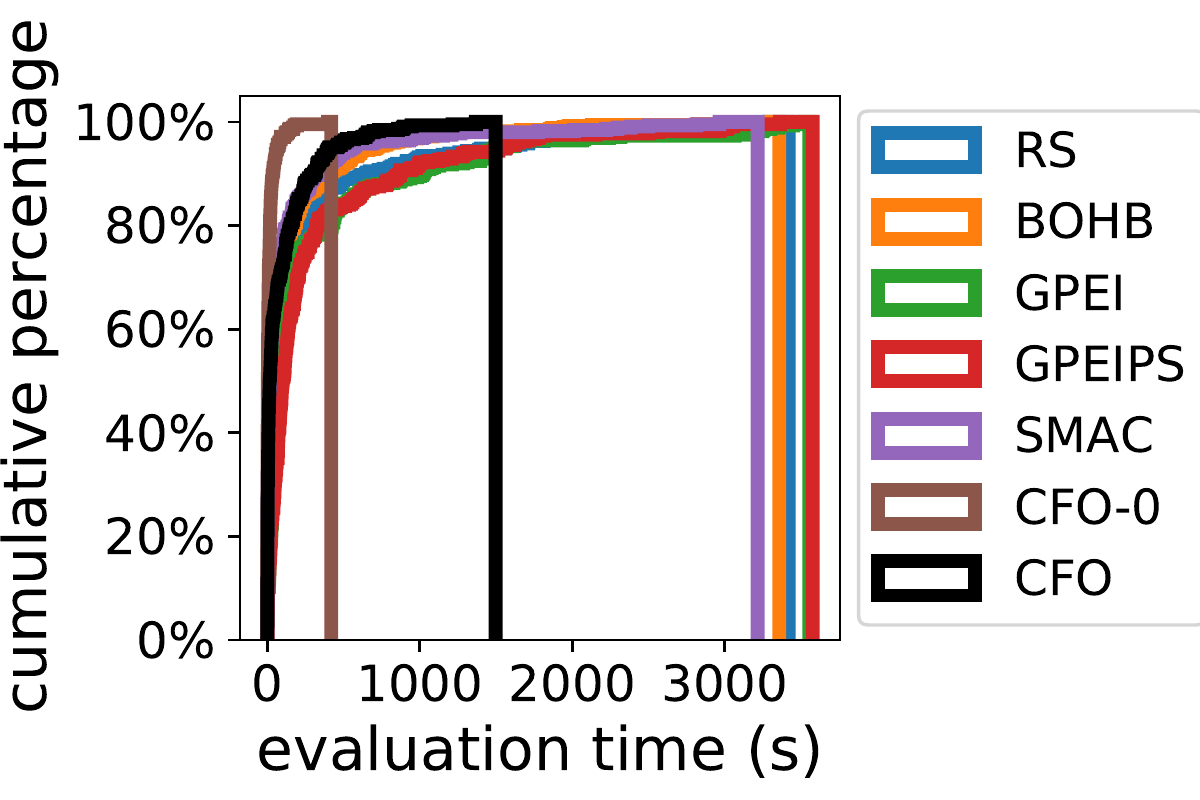}
    \caption{Distribution of evaluation time of the best XGBoost config found by each method}
    \label{fig:dist_time}
  \end{minipage}
\end{figure}


\begin{table*} 
\caption{Optimality of score and cost for XGBoost. We scale the original scores for classification tasks following the referred AutoML benchmark: 0 corresponds to a constant class prior predictor, and 1 corresponds to a tuned random forest. Bold numbers are best scaled score and lowest cost to find it among all the methods within given budget. For methods which cannot find best score after given budget, we show the score difference compared to the best score, e.g., -1\% means 1 percent lower score than best. For methods which can find best score with suboptimal cost, we show the cost difference compared to the best cost, e.g., 
        \textit{$\times$10} means 10 times higher cost than best  } \label{tab:optimality_xgboost}
\centering
\begin{tabular}{r | l l l l l  l  l }
\hline
Dataset & RS & BOHB & GPEI & GPEIPS & SMAC & \HPOnameO & \HPOname \\\hline
adult & -5.9\% & -7.1\% & -0.6\% & -0.5\% & -3.3\% & -0.4\% & \textbf{1.0524 (1199s)} \\
Airlines & -37.3\% & -33.3\% & -16.2\% & -16.2\% & -37.2\% & -5.6\% & \textbf{1.3996 (3285s)} \\
Amazon & -8.9\% & -0.3\% & -5.5\% & -6.7\% & -0.9\% & -3.9\% & \textbf{1.0008 (3499s)} \\
APSFailure & -0.1\% & -0.1\% & -0.5\% & -0.5\% & -0.2\% & -0.2\% & \textbf{1.0050 (1913s)} \\
Australian & -0.7\% & -0.8\% & -2.2\% & -2.1\% & -0.5\% & -0.6\% & \textbf{1.0695 (3041s)} \\
bank & -2.4\% & \textit{$\times$2.8} & -1.4\% & -0.7\% & -0.4\% & -0.4\% & \textbf{1.0091 (1265s)} \\
blood & -26.2\% & -3.0\% & -11.1\% & -10.6\% & -2.5\% & -1.7\% & \textbf{1.6565 (3200s)} \\
car & -0.5\% & -0.1\% & -0.7\% & -0.8\% & -0.4\% & -1.0\% & \textbf{1.0578 (1337s)} \\
christine & -0.9\% & -2.1\% & -0.9\% & -0.9\% & \textit{$\times$1.2} & -2.8\% & \textbf{1.0281 (3108s)} \\
cnae-9 & -9.9\% & \textbf{1.0803 (2744s) } & -0.5\% & -0.9\% & -0.3\% & -2.2\% & -0.4\% \\
connect-4 & -12.7\% & -5.9\% & -5.9\% & -6.0\% & -1.9\% & -3.9\% & \textbf{1.3918 (3136s)} \\
credit-g & -1.3\% & -1.7\% & -4.0\% & -4.8\% & -0.1\% & -1.4\% & \textbf{1.1924 (3477s)} \\
fabert & -2.2\% & -1.0\% & -17.5\% & -36.2\% & -1.9\% & -7.4\% & \textbf{1.0805 (3487s)} \\
Helena & -51.6\% & -54.0\% & -82.3\% & -82.3\% & -56.4\% & -31.7\% & \textbf{5.2707 (3469s)} \\
higgs & -5.0\% & -0.5\% & -5.4\% & -3.1\% & -0.7\% & -2.0\% & \textbf{1.0180 (1962s)} \\
Jannis & -36.1\% & -48.8\% & -73.0\% & -55.5\% & -6.8\% & -10.8\% & \textbf{1.2316 (3482s)} \\
jasmine & -0.9\% & -0.3\% & -1.6\% & -0.8\% & -0.6\% & -2.4\% & \textbf{1.0083 (3319s)} \\
jungle & -4.6\% & -3.8\% & -3.0\% & -2.6\% & -2.5\% & -1.0\% & \textbf{1.3209 (3094s)} \\
kc1 & -1.5\% & \textbf{0.9406 (2324s)} & -4.0\% & -5.9\% & -1.5\% & -2.5\% & -0.4\% \\
KDDCup09 & -6.4\% & -2.6\% & -7.0\% & -6.5\% & -0.7\% & -1.1\% & \textbf{1.1778 (2140s)} \\
kr-vs-kp & -1.1\% & \textit{$\times$1.4} & \textit{$\times$309.6} & \textit{$\times$714.2} & \textit{$\times$1.4} & \textit{$\times$70.2} & \textbf{1.0002 (5s)} \\
mfeat & -0.3\% & -6.4\% & -0.9\% & -0.6\% & -0.3\% & -0.6\% & \textbf{1.0676 (1845s)} \\
MiniBooNE & -1.4\% & -0.1\% & -4.2\% & -2.3\% & \textit{$\times$2.8} & -1.1\% & \textbf{1.0100 (887s)} \\
nomao & \textit{$\times$1.3} & \textit{$\times$1.5} & \textit{$\times$1.9} & -0,1\% & \textit{$\times$1.3} & -0.2\% & \textbf{1.0022 (728s)} \\
numerai28 & -13.8\% & -12.6\% & -11.4\% & -10.5\% & -6.9\% & -2.2\% & \textbf{1.5941 (3559s)} \\
phoneme & -2.7\% & -0.1\% & -0.1\% & -0.3\% & -0.2\% & -0.4\% & \textbf{0.9938 (2341s)} \\
segment & -0.3\% & -6.2\% & -0.6\% & -0.8\% & -0.2\% & -0.5\% & \textbf{1.0084 (3284s)} \\
shuttle & \textit{$\times$6.4} & \textit{$\times$26.3} & -0.5\% & -0.3\% & \textit{$\times$10.7} & \textit{$\times$10.4} & \textbf{0.9999 (72s)} \\
sylvine & -0.1\% & \textbf{1.0071 (1680s)} & -0.1\% & -0.3\% & -0.1\% & -0.4\% & \textit{$\times$2.1} \\
vehicle & -1.9\% & -0.9\% & -4.3\% & -5.5\% & -1.7\% & -2.1\% & \textbf{1.0892 (2685s)} \\
volkert & -30.8\% & -30.7\% & -91.1\% & -91.1\% & -43.4\% & -13.7\% & \textbf{1.2152 (3244s)} \\
\hline
2dplanes* & -4.9\% & -3.8\% & -0.2\% & -0.1\% & -2.5\% & \textit{$\times$5.1} & \textbf{0.9479 (7s)} \\
breast* & -32.8\% & -4.9\% & -3.0\% & -6.8\% & -13.1\% & -1.3\% & \textbf{0.1772 (2994s)} \\
echomonth* & -0.9\% & -0.2\% & -0.9\% & -4.5\% & -0.6\% & -0.9\% & \textbf{0.4739 (2563s)} \\
lowbwt* & -0.4\% & -10.8\% & -0.2\% & -1.3\% & -10.2\% & -1.8\% & \textbf{0.6175 (466s)} \\
pbc* & -32.0\% & -20.7\% & -10.5\% & -4.9\% & -23.5\% & -2.5\% & \textbf{0.4514 (3266s)} \\
pharynx* & -20.7\% & -12.2\% & -1.2\% & -1.0\% & -1.8\% & -0.2\% & \textbf{0.5140 (2760s)} \\
pwLinear* & -14.3\% & -6.3\% & -0.7\% & -0.7\% & -0.7\% & \textit{$\times$9.9} & \textbf{0.6222 (316s)} \\
fried* & -0.1\% & -10.0\% & -0.1\% & -0.4\% & -0.1\% & -0.4\% & \textbf{0.9569 (654s)} \\
houses* & -0.6\% & -10.5\% & -1.4\% & -1.9\% & -0.1\% & -1.1\% & \textbf{0.8526 (1979s)} \\
house\_8L* & -11.1\% & -1.1\% & -5.5\% & -4.7\% & -0.9\% & -0.6\% & \textbf{0.6913 (2675s)} \\
house\_16H* & -2.0\% & -12.0\% & -2.9\% & -4.7\% & -0.8\% & -2.1\% & \textbf{0.6702 (2462s)} \\
mv* & -8.7\% & \textit{$\times$27.4} & \textit{$\times$13.8} & \textit{$\times$92.2} & \textit{$\times$25.4} & \textit{$\times$6.7} & \textbf{0.9995 (10s)} \\
poker* & -15.3\% & -5.4\% & -100.0\% & -100.0\% & -10.9\% & -5.2\% & \textbf{0.9068 (3407s)} \\
pol* & -0.1\% & \textit{$\times$3.2} & -0.2\% & -0.3\% & \textit{$\times$3.7} & -0.1\% & \textbf{0.9897 (712s)} \\
\hline
\multicolumn{8}{l}{* for regression datasets} 
\end{tabular}
\end{table*}



The total evaluation cost of RS and BO-based methods can be largely affected by the setting of the cost-related hyperparamters' range. In this work, we used $\text{min}(32768, \# \text{instance})$ as the upper bound of tree number and leaf number. To show that this range is not unnecessarily large, we visualize the distribution of these two hyperparameters in best configurations over all the datasets through a scatter plot in Figure~\ref{fig:dist_param}. In this scatter plot, each point represents the best configurations of tree and leaf number found (among all the methods) on a particular dataset. This result shows that our settings of the upper bound of tree and leaf number are not unnecessarily large because we need to ensure that the configuration of the best models on all of the datasets can be covered by this range. 

We compare the distribution of evaluation time for the best configuration found by each method (which also includes the computational time of that method at the corresponding iteration) in Figure~\ref{fig:dist_time}.  Since that \HPOname\ has the leading final performance on almost the all the datasets, we consider the best configuration found by \HPOname\ as the configuration that has the appropriate complexity. From this figure we can observe that the best configurations found by BO methods tend to have unnecessarily large evaluation cost and \HPOnameO\ tends to return configurations with insufficient complexity.

\subsubsection{Experiments on DNN}

\begin{table}
\caption{Hyperparameters tuned in DNN (fully connected Relu network with 2 hidden layers)} \label{tab:dnn_hyperparameters}
\centering
\begin{tabular} {r| l |l} 
hyperparameter  & type & range \\ 
 \hline
neurons in first hidden layer & int &  [4,  2048]  \\ 
neurons in second hidden layer & int & [4, 2048]  \\ 
 batch size & int & [8, 128] \\
dropout rate & float & [0.2, 0.5] \\
learning rate & float & [1e-6, 1e-2]  \\
\end{tabular}
\end{table}

The 5 hyperparameters tuned in DNN in our experiment are listed in Table~\ref{tab:dnn_hyperparameters}. 
Table~\ref{tab:optimality_DNN} shows the final loss and the cost used to reach that loss per method within two hours budget. Since experiments for DNN are more time consuming, we removed five large datasets comparing to the experiment for XGBoost, and all the results reported are averaged over 5 folds (instead of 10 as in XGBoost).  

\textbf{Performance curve.} The performance curves for all datasets are displayed in Figure~\ref{fig:curve_dnn}. In these experiments, \HPOname\ again has superior performance over all the compared baselines similar to the experiments for XGBoost.

\textbf{Optimality of score and cost.} Similar to the experiments for XGBoost, we studied the optimality of score and cost in DNN tuning and reported the results in Figure~\ref{fig:dnn_score_time} and Table~\ref{tab:optimality_DNN}. Similar conclusions can be drawn about the optimality of cost and loss in tuning DNN to that in XGBoost.

\begin{figure}[h]
  \centering
  \begin{subfigure}{0.5\columnwidth} 
\includegraphics[width=\columnwidth]{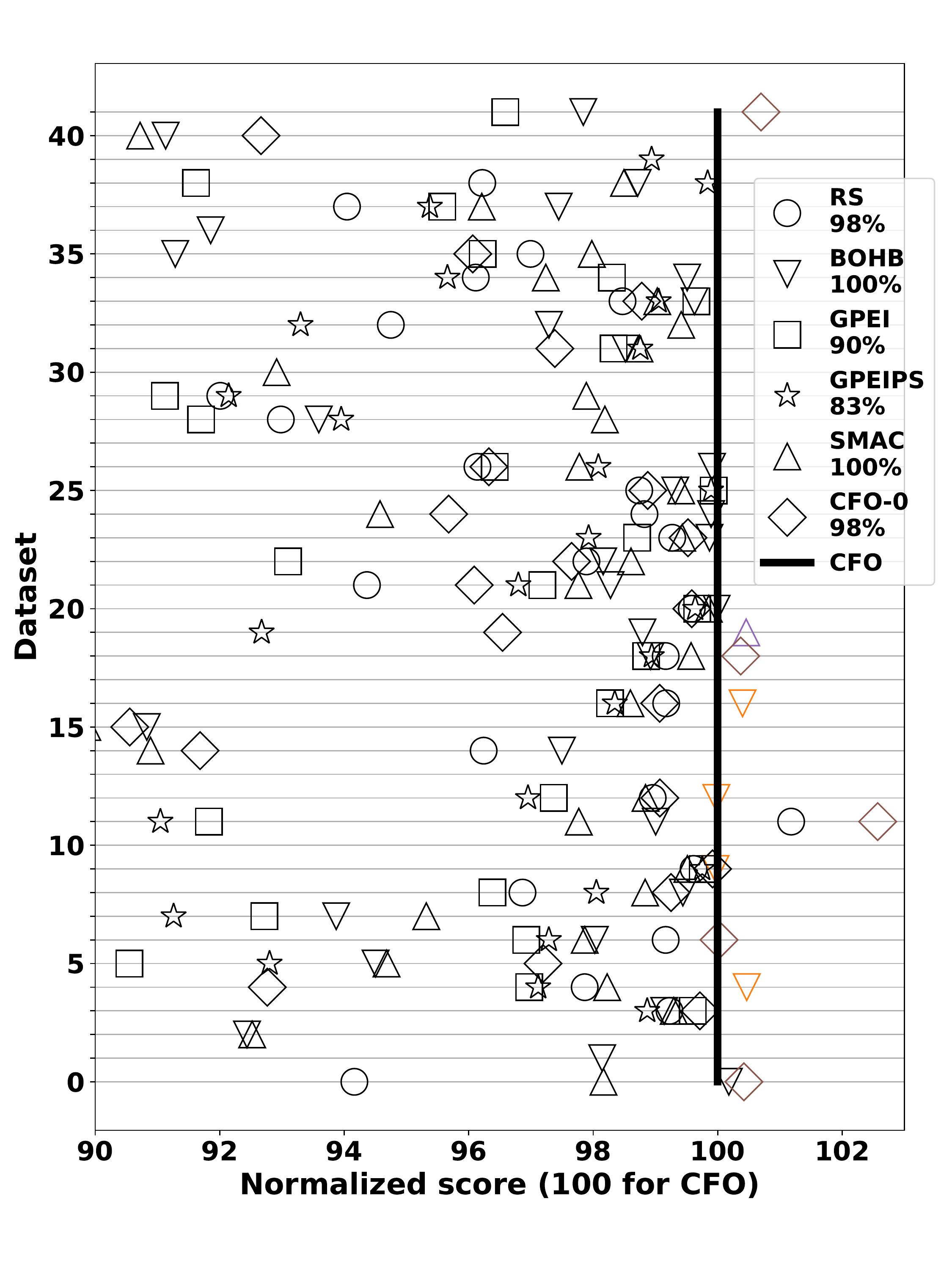}%
\caption{Scores in tuning DNN, the higher the better}  \label{fig:dnn_score}
\end{subfigure}\hfill%
\begin{subfigure}{0.5\columnwidth} 
\includegraphics[width=\columnwidth]{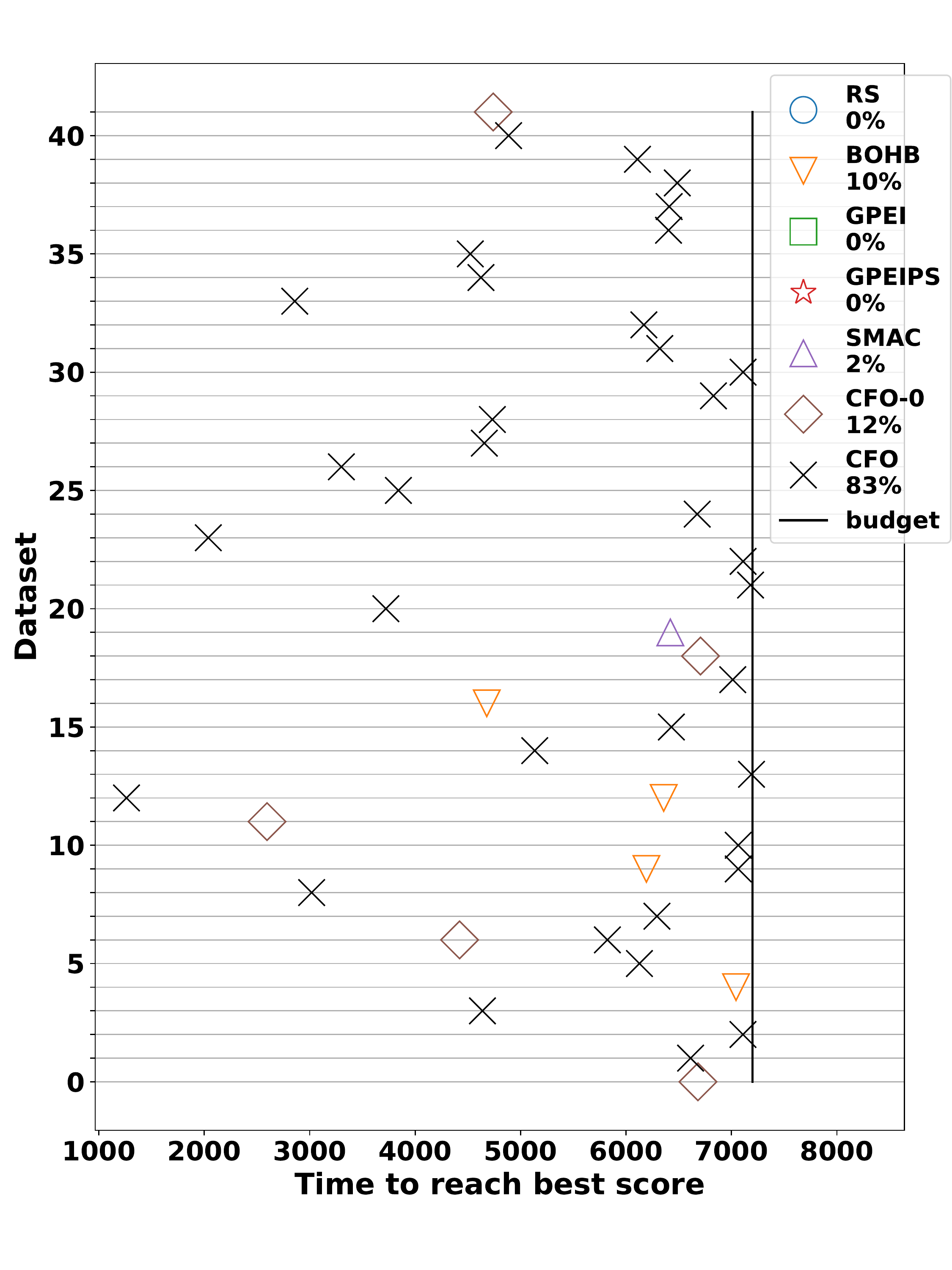}%
\caption{Time used in tuning DNN, the lower the better} \label{fig:dnn_time}
\end{subfigure}\hfill
\caption{Y-axis: indexes of the evaluated datasets. In (a), the legends display the fraction of datasets on which the scores are over 90. In (b), the legends display the fraction of datasets on which the best loss can be reached within the 2h time budget}
\label{fig:dnn_score_time}
\end{figure}

\begin{table*} 
\caption{Optimality of score and cost for DNN tuning. The meaning of each cell in this table is the same as that in Table \ref{tab:optimality_xgboost}} \label{tab:optimality_DNN}
\centering
\begin{tabular}{r | l l l  l l l  l  l }
\hline
Dataset & RS & BOHB & GPEI & GPEIPS & SMAC & \HPOnameO & \HPOname \\ 
\hline
adult & -20.8\% & -0.2\% & -32.0\% & -47.6\% & -2.2\% & \textbf{0.7072 (6670s)} & -0.4\% \\
Airlines & -21.2\% & -1.9\% & -42.0\% & -40.6\% & -11.3\% & -10.5\% & \textbf{0.5663 (6613s)} \\
Amazon & -31.8\% & -7.6\% & -67.2\% & -40.0\% & -7.5\% & -14.5\% & \textbf{0.2247 (7110s)} \\
APSFailure & -0.8\% & -0.9\% & -0.4\% & -1.2\% & -0.8\% & -0.3\% & \textbf{0.9963 (4639s)} \\
Australian & -2.6\% & \textbf{0.8920 (7043s)} & -3.5\% & -3.3\% & -2.2\% & -7.7\% & -0.5\% \\
bank & -63.1\% & -5.5\% & -9.5\% & -7.2\% & -5.3\% & -2.8\% & \textbf{0.8085 (6128s)} \\
blood & -0.9\% & -2.0\% & -3.1\% & -2.7\% & -2.2\% & \textbf{1.4288 (4421s)} & \textit{$\times$1.3} \\
car & -13.4\% & -6.1\% & -7.3\% & -8.7\% & -4.7\% & -16.7\% & \textbf{0.6416 (6294s)} \\
christine & -21.2\% & -0.6\% & -3.6\% & -2.0\% & -1.2\% & -0.8\% & \textbf{0.9739 (3019s)} \\
cnae-9 & -0.4\% & \textbf{1.1148 (6192s)} & -0.2\% & -0.2\% & -0.5\% & -0.1\% & \textit{$\times$1.1} \\
connect-4 & -37.5\% & -21.1\% & -77.0\% & -78.2\% & -14.1\% & -35.5\% & \textbf{0.5843 (7065s)} \\
credit-g & -1.4\% & -3.5\% & -10.5\% & -11.2\% & -4.7\% & \textbf{0.7247 (2595s)} & -2.5\% \\
fabert & -1.1\% & \textit{$\times$5.0} & -2.7\% & -3.1\% & -1.2\% & -0.9\% & \textbf{1.0070 (1263s)} \\
Helena & -58.0\% & -30.7\% & -42.0\% & -56.1\% & -27.1\% & -26.7\% & \textbf{1.9203 (7191s)} \\
higgs & -3.8\% & -2.5\% & -39.5\% & -66.8\% & -9.1\% & -8.3\% & \textbf{0.6615 (5135s)} \\
Jannis & -11.9\% & -9.2\% & -24.2\% & -23.4\% & -10.2\% & -9.5\% & \textbf{0.7691 (6431s)} \\
jasmine & -1.2\% & \textbf{0.8978 (4678s)} & -2.1\% & -2.0\% & -1.8\% & -1.3\% & -0.4\% \\
jungle & -14.8\% & -32.9\% & -23.9\% & -37.4\% & -14.0\% & -35.7\% & \textbf{0.4403 (7013s)} \\
kc1 & -1.2\% & -20.7\% & -1.5\% & -1.4\% & -0.8\% & \textbf{0.8618 (6705s)} & -0.4\% \\
KDDCup09 & -14.4\% & -1.7\% & -17.4\% & -7.7\% & \textbf{0.3759 (6421s)} & -3.9\% & -0.5\% \\
kr-vs-kp & -0.5\% & -0.1\% & -0.4\% & -0.4\% & -0.2\% & -0.5\% & \textbf{0.9917 (3723s)} \\
mfeat & -5.6\% & -1.7\% & -2.8\% & -3.2\% & -2.2\% & -3.9\% & \textbf{0.9739 (7182s)} \\
MiniBooNE & -2.1\% & -1.8\% & -6.9\% & -14.7\% & -1.4\% & -2.3\% & \textbf{0.9716 (7111s)} \\
nomao & -0.8\% & -0.2\% & -1.3\% & -2.1\% & -0.6\% & -0.5\% & \textbf{0.9839 (2039s)} \\
numerai28. & -1.2\% & -0.1\% & -25.7\% & -29.0\% & -5.4\% & -4.3\% & \textbf{1.3909 (6676s)} \\
phoneme & -1.3\% & -0.7\% & -0.1\% & -0.1\% & -0.6\% & -1.1\% & \textbf{0.9016 (3842s)} \\
segment & -3.9\% & -0.1\% & -3.6\% & -1.9\% & -2.2\% & -3.7\% & \textbf{0.9088 (3301s)} \\
shuttle & -27.9\% & -16.1\% & -23.4\% & -52.1\% & -26.8\% & -28.1\% & \textbf{0.8719 (4657s)} \\
sylvine & -7.0\% & -6.4\% & -8.3\% & -6.0\% & -1.8\% & -11.1\% & \textbf{0.7935 (4733s)} \\
vehicle & -8.0\% & -24.8\% & -8.9\% & -7.9\% & -2.1\% & -18.4\% & \textbf{0.3195 (6830s)} \\
volkert & -16.3\% & -11.2\% & -27.1\% & -21.9\% & -7.1\% & -14.4\% & \textbf{0.8716 (7111s)} \\
\hline
2dplanes* & -21.4\% & -1.5\% & -1.7\% & -1.2\% & -1.3\% & -2.6\% & \textbf{0.9357 (6321s)} \\
breast* & -5.3\% & -2.7\% & -11.2\% & -6.7\% & -0.6\% & -13.0\% & \textbf{0.0739 (6170s)} \\
echomonth* & -1.6\% & -0.4\% & -0.4\% & -1.0\% & -1.0\% & -1.2\% & \textbf{0.4252 (2859s)} \\
lowbwt* & -3.9\% & -0.5\% & -1.7\% & -4.4\% & -2.8\% & -11.9\% & \textbf{0.5631 (4625s)} \\
pwLinear* & -3.0\% & -20.9\% & -3.8\% & -11.9\% & -2.1\% & -4.0\% & \textbf{0.6005 (4524s)} \\
fried* & -24.8\% & -8.1\% & -21.1\% & -19.8\% & -15.0\% & -25.2\% & \textbf{0.6383 (6404s)} \\
house\_16H* & -6.0\% & -2.6\% & -4.4\% & -4.6\% & -3.8\% & -16.2\% & \textbf{0.1678 (6410s)} \\
house\_8L* & -3.8\% & -1.3\% & -8.4\% & -0.2\% & -1.5\% & -15.5\% & \textbf{0.1424 (6487s)} \\
houses* & -24.8\% & -16.6\% & -27.8\% & -1.1\% & -15.5\% & -45.9\% & \textbf{0.2146 (6109s)} \\
mv* & -11.4\% & -8.9\% & -22.8\% & -11.0\% & -9.3\% & -7.3\% & \textbf{0.8856 (4887s)} \\
pol* & -14.8\% & -2.8\% & -4.1\% & -12.6\% & -11.3\% & \textbf{0.6860 (4736s)} & -0.7\% \\
\hline
\multicolumn{8}{l}{* for regression datasets} 
\end{tabular}
\end{table*}

\renewcommand*{\thesubfigure}{\arabic{subfigure}}
\begin{figure*}[h]
  \centering
  \begin{subfigure}{0.32\columnwidth}
\includegraphics[width=\columnwidth]{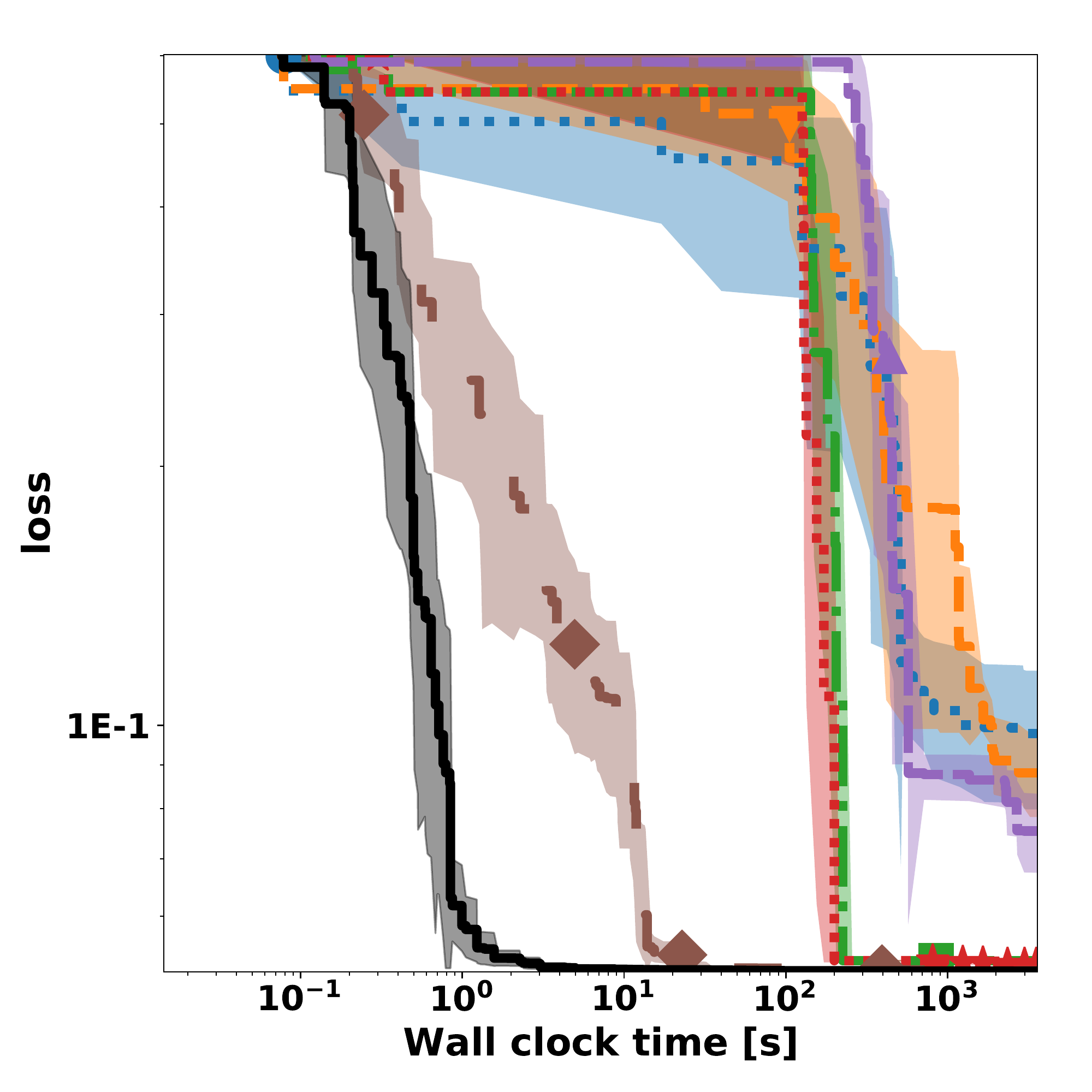}%
\caption{2dplanes}%
\end{subfigure}\hfill%
\begin{subfigure}{0.32\columnwidth}
\includegraphics[width=\columnwidth]{figures/xgboost_adult.pdf}%
\caption{adult}%
\end{subfigure}\hfill%
\begin{subfigure}{0.32\columnwidth}
\includegraphics[width=\columnwidth]{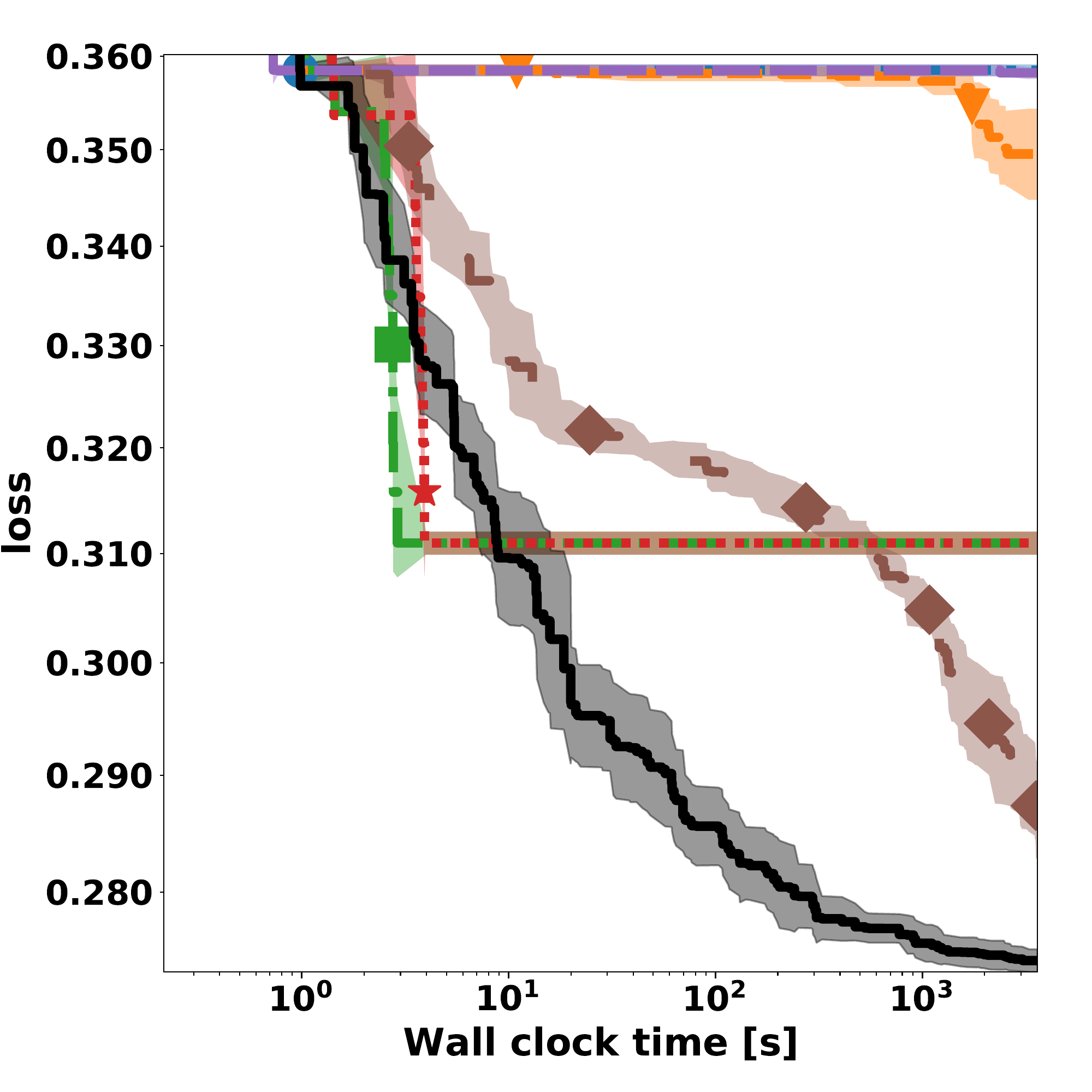}%
\caption{Airlines}%
\end{subfigure}\hfill%
  \begin{subfigure}{0.32\columnwidth}
\includegraphics[width=\columnwidth]{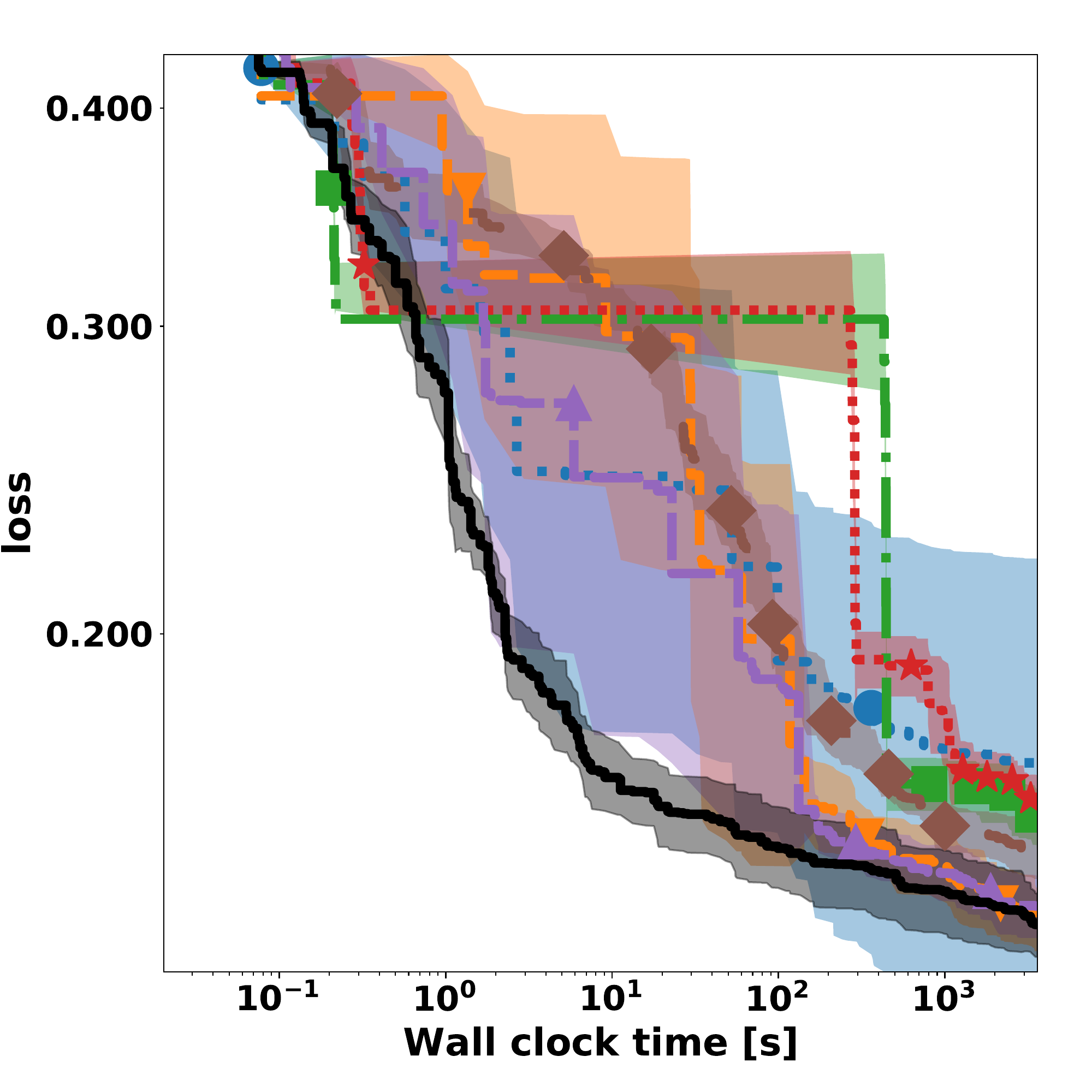}%
\caption{Amazon\_employee\_access}%
\end{subfigure}\hfill%
\begin{subfigure}{0.32\columnwidth}
\includegraphics[width=\columnwidth]{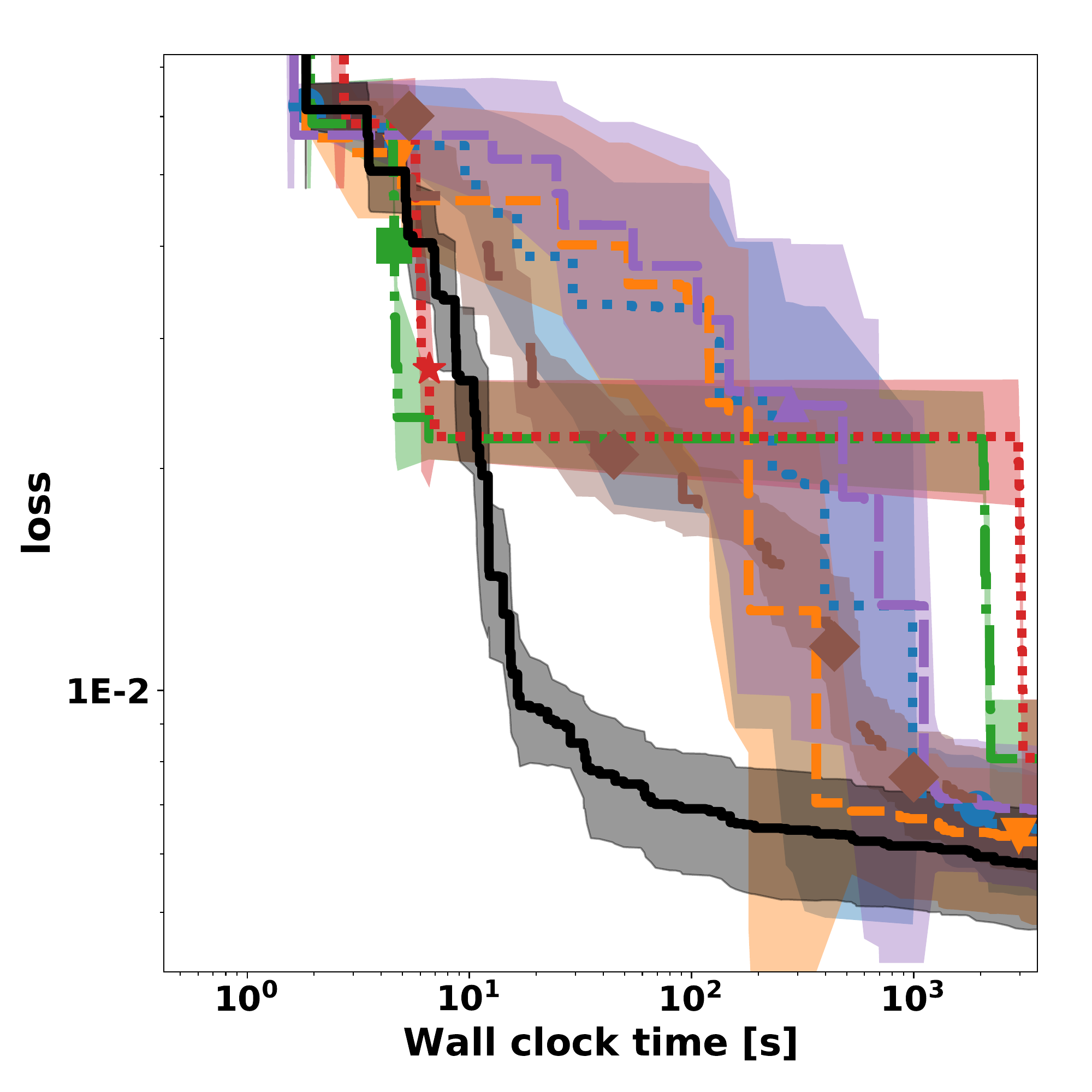}%
\caption{APSFailure}%
\end{subfigure}\hfill%
\begin{subfigure}{0.32\columnwidth}
\includegraphics[width=\columnwidth]{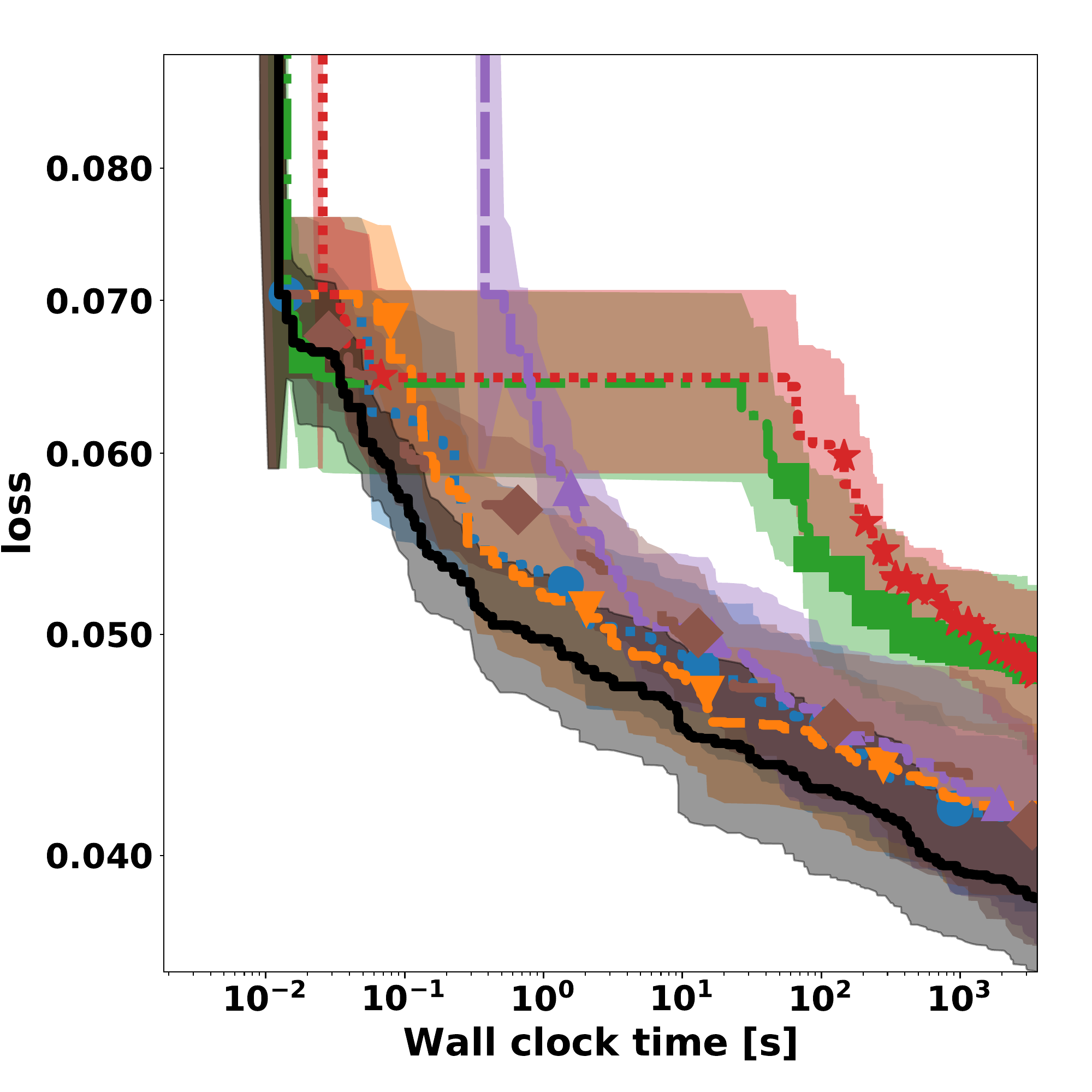}%
\caption{Australian}%
\end{subfigure}\hfill%
\begin{subfigure}{0.32\columnwidth}
\includegraphics[width=\columnwidth]{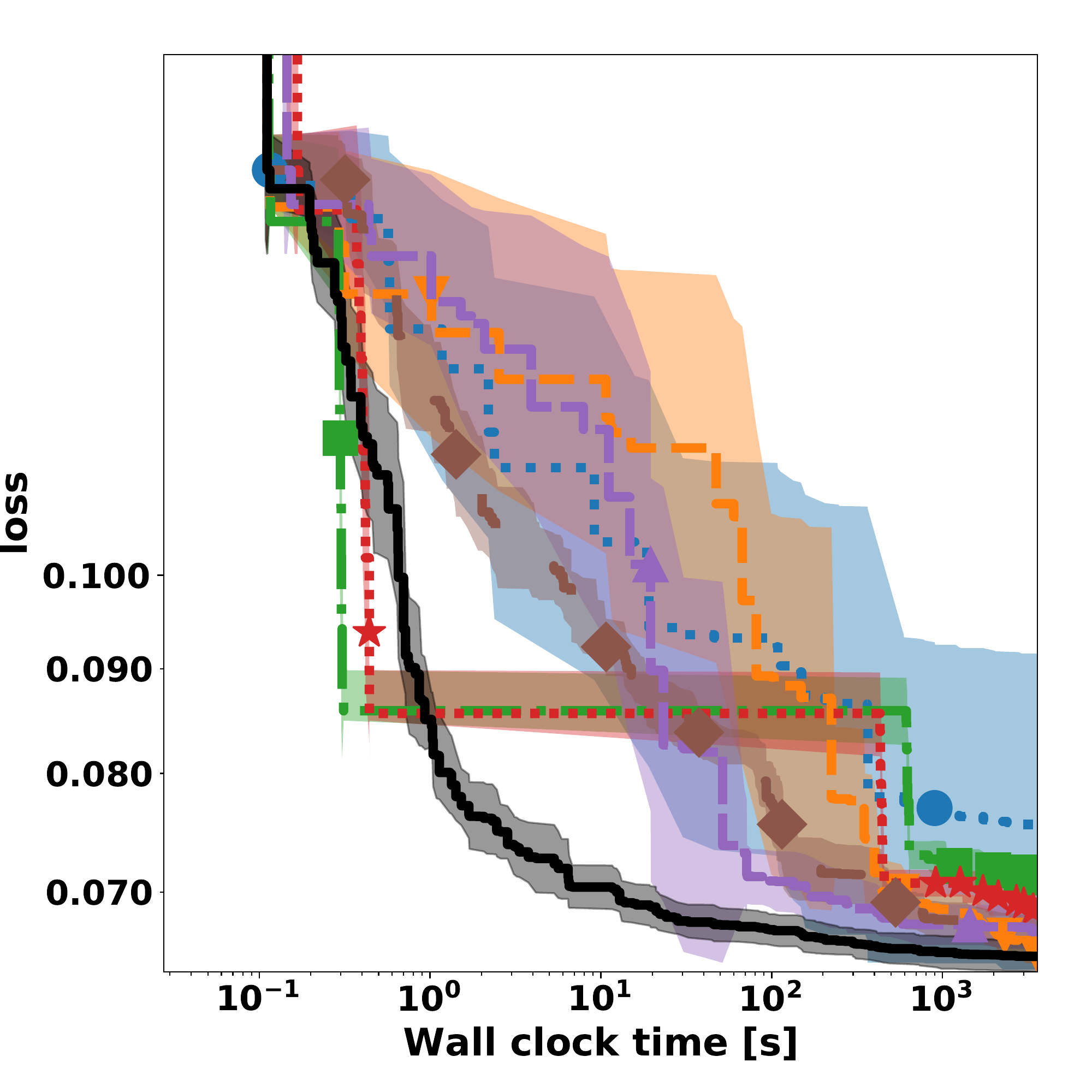}%
\caption{bank\_marketing}%
\end{subfigure}\hfill%
  \begin{subfigure}{0.32\columnwidth}
\includegraphics[width=\columnwidth]{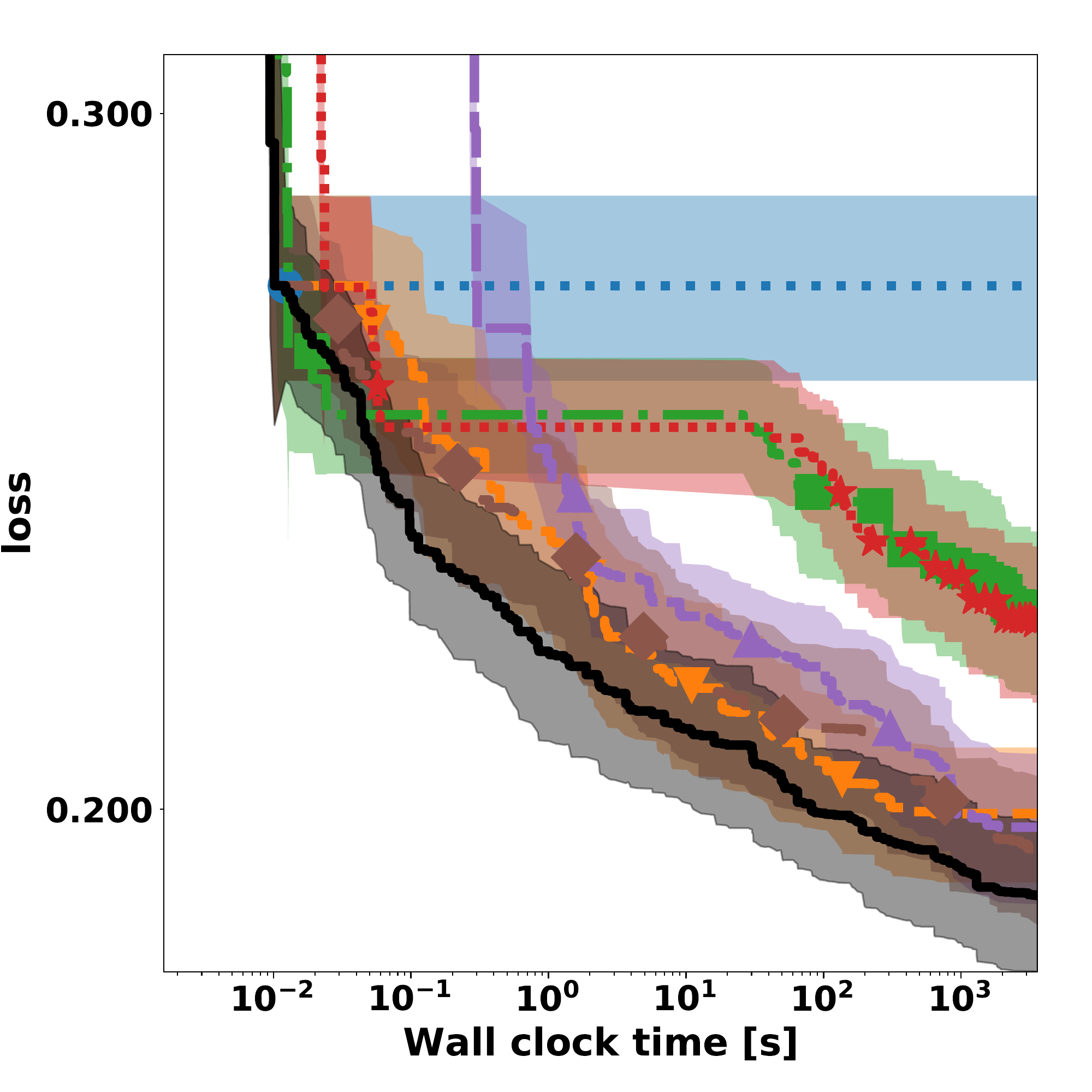}%
\caption{blood-transfusion}%
\end{subfigure}\hfill%
\begin{subfigure}{0.32\columnwidth}
\includegraphics[width=\columnwidth]{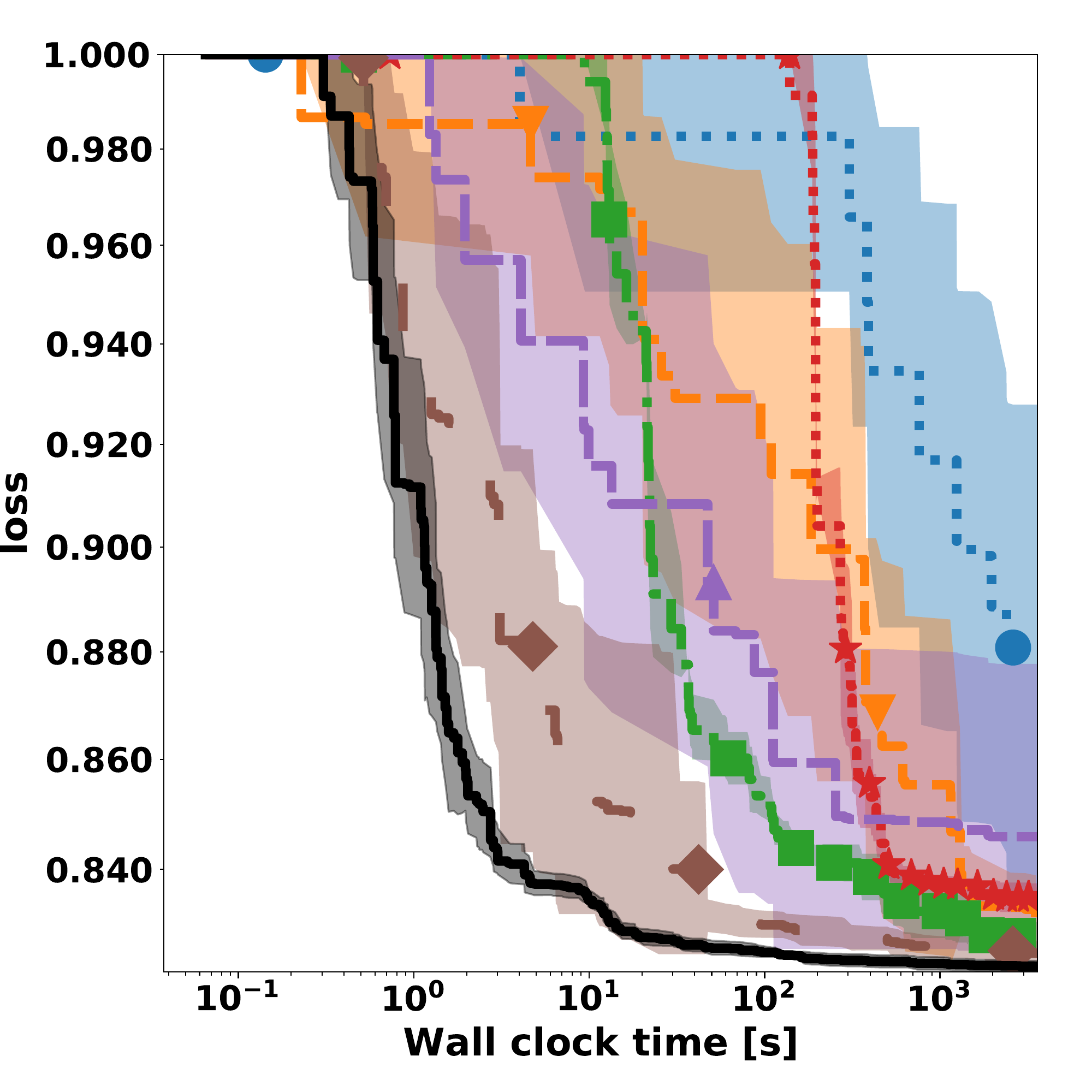}%
\caption{bng\_breastTumor}%
\end{subfigure}\hfill%
\begin{subfigure}{0.32\columnwidth}
\includegraphics[width=\columnwidth]{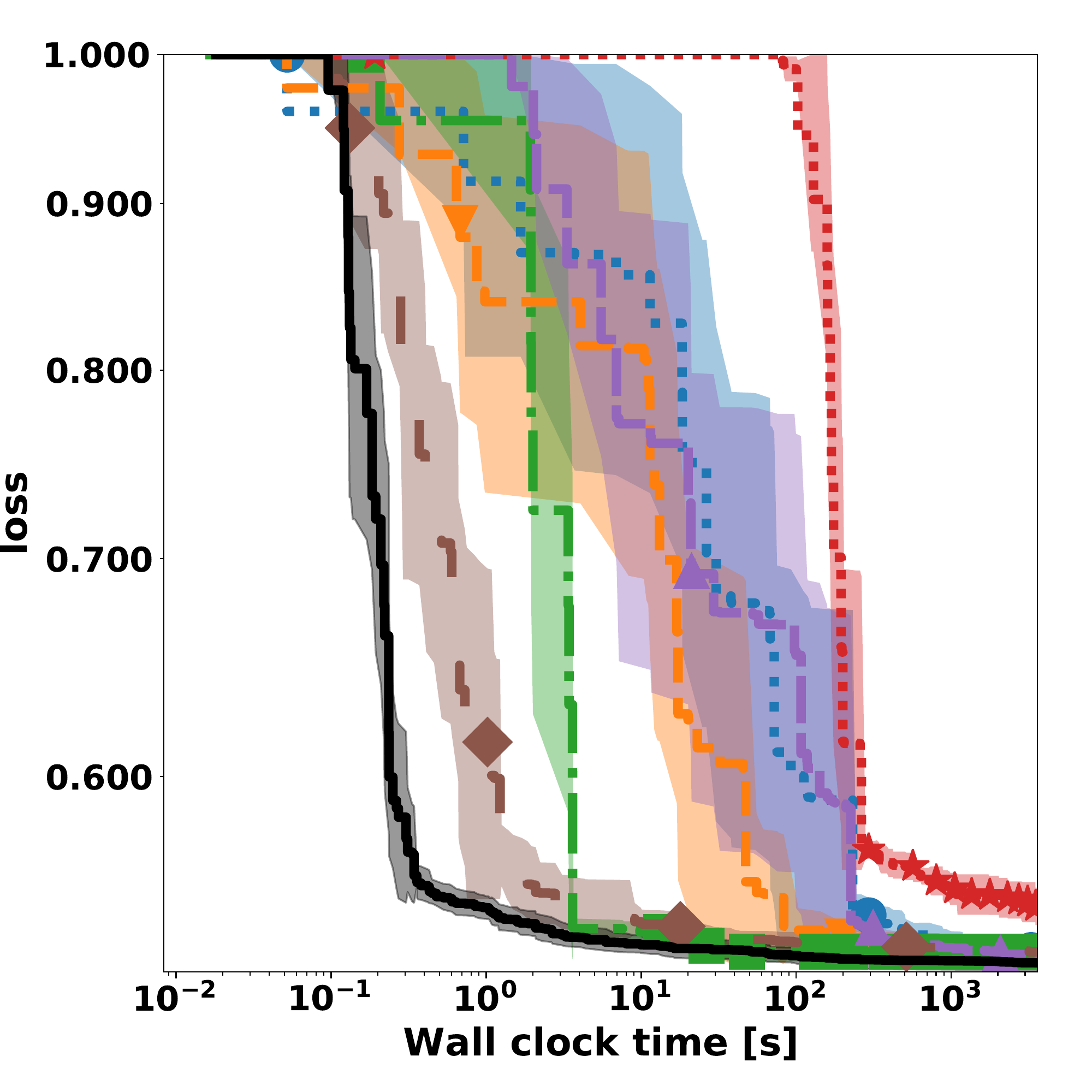}%
\caption{bng\_echomonths}%
\end{subfigure}\hfill%
  \begin{subfigure}{0.32\columnwidth}
\includegraphics[width=\columnwidth]{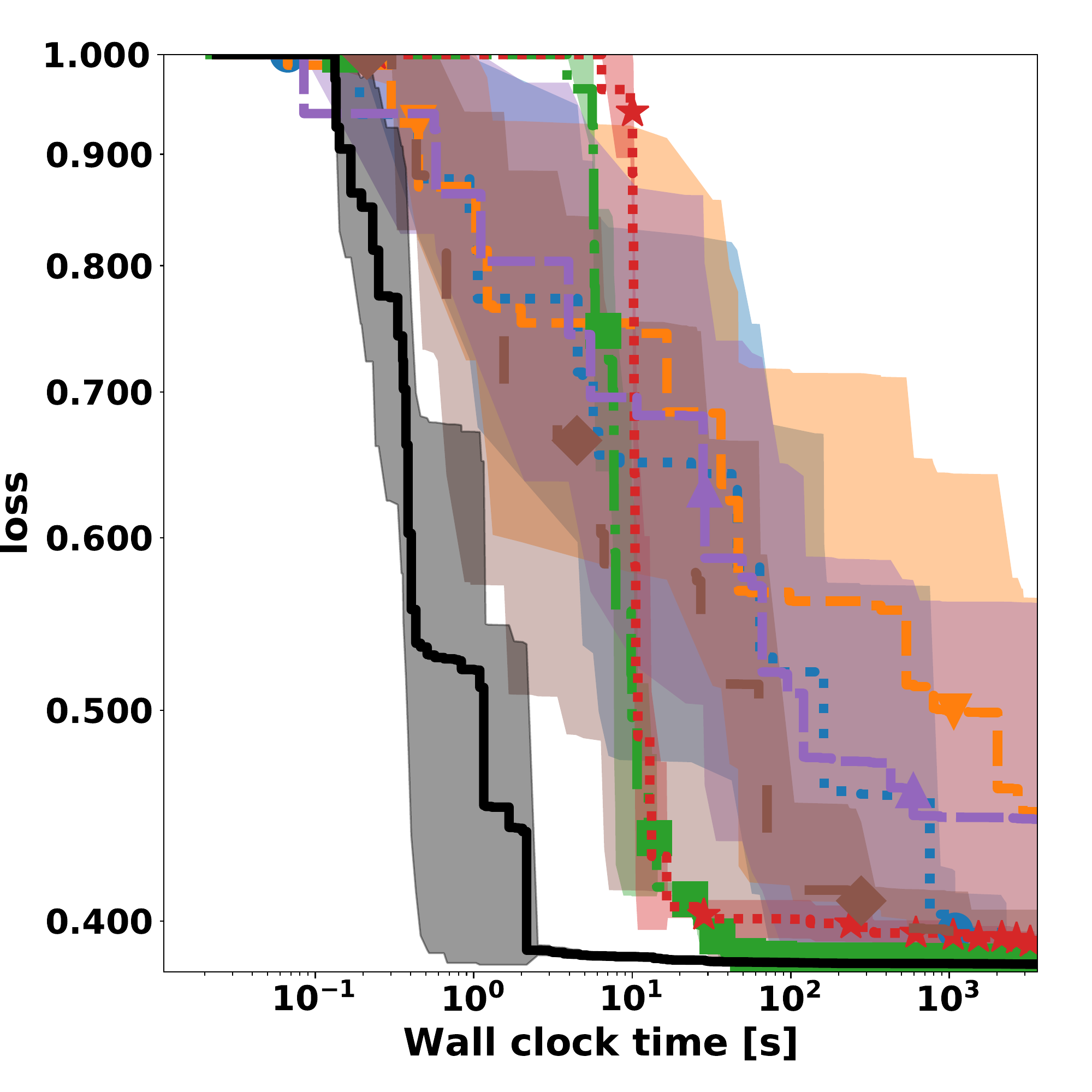}%
\caption{bng\_lowbwt}%
\end{subfigure}\hfill%
\begin{subfigure}{0.32\columnwidth}
\includegraphics[width=\columnwidth]{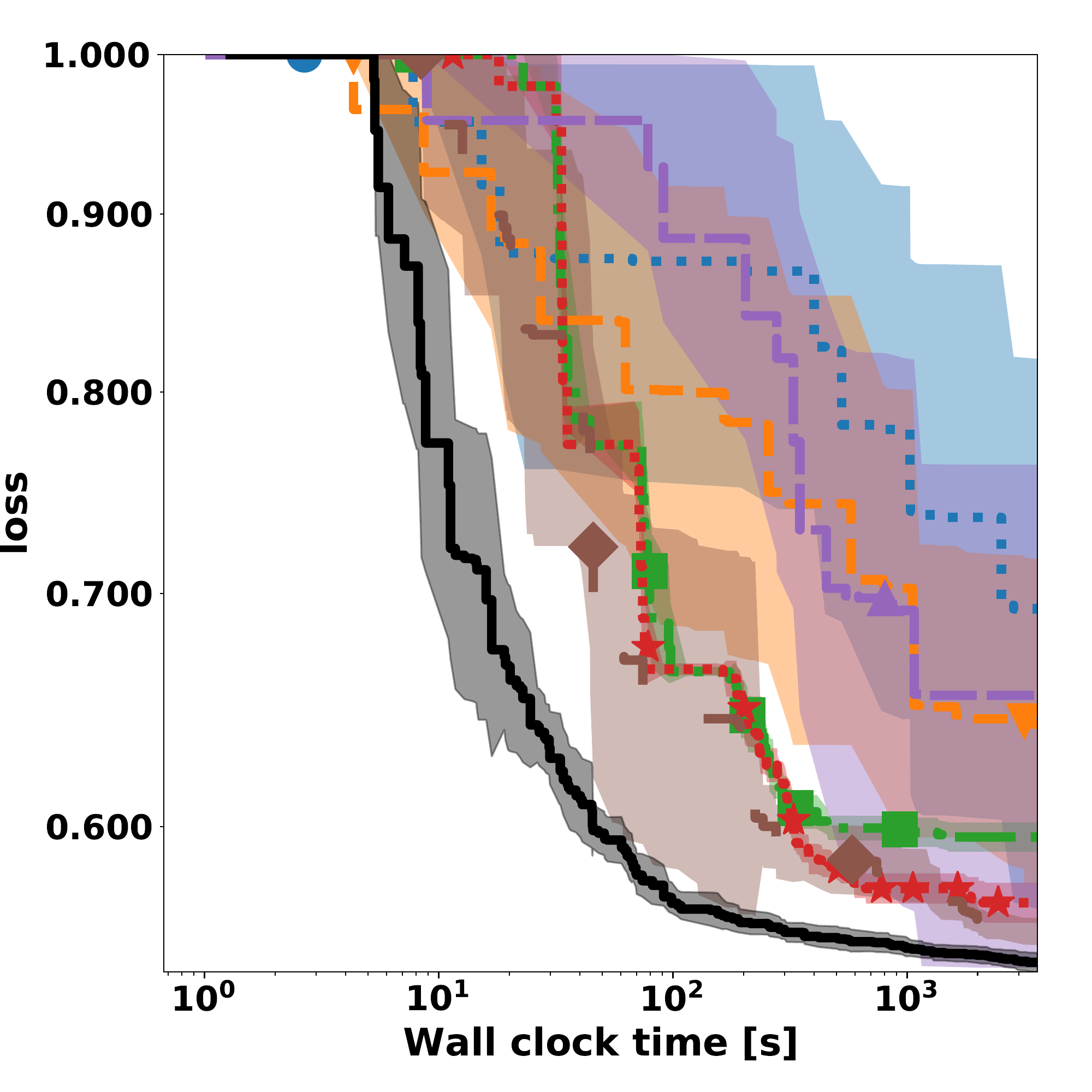}%
\caption{bng\_pbc}%
\end{subfigure}\hfill%
\caption{Optimization performance curve for XGBoost (pt 1/4)}
\end{figure*}
\begin{figure*}[h]
\ContinuedFloat
\begin{subfigure}{0.32\columnwidth}
\includegraphics[width=\columnwidth]{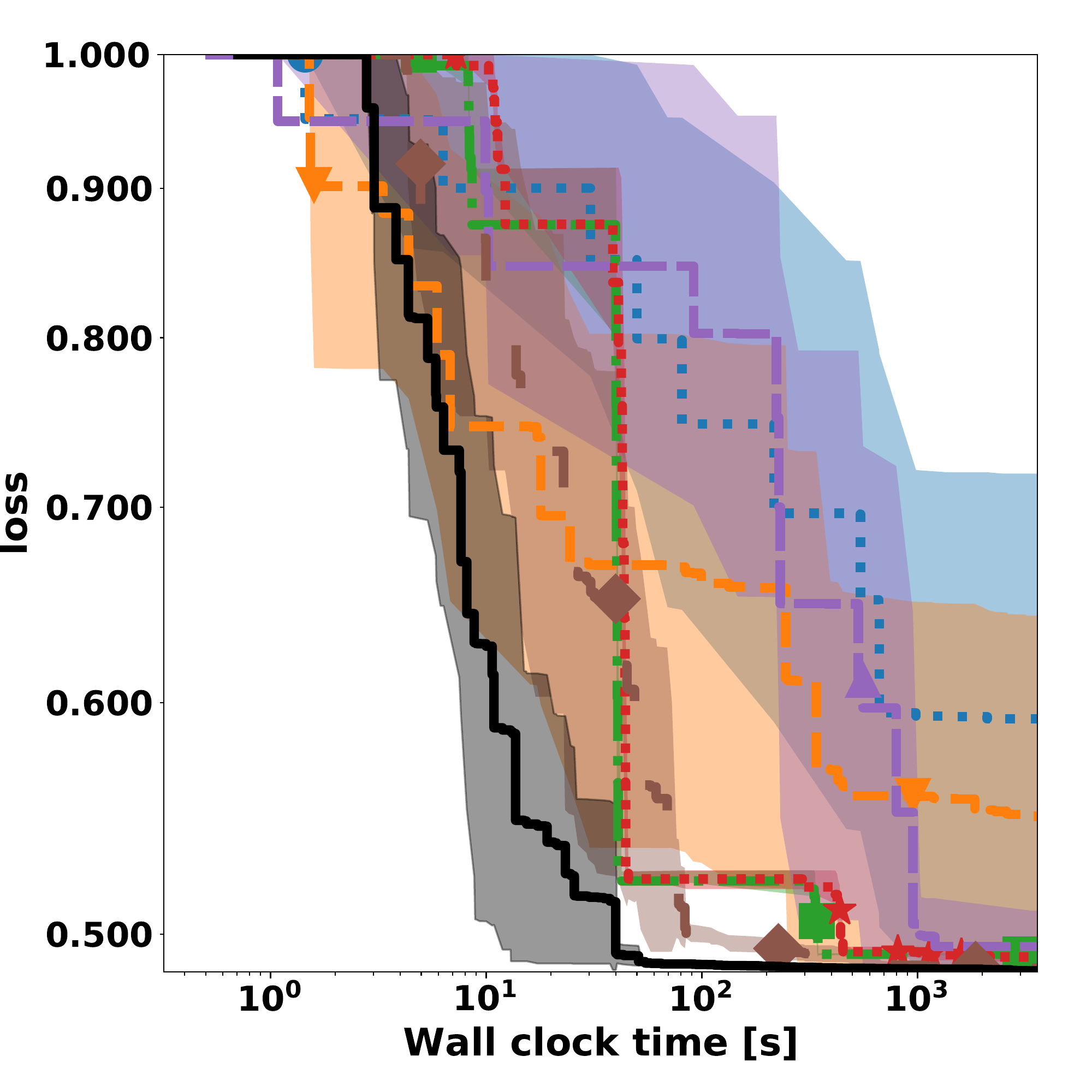}%
\caption{bng\_pharynx}%
\end{subfigure}\hfill%
  \begin{subfigure}{0.32\columnwidth}
\includegraphics[width=\columnwidth]{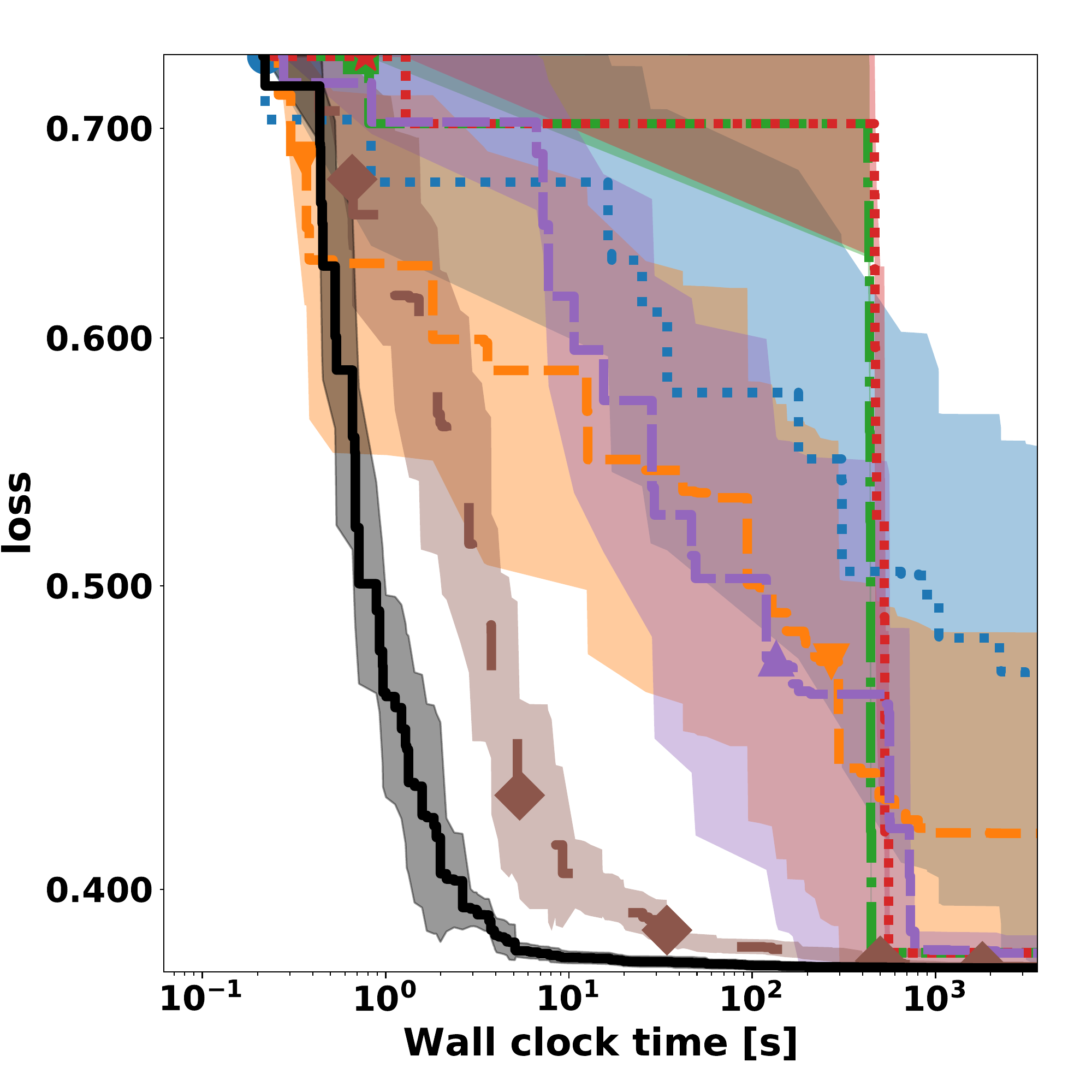}%
\caption{bng\_pwLinear}%
\end{subfigure}\hfill%
\begin{subfigure}{0.32\columnwidth}
\includegraphics[width=\columnwidth]{figures/xgboost_car.pdf}%
\caption{car}%
\end{subfigure}\hfill%
\begin{subfigure}{0.32\columnwidth}
\includegraphics[width=\columnwidth]{figures/xgboost_christine.pdf}%
\caption{christine}%
\end{subfigure}\hfill
  \begin{subfigure}{0.32\columnwidth}
\includegraphics[width=\columnwidth]{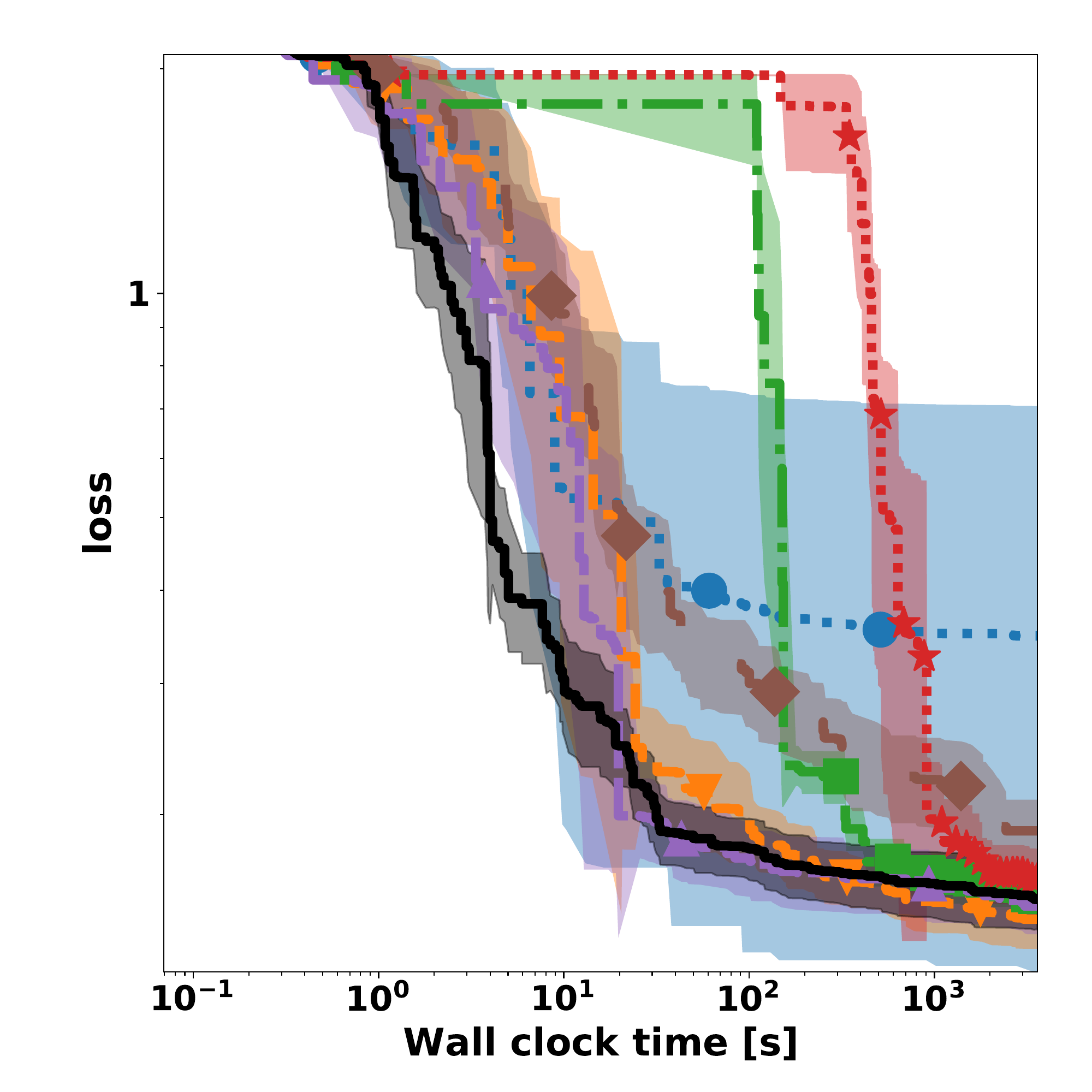}%
\caption{cnae-9}%
\end{subfigure}\hfill%
\begin{subfigure}{0.32\columnwidth}
\includegraphics[width=\columnwidth]{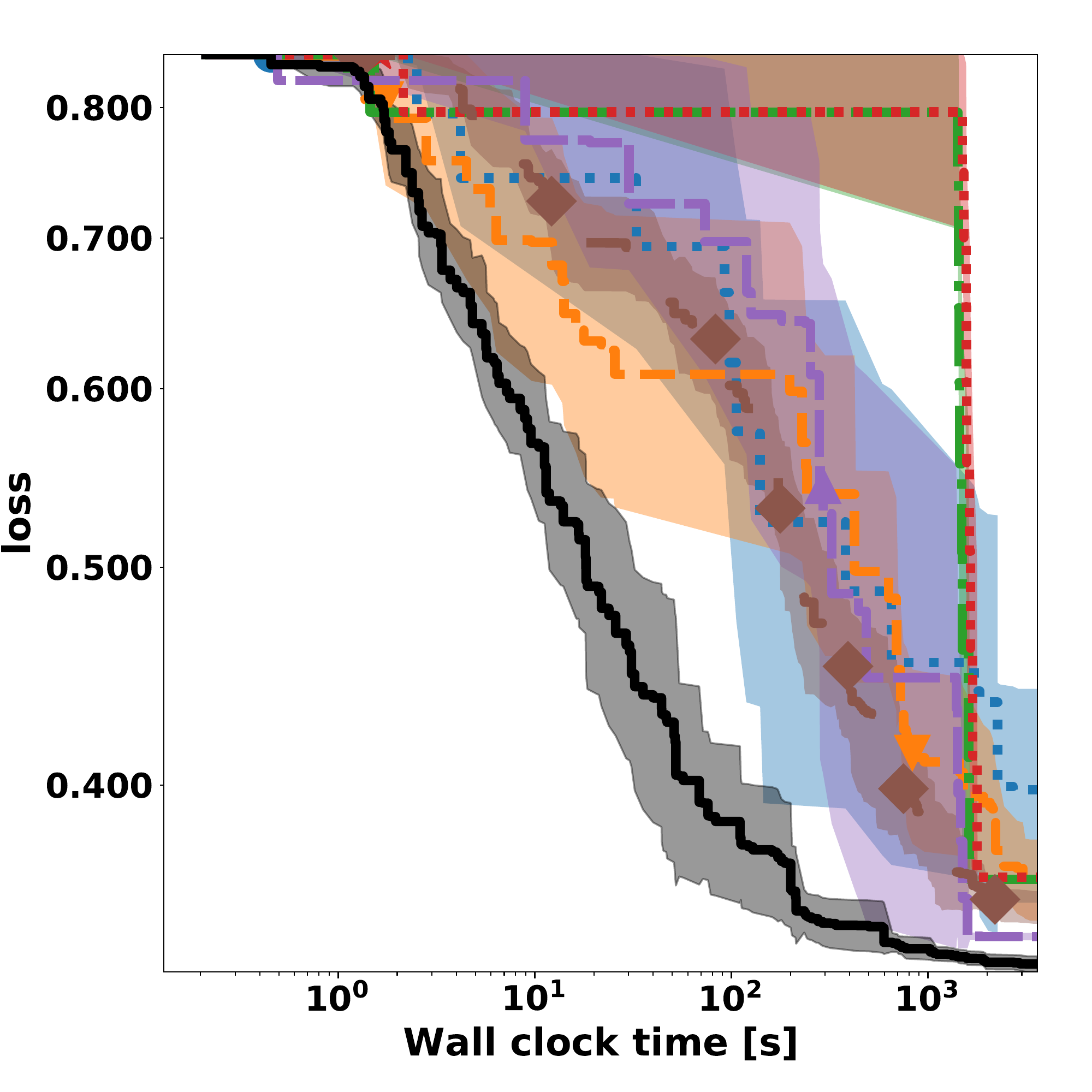}%
\caption{connect-4}%
\end{subfigure}\hfill%
\begin{subfigure}{0.32\columnwidth}
\includegraphics[width=\columnwidth]{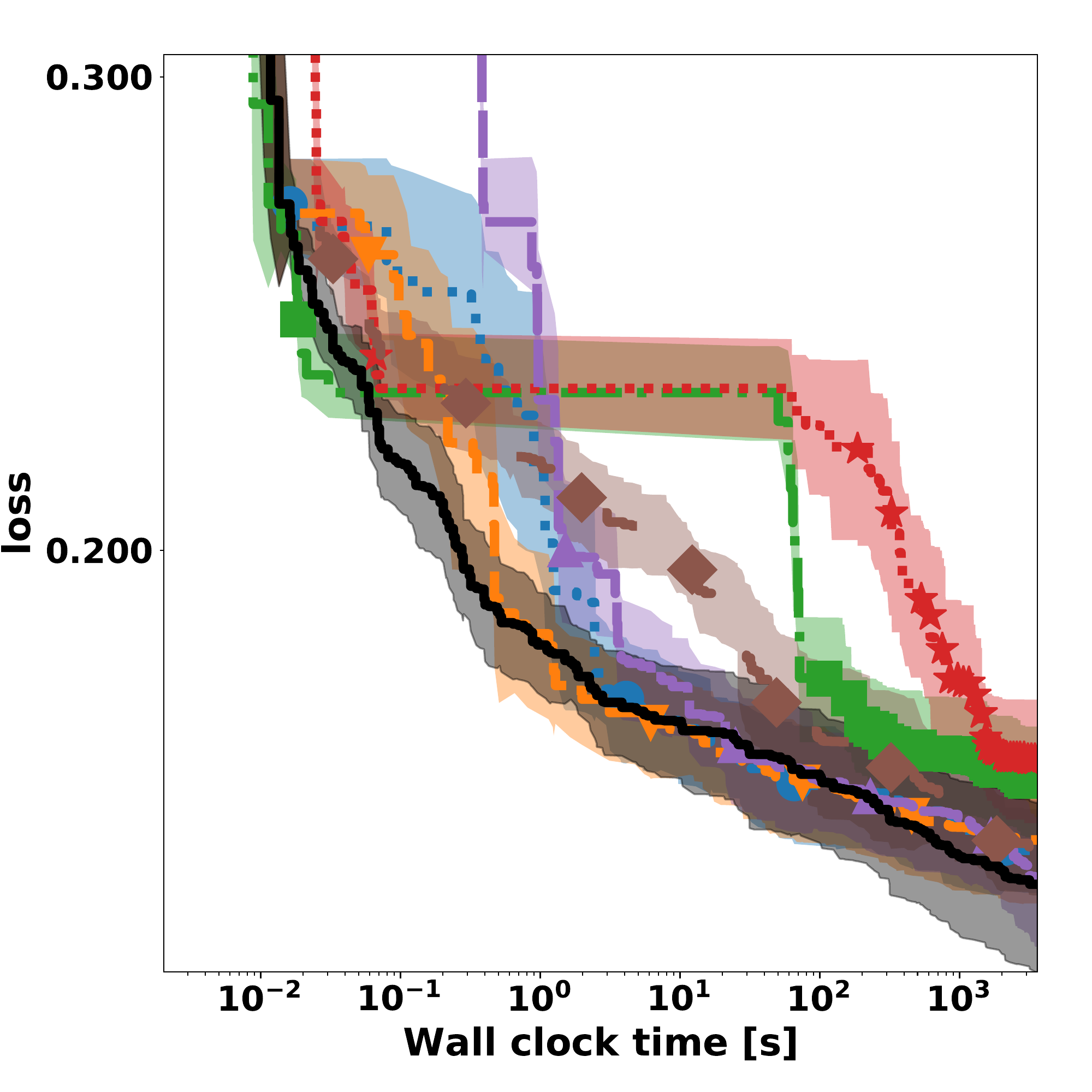}%
\caption{credit-g}%
\end{subfigure}\hfill%
  \begin{subfigure}{0.32\columnwidth}
\includegraphics[width=\columnwidth]{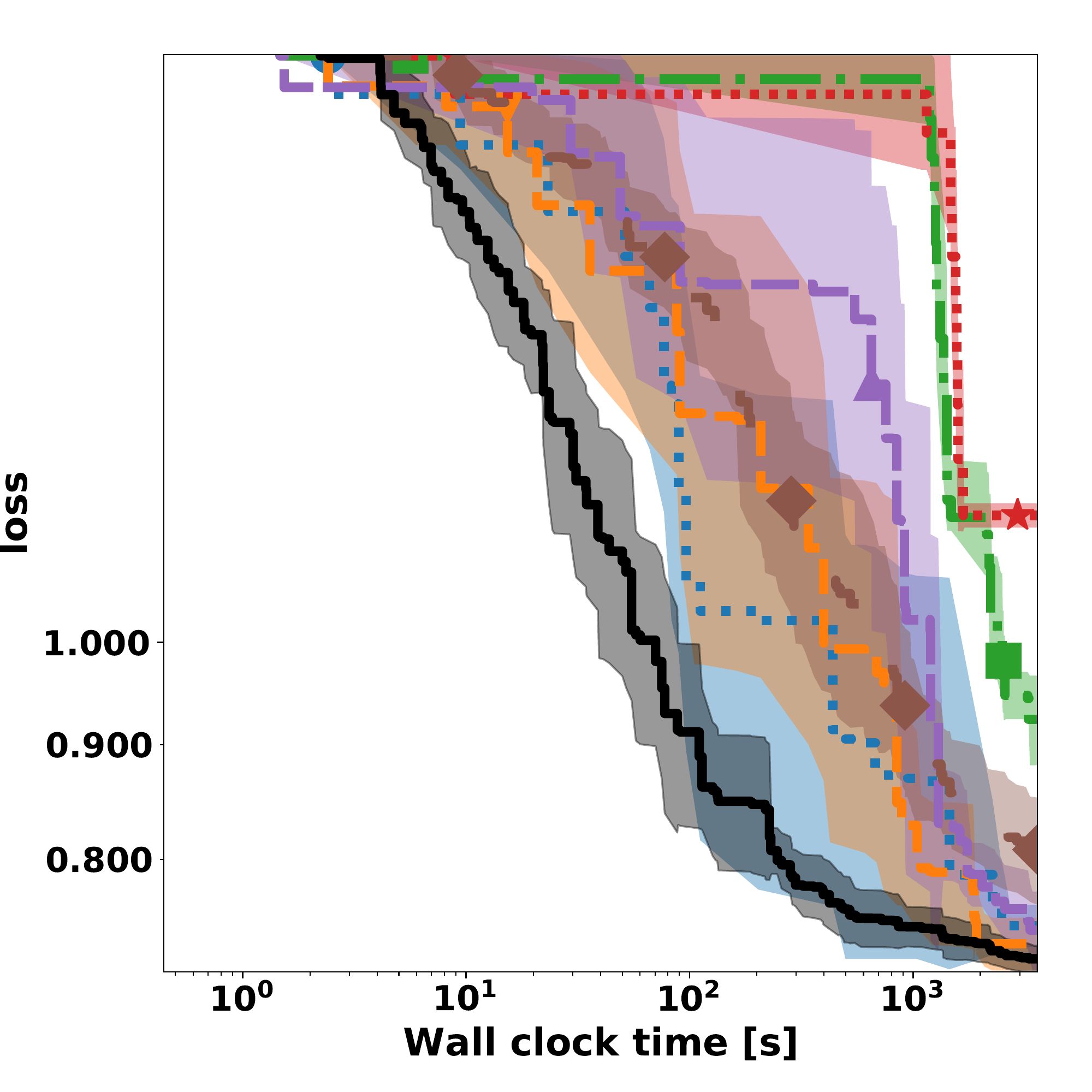}%
\caption{fabert}%
\end{subfigure}\hfill%
\begin{subfigure}{0.32\columnwidth}
\includegraphics[width=\columnwidth]{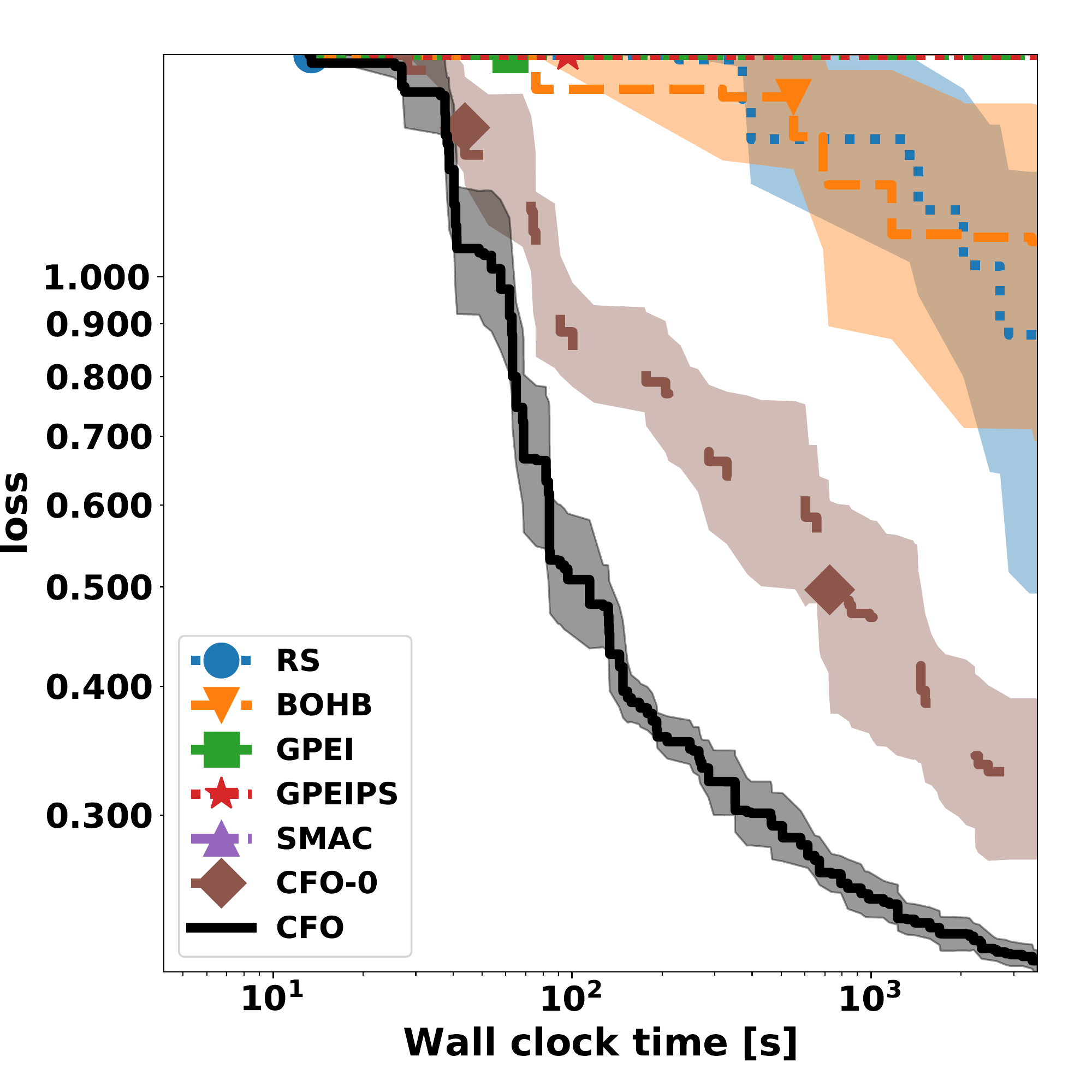}%
\caption{Fashion-MNIST}%
\end{subfigure}\hfill%
\begin{subfigure}{0.32\columnwidth}
\includegraphics[width=\columnwidth]{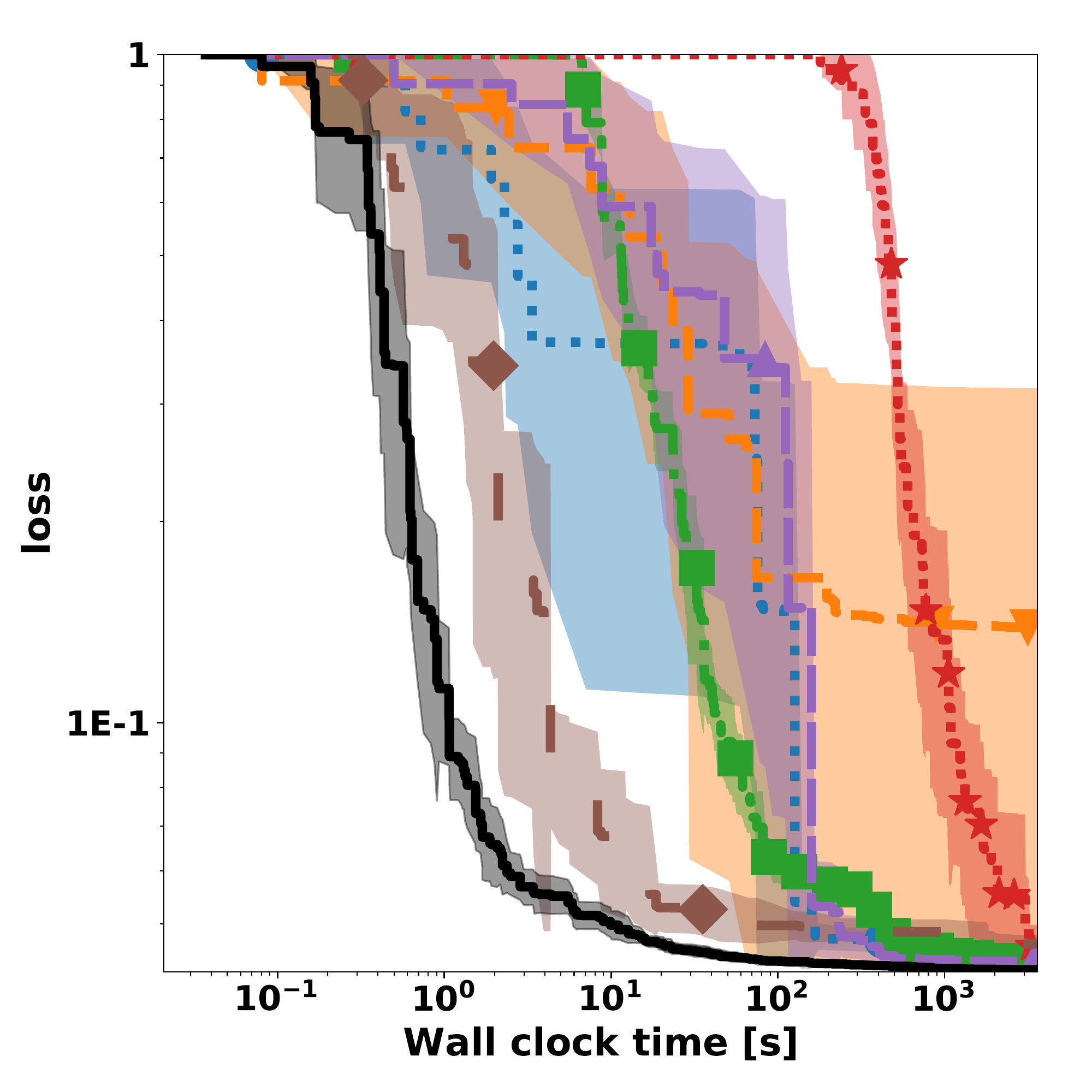}%
\caption{fried}%
\end{subfigure}\hfill%
\begin{subfigure}{0.32\columnwidth}
\includegraphics[width=\columnwidth]{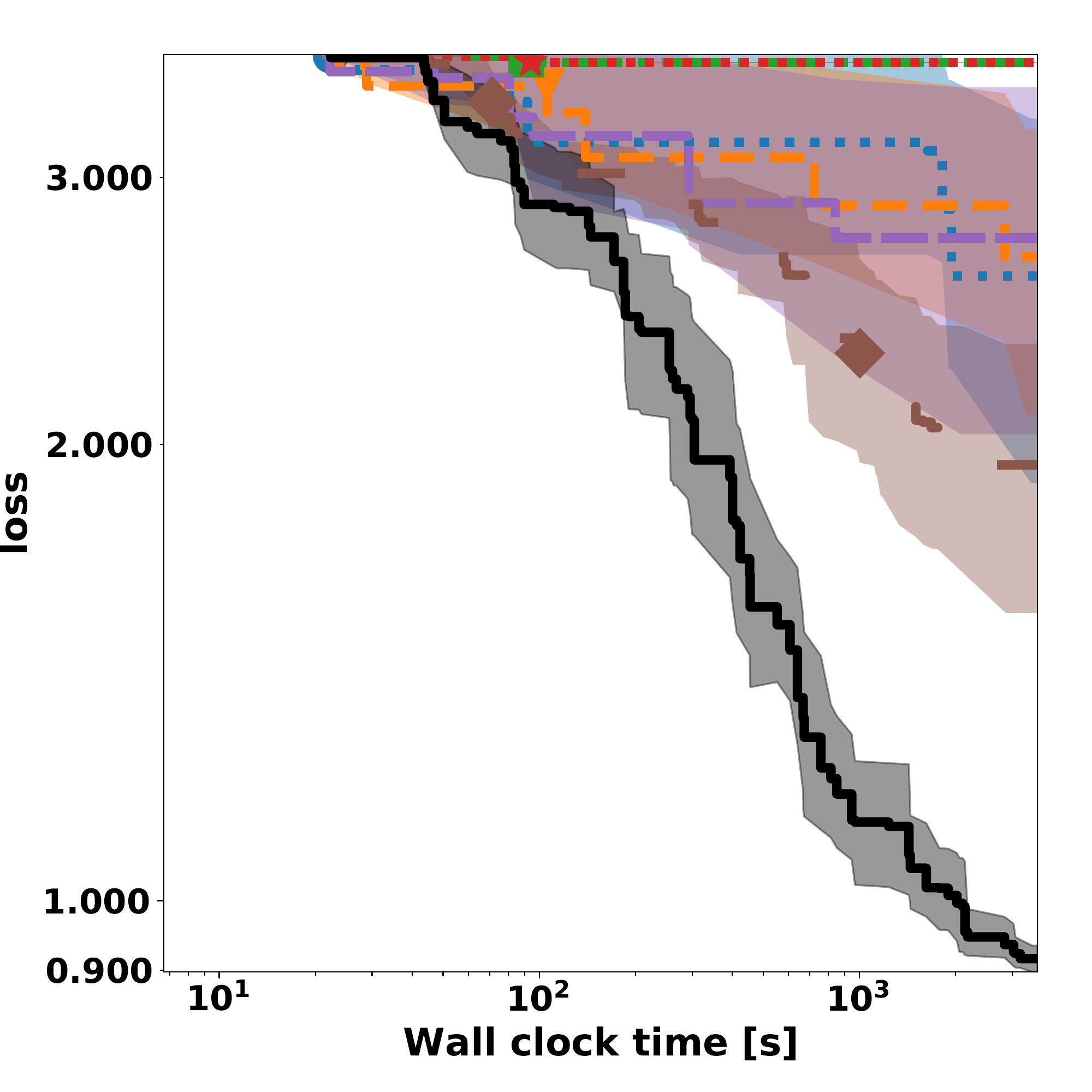}%
\caption{Helena}%
\end{subfigure}\hfill%
  \begin{subfigure}{0.32\columnwidth}
\includegraphics[width=\columnwidth]{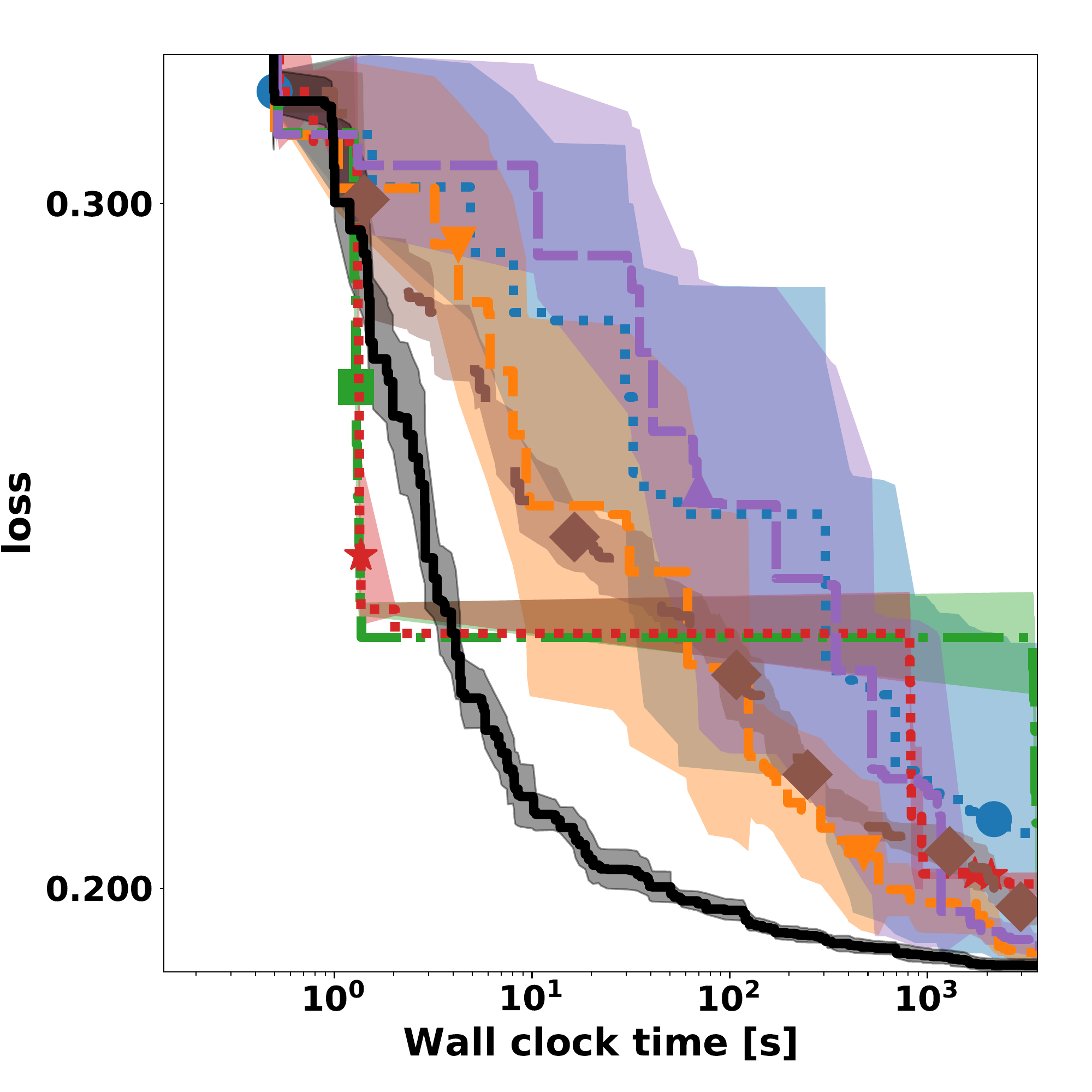}%
\caption{higgs}%
\end{subfigure}\hfill%
\caption{Optimization performance curve for XGBoost (pt 2/4)}
\end{figure*}
\begin{figure*}[h]
\ContinuedFloat
\begin{subfigure}{0.32\columnwidth}
\includegraphics[width=\columnwidth]{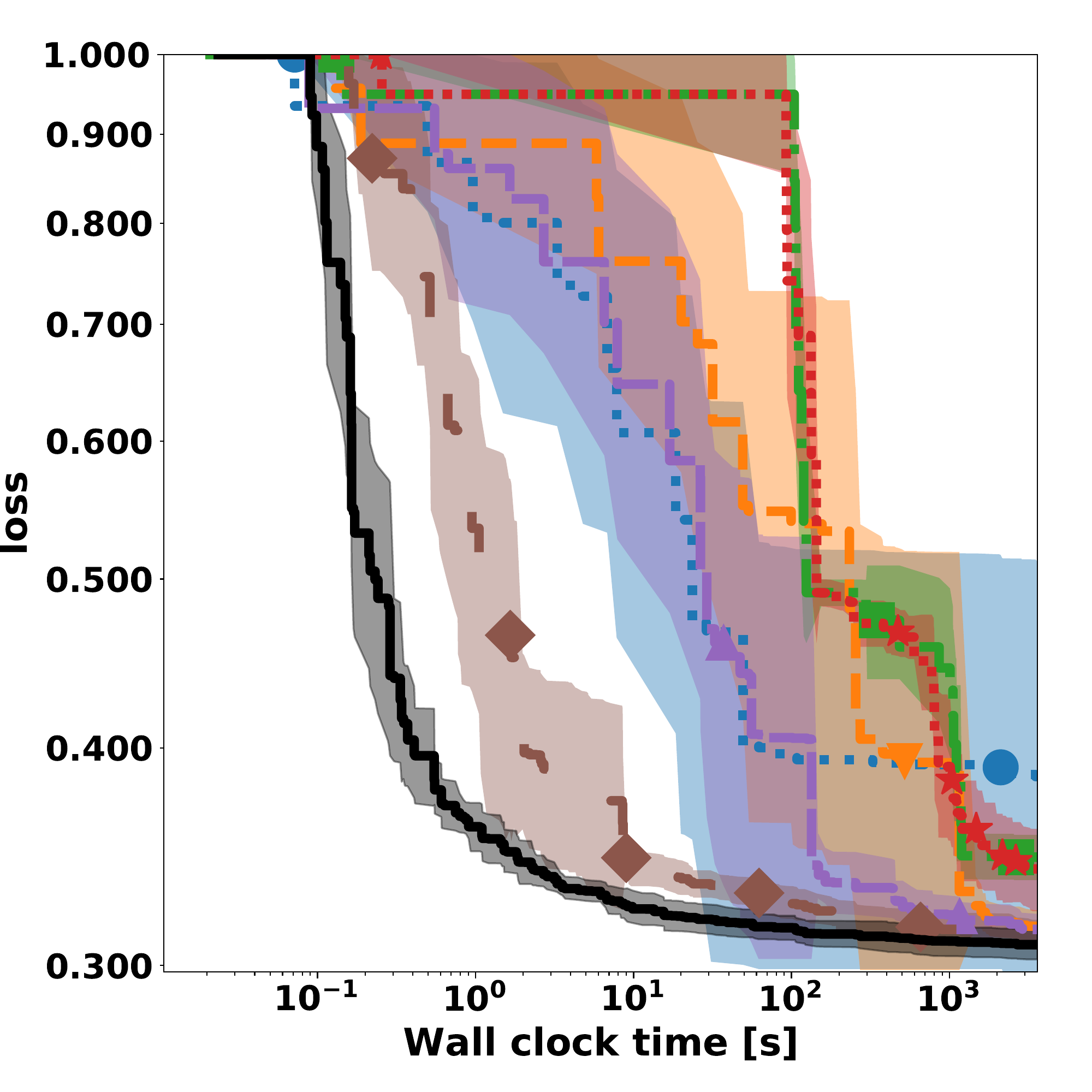}%
\caption{house\_8L}%
\end{subfigure}\hfill
\begin{subfigure}{0.32\columnwidth}
\includegraphics[width=\columnwidth]{figures/xgboost_house_16H.pdf}%
\caption{house\_16H}%
\end{subfigure}\hfill%
  \begin{subfigure}{0.32\columnwidth}
\includegraphics[width=\columnwidth]{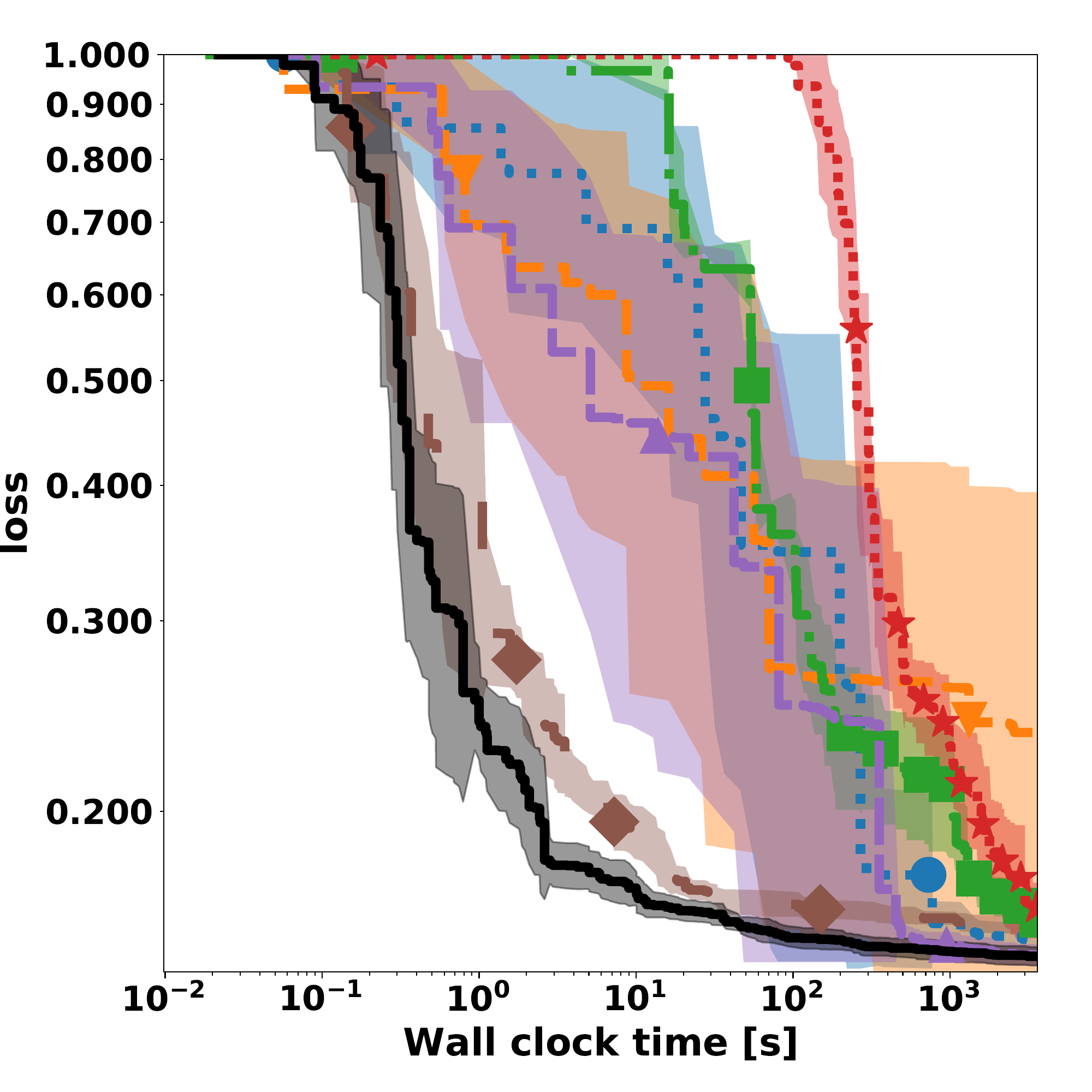}%
\caption{houses}%
\end{subfigure}\hfill%
\begin{subfigure}{0.32\columnwidth}
\includegraphics[width=\columnwidth]{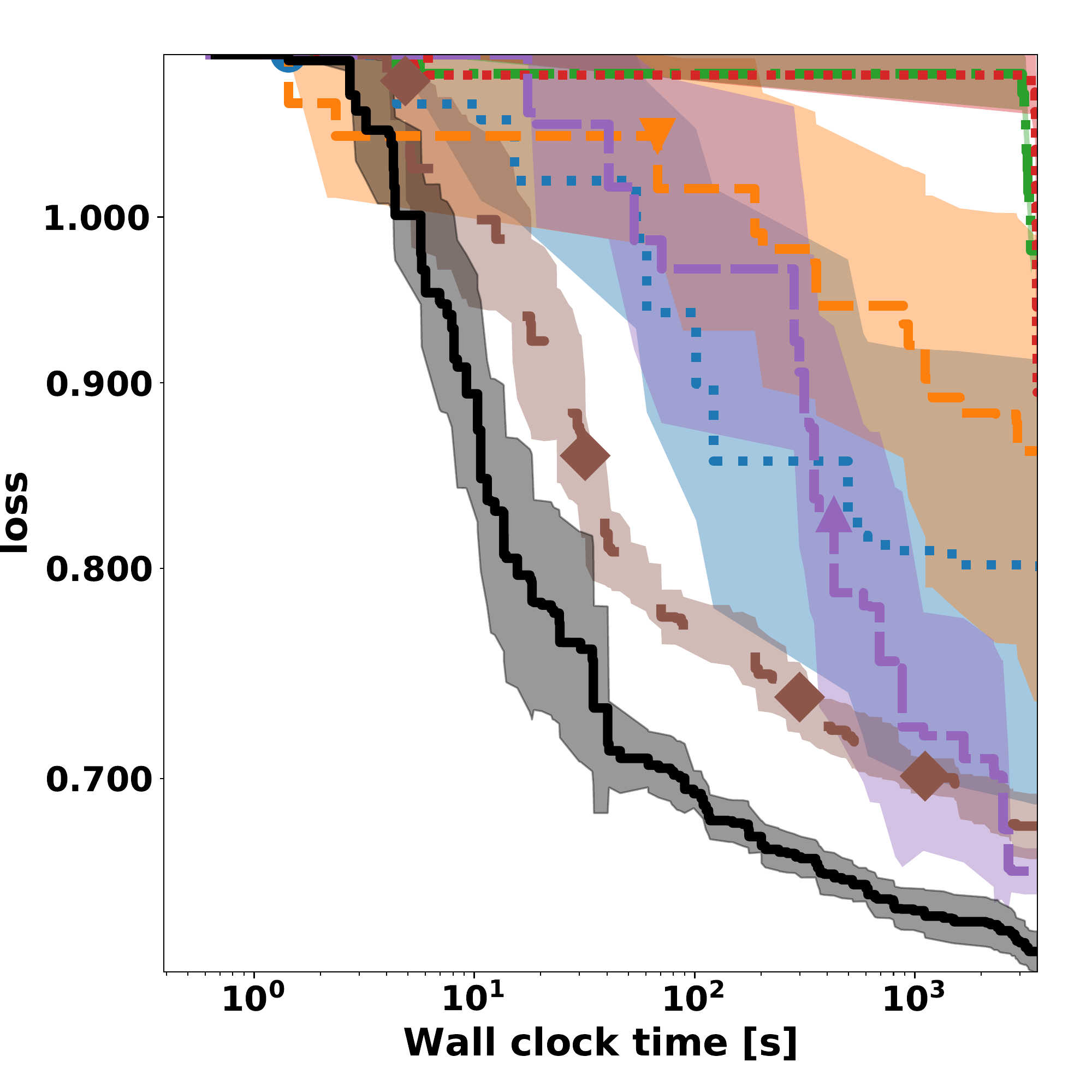}%
\caption{Jannis}%
\end{subfigure}\hfill%
\begin{subfigure}{0.32\columnwidth}
\includegraphics[width=\columnwidth]{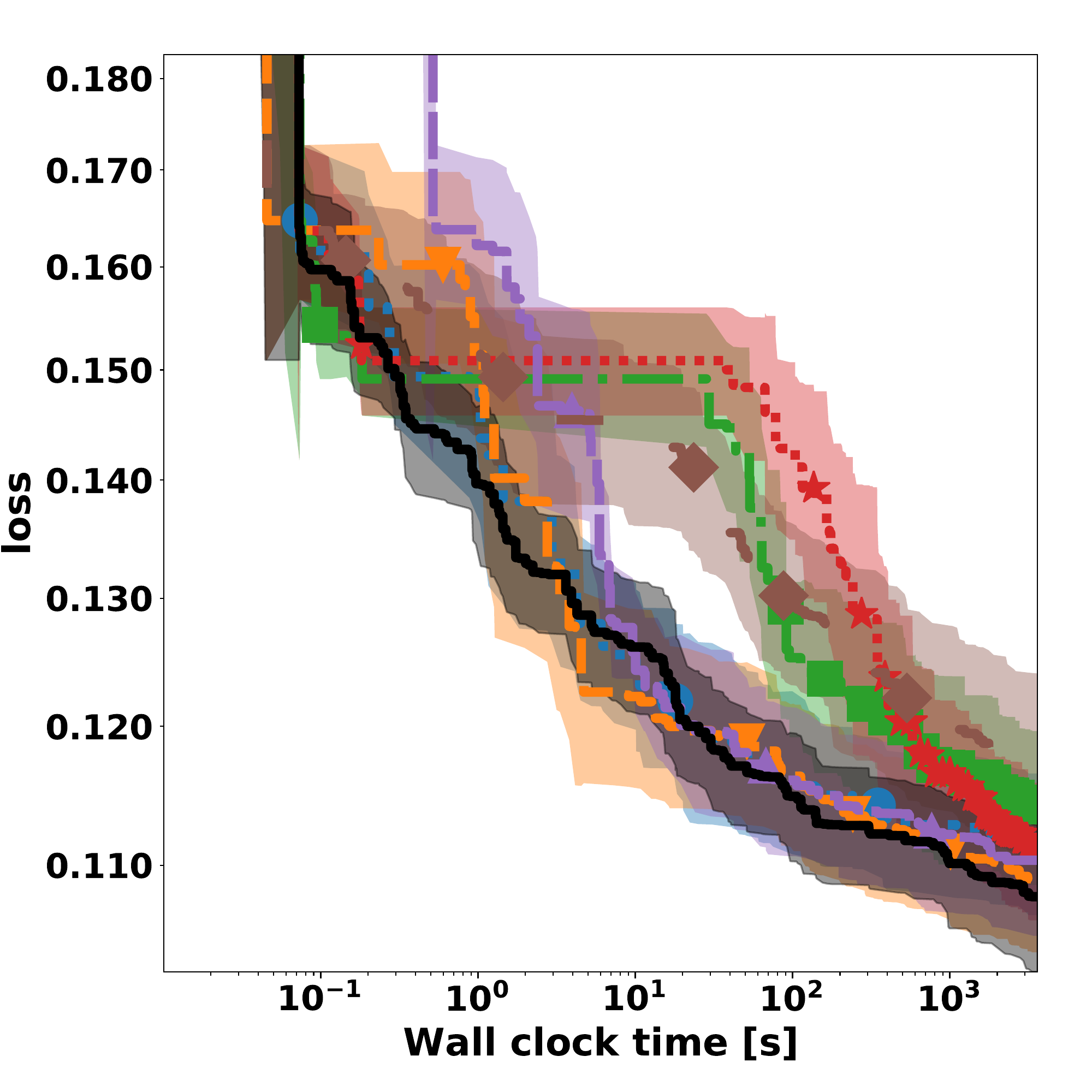}%
\caption{jasmine}%
\end{subfigure}\hfill%
  \begin{subfigure}{0.32\columnwidth}
\includegraphics[width=\columnwidth]{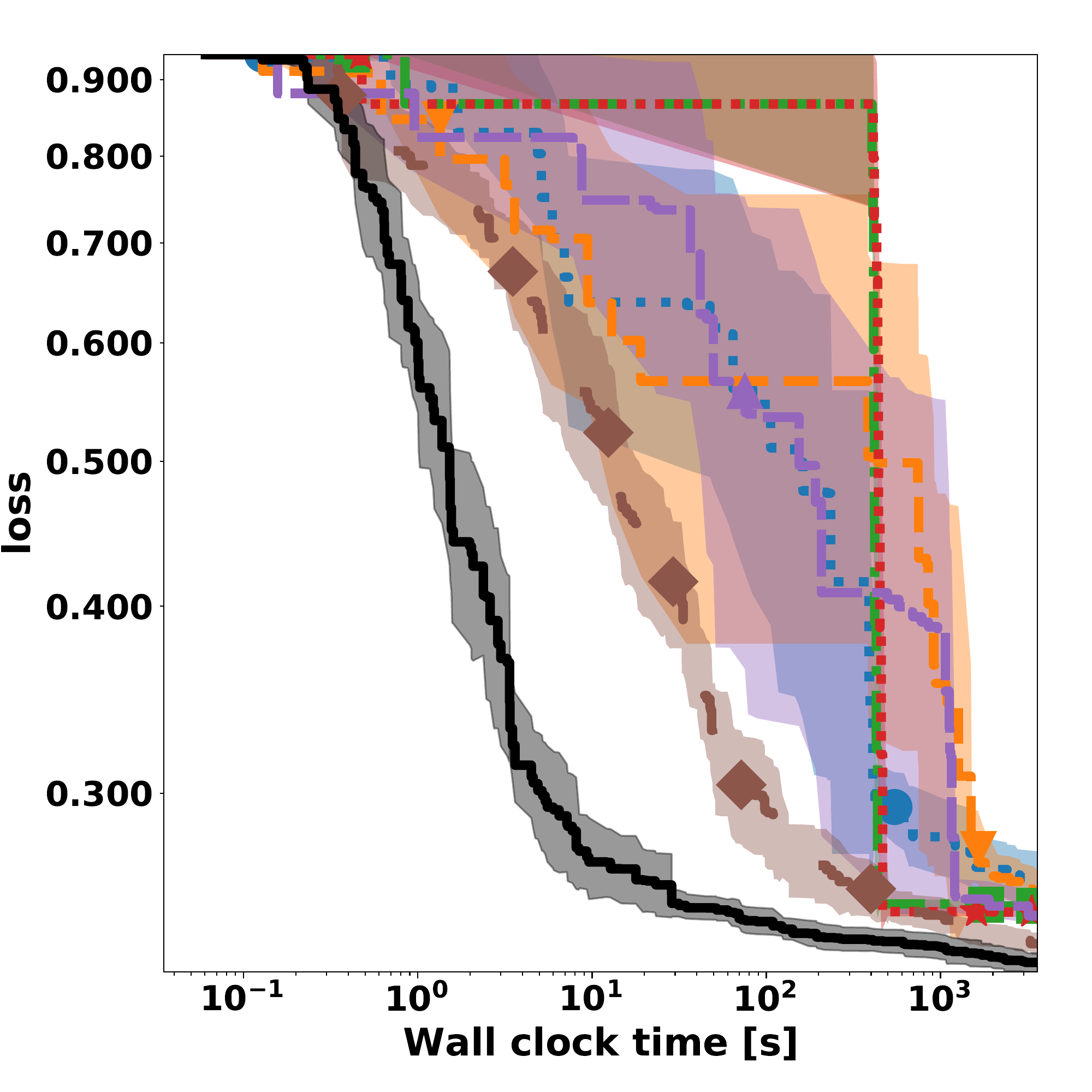}%
\caption{jungle\_chess\_2pcs\_raw\_endgame}%
\end{subfigure}\hfill%
\begin{subfigure}{0.32\columnwidth}
\includegraphics[width=\columnwidth]{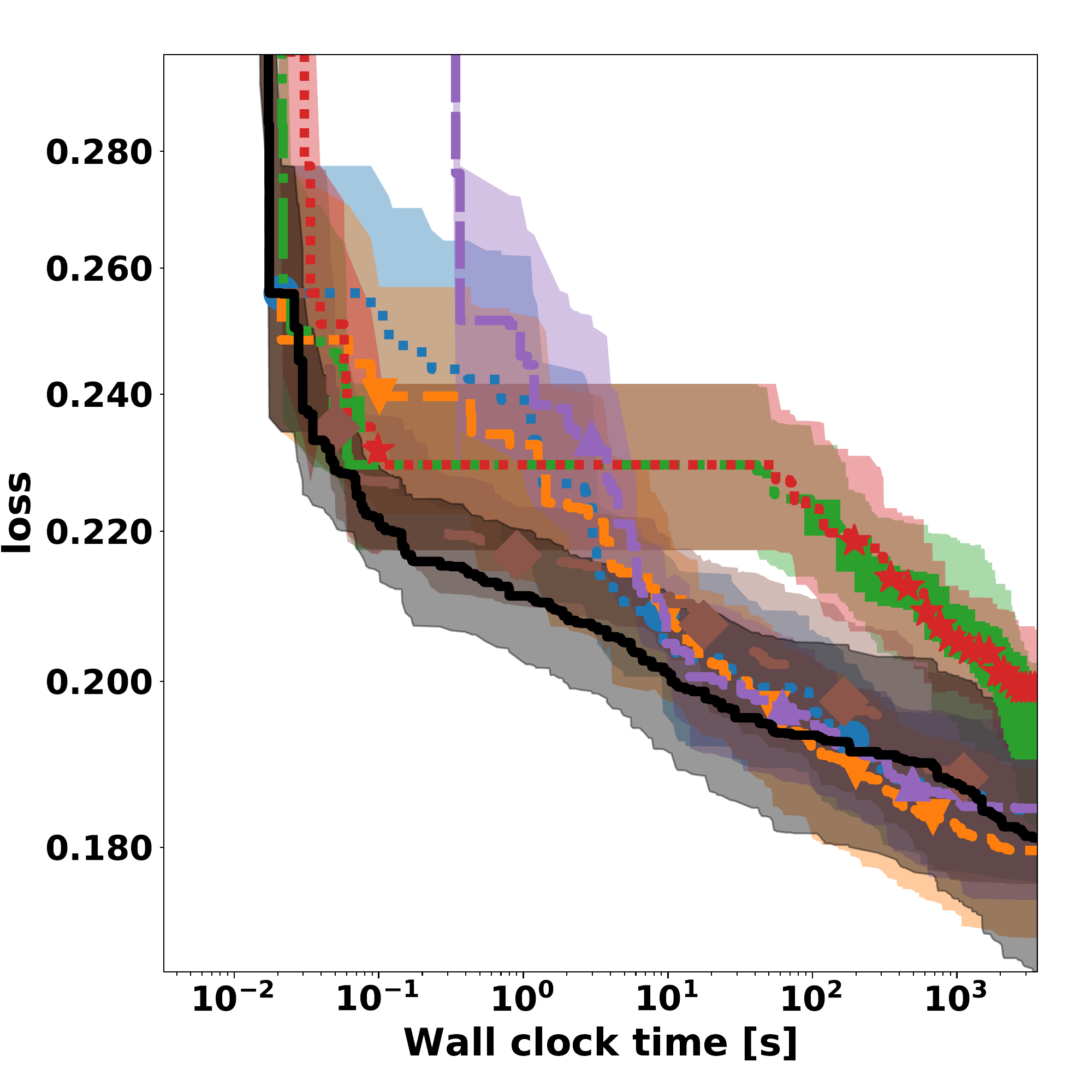}%
\caption{kc1}%
\end{subfigure}\hfill%
\begin{subfigure}{0.32\columnwidth}
\includegraphics[width=\columnwidth]{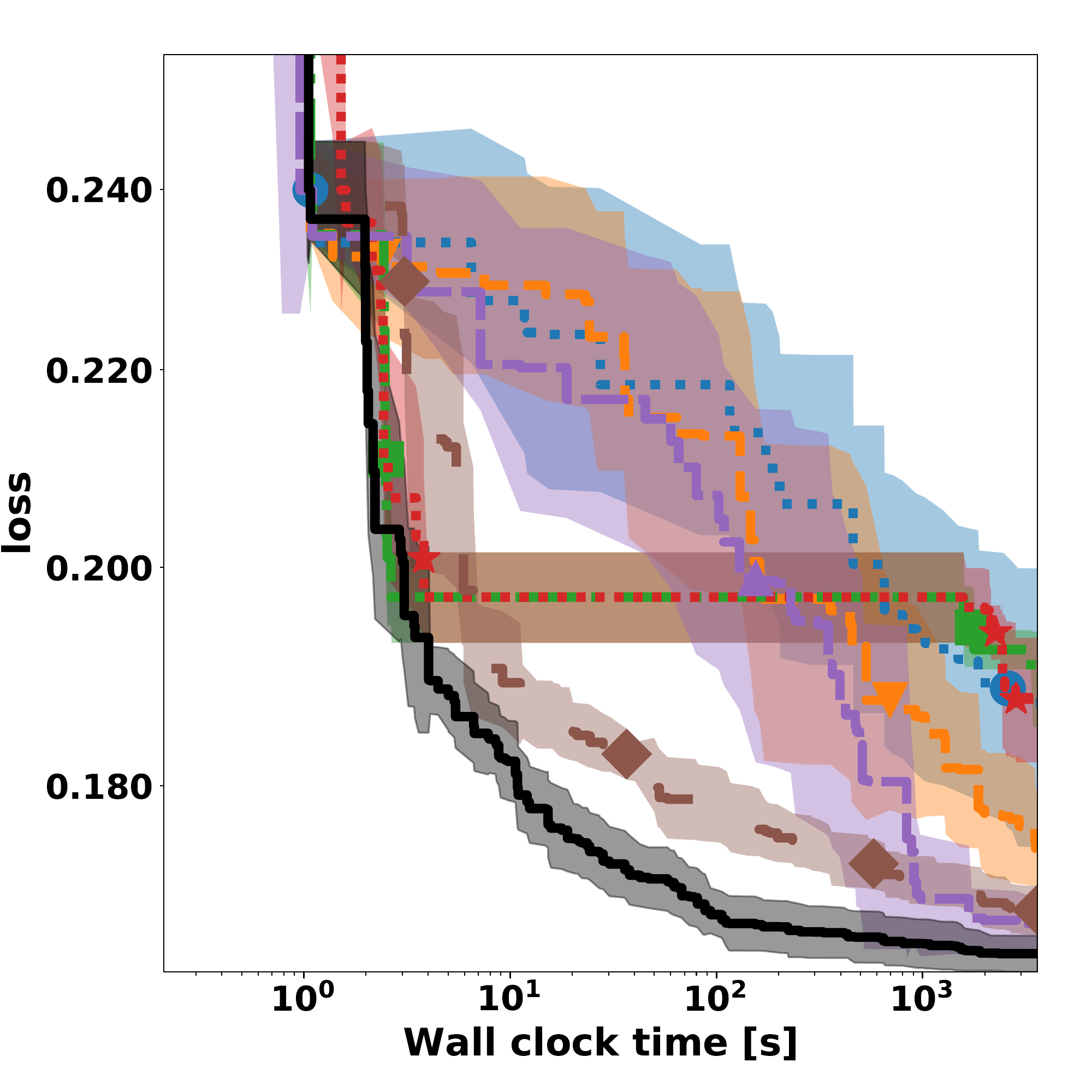}%
\caption{KDDCup09\_appetency}%
\end{subfigure}\hfill%
  \begin{subfigure}{0.32\columnwidth}
\includegraphics[width=\columnwidth]{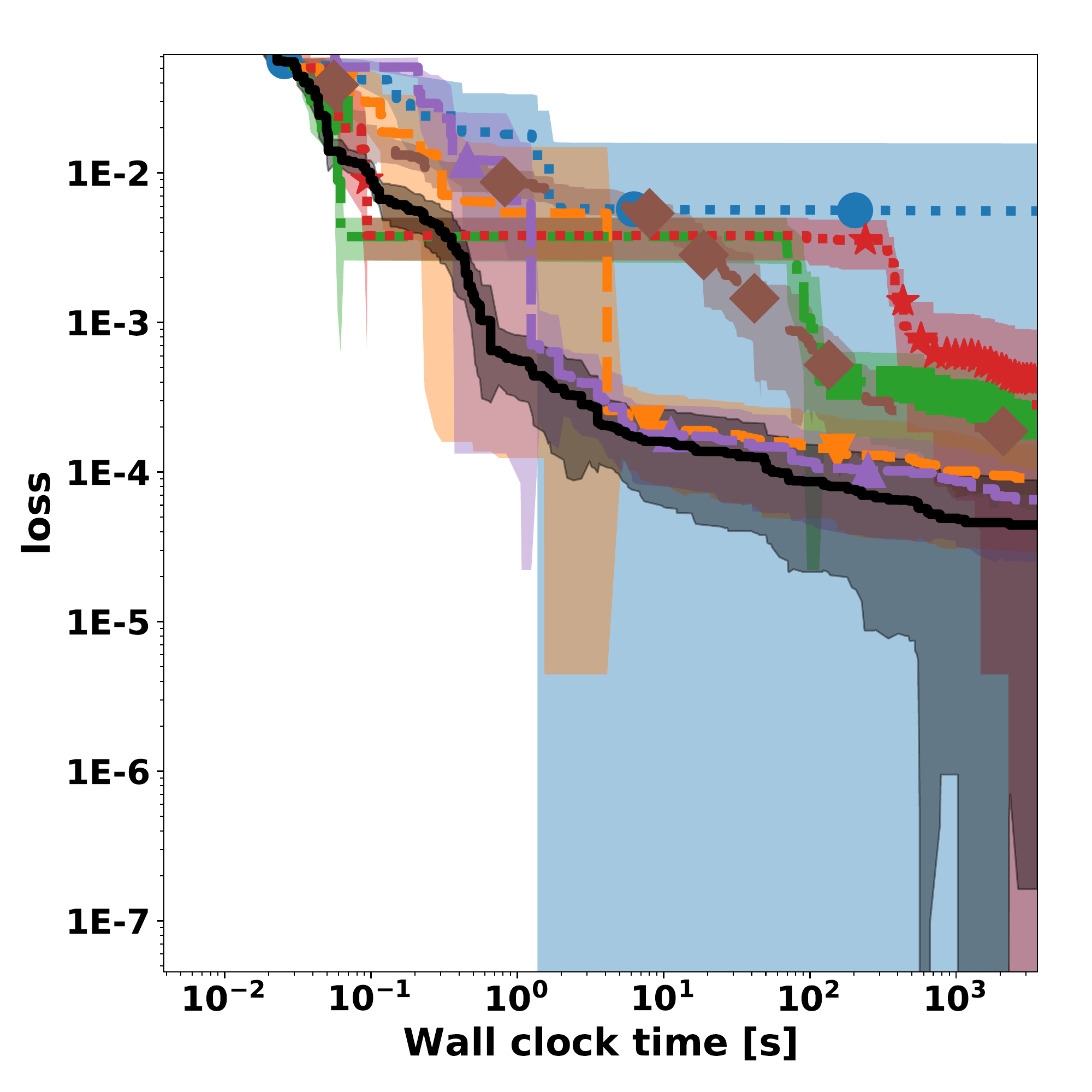}%
\caption{kr-vs-kp}%
\end{subfigure}\hfill%
\begin{subfigure}{0.32\columnwidth}
\includegraphics[width=\columnwidth]{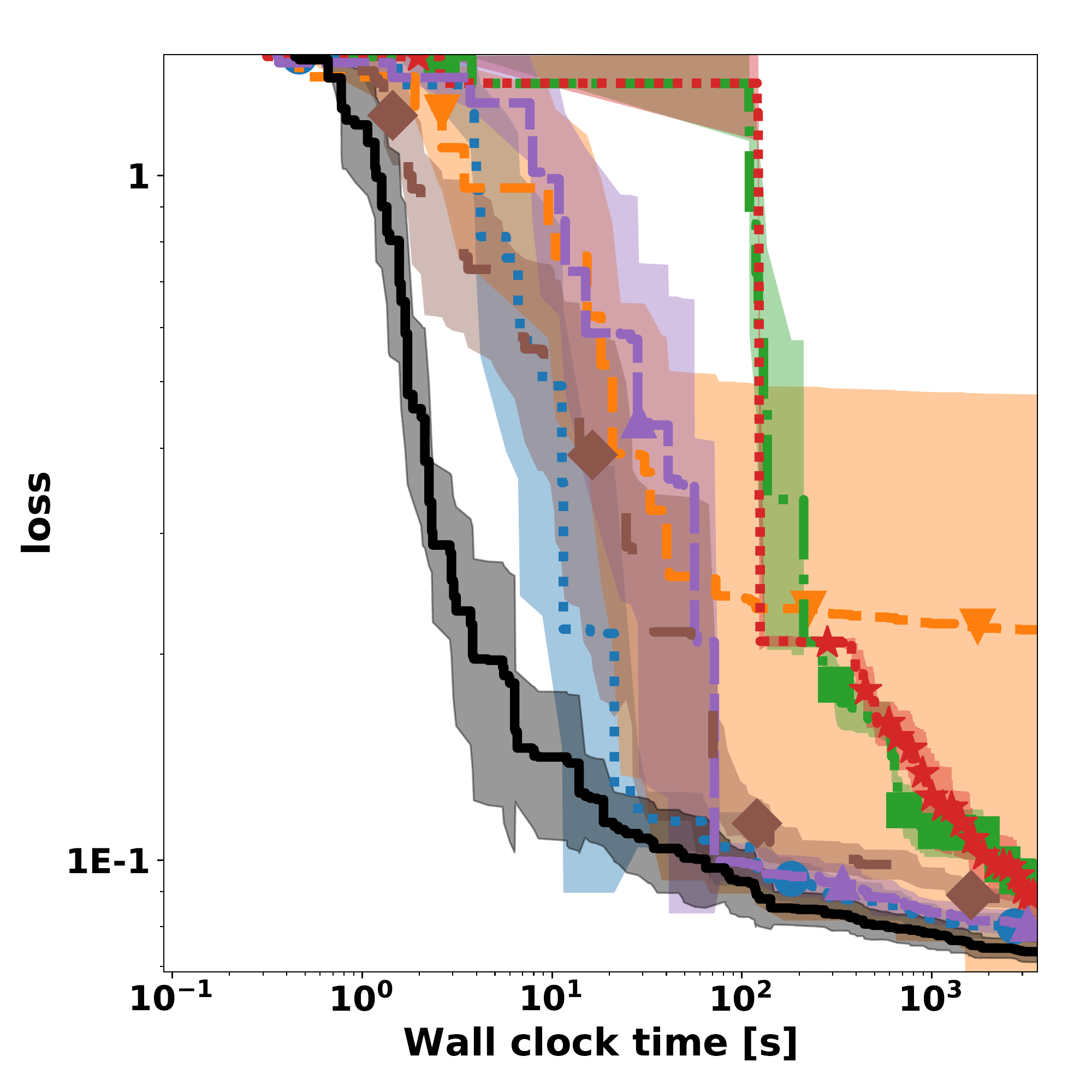}%
\caption{mfeat-factors}%
\end{subfigure}\hfill%
\begin{subfigure}{0.32\columnwidth}
\includegraphics[width=\columnwidth]{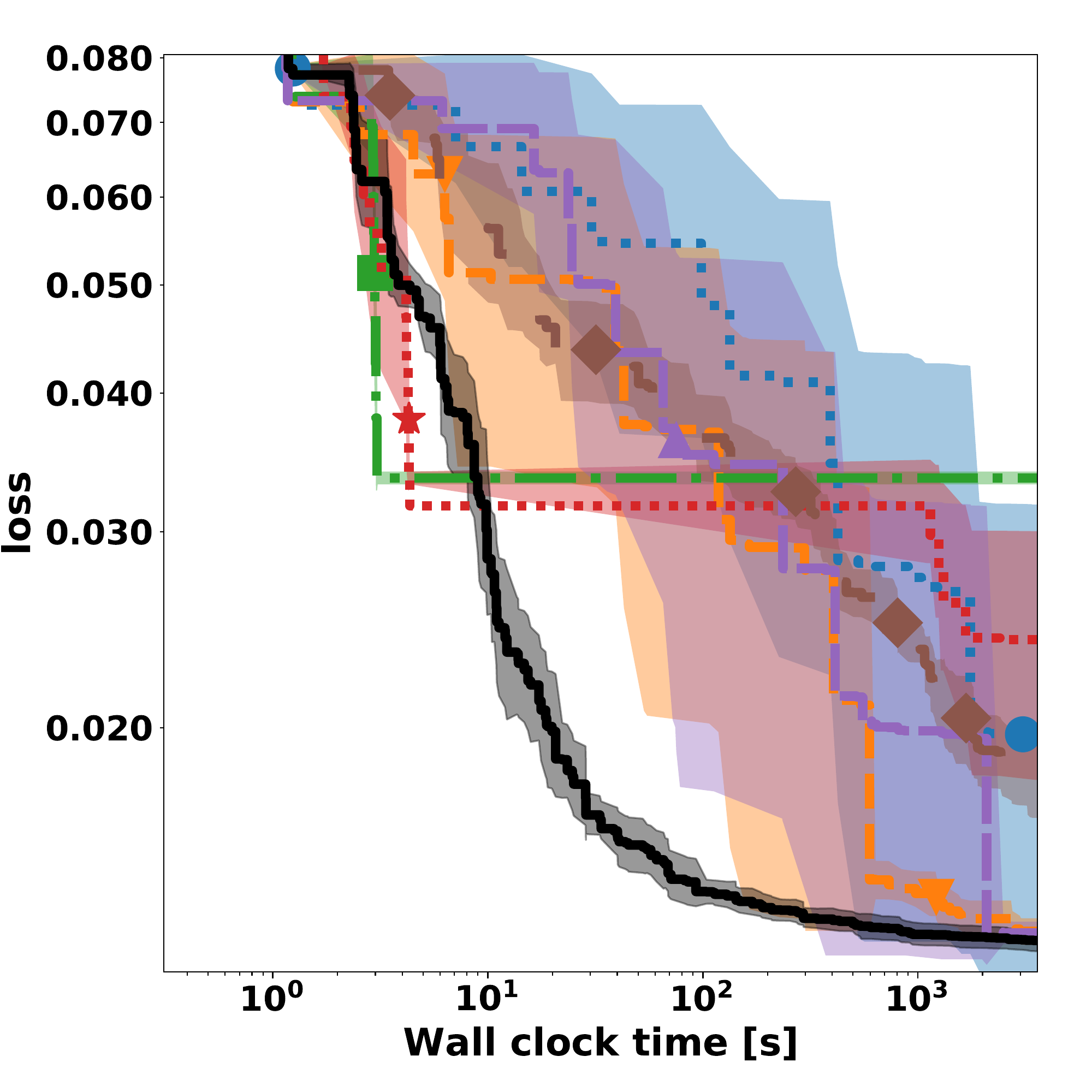}%
\caption{MiniBooNE}%
\end{subfigure}\hfill%
  \begin{subfigure}{0.32\columnwidth}
\includegraphics[width=\columnwidth]{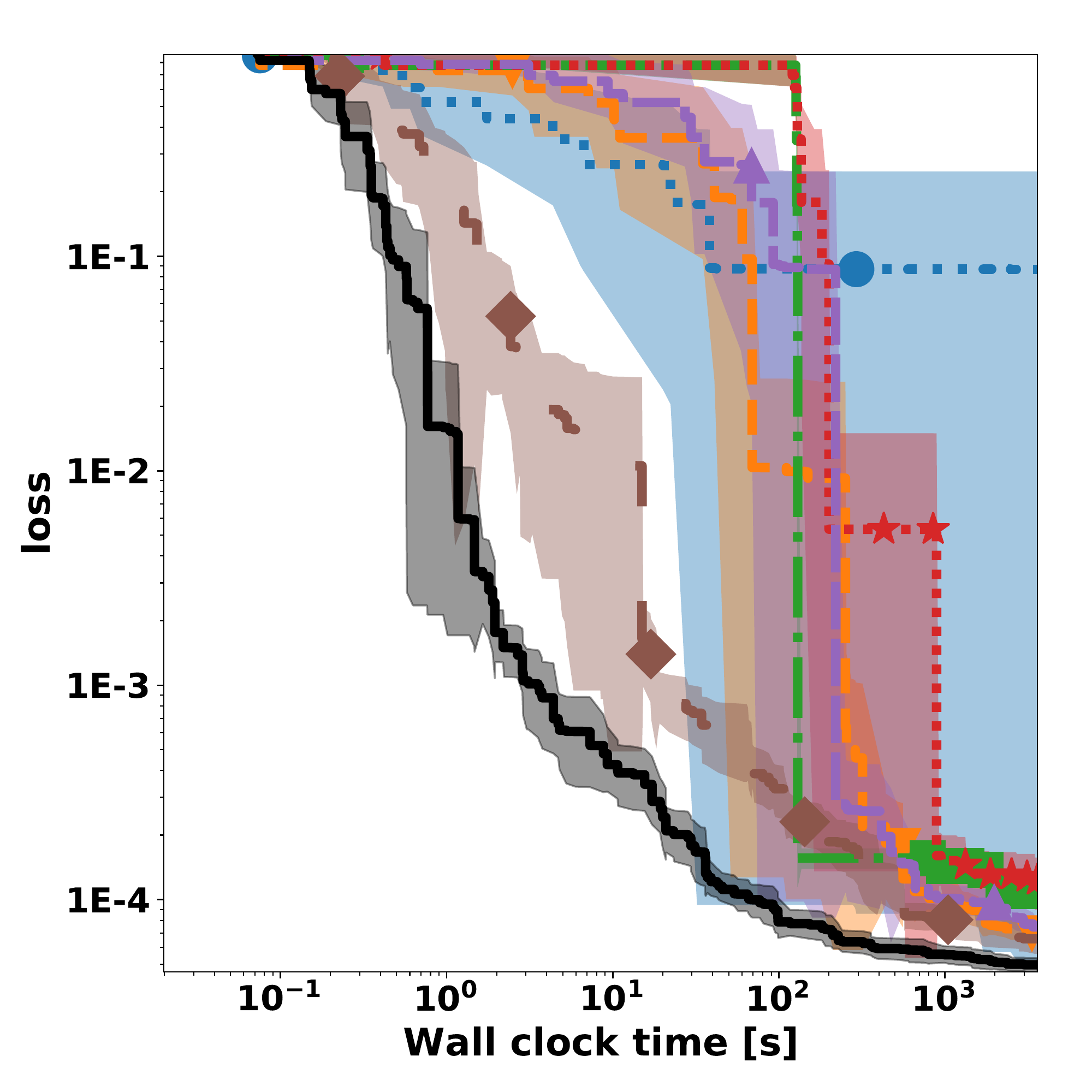}%
\caption{mv}%
\end{subfigure}\hfill%
\caption{Optimization performance curve for XGBoost (pt 3/4)}
\end{figure*}
\begin{figure*}[h]
\ContinuedFloat
\begin{subfigure}{0.32\columnwidth}
\includegraphics[width=\columnwidth]{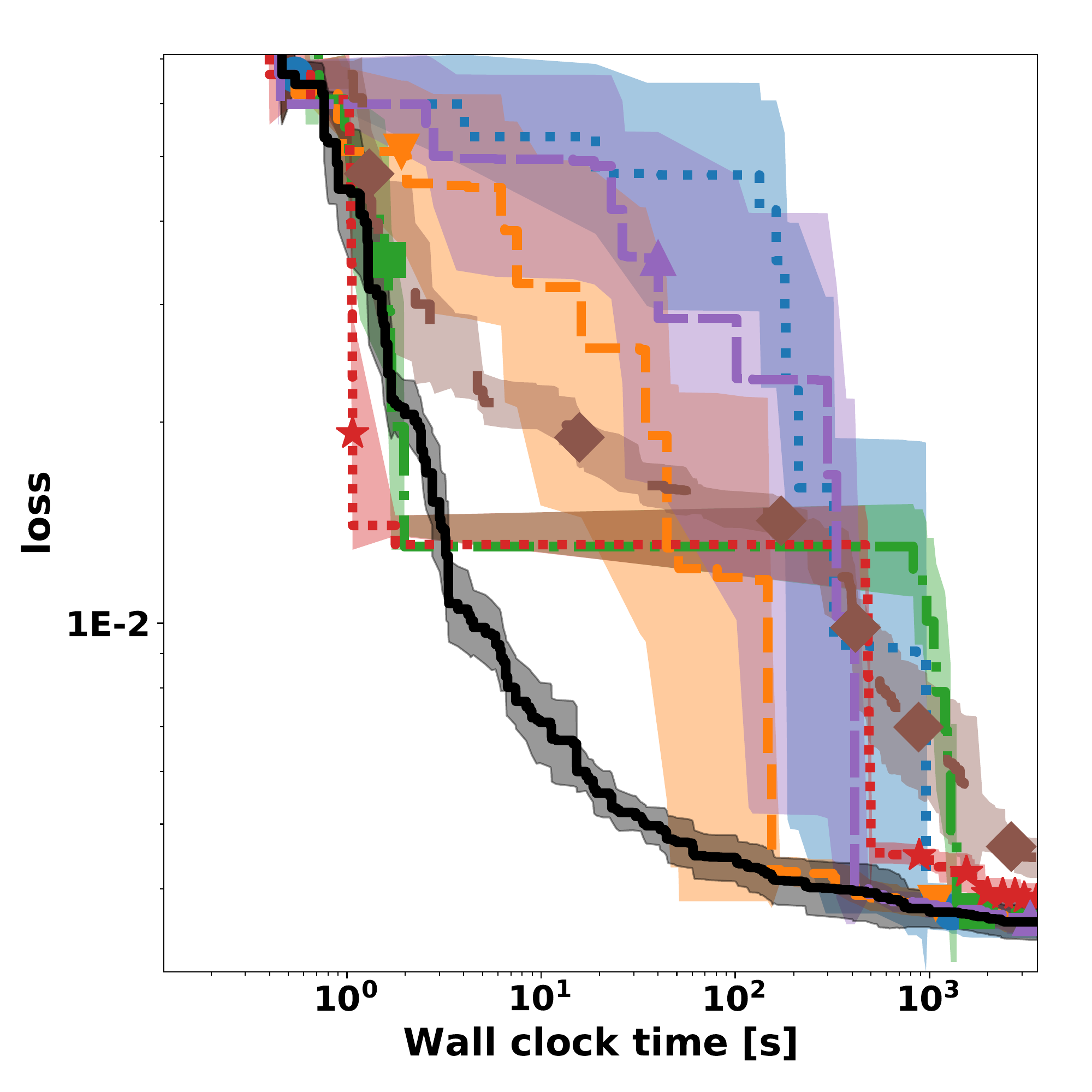}%
\caption{nomao}%
\end{subfigure}\hfill%
\begin{subfigure}{0.32\columnwidth}
\includegraphics[width=\columnwidth]{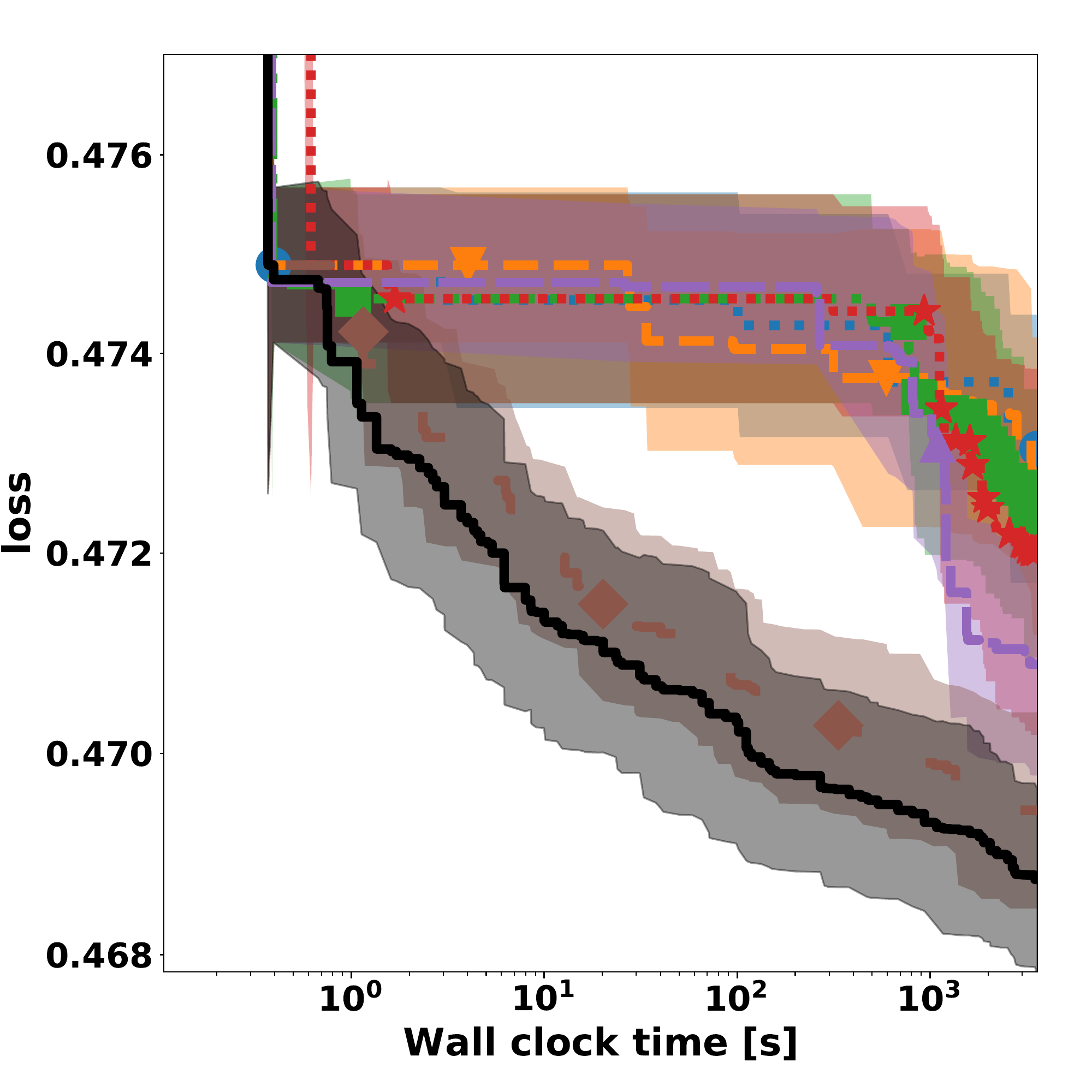}%
\caption{numerai28.6}%
\end{subfigure}\hfill%
  \begin{subfigure}{0.32\columnwidth}
\includegraphics[width=\columnwidth]{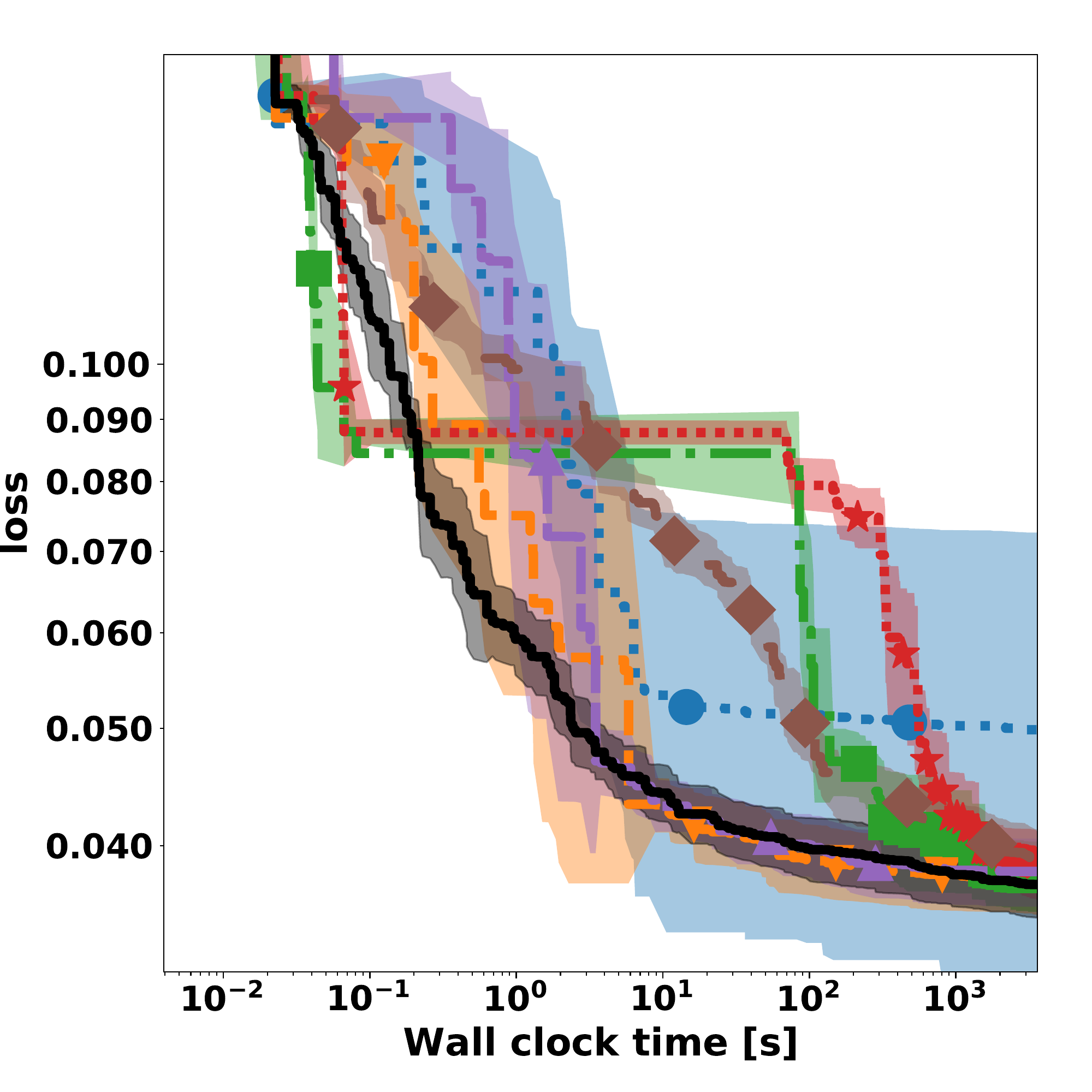}%
\caption{phoneme}%
\end{subfigure}\hfill%
\begin{subfigure}{0.32\columnwidth}
\includegraphics[width=\columnwidth]{figures/xgboost_poker.pdf}%
\caption{poker}%
\end{subfigure}\hfill%
\begin{subfigure}{0.32\columnwidth}
\includegraphics[width=\columnwidth]{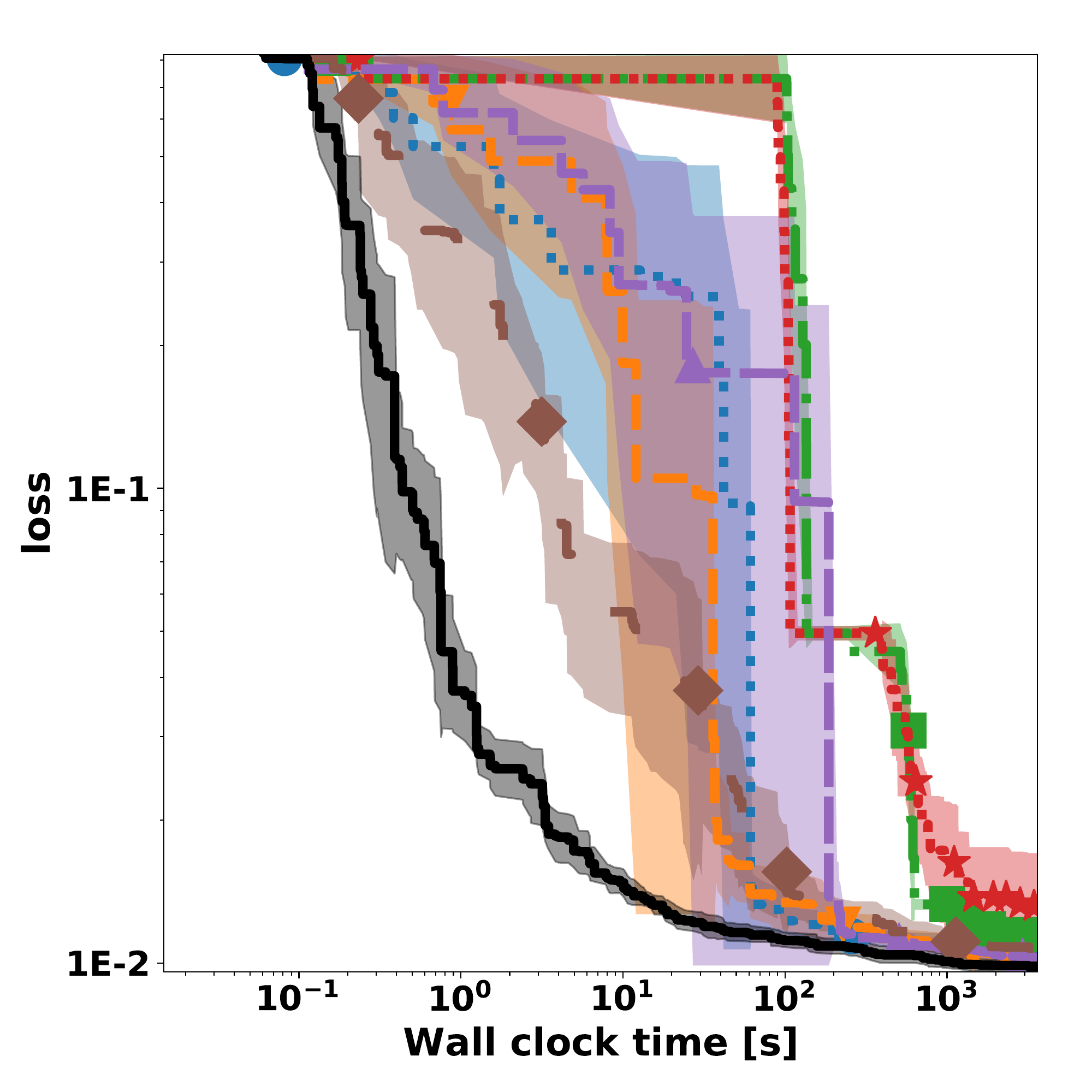}%
\caption{pol}%
\end{subfigure}\hfill%
  \begin{subfigure}{0.32\columnwidth}
\includegraphics[width=\columnwidth]{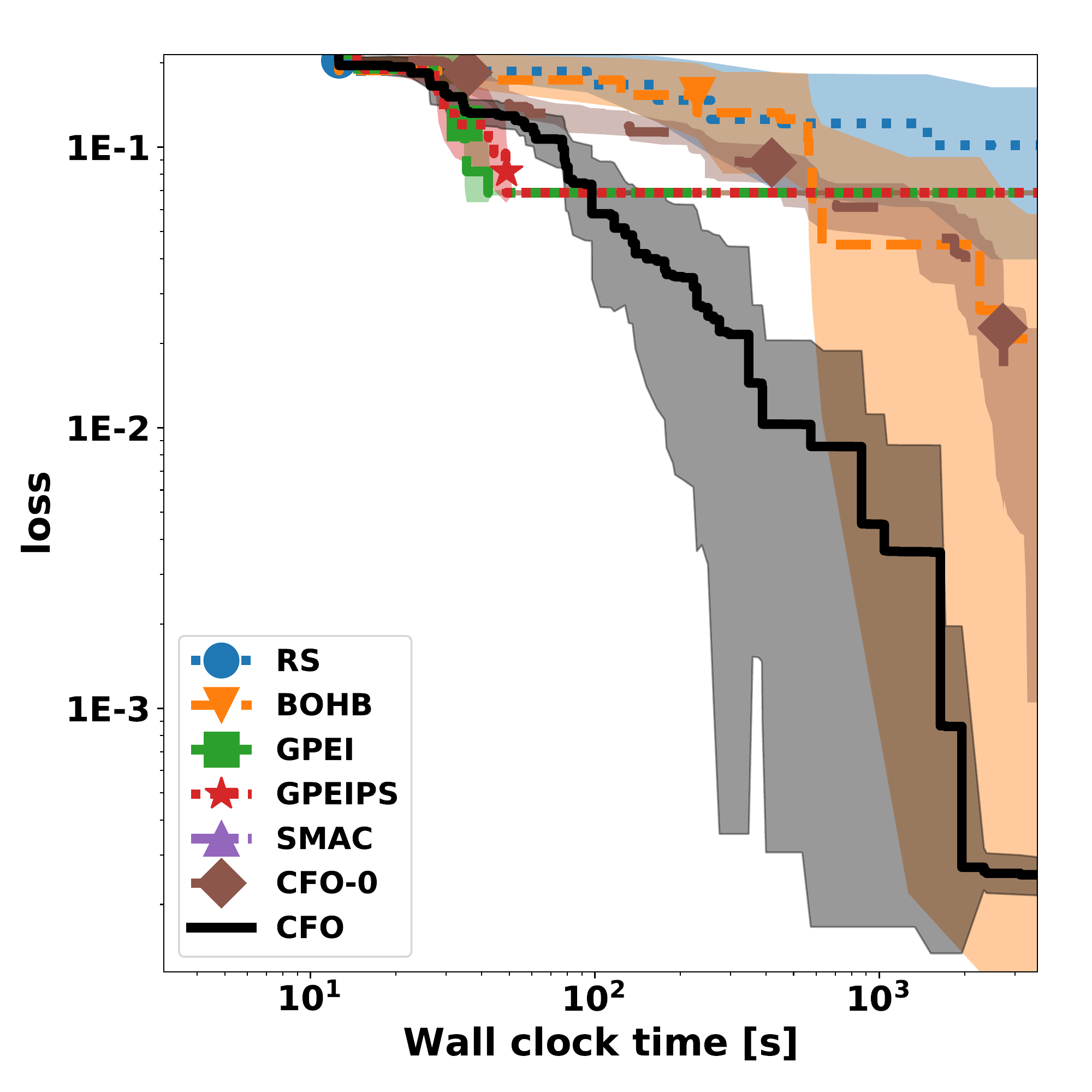}%
\caption{riccardo}%
\end{subfigure}\hfill%
  \begin{subfigure}{0.32\columnwidth}
\includegraphics[width=\columnwidth]{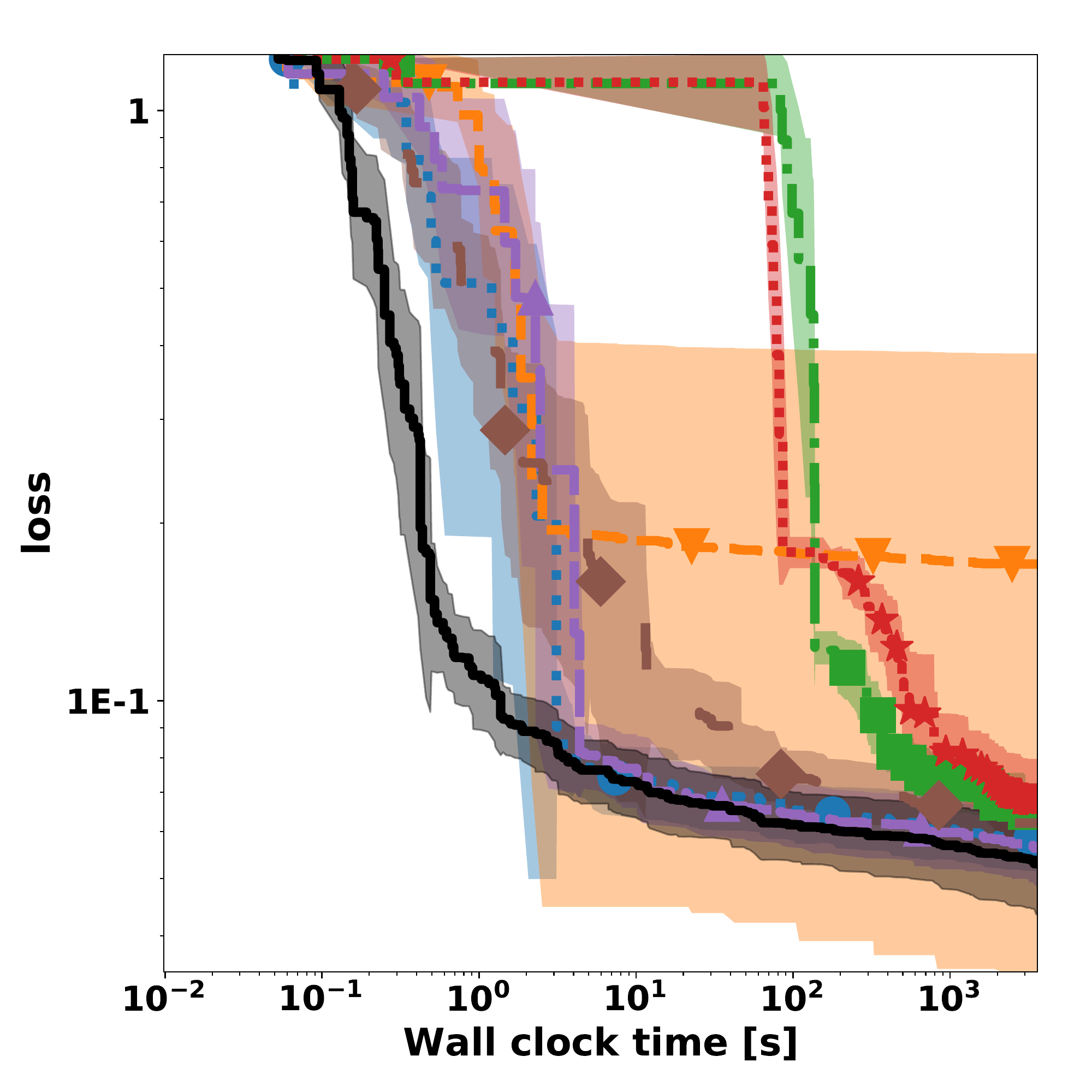}%
\caption{segment}%
\end{subfigure}\hfill%
\begin{subfigure}{0.32\columnwidth}
\includegraphics[width=\columnwidth]{figures/xgboost_shuttle.pdf}%
\caption{shuttle}%
\end{subfigure}\hfill%
\begin{subfigure}{0.32\columnwidth}
\includegraphics[width=\columnwidth]{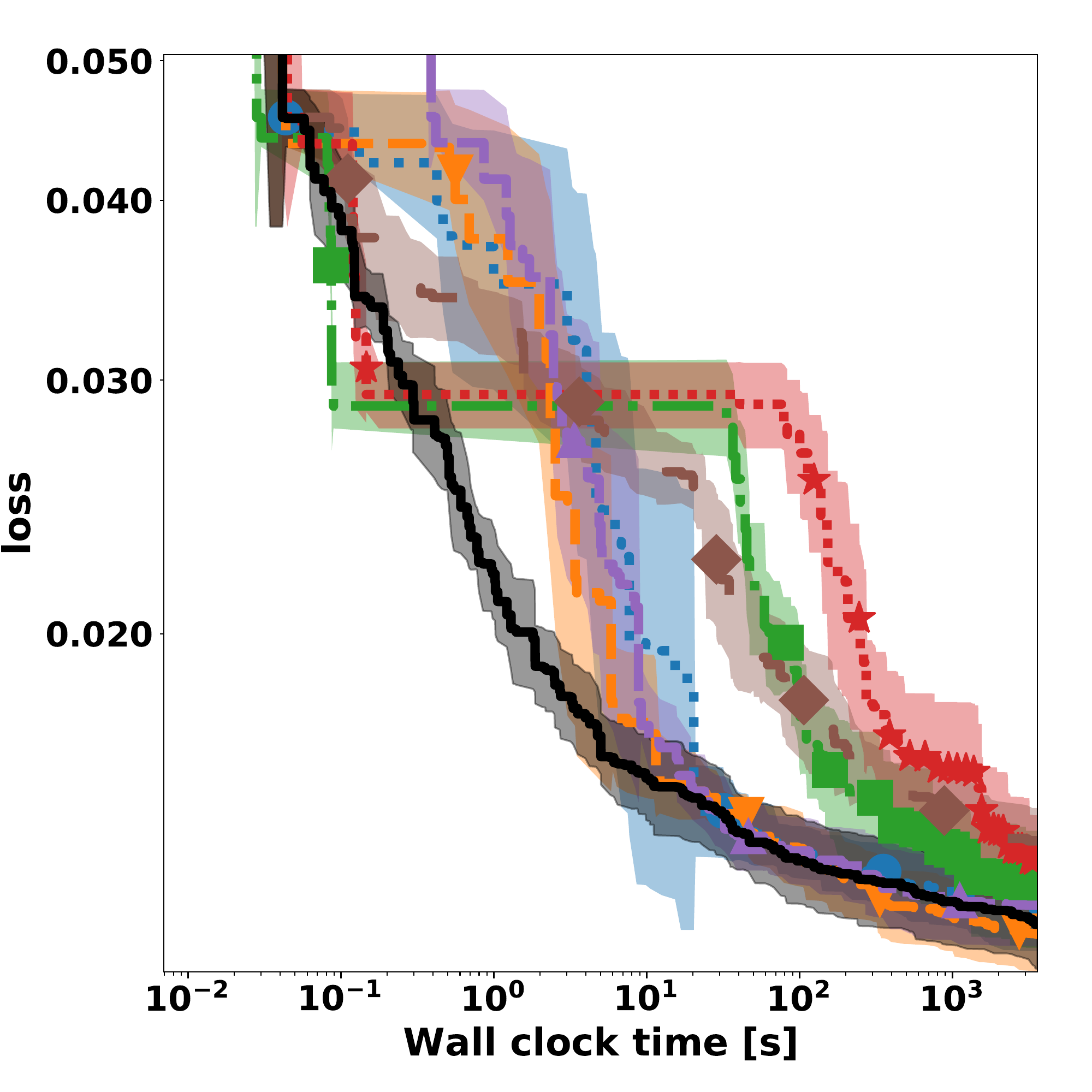}%
\caption{sylvine}%
\end{subfigure}\hfill%
  \begin{subfigure}{0.32\columnwidth}
\includegraphics[width=\columnwidth]{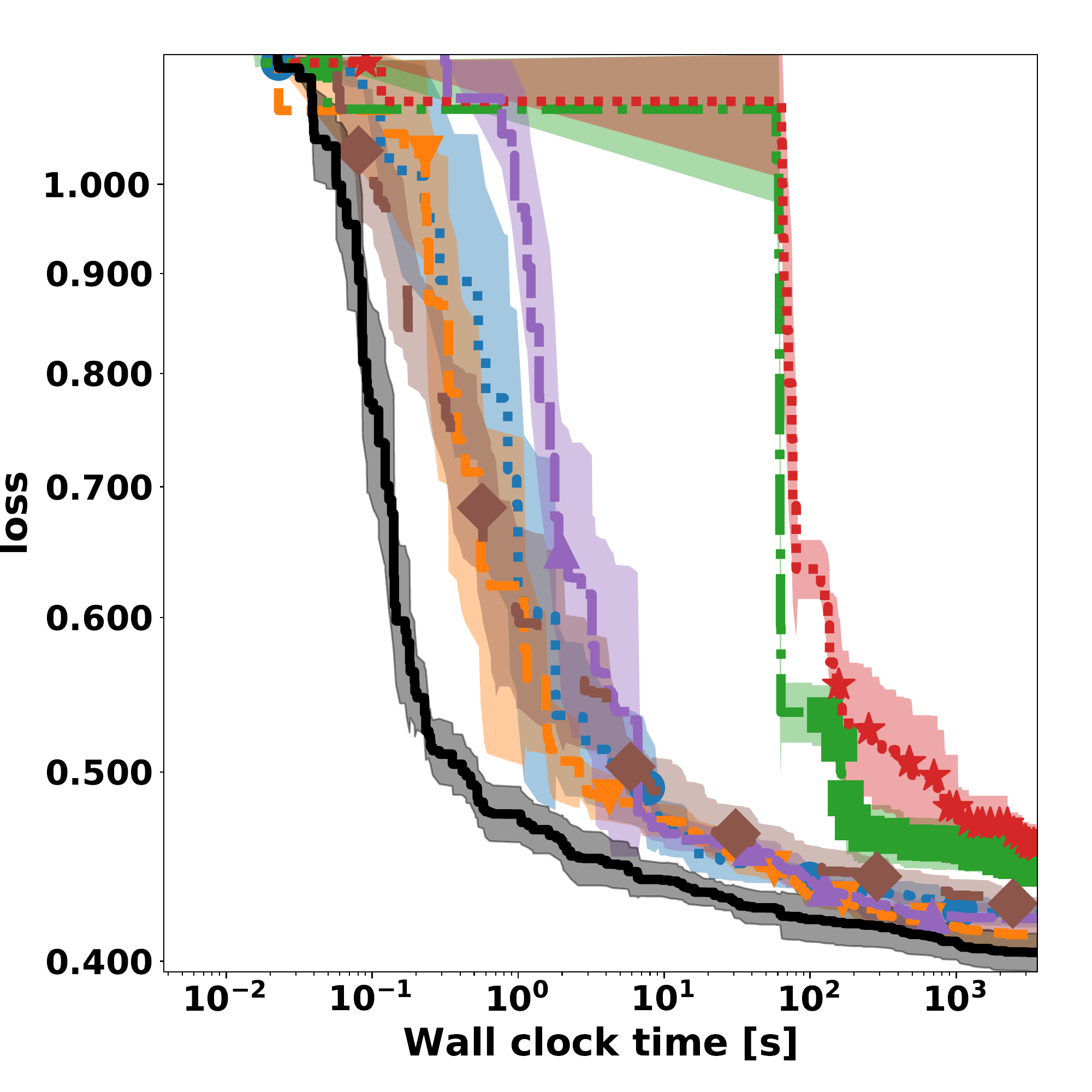}%
\caption{vehicle}%
\end{subfigure}\hfill%
\begin{subfigure}{0.32\columnwidth}
\includegraphics[width=\columnwidth]{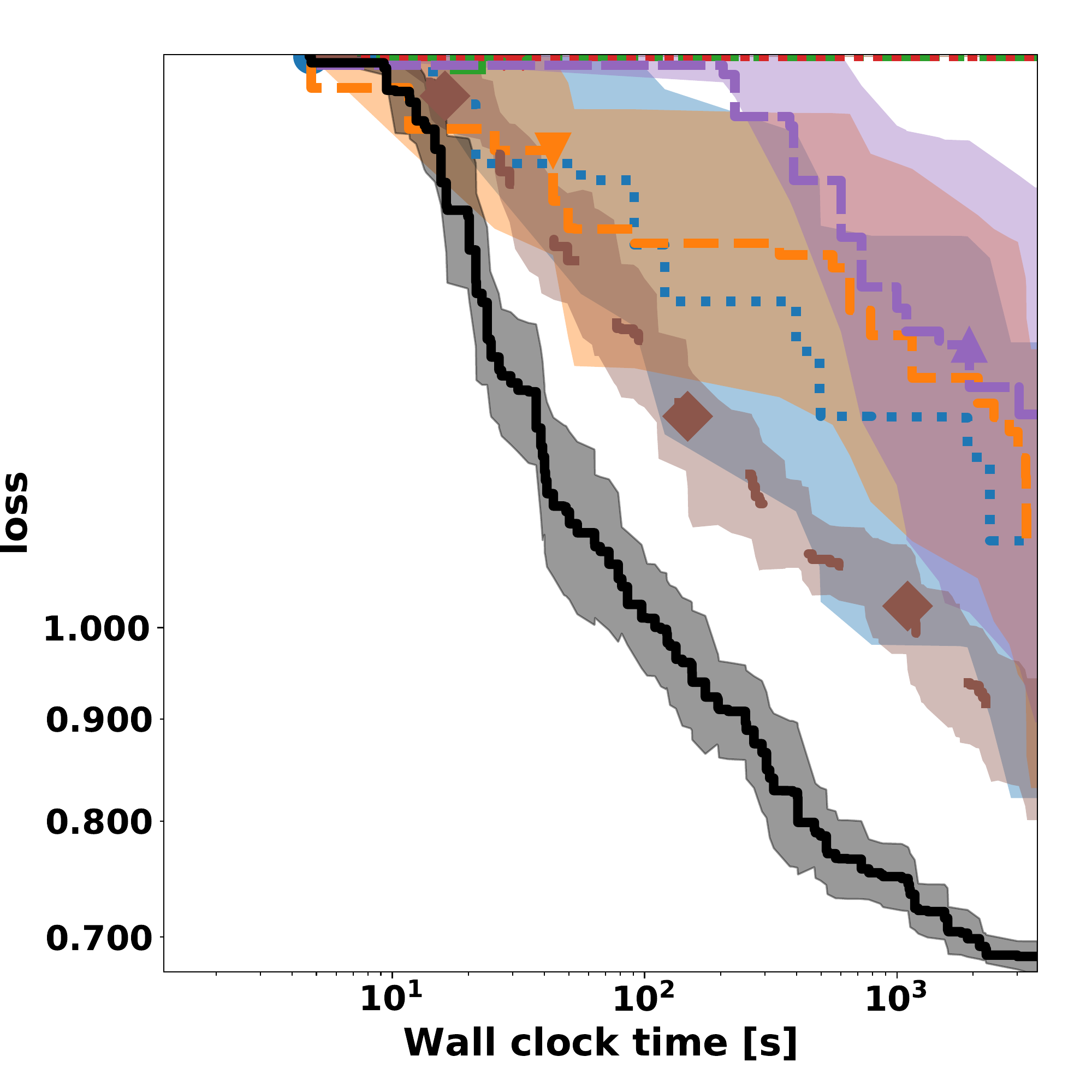}%
\caption{volkert}%
\end{subfigure}\hfill%
\caption{Optimization performance curve for XGBoost (pt 4/4)}
  \label{fig:curve}
\end{figure*}

\begin{figure*}[h]
  \centering
\begin{subfigure}{0.3\columnwidth}
\includegraphics[width=\columnwidth]{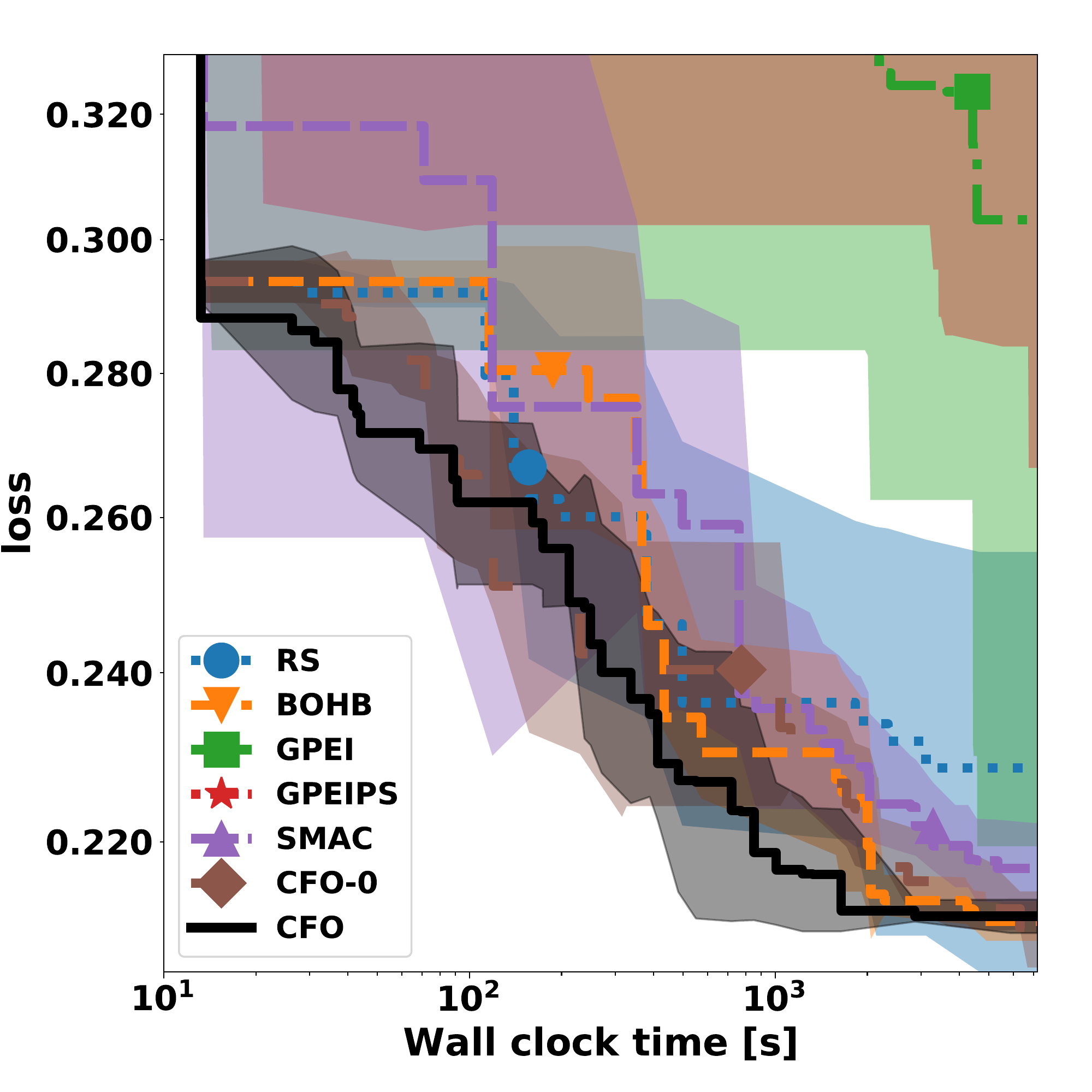}%
\caption{adult}%
\end{subfigure}\hfill%
\begin{subfigure}{0.32\columnwidth}
\includegraphics[width=\columnwidth]{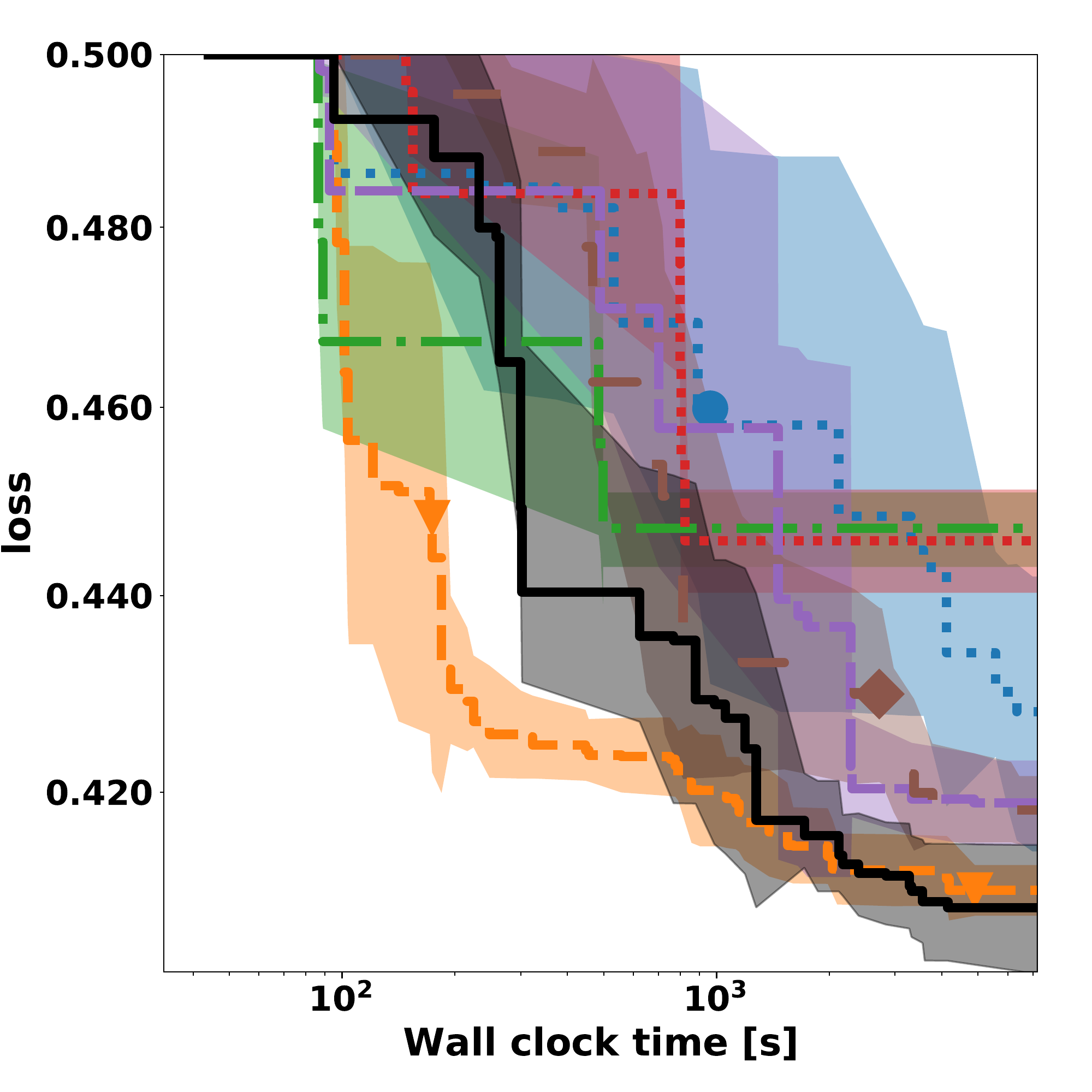}%
\caption{Airlines}%
\end{subfigure}\hfill%
  \begin{subfigure}{0.32\columnwidth}
\includegraphics[width=\columnwidth]{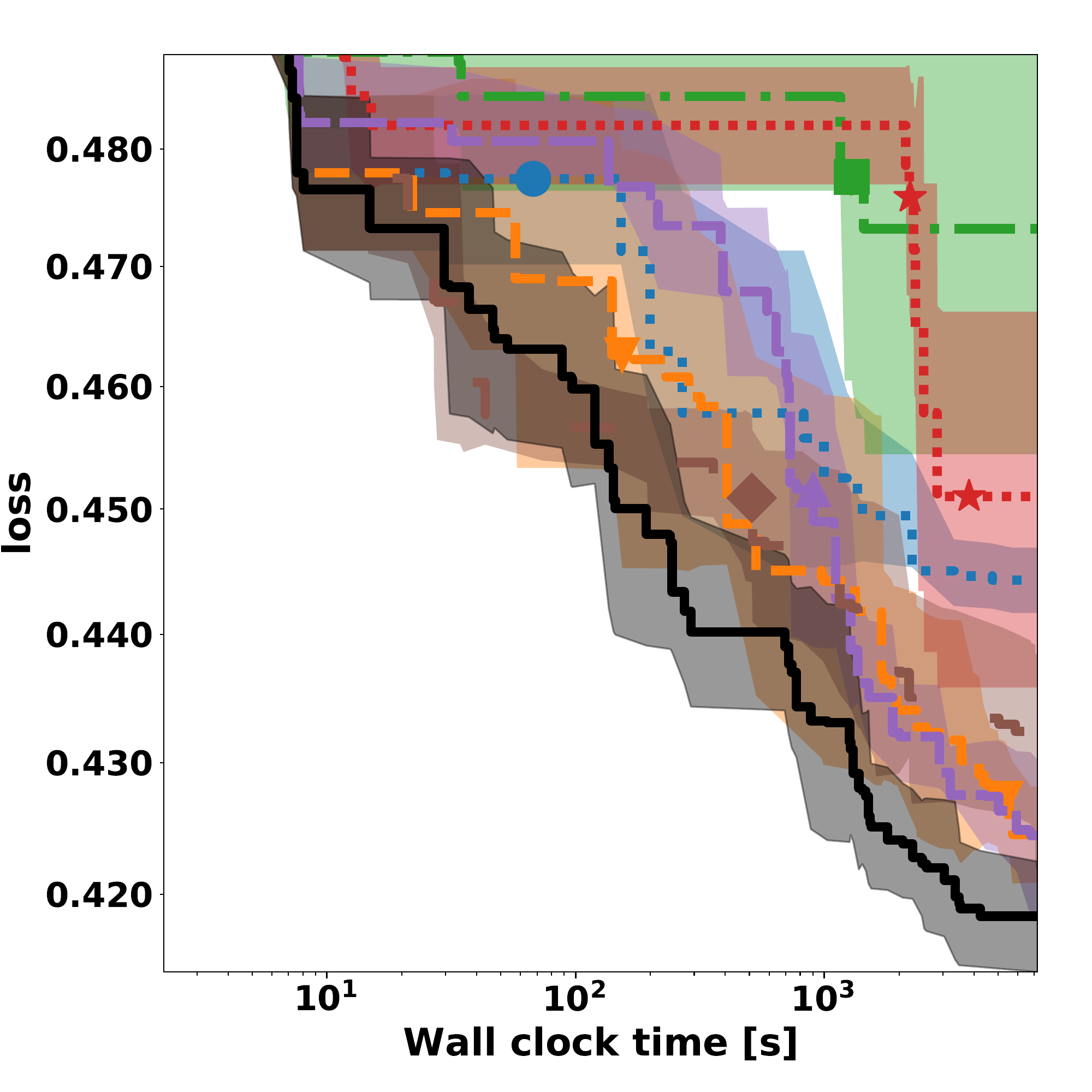}%
\caption{Amazon\_employee\_access}%
\end{subfigure}\hfill%
\begin{subfigure}{0.32\columnwidth}
\includegraphics[width=\columnwidth]{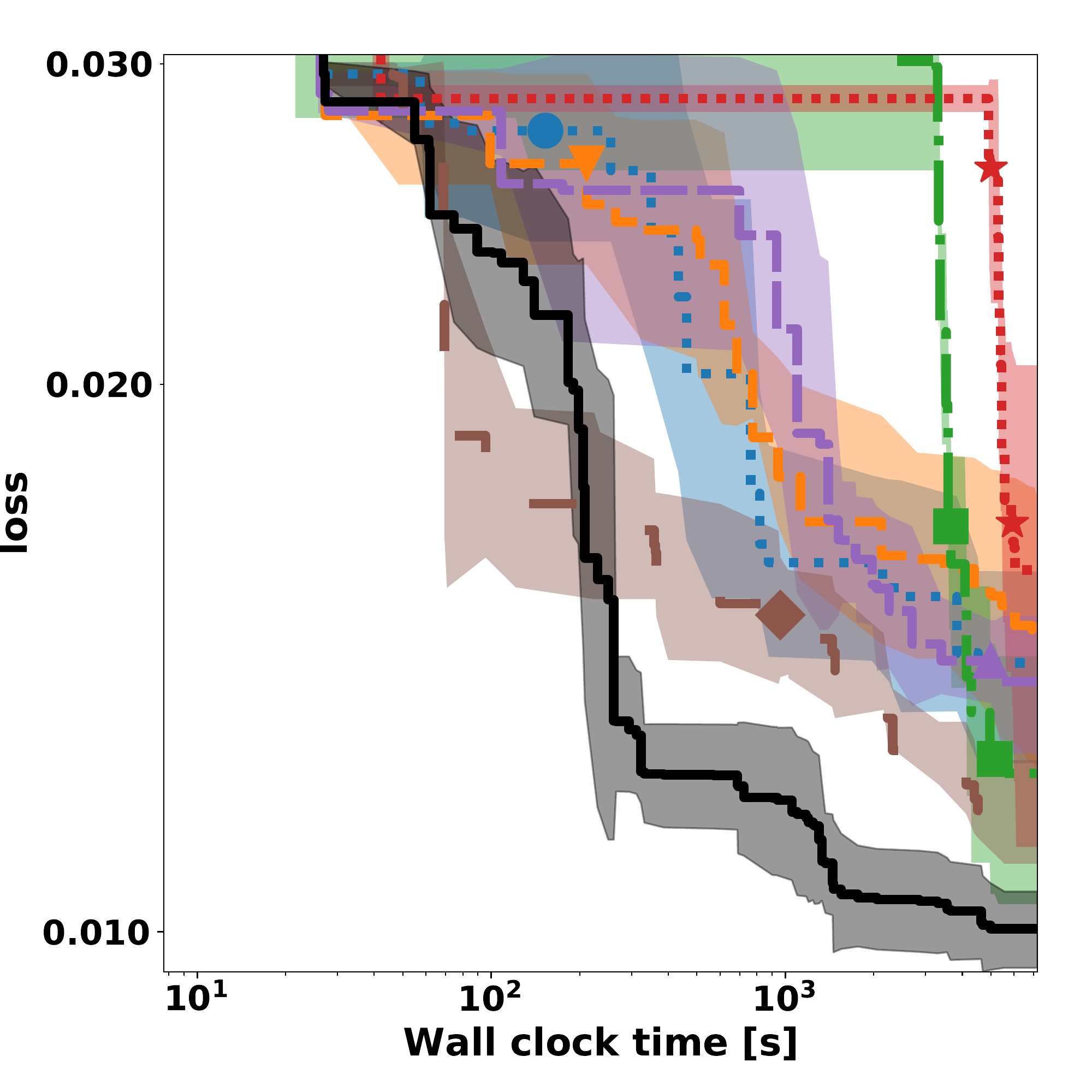}%
\caption{APSFailure}%
\end{subfigure}\hfill%
\begin{subfigure}{0.32\columnwidth}
\includegraphics[width=\columnwidth]{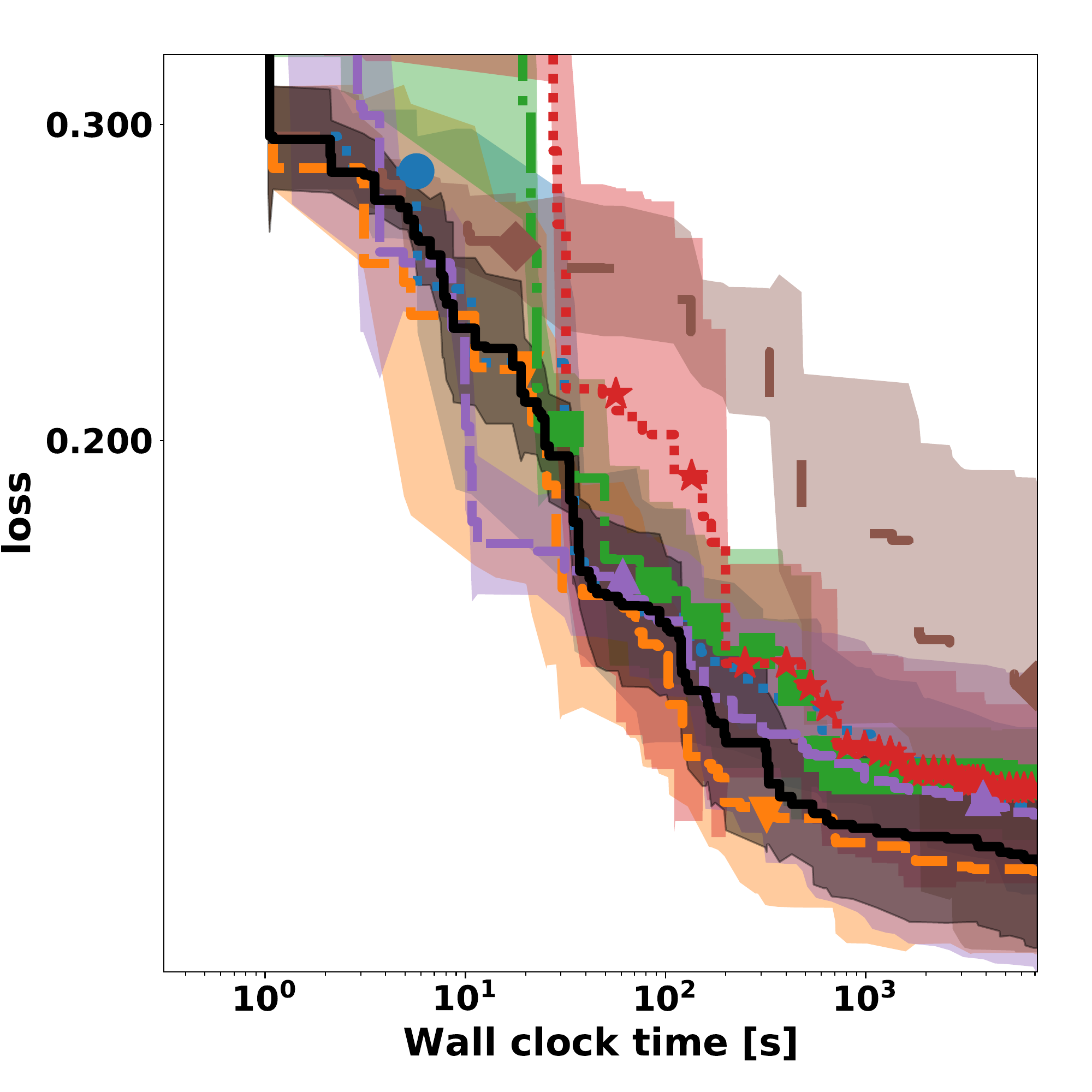}%
\caption{Australian}%
\end{subfigure}\hfill%
\begin{subfigure}{0.32\columnwidth}
\includegraphics[width=\columnwidth]{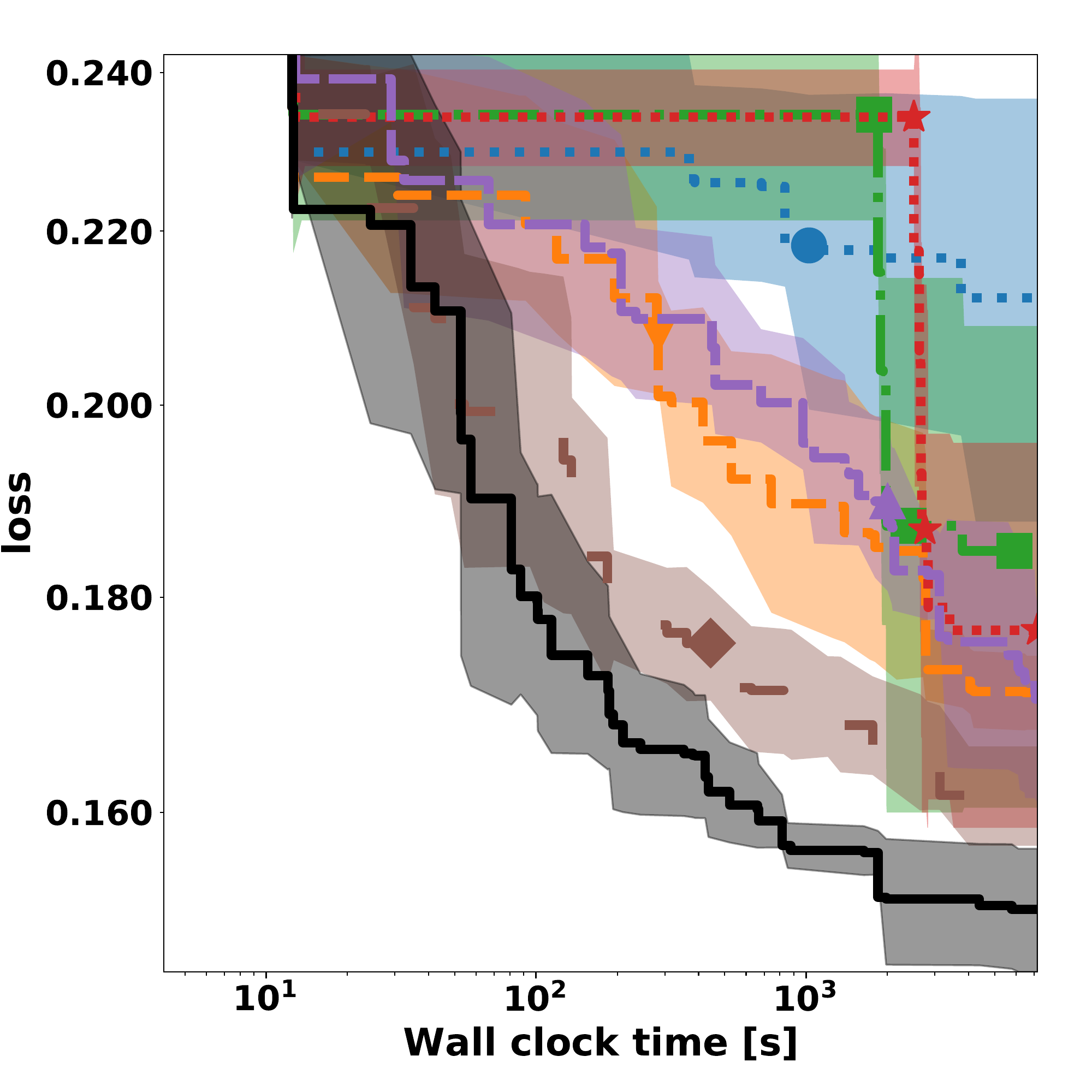}%
\caption{bank\_marketing}%
\end{subfigure}\hfill%
  \begin{subfigure}{0.32\columnwidth}
\includegraphics[width=\columnwidth]{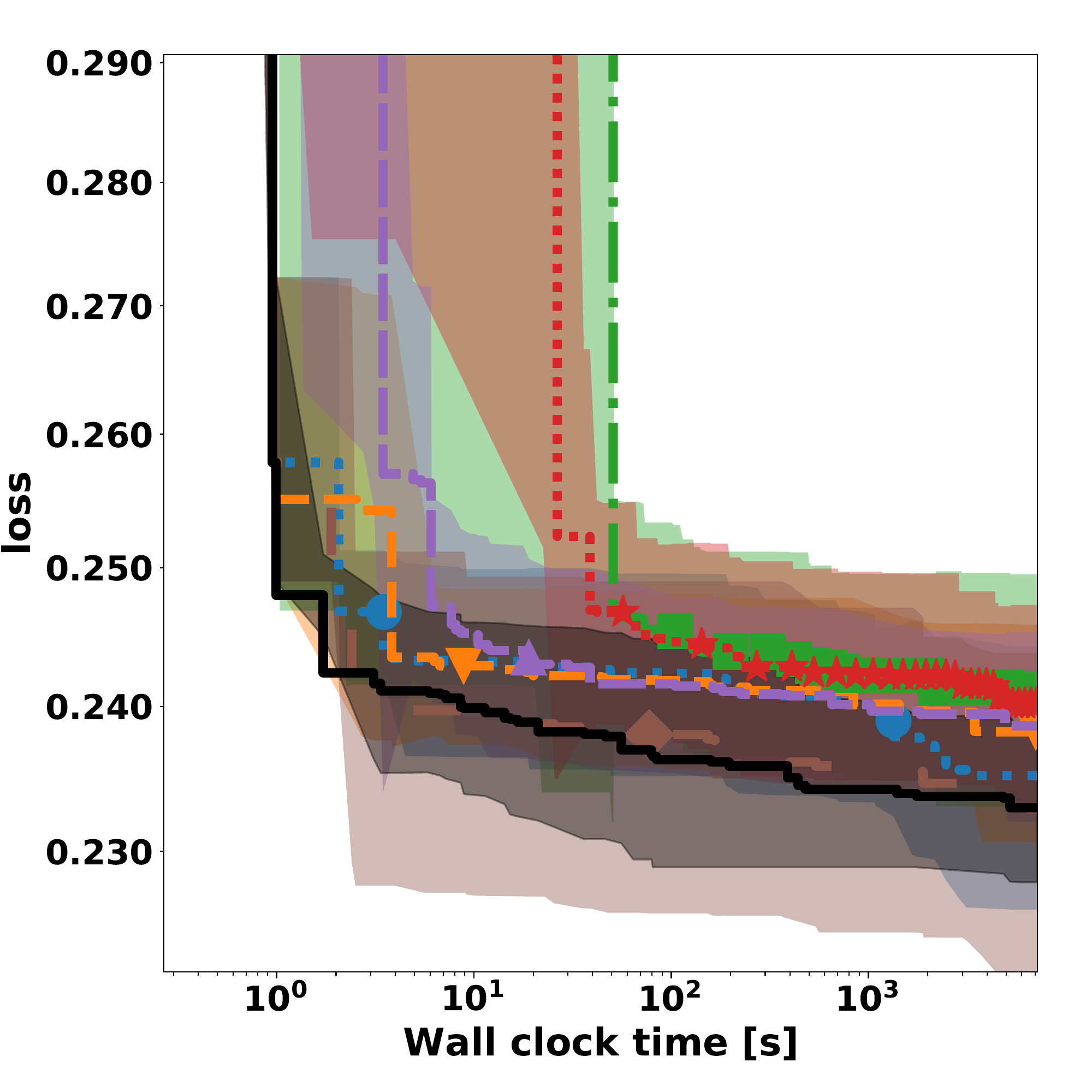}%
\caption{blood-transfusion}%
\end{subfigure}\hfill%
\begin{subfigure}{0.32\columnwidth}
\includegraphics[width=\columnwidth]{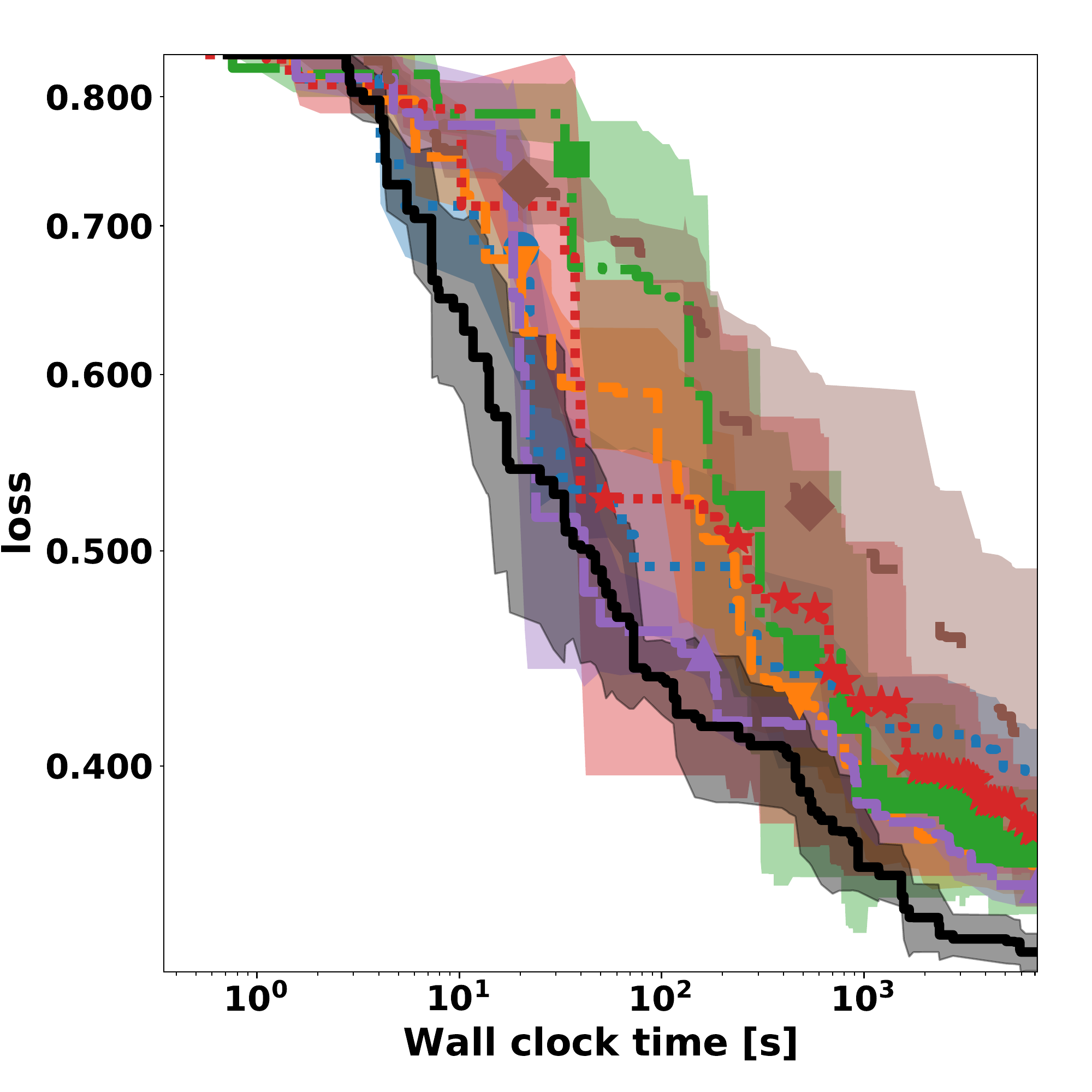}%
\caption{car}%
\end{subfigure}\hfill%
\begin{subfigure}{0.32\columnwidth}
\includegraphics[width=\columnwidth]{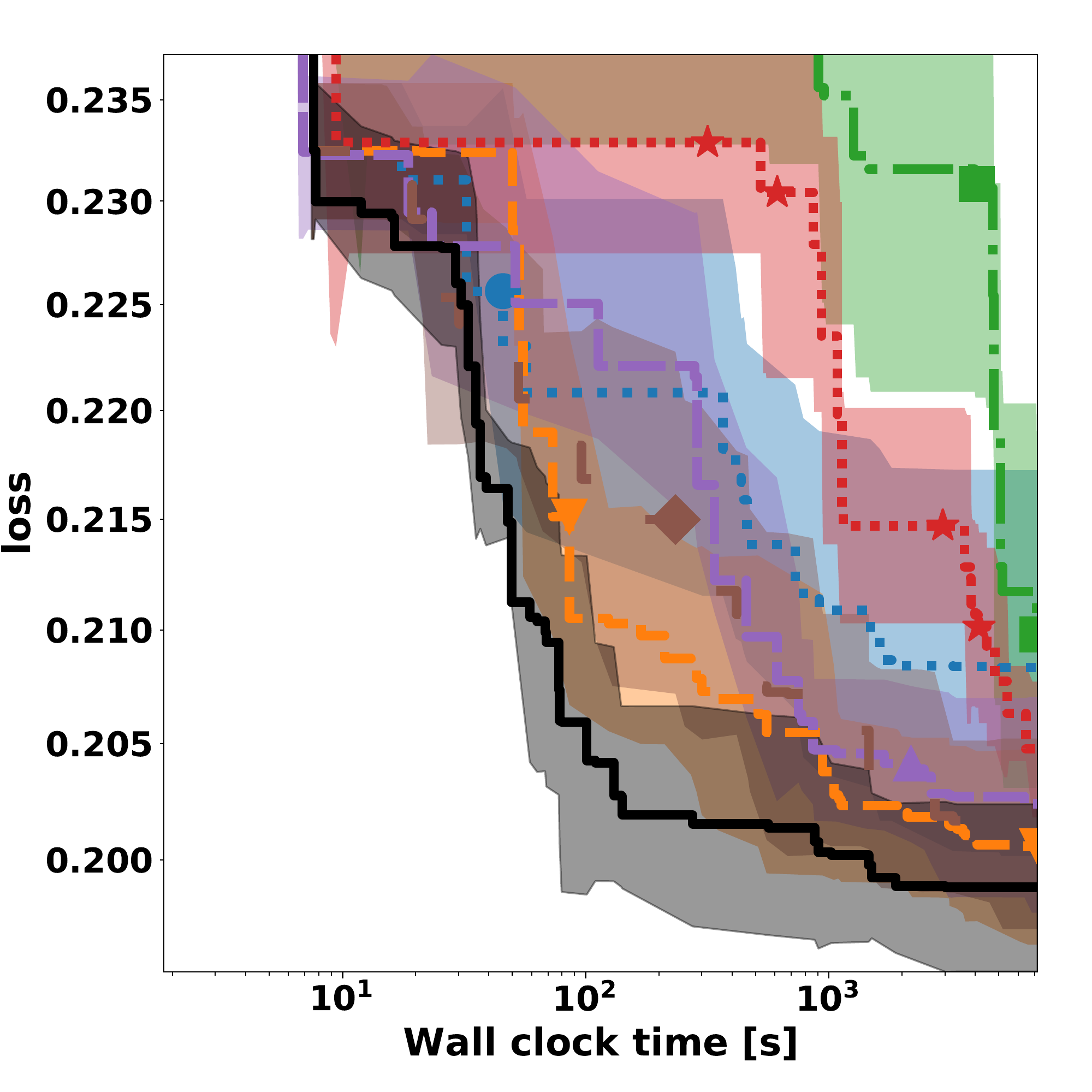}%
\caption{christine}%
\end{subfigure}\hfill
  \begin{subfigure}{0.32\columnwidth}
\includegraphics[width=\columnwidth]{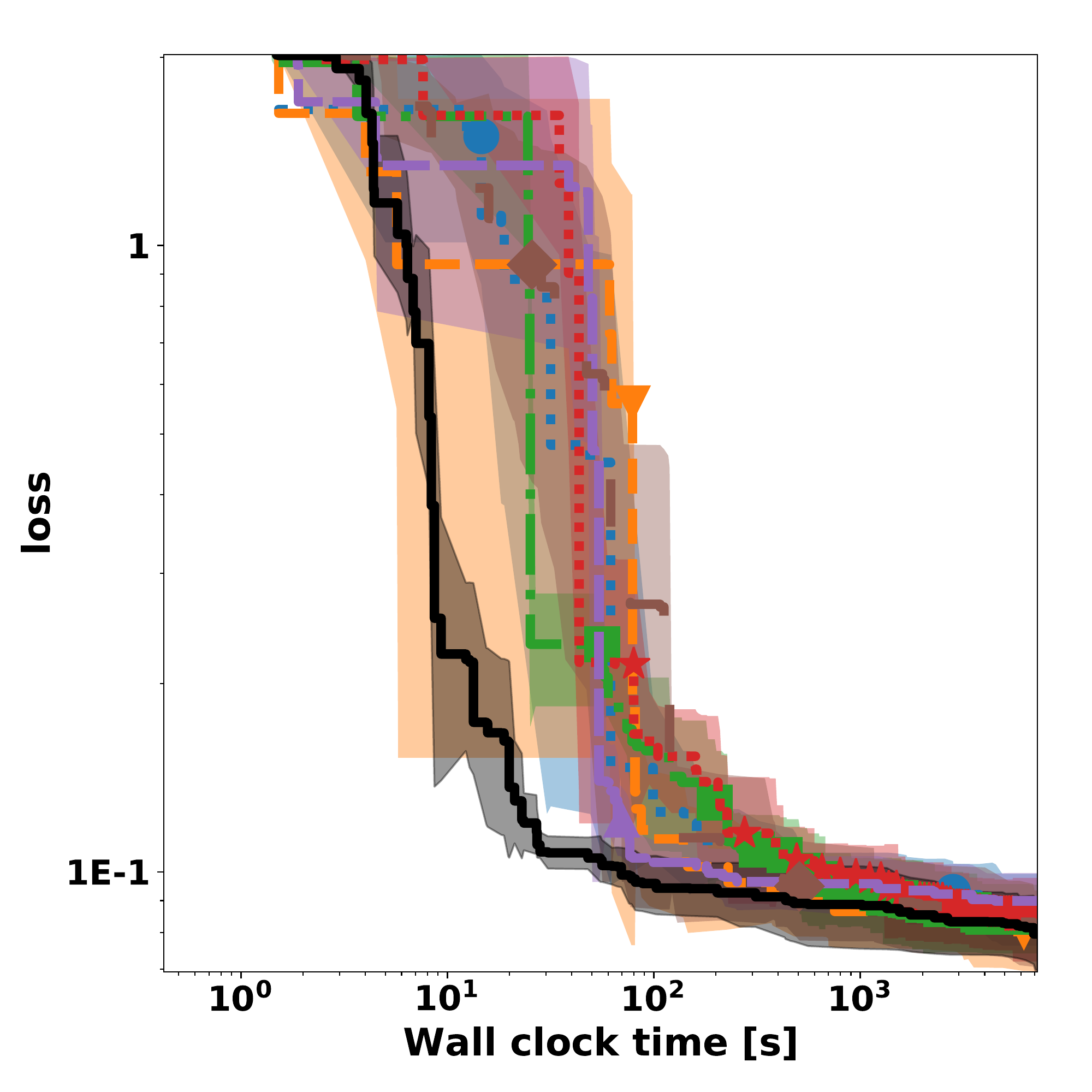}%
\caption{cnae-9}%
\end{subfigure}\hfill%
\begin{subfigure}{0.32\columnwidth}
\includegraphics[width=\columnwidth]{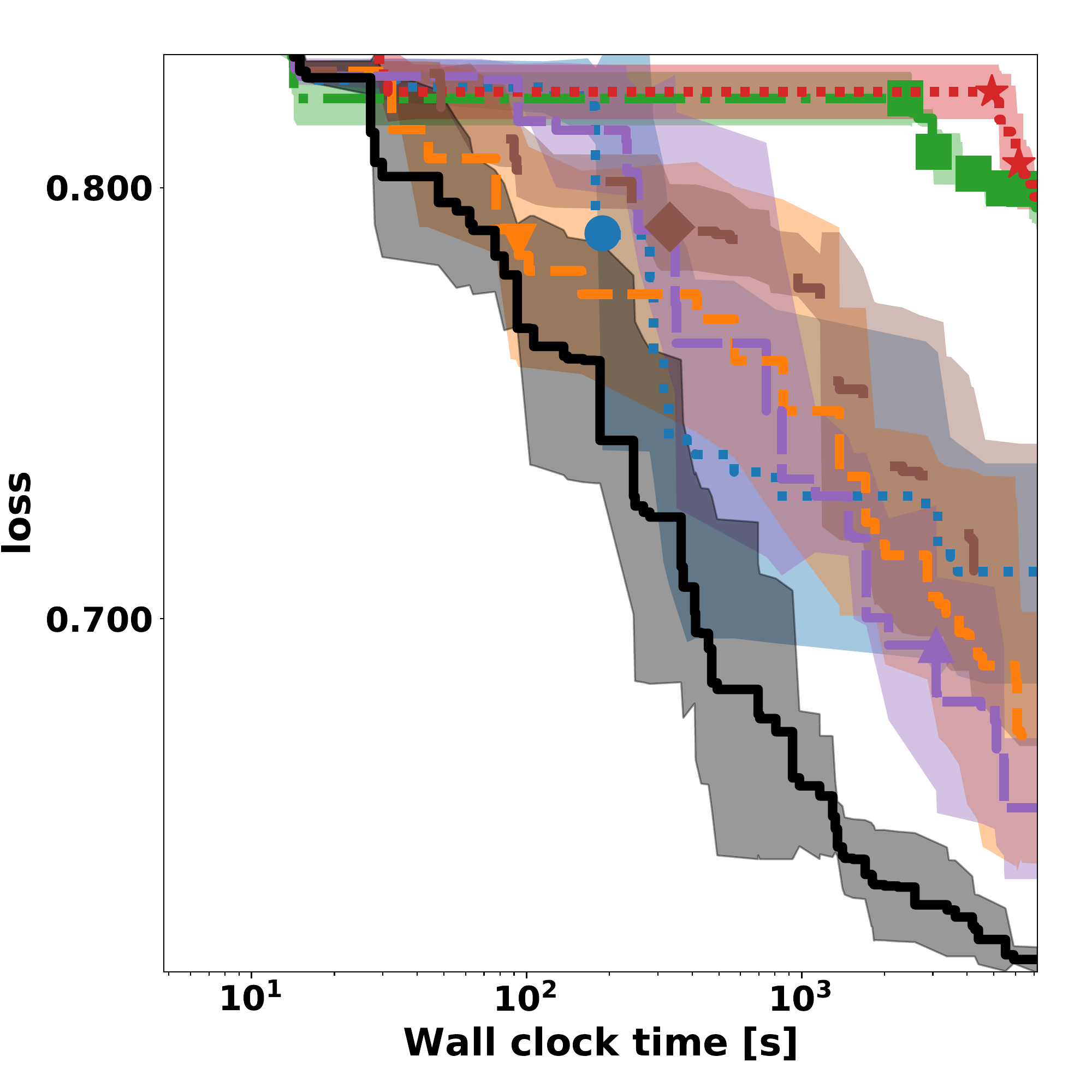}%
\caption{connect-4}%
\end{subfigure}\hfill%
\begin{subfigure}{0.32\columnwidth}
\includegraphics[width=\columnwidth]{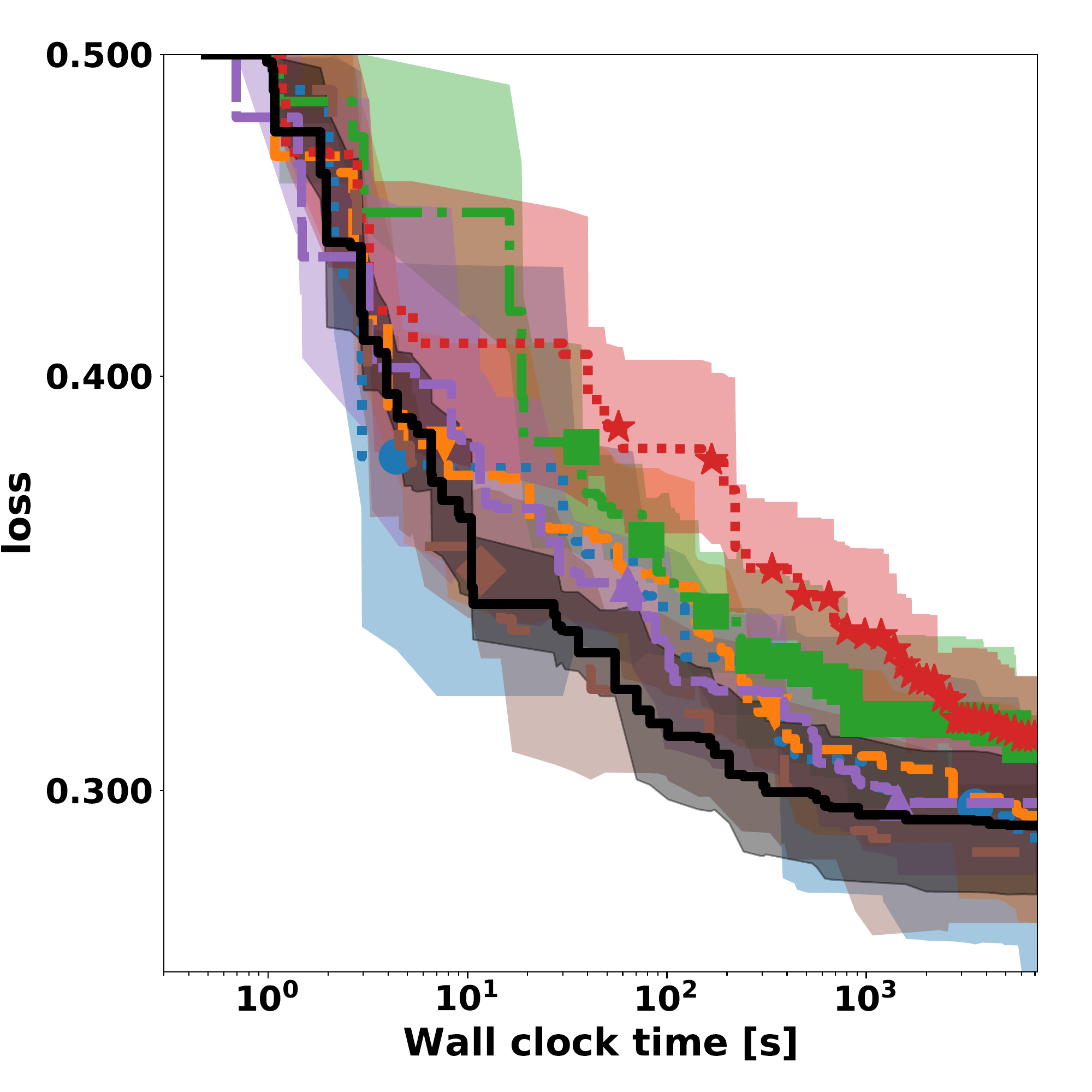}%
\caption{credit-g}%
\end{subfigure}\hfill%
\caption{Optimization performance curve for DNN (pt 1/4)}
\end{figure*}
\begin{figure*}[h]
\ContinuedFloat
  \begin{subfigure}{0.32\columnwidth}
\includegraphics[width=\columnwidth]{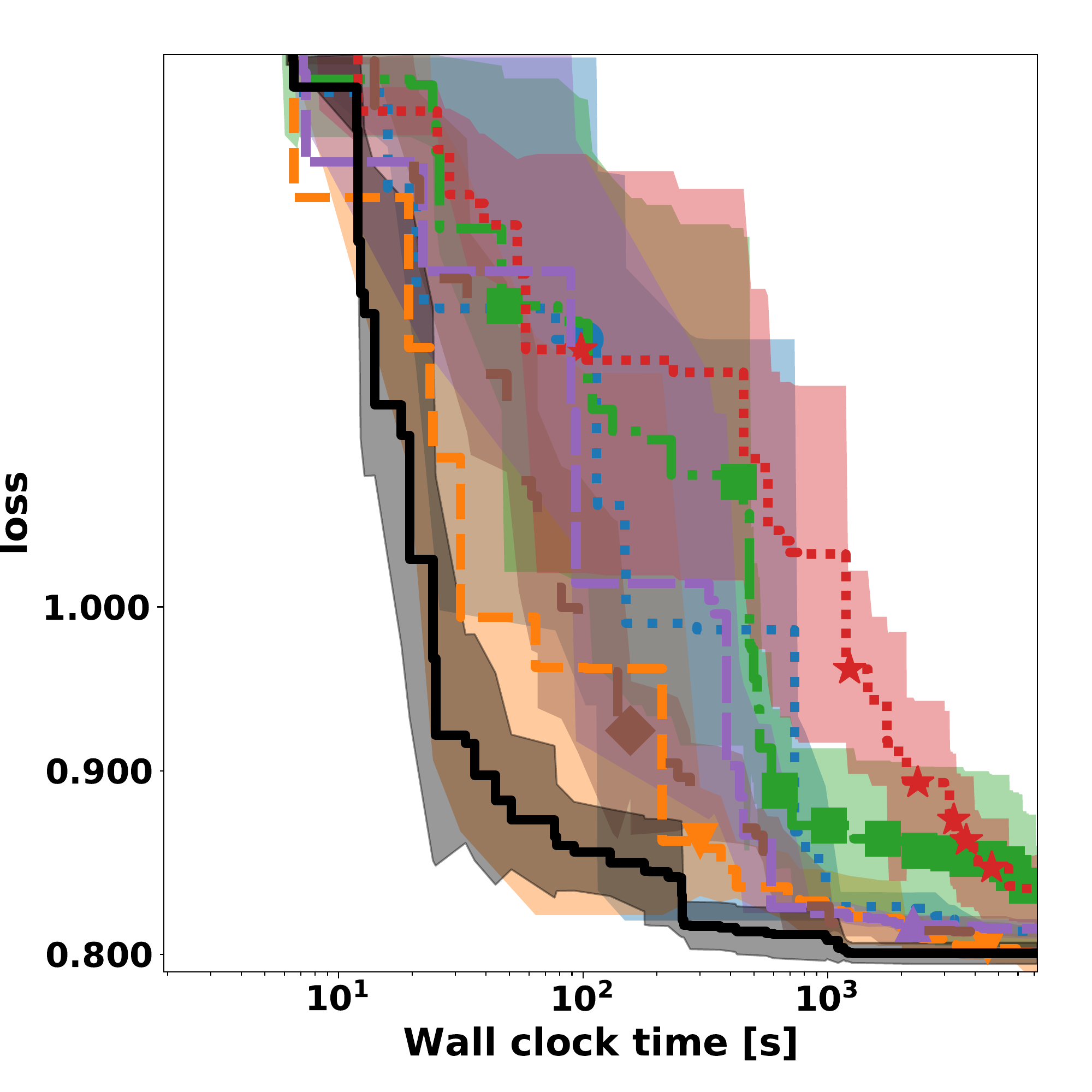}%
\caption{fabert}%
\end{subfigure}\hfill%
\begin{subfigure}{0.32\columnwidth}
\includegraphics[width=\columnwidth]{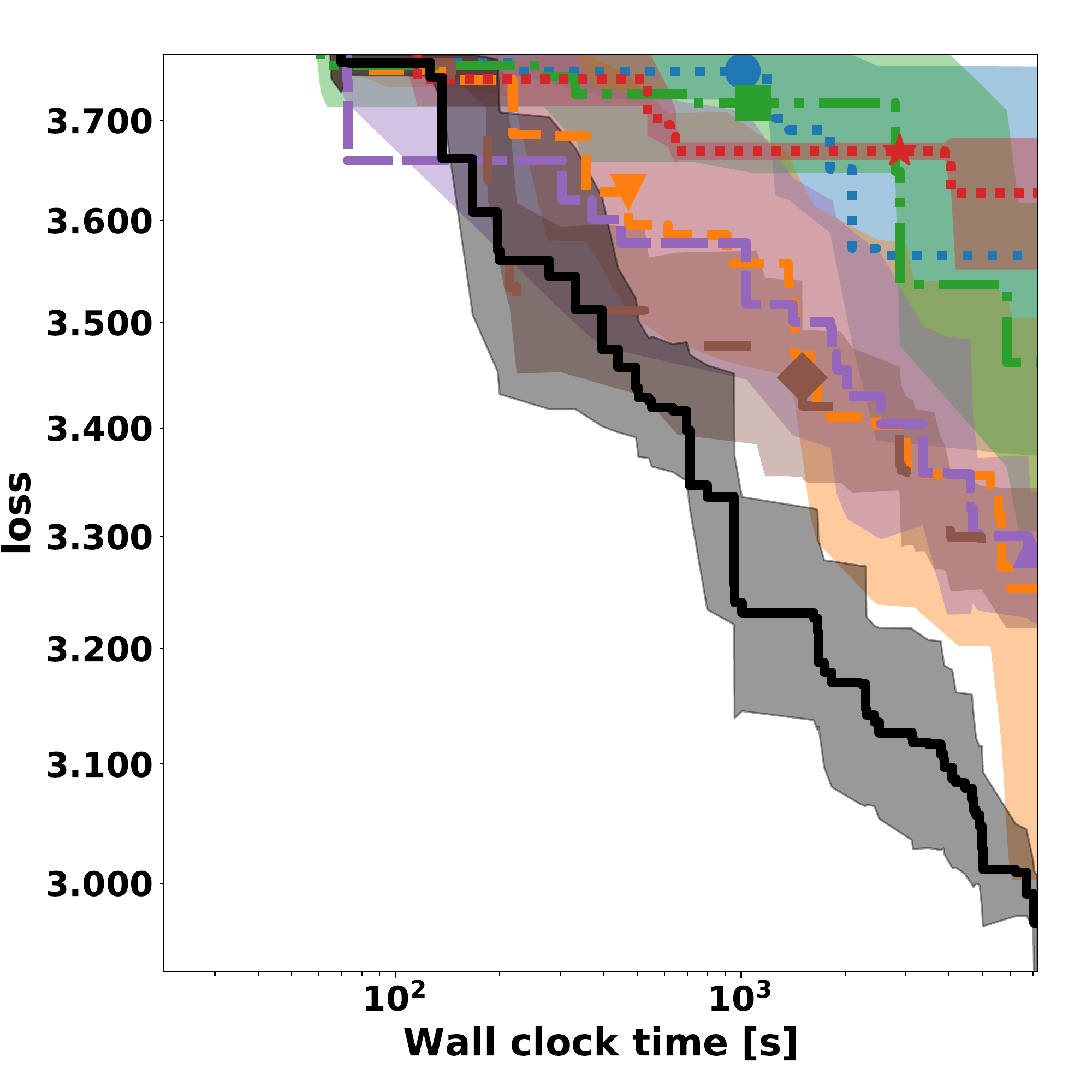}%
\caption{Helena}%
\end{subfigure}\hfill%
  \begin{subfigure}{0.32\columnwidth}
\includegraphics[width=\columnwidth]{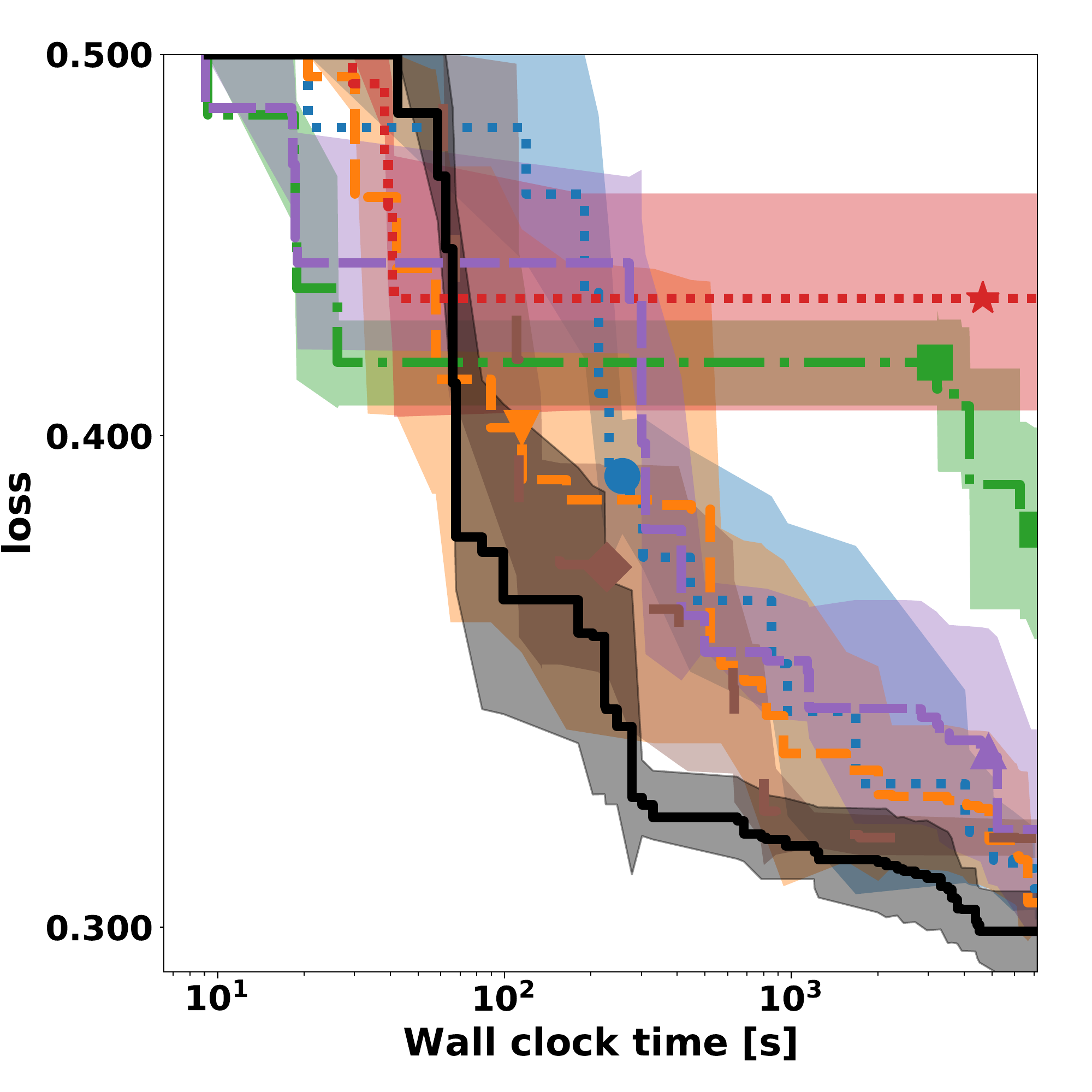}%
\caption{higgs}%
\end{subfigure}\hfill%
\begin{subfigure}{0.32\columnwidth}
\includegraphics[width=\columnwidth]{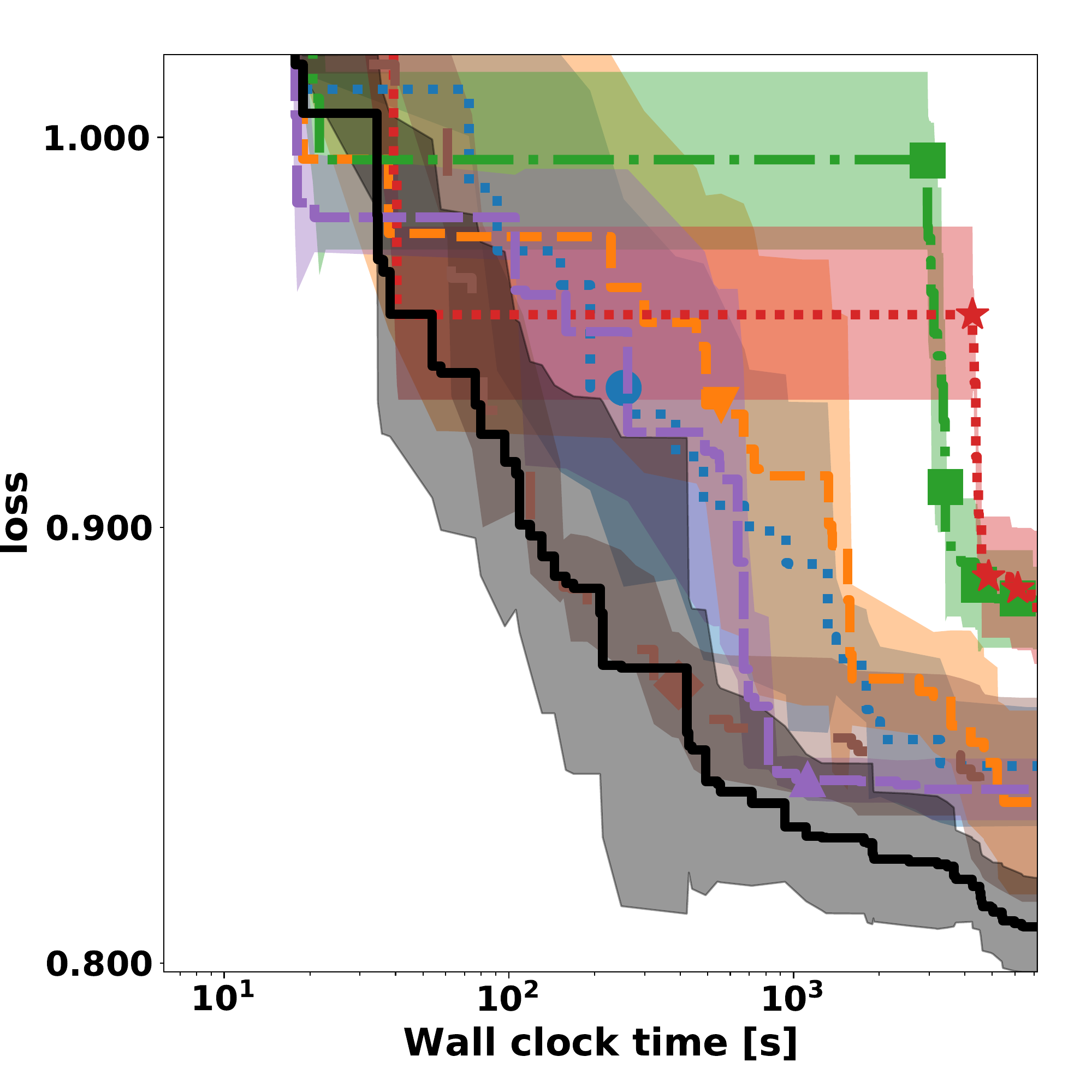}%
\caption{Jannis}%
\end{subfigure}\hfill%
\begin{subfigure}{0.32\columnwidth}
\includegraphics[width=\columnwidth]{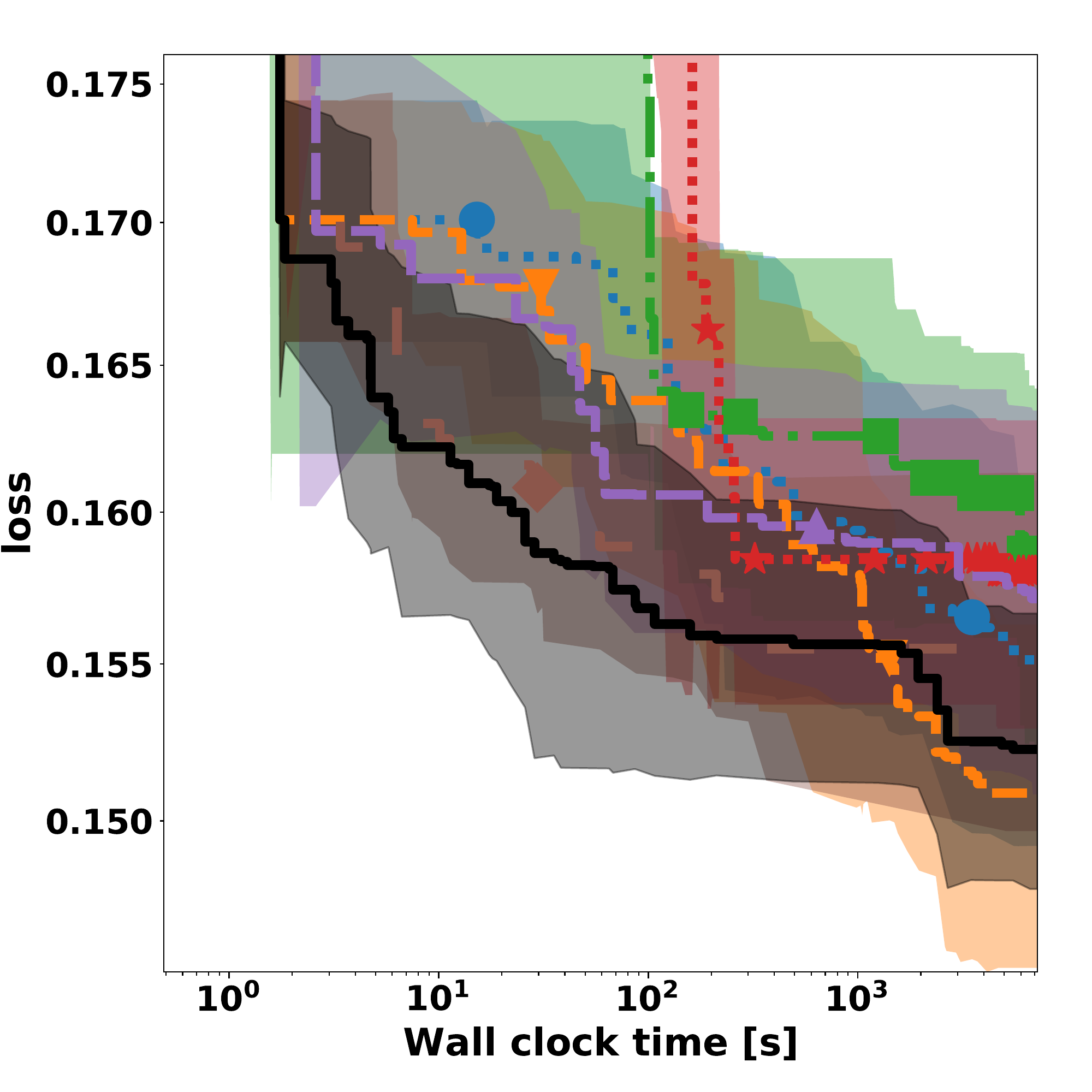}%
\caption{jasmine}%
\end{subfigure}\hfill%
  \begin{subfigure}{0.32\columnwidth}
\includegraphics[width=\columnwidth]{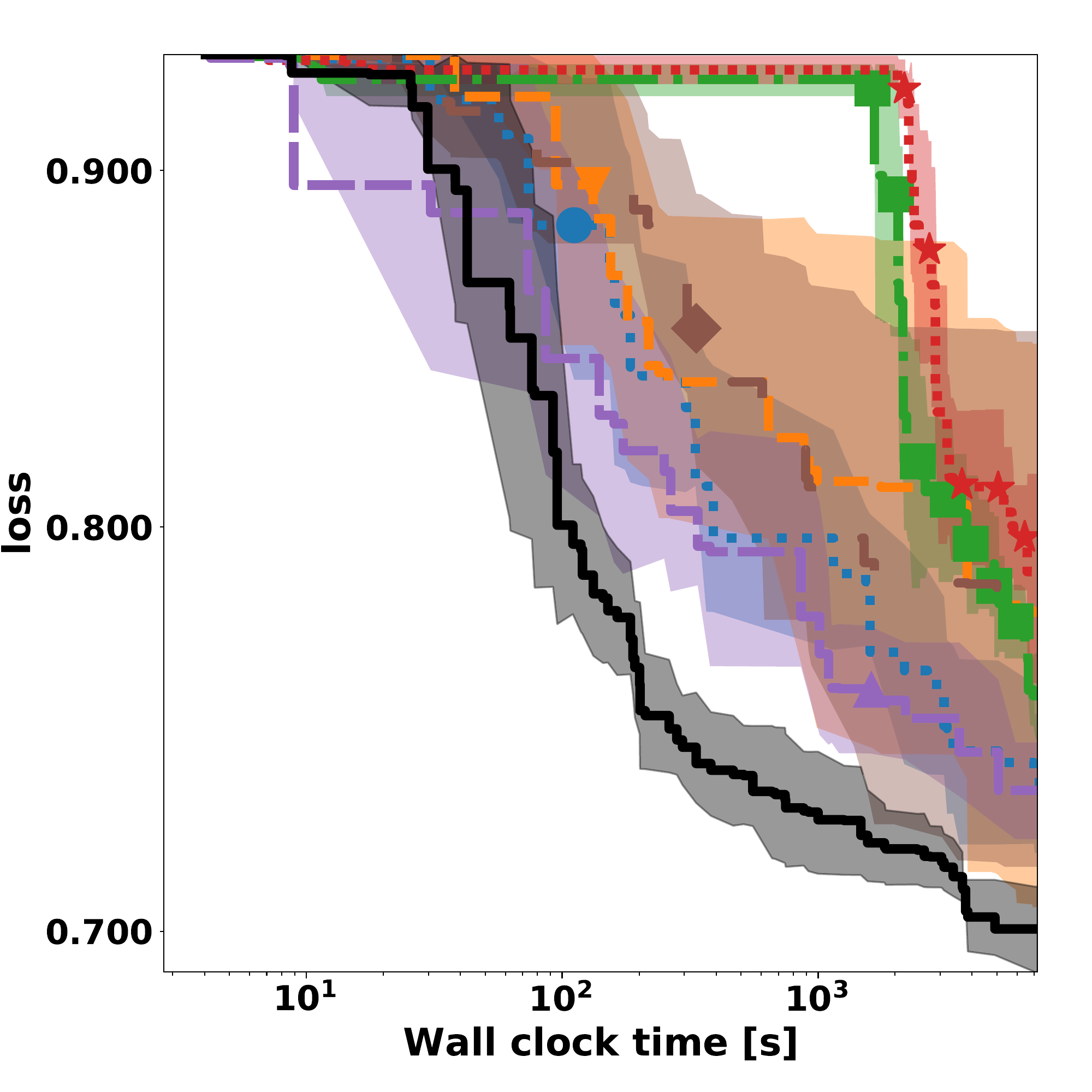}%
\caption{jungle\_chess\_2pcs\_raw\_endgame}%
\end{subfigure}\hfill%
\begin{subfigure}{0.32\columnwidth}
\includegraphics[width=\columnwidth]{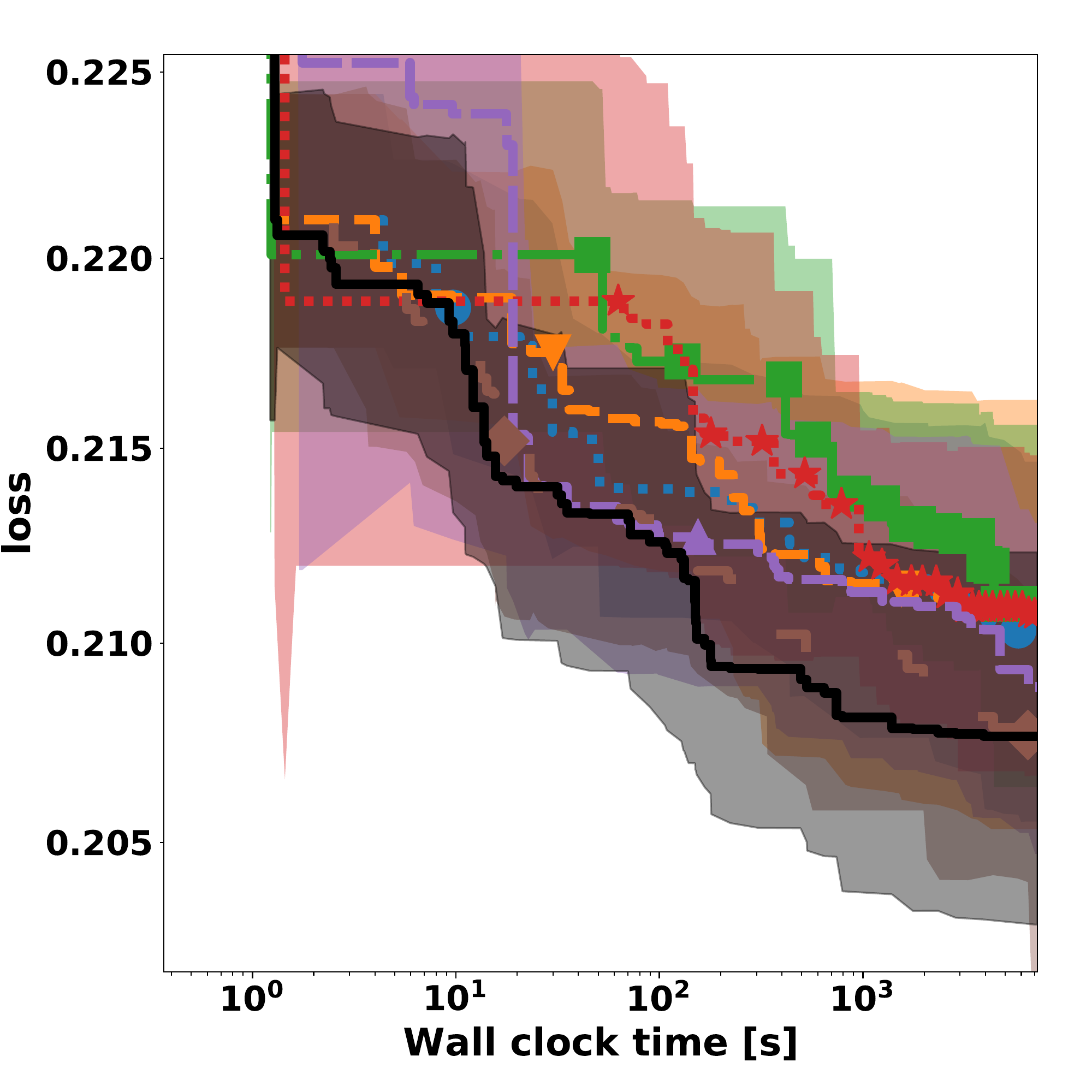}%
\caption{kc1}%
\end{subfigure}\hfill%
\begin{subfigure}{0.32\columnwidth}
\includegraphics[width=\columnwidth]{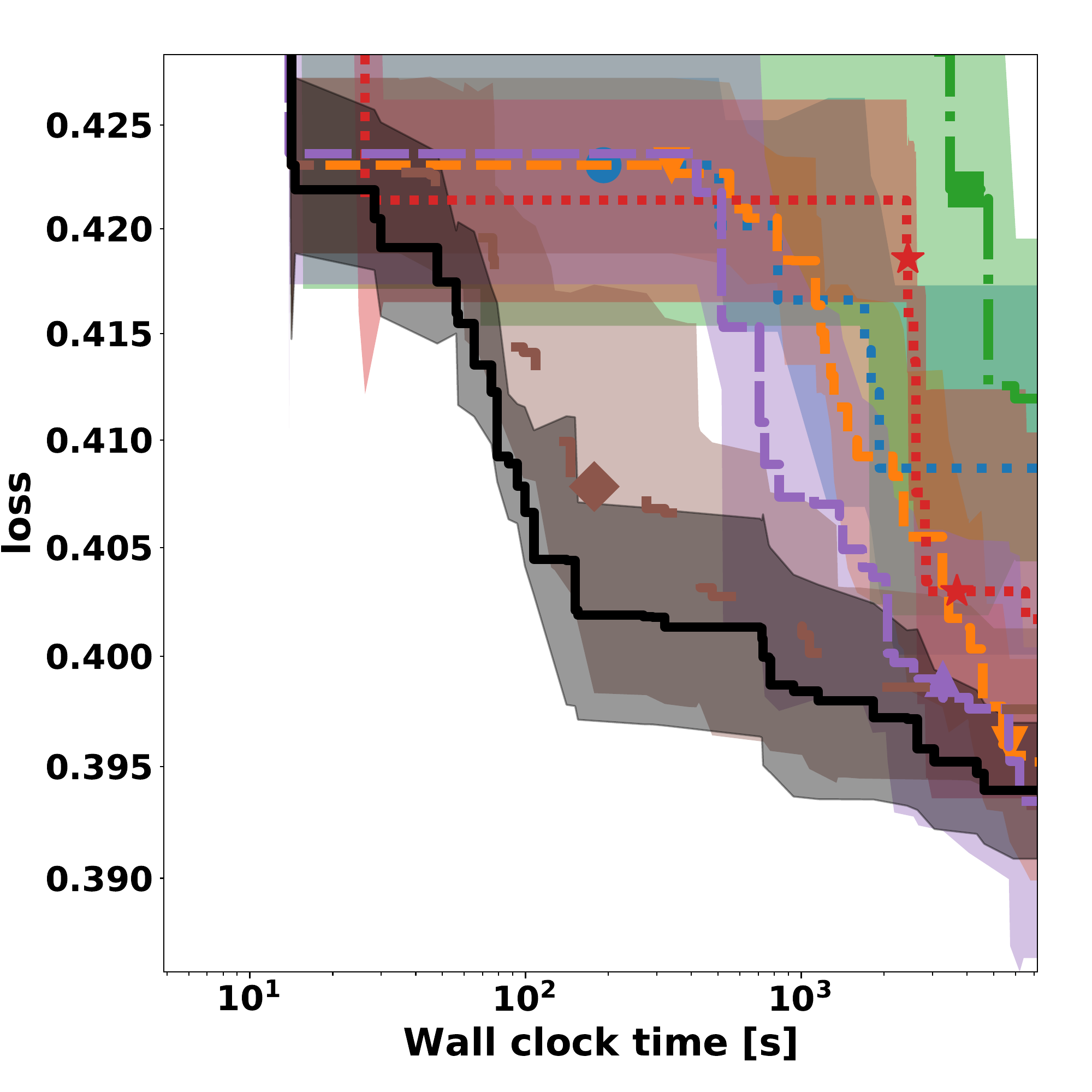}%
\caption{KDDCup09\_appetency}%
\end{subfigure}\hfill%
  \begin{subfigure}{0.32\columnwidth}
\includegraphics[width=\columnwidth]{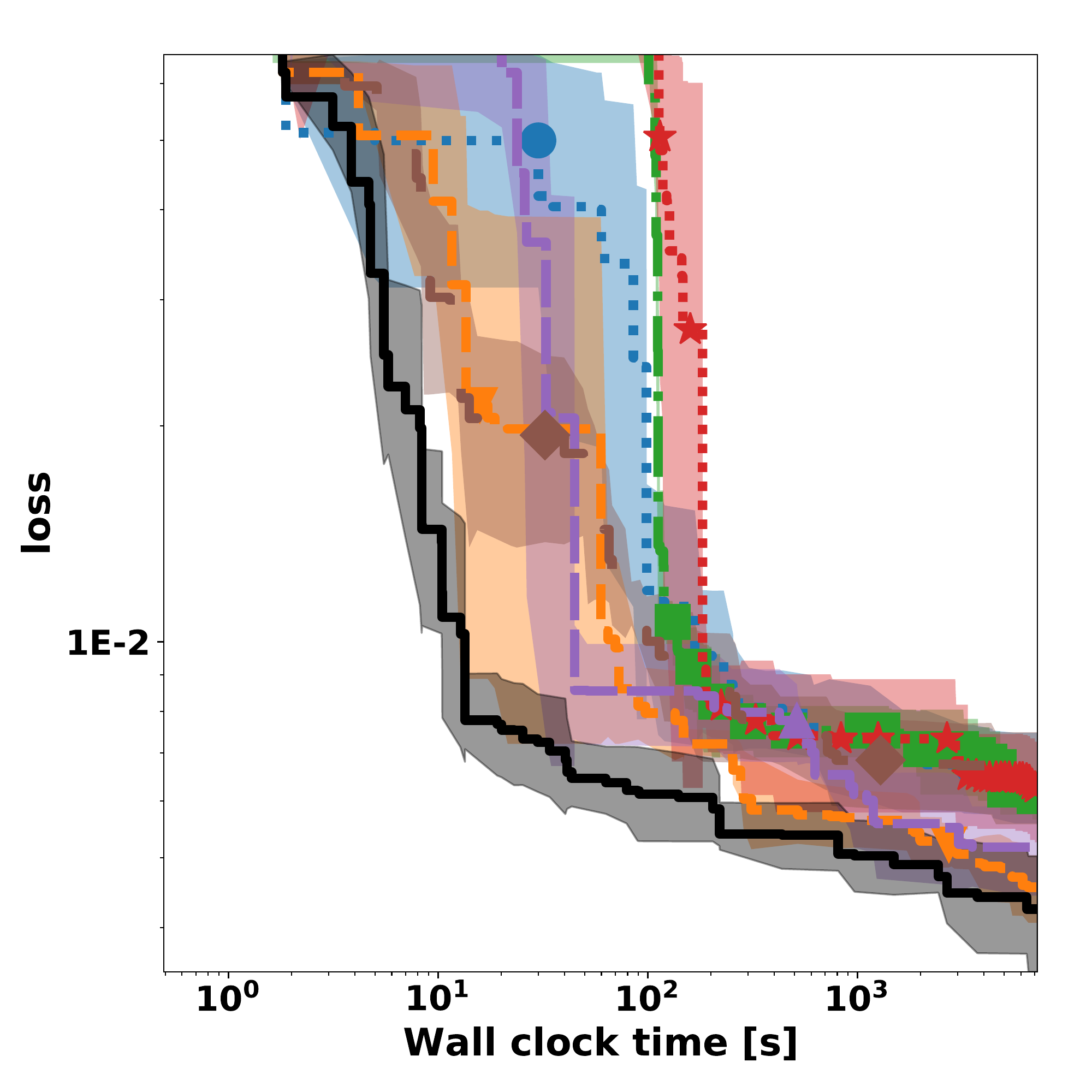}%
\caption{kr-vs-kp}%
\end{subfigure}\hfill%
\begin{subfigure}{0.32\columnwidth}
\includegraphics[width=\columnwidth]{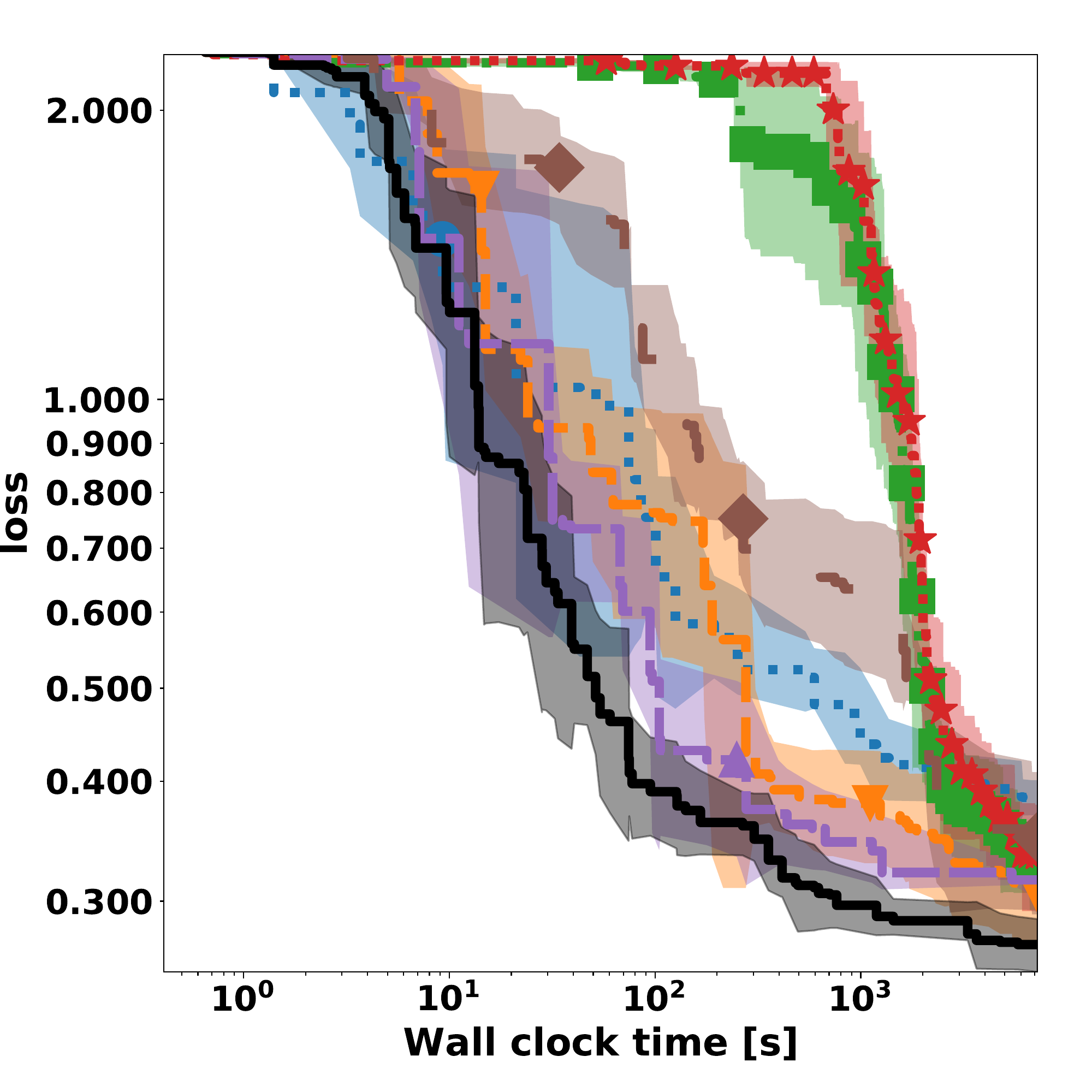}%
\caption{mfeat-factors}%
\end{subfigure}\hfill%
\begin{subfigure}{0.32\columnwidth}
\includegraphics[width=\columnwidth]{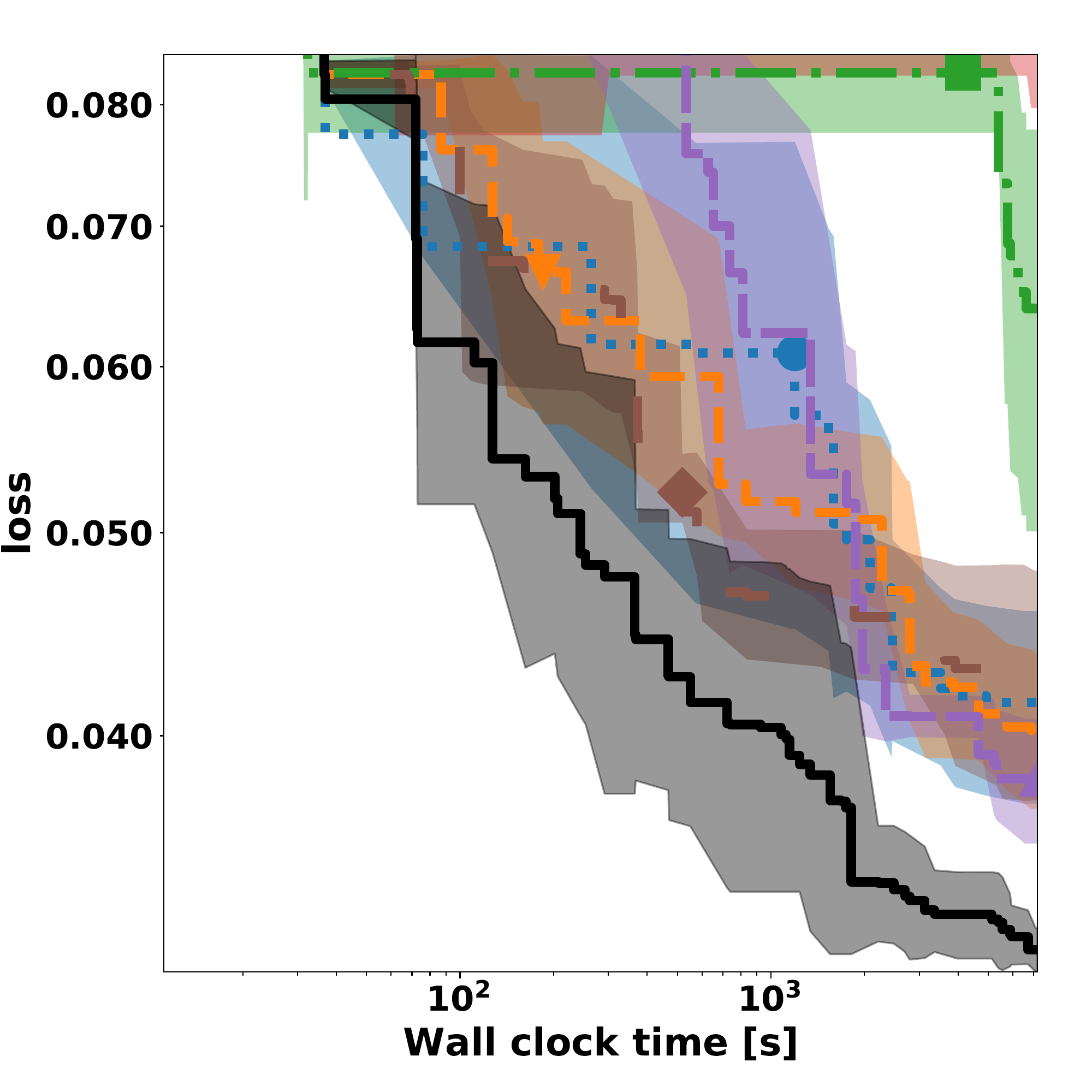}%
\caption{MiniBooNE}%
\end{subfigure}\hfill%
\begin{subfigure}{0.32\columnwidth}
\includegraphics[width=\columnwidth]{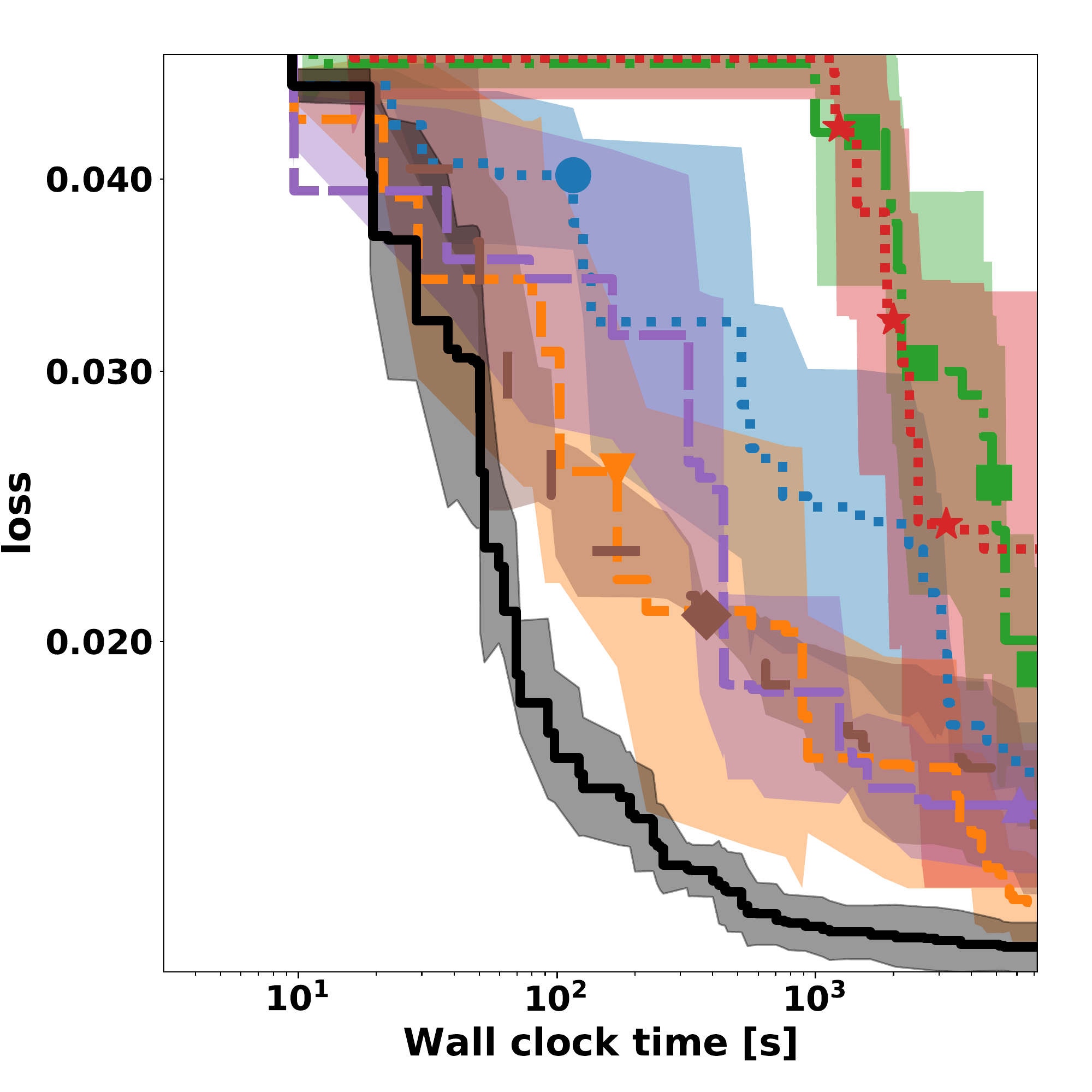}%
\caption{nomao}%
\end{subfigure}\hfill%
\caption{Optimization performance curve for DNN (pt 2/4)}
\end{figure*}
\begin{figure*}[h]
\ContinuedFloat
\begin{subfigure}{0.32\columnwidth}
\includegraphics[width=\columnwidth]{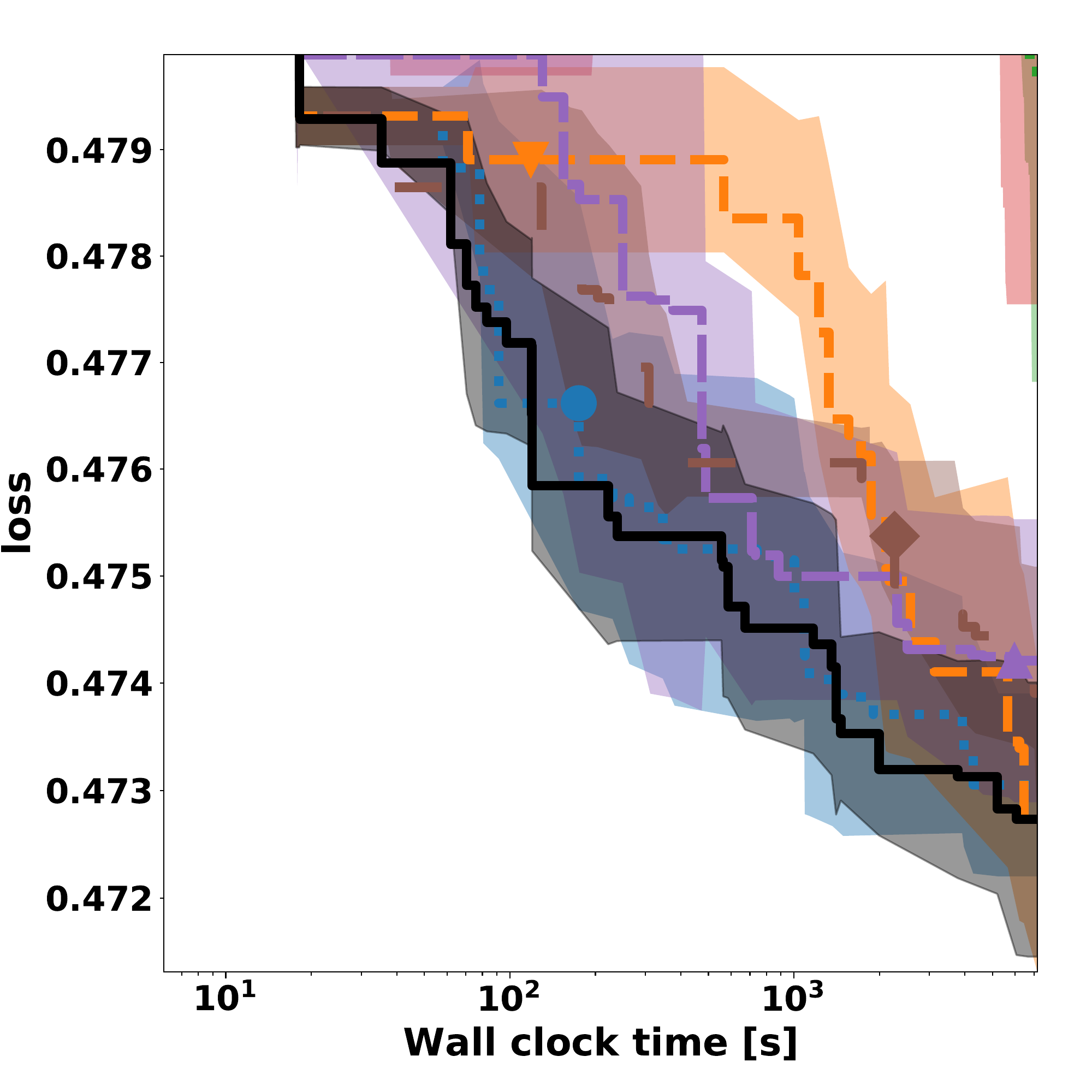}%
\caption{numerai28.6}%
\end{subfigure}\hfill%
  \begin{subfigure}{0.32\columnwidth}
\includegraphics[width=\columnwidth]{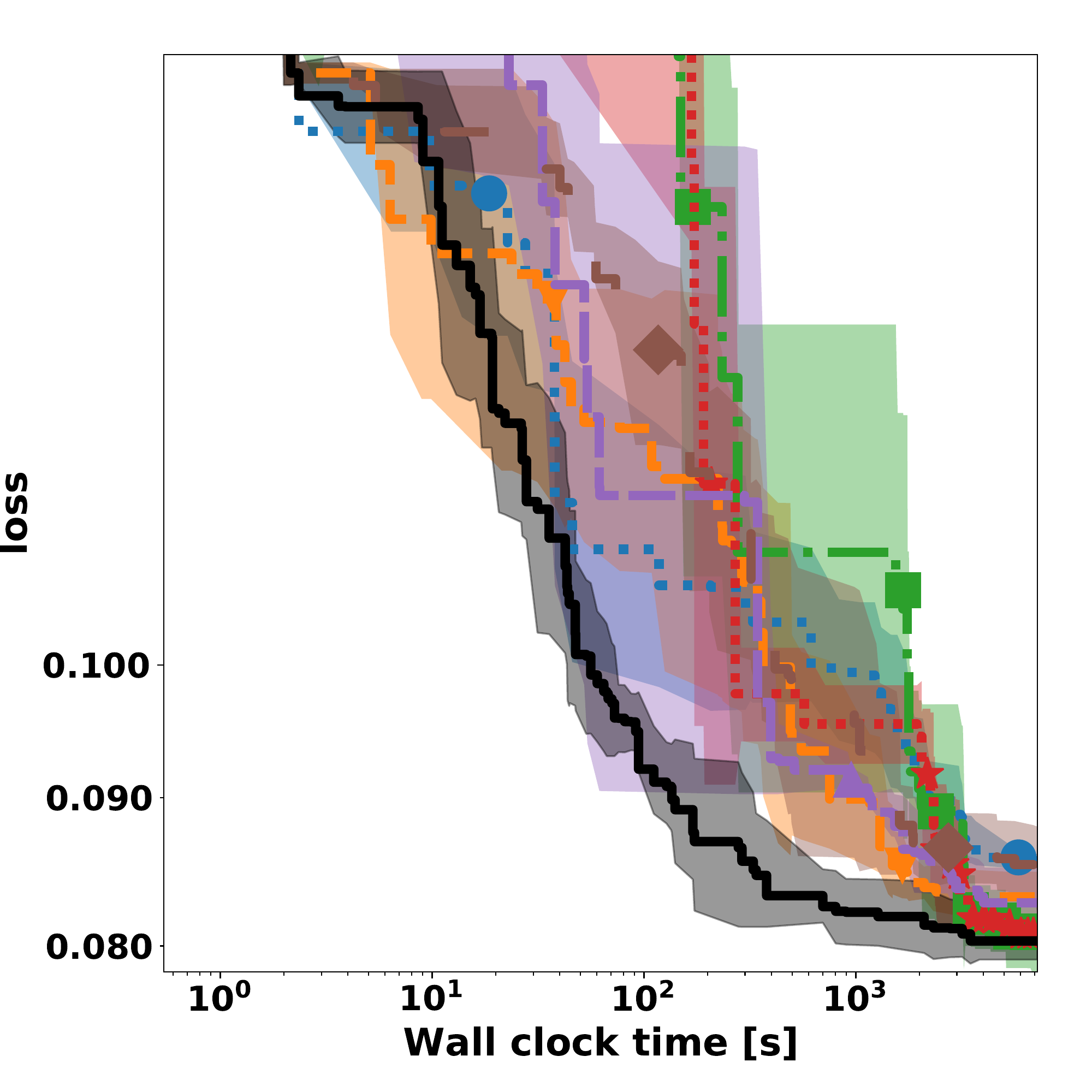}%
\caption{phoneme}%
\end{subfigure}\hfill%
  \begin{subfigure}{0.32\columnwidth}
\includegraphics[width=\columnwidth]{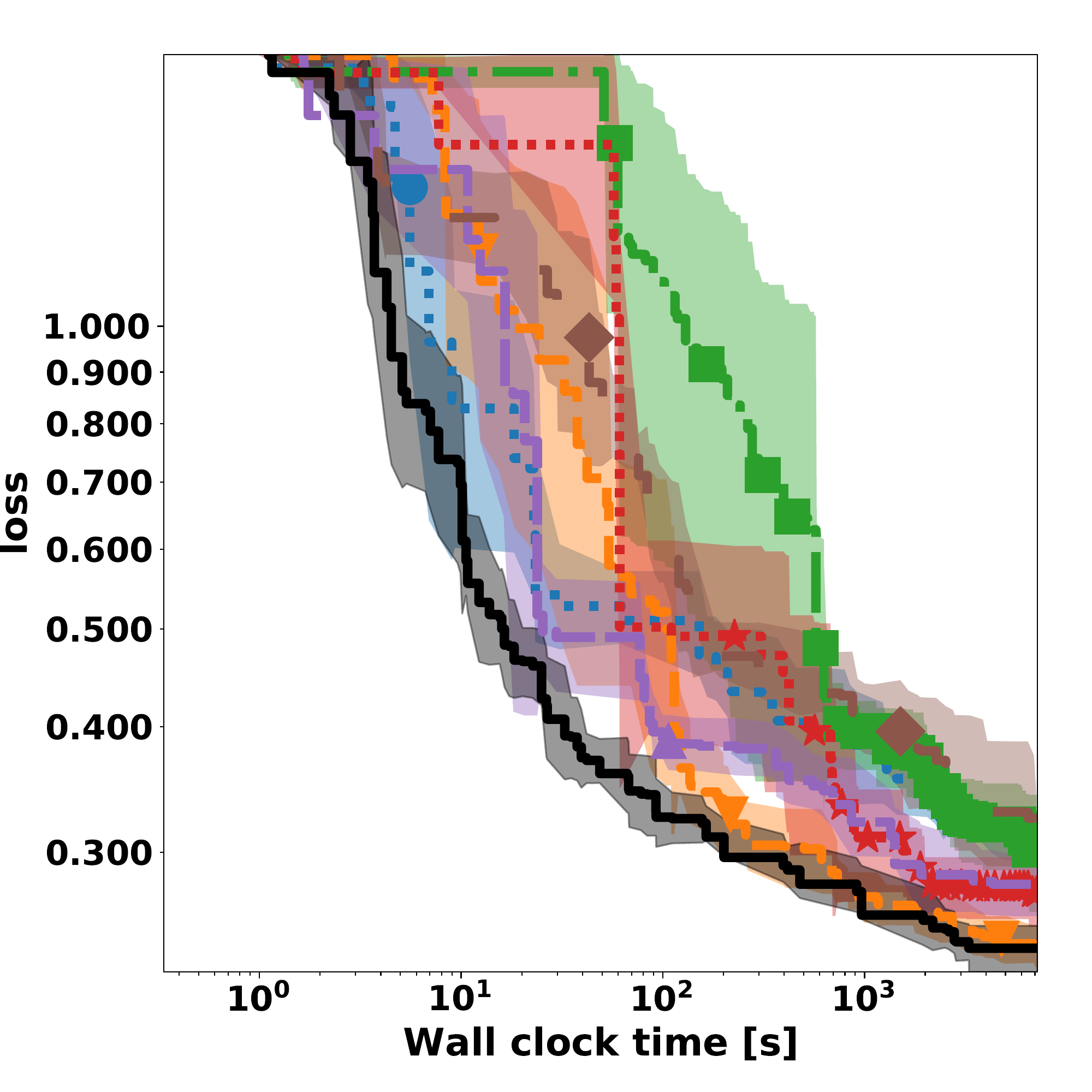}%
\caption{segment}%
\end{subfigure}\hfill%
\begin{subfigure}{0.32\columnwidth}
\includegraphics[width=\columnwidth]{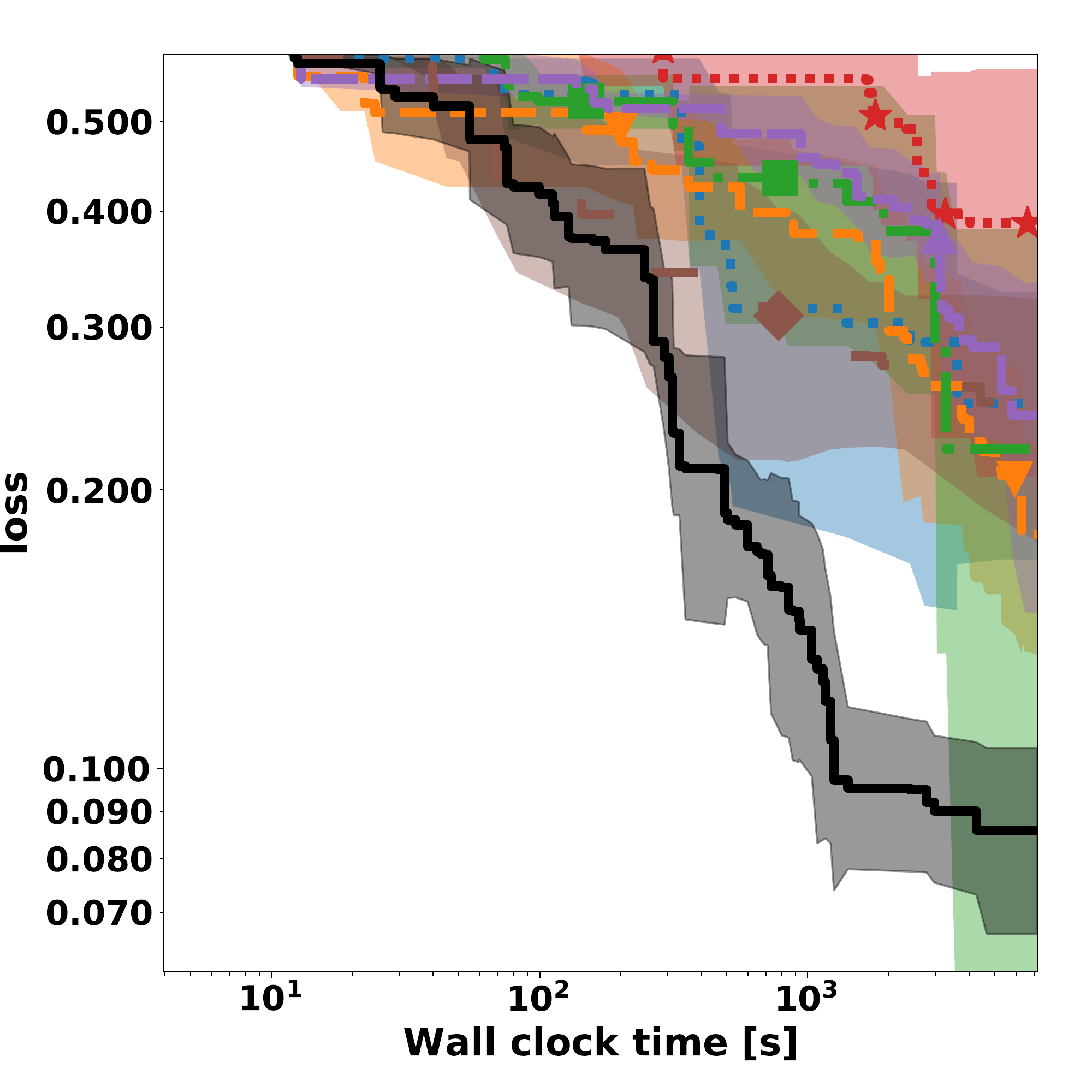}%
\caption{shuttle}%
\end{subfigure}\hfill%
\begin{subfigure}{0.32\columnwidth}
\includegraphics[width=\columnwidth]{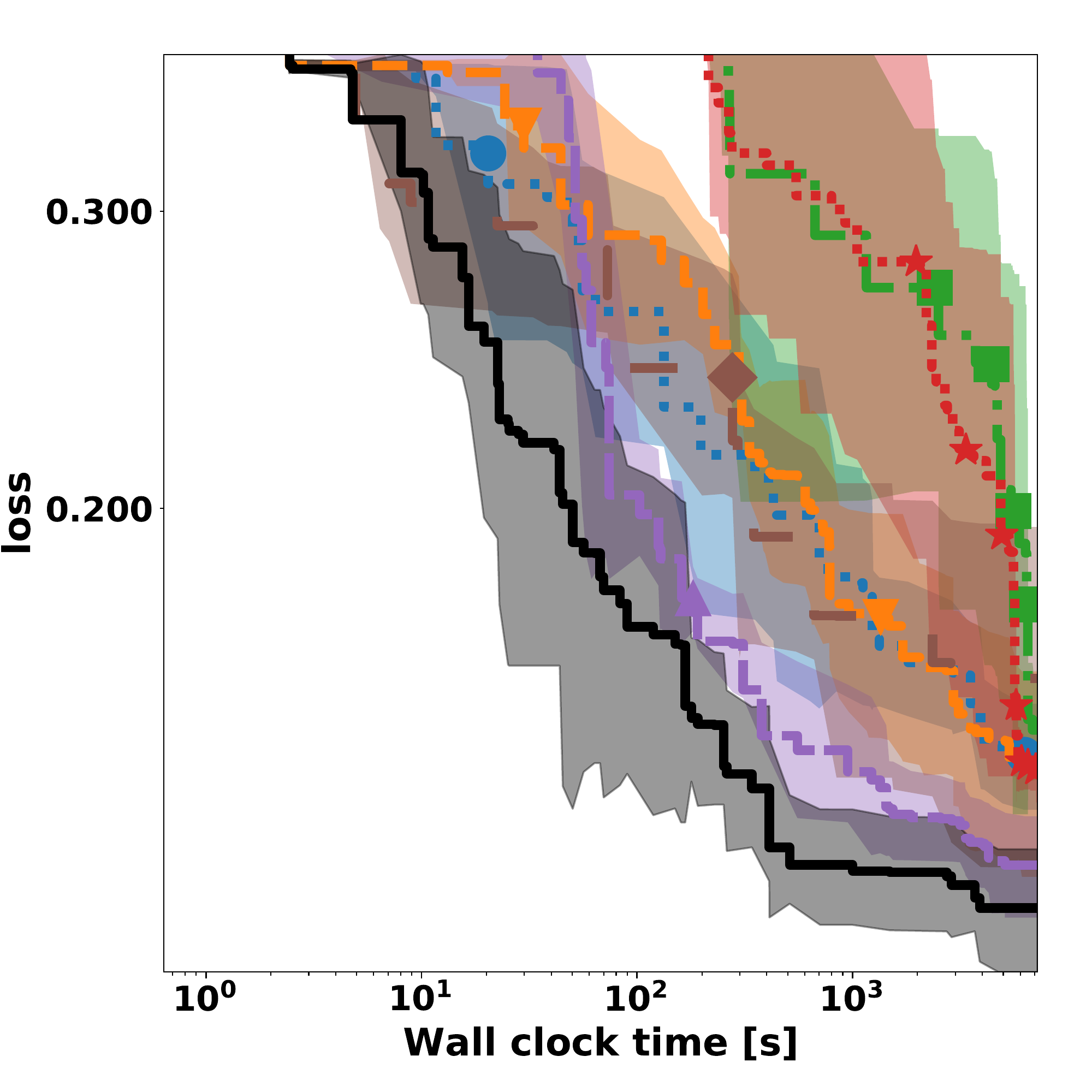}%
\caption{sylvine}%
\end{subfigure}\hfill%
  \begin{subfigure}{0.32\columnwidth}
\includegraphics[width=\columnwidth]{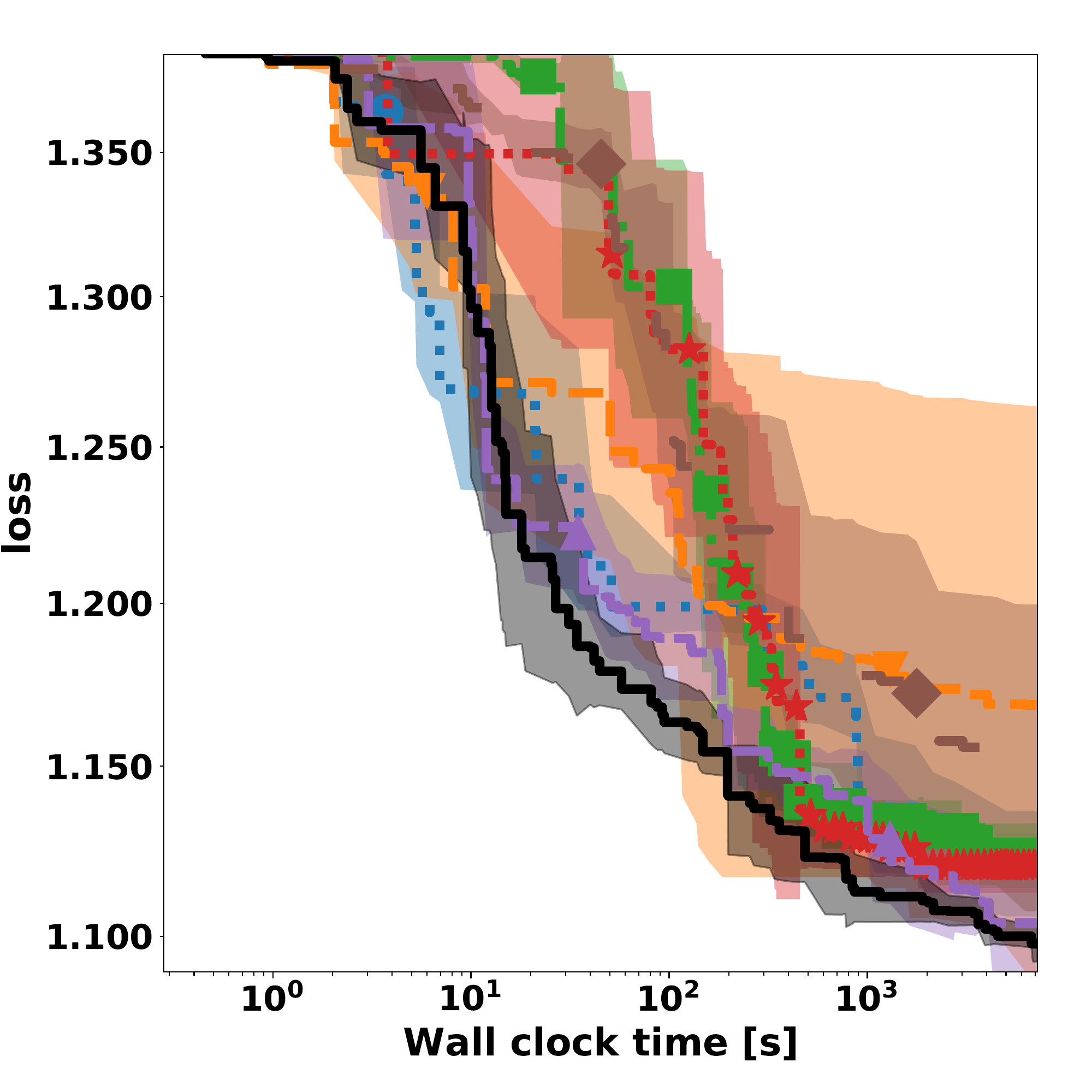}%
\caption{vehicle}%
\end{subfigure}\hfill%
\begin{subfigure}{0.32\columnwidth}
\includegraphics[width=\columnwidth]{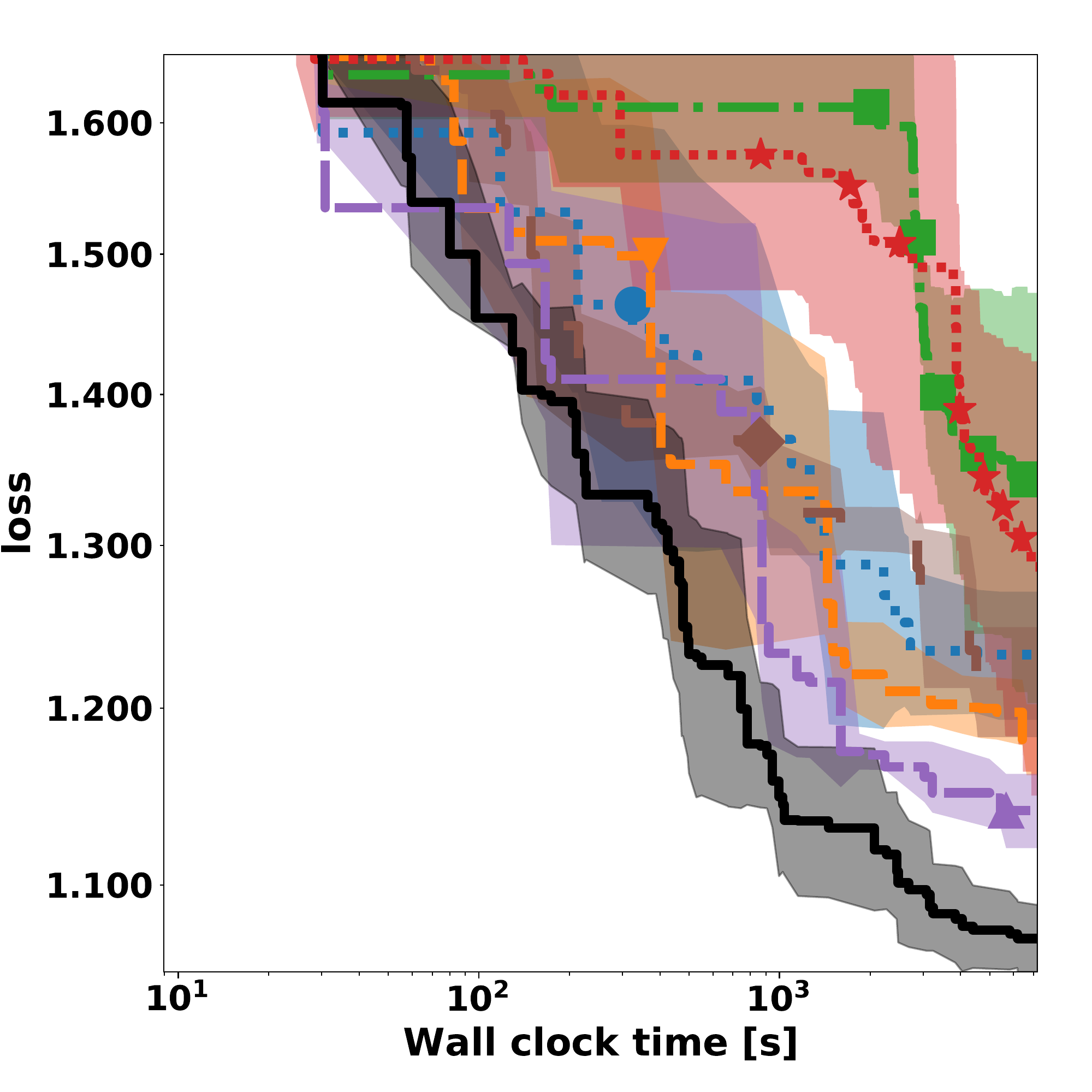}%
\caption{volkert}%
\end{subfigure}\hfill%
  \begin{subfigure}{0.32\columnwidth}
\includegraphics[width=\columnwidth]{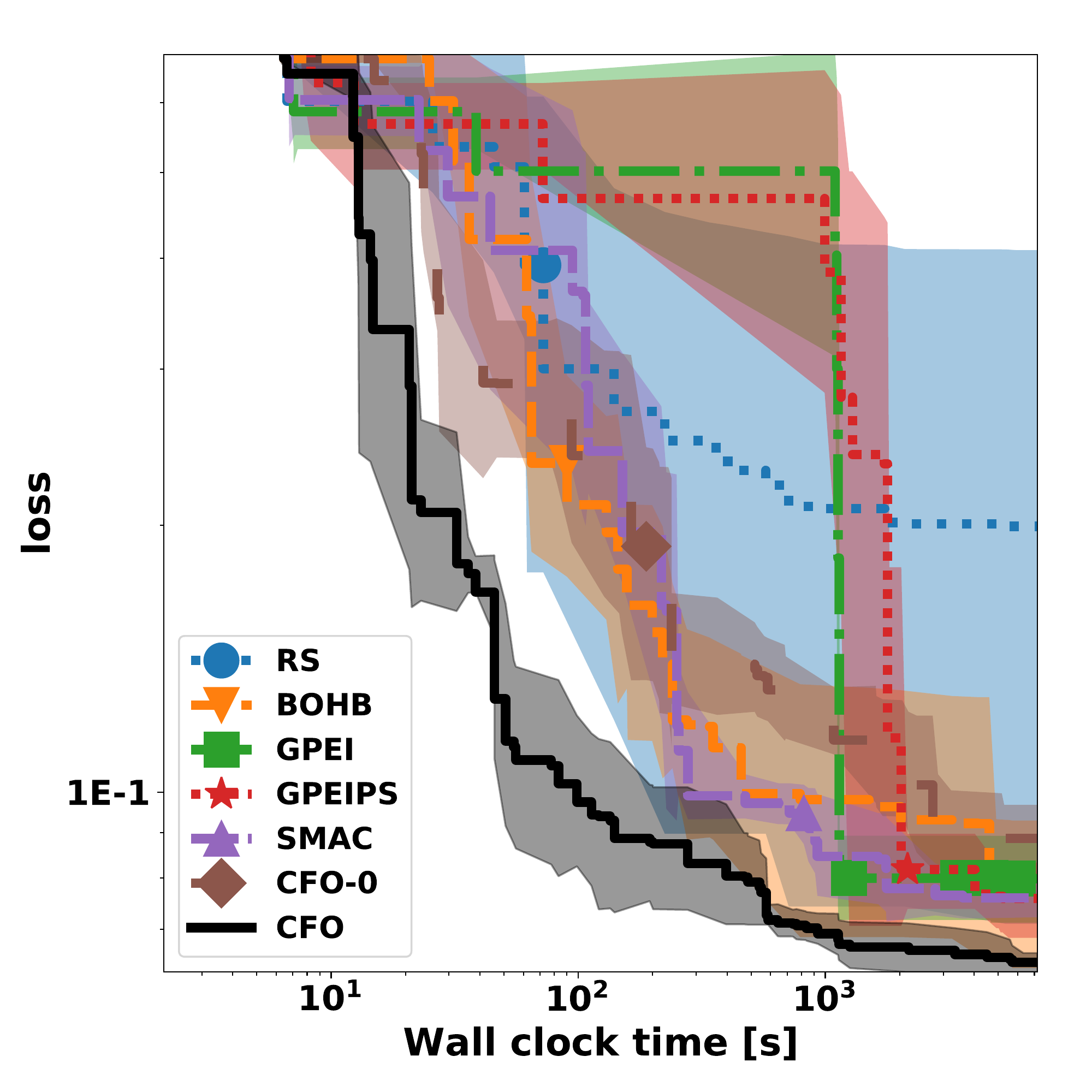}%
\caption{2dplanes}%
\end{subfigure}\hfill%
\begin{subfigure}{0.32\columnwidth}
\includegraphics[width=\columnwidth]{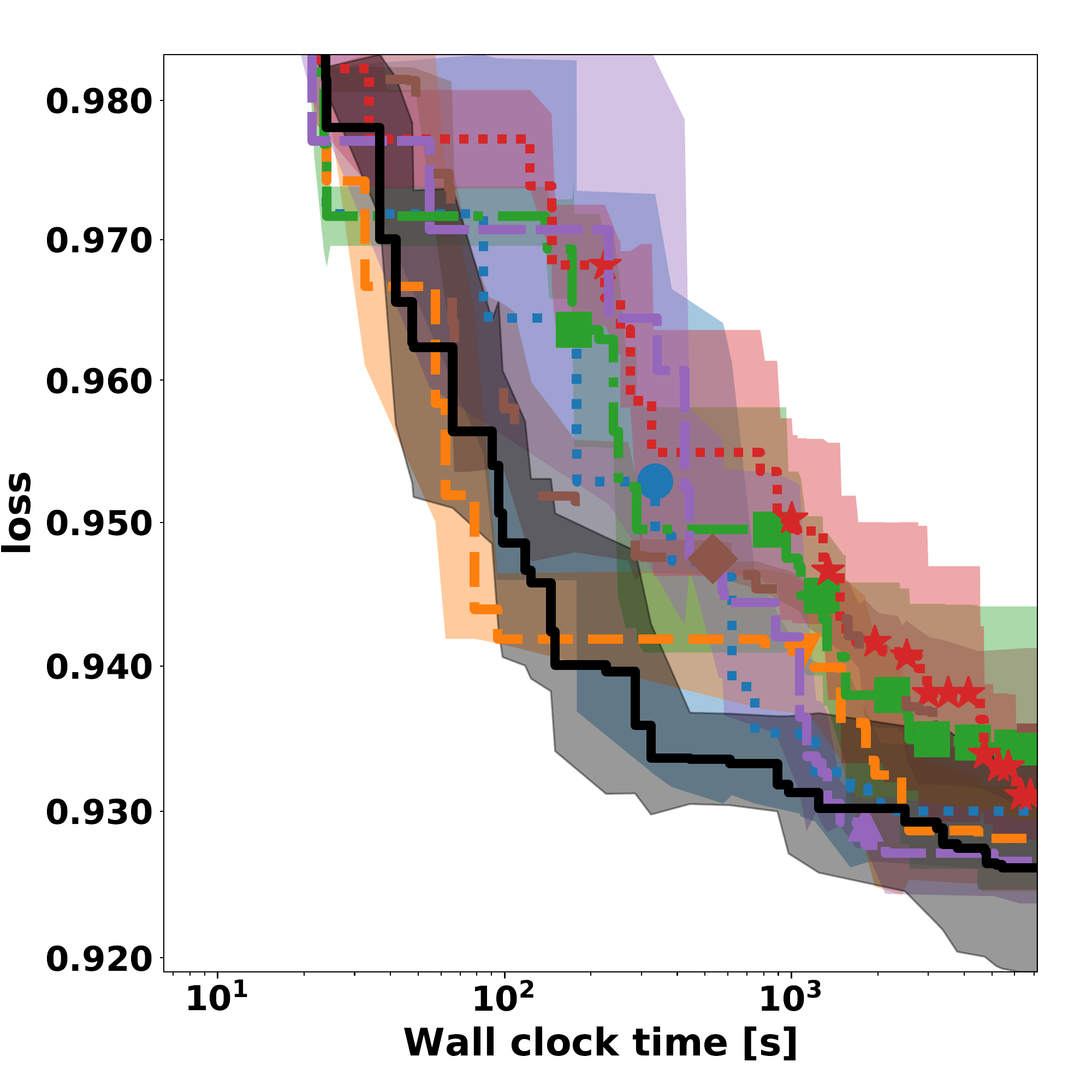}%
\caption{bng\_breastTumor}%
\end{subfigure}\hfill%
\caption{Optimization performance curve for DNN (pt 3/4)}
\end{figure*}
\begin{figure*}[h]
\ContinuedFloat
\begin{subfigure}{0.32\columnwidth}
\includegraphics[width=\columnwidth]{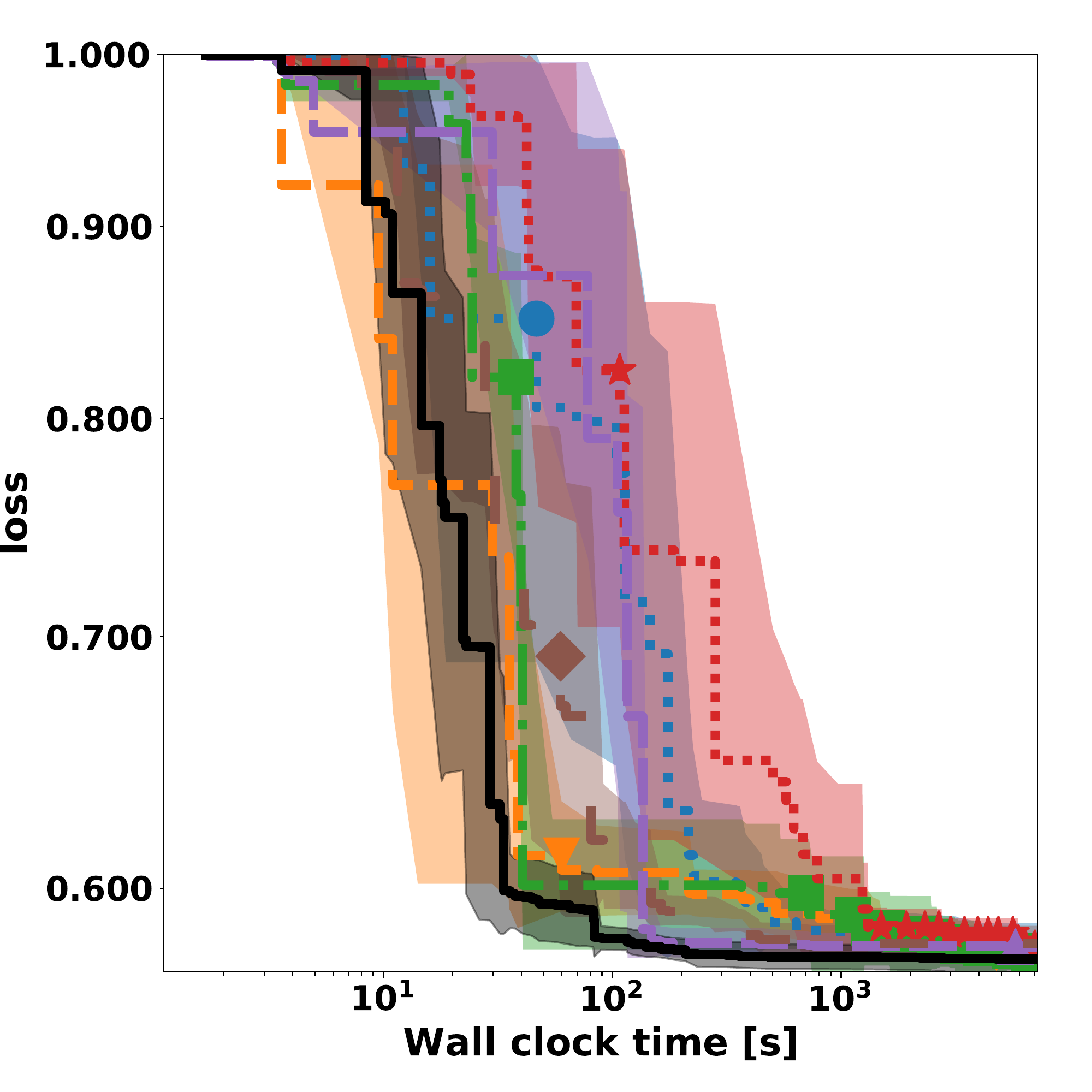}%
\caption{bng\_echomonths}%
\end{subfigure}\hfill%
  \begin{subfigure}{0.32\columnwidth}
\includegraphics[width=\columnwidth]{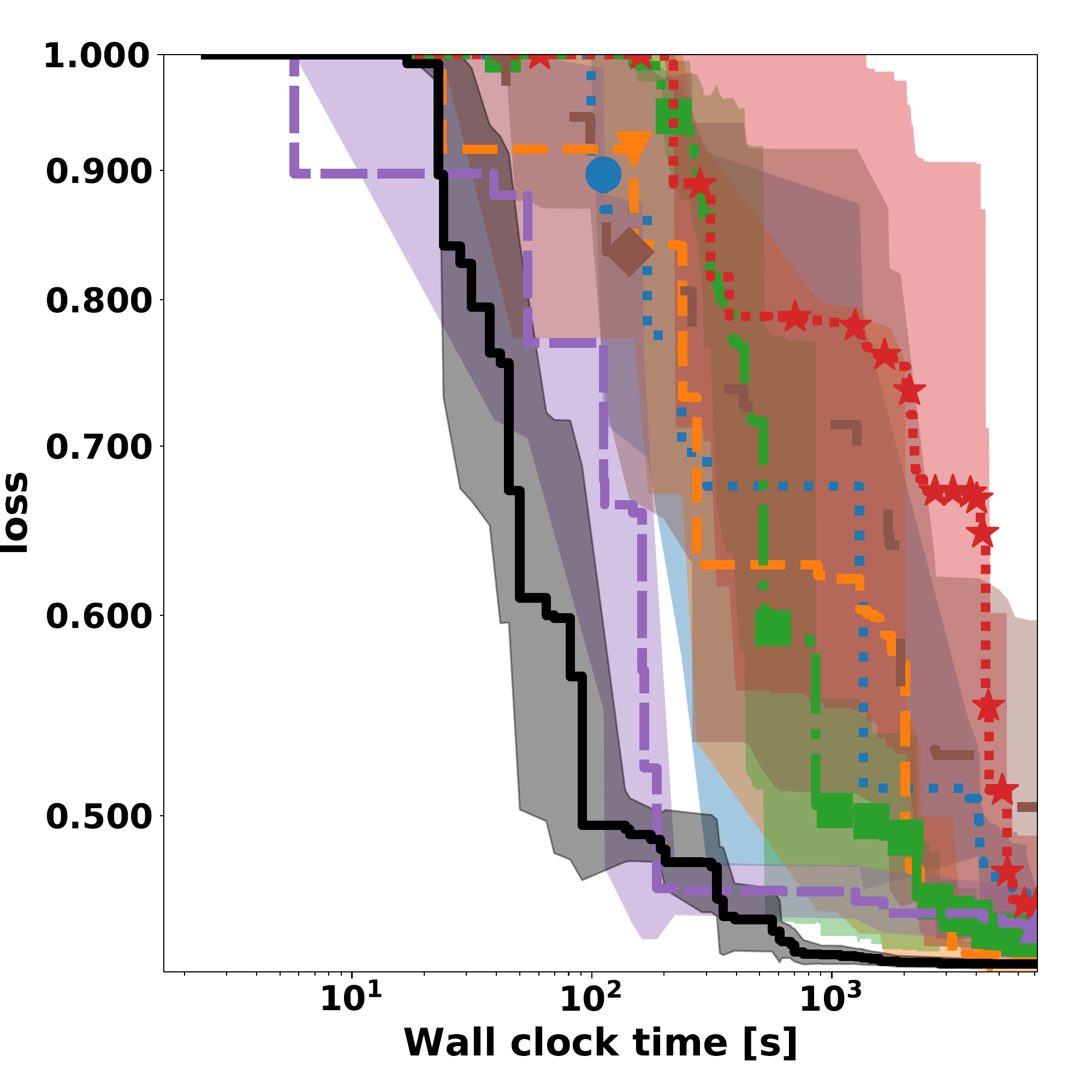}%
\caption{bng\_lowbwt}%
\end{subfigure}\hfill%
\begin{subfigure}{0.32\columnwidth}
\includegraphics[width=\columnwidth]{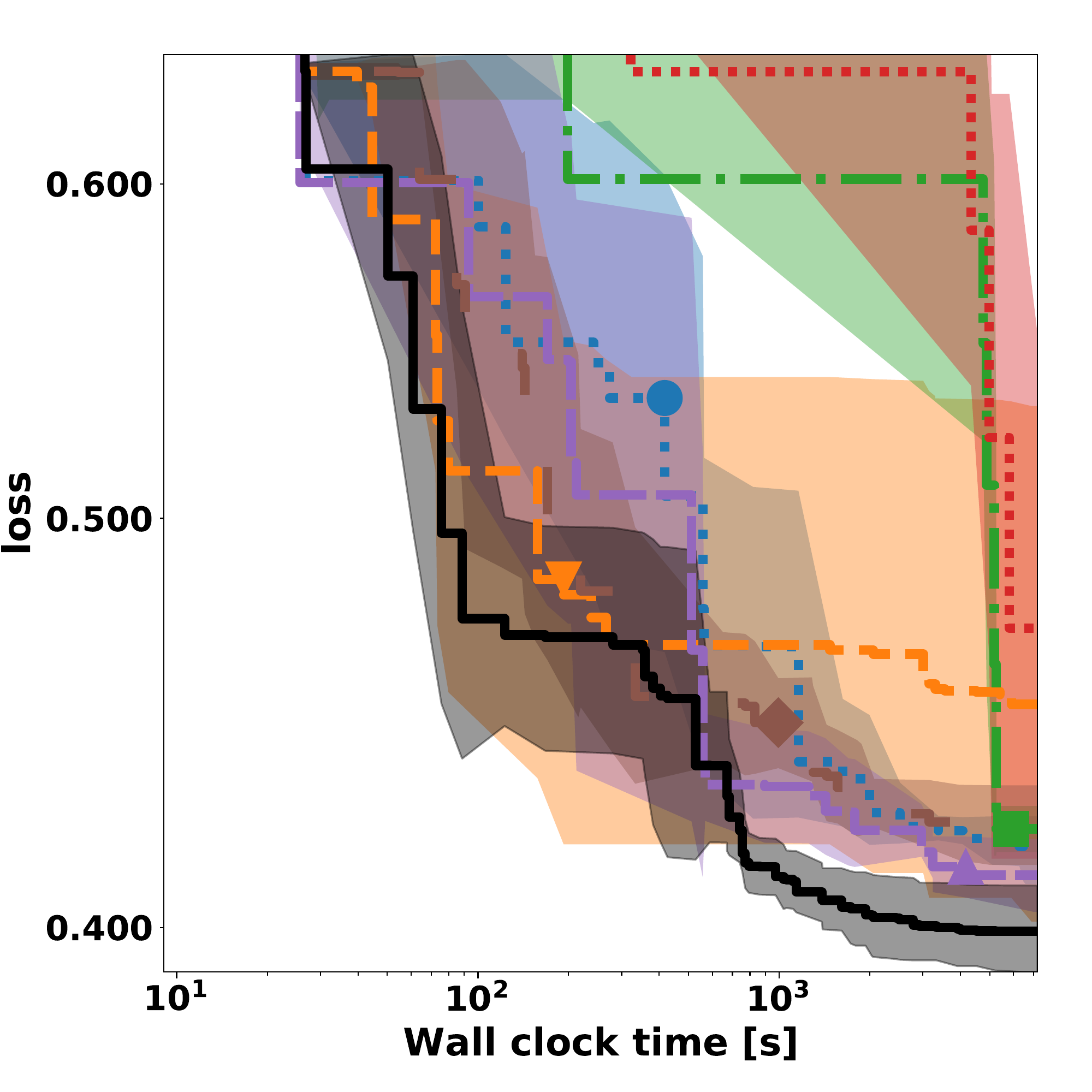}%
\caption{bng\_pwLinear}%
\end{subfigure}\hfill%
\begin{subfigure}{0.32\columnwidth}
\includegraphics[width=\columnwidth]{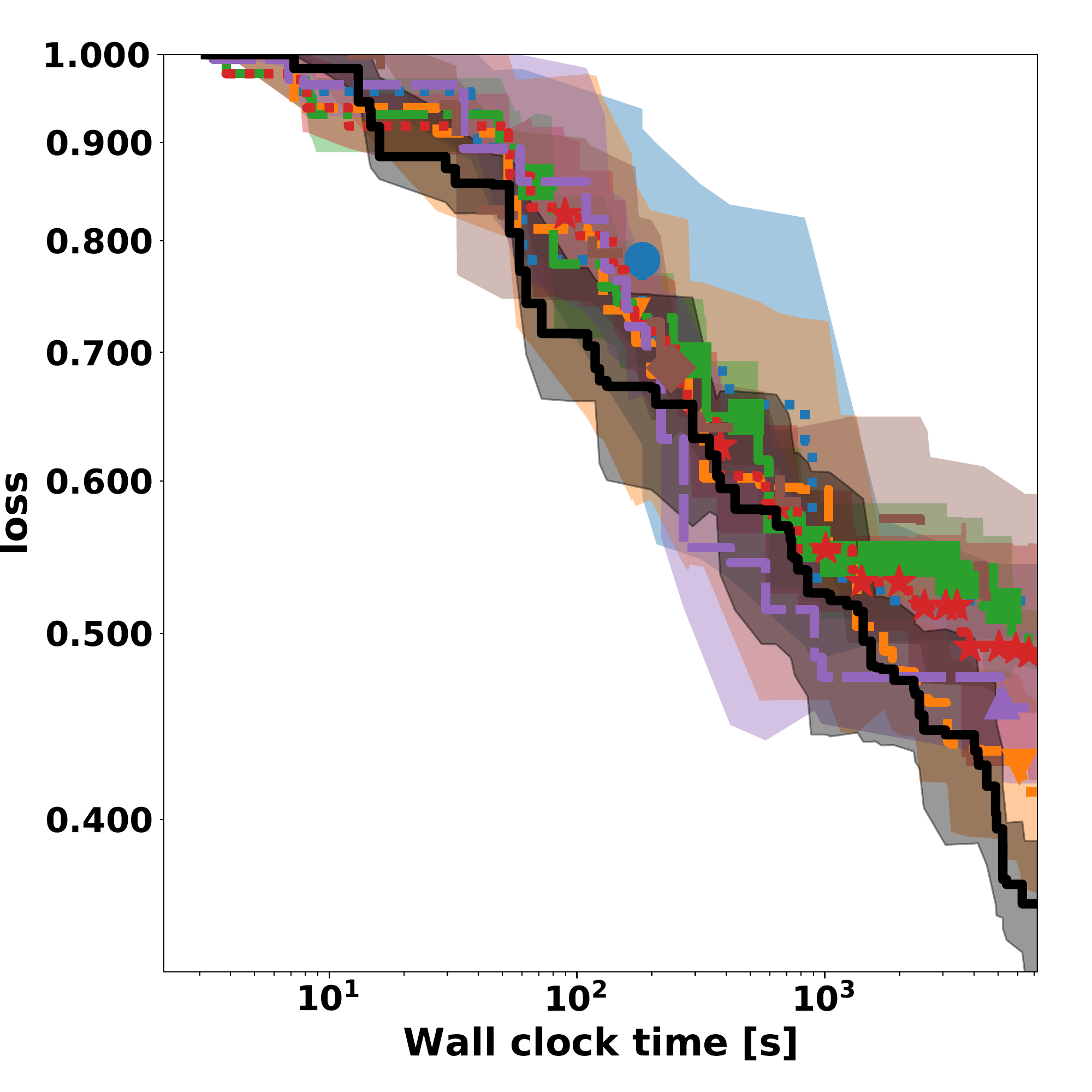}%
\caption{fried}%
\end{subfigure}\hfill%
  \begin{subfigure}{0.32\columnwidth}
\includegraphics[width=\columnwidth]{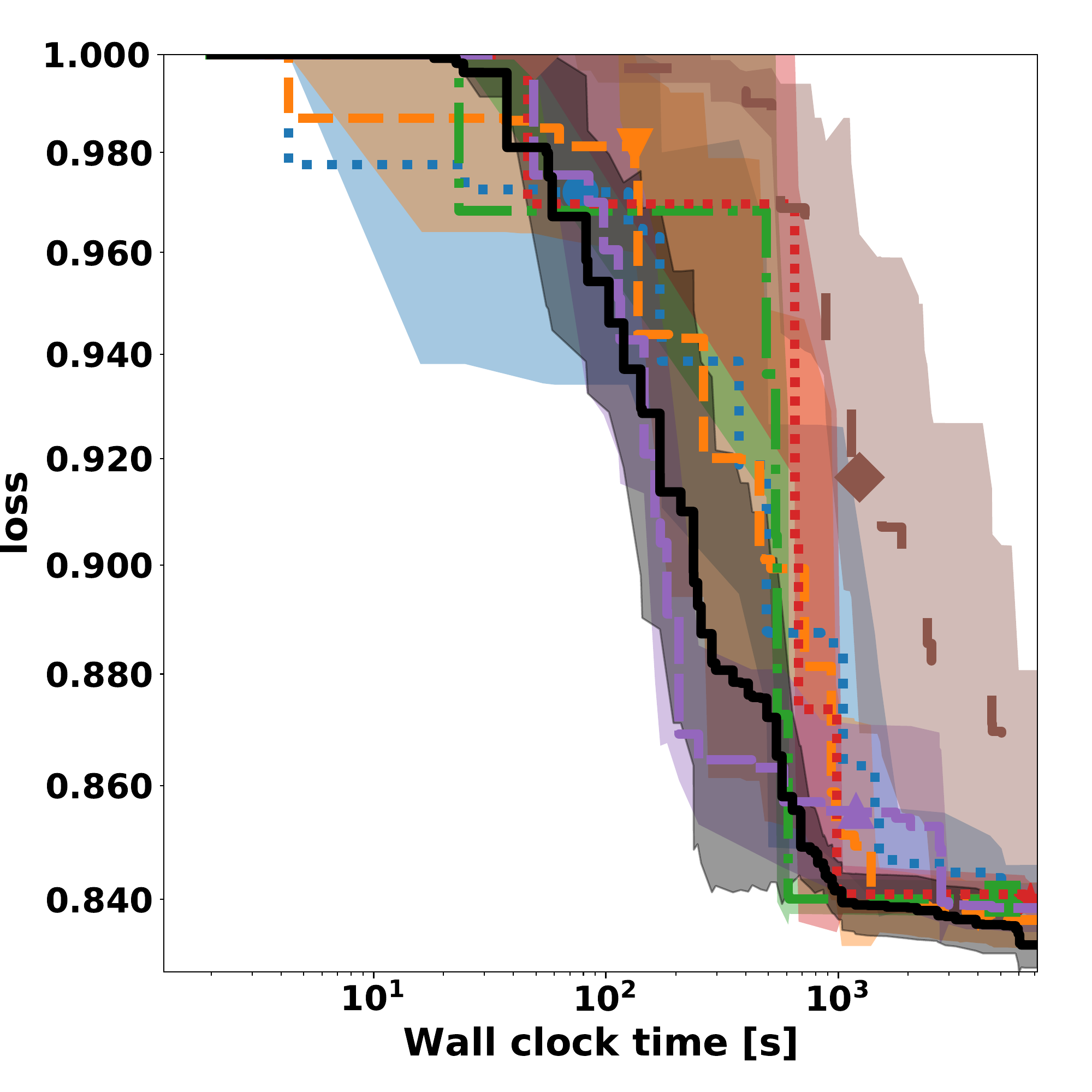}%
\caption{house\_16H}%
\end{subfigure}\hfill%
  \begin{subfigure}{0.32\columnwidth}
\includegraphics[width=\columnwidth]{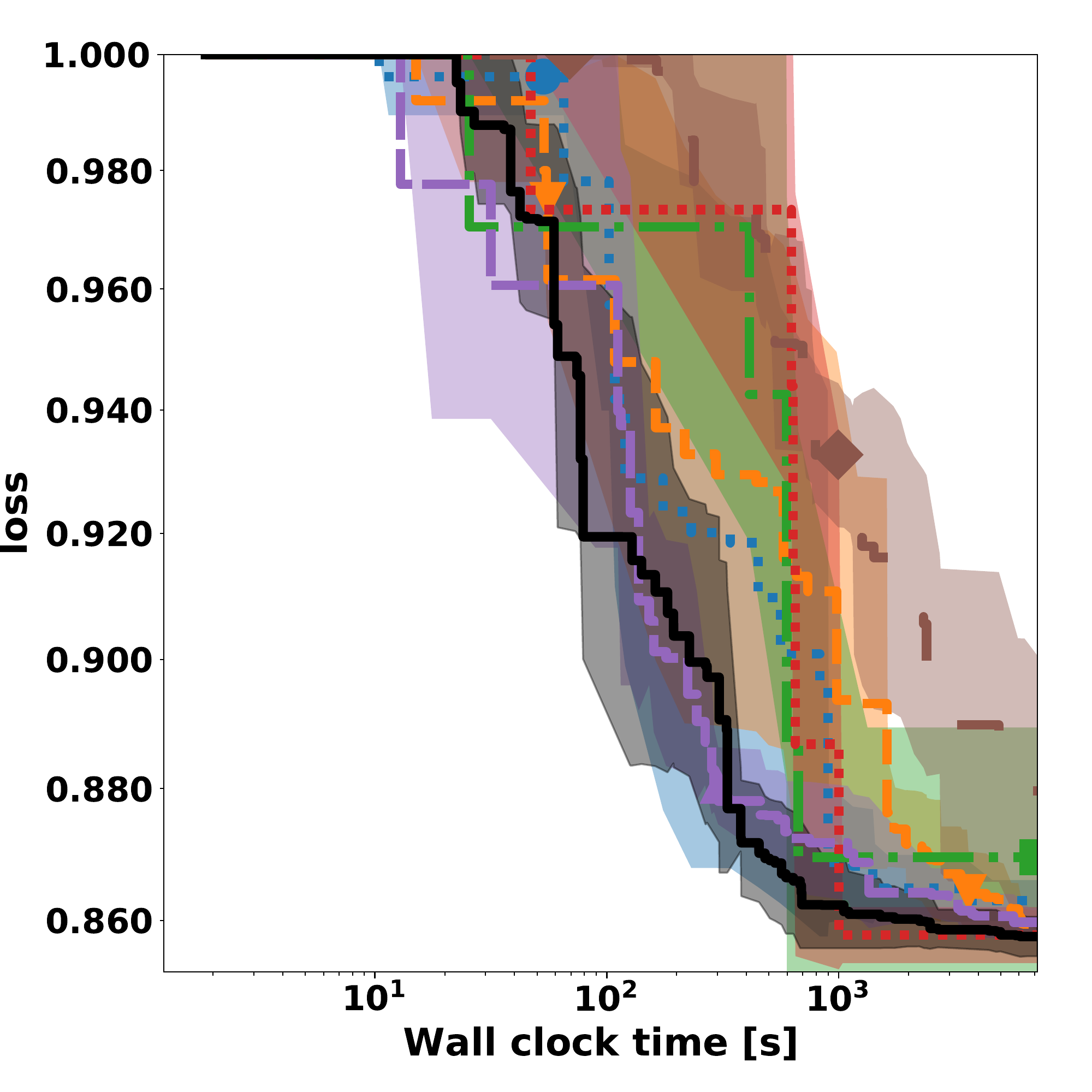}%
\caption{house\_8L}%
\end{subfigure}\hfill%
\begin{subfigure}{0.32\columnwidth}
\includegraphics[width=\columnwidth]{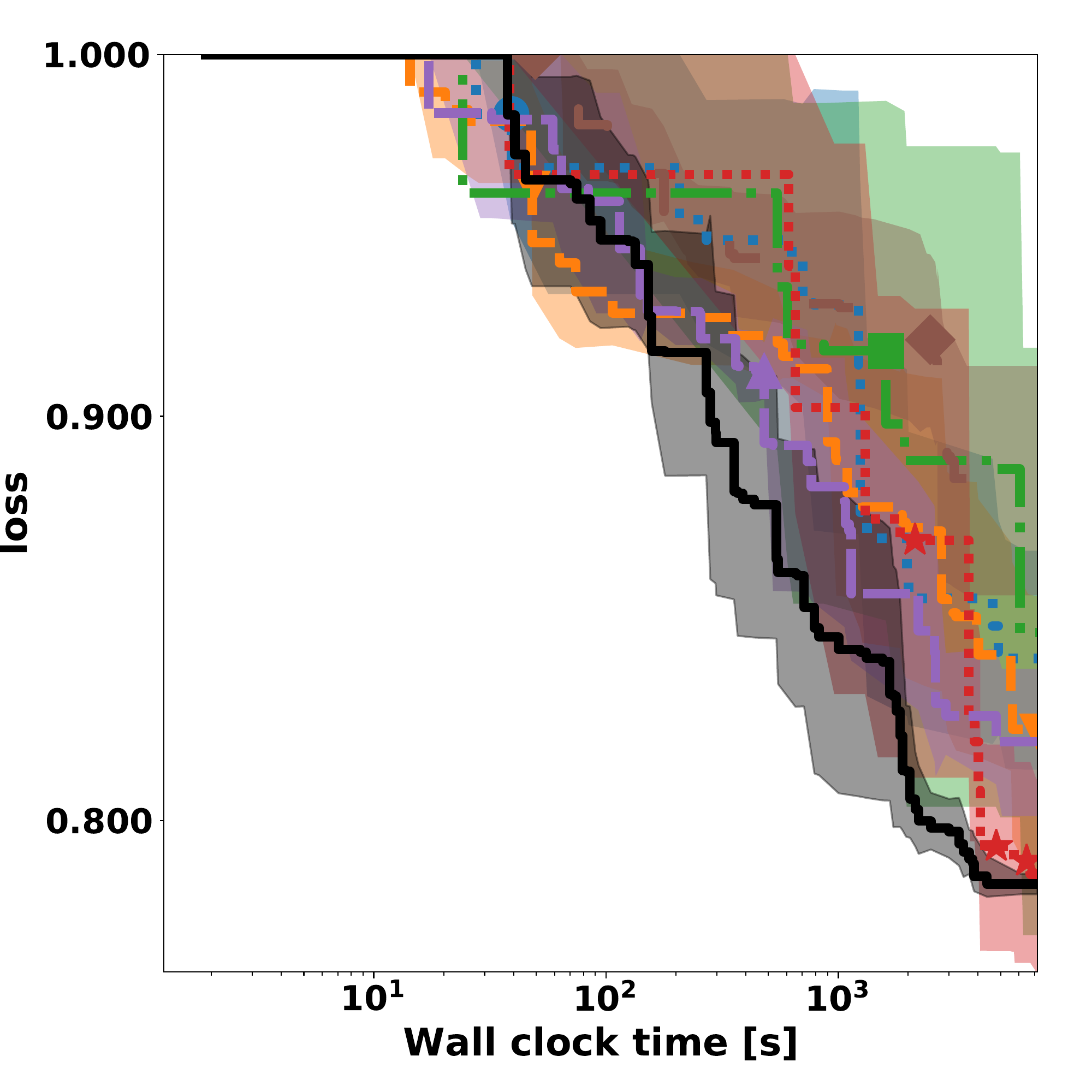}%
\caption{houses}%
\end{subfigure}\hfill%
\begin{subfigure}{0.32\columnwidth}
\includegraphics[width=\columnwidth]{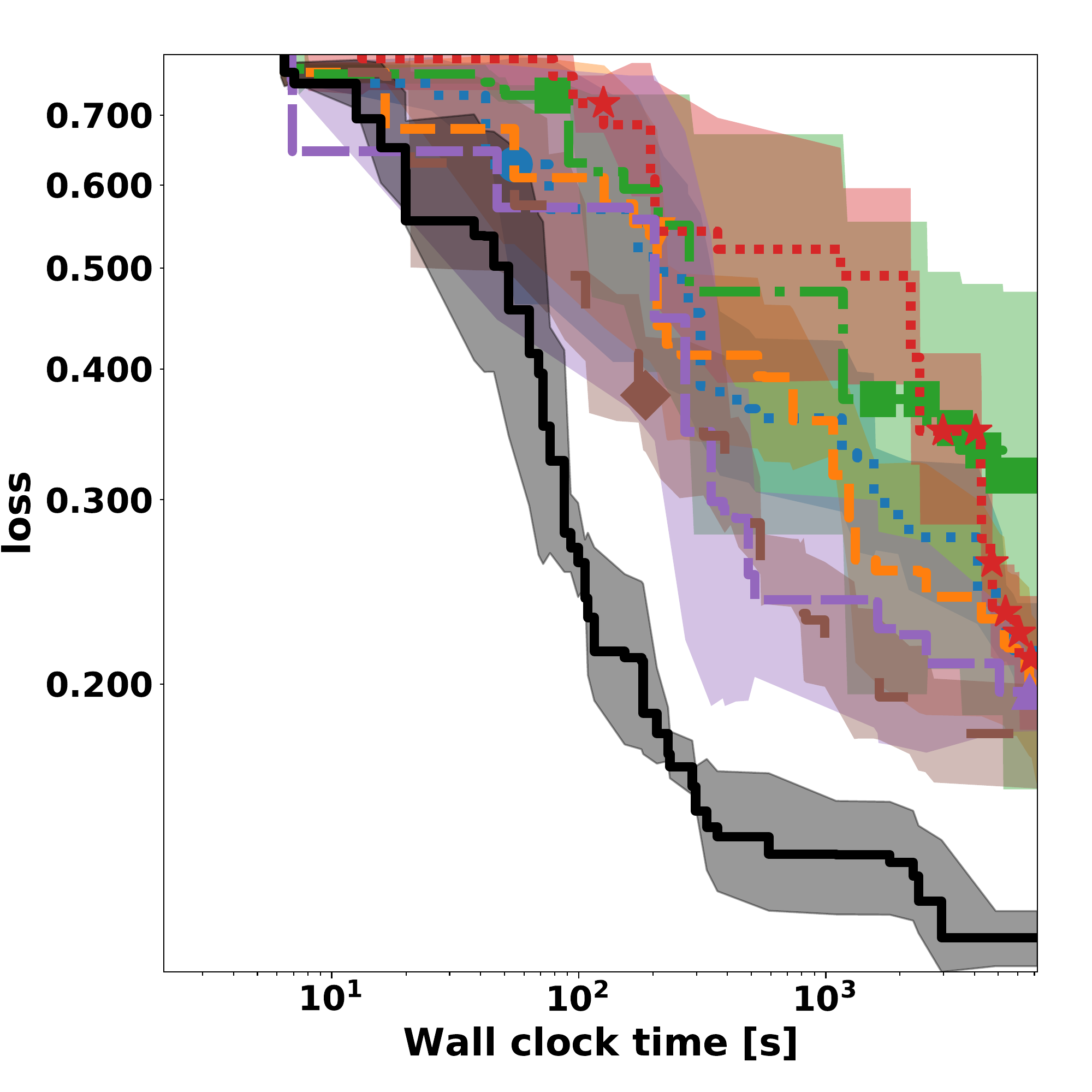}%
\caption{mv}%
\end{subfigure}\hfill%
  \begin{subfigure}{0.32\columnwidth}
\includegraphics[width=\columnwidth]{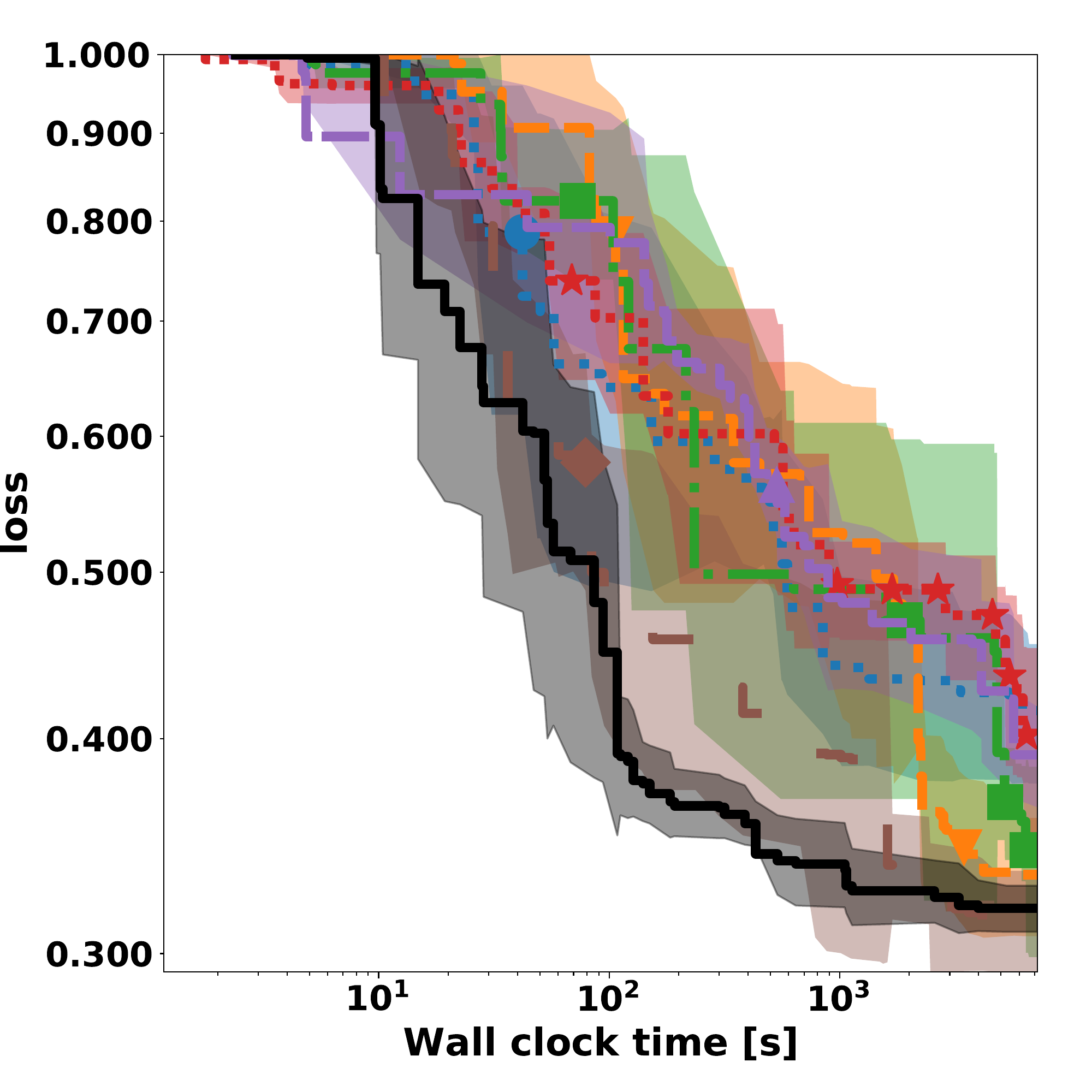}%
\caption{pol}%
\end{subfigure}\hfill%
\caption{Optimization performance curve for DNN (pt 4/4)}
  \label{fig:curve_dnn}
\end{figure*}

\end{document}